\definecolor{customgreen}{rgb}{0.0, 0.5, 0.0}  
\renewcommand{\thesubsubsection}{\arabic{section}.\arabic{subsubsection}}
\long\def\@makecaption#1#2{%
  \vskip\abovecaptionskip
  \sbox\@tempboxa{{\captionfonts #1: #2}}%
  \ifdim \wd\@tempboxa >\hsize
    {\captionfonts #1: #2\par}
  \else
    \hbox to\hsize{\hfil\box\@tempboxa\hfil}%
  \fi
  \vskip\belowcaptionskip}
\newcolumntype{P}[1]{>{\centering\arraybackslash}p{#1}}
\newcolumntype{M}[1]{>{\centering\arraybackslash}m{#1}}
\newcommand\thefontsize{The current font size is: \f@size pt}
\newcommand{\captionfonts}{\normalsize}
\newcommand{\multiline}[1]{%
  \begin{tabularx}{\dimexpr\linewidth-\ALG@thistlm}[t]{@{}X@{}}
    #1
  \end{tabularx}
}
\rule{\linewidth}{0.2pt}
\newtheorem{definition}{Definition}[section]
\newenvironment{proof}{{\bf \emph{Proof.} }}{\hfill $\Box$}
\newtheorem{theorem}{Theorem}[section]
\newtheorem{corollary}[theorem]{Corollary}
\newtheorem{proposition}[theorem]{Proposition}
\newtheorem{lemma}[theorem]{Lemma}
\newtheorem{assumption}[theorem]{Assumption}
\renewcommand*\thesubsection{\arabic{section}.\arabic{subsection}}
\renewcommand*\thesubsubsection{%
  \arabic{section}.\arabic{subsection}.\arabic{subsubsection}%
}
\def\mS{\mathcal{S}}
\def\mO{\mathcal{O}}
\def\mX{\mathcal{X}}
\def\mA{\mathcal{A}}
\def\mB{\mathcal{B}}
\def\mR{\mathcal{R}}
\def\k{_{k}}
\def\t{_{t}}
\def\nk{_{k+1}}
\def\nt{_{t+1}}
\def\pt{_{t-1}}
\def\s{{s}}
\def\pk{_{k-1}}
\def\l{^{l}}
\def\ta{_{\tau}}
\begin{document}

\hspace{13.9cm}1

\ \vspace{20mm}\\

{ \bf \large Active Inference and Reinforcement Learning: A unified inference on continuous state and action spaces under partial
observability}

\ \\
{\bf \large Parvin Malekzadeh}\\
{p.malekzadeh@mail.utoronto.ca}\\
\ \\
{\bf \large Konstantinos N. Plataniotis}\\
{The Edward S. Rogers Sr. Department of Electrical and Computer Engineering, University of Toronto, Toronto, ON, M5S 3G8, Canada.}

{\bf Keywords:} Active inference; Expected free energy; Partially observability; Policy; Reinforcement learning

\captionsetup[figure]{name={Fig.},labelsep=period}
\thispagestyle{empty}
\markboth{}{NC instructions}
\ \vspace{-0mm}\\
\thispagestyle{fancy}
\begin{center} {\bf Abstract} \end{center}
Reinforcement learning (RL) has garnered significant attention for developing decision-making agents that aim to maximize rewards, specified by an external supervisor, within fully observable environments. However, many real-world problems involve partial or noisy observations, where agents cannot access complete and accurate information about the environment. These problems are commonly formulated as partially observable Markov decision processes (POMDPs). Previous studies have tackled RL in POMDPs by either incorporating the memory of past actions and observations or by inferring the true state of the environment from observed data. Nevertheless, aggregating observations and actions over time becomes impractical in problems with large decision-making time horizons and high-dimensional spaces. Furthermore, inference-based RL approaches often require many environmental samples to perform well, as they focus solely on reward maximization and neglect uncertainty in the inferred state.
Active inference (AIF) is a framework naturally formulated in POMDPs and directs agents to select actions by minimizing a function called expected free energy (EFE). This supplies reward-maximizing (or exploitative) behaviour, as in RL, with information-seeking (or exploratory) behaviour. Despite this exploratory behaviour of AIF, its usage is limited to problems with small time horizons and discrete spaces, due to the computational challenges associated with EFE.
 In this paper, we propose a unified principle that establishes a theoretical connection between AIF and RL, enabling seamless integration of these two approaches and overcoming their aforementioned limitations in continuous space POMDP settings. We substantiate our findings with rigorous theoretical analysis, providing novel perspectives for utilizing AIF in designing and implementing artificial agents. 
Experimental results demonstrate the superior learning capabilities of our method compared to other alternative RL approaches in solving partially observable tasks with continuous spaces. Notably, our approach harnesses information-seeking exploration, enabling it to effectively solve reward-free problems and rendering explicit task reward design by an external supervisor optional.
\section{Introduction }
Decision-making is the process of evaluating and selecting a course of action from various alternatives based on specific criteria or goals. This process occurs within an environment where an agent interacts and influences its state through actions. When making decisions,\footnote{We use the terms decision-making and action selection interchangeably throughout the work. } the agent carefully assesses available actions and their potential outcomes. Once a decision is made, the agent translates it into action by executing a specific course of action. The effectiveness of a decision is determined by examining its outcomes, which can involve achieving desired results or mitigating undesirable consequences~\cite{puterman2014markov}.

Reinforcement learning (RL)~\cite{sutton2018reinforcement}  is a framework that models agents to interact with an environment typically represented as a Markov decision process (MDP), where the agent has complete and accurate observation of the true state of the environment. In RL, outcomes are often associated with rewards designed by an external supervisor and received by the agent based on its actions. The main objective of RL is to learn optimal policies that define an agent's decision-making strategy to maximize a value function, which represents the expected long-term reward.
Recent advancements in using deep neural networks (DNNs) as parametric function approximators enabled RL algorithms to successfully solve tasks with continuous state and action spaces~\cite{haarnoja2018soft, schulman2017proximal, dai2022diversity}.   Among the popular RL methods designed for continuous state and action spaces, actor-critic methods~\cite{haarnoja2018soft, mnih2016asynchronous, schulman2017proximal} have gained significant attention. These methods employ a policy iteration algorithm to learn the optimal policy.
However, in many real-world scenarios, the true and complete state of the environment is often inaccessible, leading to a situation known as learning under uncertainty or partial observability. Partial observability can arise from various sources, such as temporary information like a way point sign in a navigation task, sensor limitations, noise, and a limited view of the environment. Consequently, the agent must select optimal actions based on incomplete or noisy information about the environment's true (hidden or latent) states. Such environments can be formulated as partially observable Markov decision processes (POMDPs). Unfortunately, identifying the objectives on which an optimal policy is based for POMDPs is generally undecidable~\cite{madani1999undecidability} because partial and noisy observations lack the necessary information for decision-making.

RL algorithms typically assume complete and accurate observation of the environment's states, which limits their performance on partially observable tasks. To address this, various approaches have been proposed to extend RL methods for solving partially observable tasks. One popular approach is the use of memory-based methods, where recurrent neural networks (RNNs) are employed to remember past observations and actions~\cite{zhu2017improving, nian2020dcrac, haklidir2021guided, ni2022recurrent}. However, these methods may become impractical when dealing with large dimensions of actions and/or observations, or when the decision-making time horizon is extensive, due to the demands of maintaining long-term memory.  Consequently, these approaches are mostly applicable to environments with finite and countable action and observation spaces. Additionally, training an RNN is more challenging than training a feed-forward neural network since RNNs are relatively more sensitive to the hyperparameters and structure of the network~\cite{pascanu2013difficulty}. To address these issues, some recent work~\cite{ramicic2021uncertainty, igl2018deep, lee2020stochastic, han2020variational, Hafner2020Dream} has proposed inferring the belief state, which is a representation of the hidden states given the past observations and actions. From the inferred belief state, an optimal policy can be derived to maximize the expected long-term reward. However, due to the partial observability and limitations of the available observations, the agent cannot accurately determine the exact hidden state with certainty. As a result, the inferred belief state is often represented as a probability distribution, indicating the likelihood of different hidden states given past observations and actions.
\\
Given the uncertainty surrounding the inferred state, it is crucial for the agent not to rely solely on its existing knowledge (exploitation). Instead, in a partially observable environment, the agent should actively engage in an information-directed interaction with the environment, aiming to minimize uncertainty and maximize information about the true state of the environment, given only partial or noisy observations of states~\cite{mavrin2019distributional, dong2021variance, likmeta2022directed, maddox2019simple, malekzadeh2022akf, yin2021sequential, malekzadeh2020mm}.

Active inference (AIF)~\cite{friston2017active,friston2010action} is a framework derived from the free energy principle that models and explains the decision-making behaviour of agents in environments modelled as POMDPs.  AIF optimizes two complementary objective functions: variational free energy (VFE) and expected free energy (EFE). The VFE objective is minimized with respect to past interactions with the environment, enabling the agent to learn a generative model of the environment and infer the belief state used for action selection. 
This process of learning the generative model and inferring the belief state is called perceptual learning and inference. 
Action selection involves minimizing the EFE objective with respect to the future, aiming to find an optimal plan—sequences of actions.\footnote{In this paper, we will use the term `plan' to denote a sequence of actions, distinguishing it from a (state-action) policy typically defined in RL.  } By minimizing the EFE, AIF simultaneously maximizes the agent's information gain about the hidden states (exploration) and optimizing its expected long-term reward (exploitation). The information gain represents the enhancement in the agent's knowledge of the environment, allowing it to make more informed decisions as it interacts with the world.  Notably, AIF agents select actions that maximize the expected long-term reward, akin to classical RL, while maximizing information about the environment's hidden states. \\
While AIF is naturally modelled in POMDPs and provides intrinsic information-directed exploration, its applications are limited to environments with discrete state, observation, and action spaces due to computational issues with the EFE~\cite{millidge2020deep, da2020relationship, lanillos2021active,sajid2021exploration}. This limitation arises because AIF finds the optimal plan by computing the EFE for each possible plan and subsequently selecting the plan that minimizes the EFE~\cite{tschantz2020scaling}. Recent studies have proposed approaches to calculate the EFE in a more tractable manner. These approaches include limiting the future decision-making time horizon~\cite{tschantz2020scaling}, employing bootstrapping estimation techniques~\cite{millidge2020deep, Hafner2022}, and utilizing Monte Carlo tree search (MCTS) methods~\cite{maisto2021active, fountas2020deep}. However, it's important to note that these methods are (partially) heuristic and typically applicable only to finite time horizons or discrete action spaces.

Considering the recent advancements in scaling RL for complex, fully observable environments with continuous state and action spaces, and acknowledging the capability of AIF to perform belief state inference and information-seeking exploration in partially observable environments, a compelling question emerges: Is there a relationship between AIF and RL that enables the development of integrated solutions, leveraging ideas from both fields and surpassing the limitations of individual AIF or RL approaches? In this paper, we propose "unified inference"—a convergence of AIF and RL on a common theoretical ground—to harness the mutual advantages of both paradigms and address the challenges posed by realistic POMDP settings with high-dimensional continuous spaces.
\\ The key contributions of our proposed unified inference framework are summarized as follows:
\begin{itemize}
\item \textbf{Extension of EFE to stochastic belief state-action policy:}  We extend the EFE, originally defined for plans~\cite{friston2017active} in AIF, to accommodate the learning of a stochastic policy in an infinite time horizon POMDP setting with continuous state, action, and observation spaces. This extension of the EFE to a belief state-action policy allows actions to be chosen at each time step based on the inferred belief state, eliminating the need to enumerate every possible plan into the future. This extension integrates information-seeking exploration and reward-maximization under partial observability learning, making it a practical and effective objective function for action selection in both AIF and RL frameworks within POMDP settings with continuous spaces. We hence refer to this extension as the unified objective function. Our experiments demonstrate that introducing stochasticity to the policy significantly improves the stability and robustness of our algorithm in tasks with continuous spaces, where challenges such as belief state inference and hyperparameter tuning pose significant obstacles.
\item \textbf{Unified policy iteration in continuous space POMDPs:}  To optimize the proposed unified objective function and find the optimal policy, we introduce a computationally efficient algorithm called unified policy iteration. This algorithm generalizes the policy iteration guarantees in MDPs~\cite{haarnoja2018soft} to POMDPs and provides theoretical proof of its convergence to the optimal policy. Within the unified policy iteration framework, the extended EFE can be treated as a negative value function from an RL perspective. This theoretical connection demonstrates three important aspects: \textit{(i)} AIF can be analyzed within the framework of RL algorithms, enabling insights from scaling RL to infinite time horizons and continuous space MDPs to be directly applied to scaling AIF for use in infinite time horizons and continuous space POMDPs. This provides deep learning practitioners with a starting point to leverage RL findings and further advance AIF in challenging tasks. \textit{(ii)} We can generalize a range of RL approaches designed for continuous state and action MDPs to continuous state, action, and observation space POMDPs while incorporating an inherent information-seeking exploratory term. This bridges the gap between MDPs and POMDPs and also opens up new possibilities for applying state-of-the-art RL techniques to challenging decision-making problems. \textit{(iii)} We can extend a range of reward-dependent RL methods to a setting where the reward function is not determined by an external supervisor. This aspect of the proposed policy iteration is significant as it eliminates the challenges associated with designing reward functions.  For instance, a poorly designed reward function may lead to slow learning or even convergence to sub-optimal policies, highlighting the importance of mitigating the necessity of defining task rewards through unified policy iteration.
\item \textbf{Unified actor-critic for continuous space POMDPs:}  Building upon the parametric function approximations in the unified policy iteration, we present a novel unified actor-critic algorithm that unifies actor-critic methods for both MDPs and POMDPs. Our approach stands out as one of the few computationally feasible methods for addressing POMDPs and opens new possibilities for enhancing the performance of the popular reward-maximizing actor-critic RL algorithms, such as soft actor-critic (SAC)~\cite{haarnoja2018soft} and Dreamer~\cite{Hafner2020Dream} when applied to real-world scenarios that inherently involve partial observability. To evaluate the effectiveness of our unified actor-critic algorithm in addressing high-dimensional continuous space POMDPs, we conduct experiments on modified versions of Roboschool tasks~\cite{brockman2016openai}, where agents have access to partially observed and noisy states. The experimental results demonstrate that our proposed unified actor-critic method achieves superior sample efficiency and asymptotic performance compared to existing frameworks in the literature.
\end{itemize}
{A comparative overview of foundational elements and decision-making strategies within RL, AIF, and our proposed unified inference approach, with a particular focus on their application in continuous decision spaces, is presented in Table~\ref{table:summary}. This summary encapsulates the methodologies employed by these frameworks, highlighting their distinctive approaches to learning and decision-making in environments characterized by continuous decision spaces.
}
%

\begin{table}[!t]
\centering
\begin{tabular}{|M{3.1cm}|M{2.6cm}|M{3.45cm}|M{3.45cm}|}
\hline
\textbf{Aspect} & \textbf{RL} & \textbf{AIF} & \textbf{Unified Inference} \\ \hline
\textbf{Decision Space} & Continuous MDP & Continuous POMDP & Continuous POMDP \\ \hline
\textbf{Foundation for Decision-Making} & N/A & Perceptual inference \& learning via VFE & Perceptual inference \& learning via VFE \\ \hline
\textbf{Decision-Making Strategy} &  State-action policy learning & Plan learning & Belief state-action policy learning \\ \hline
\textbf{Decision-Making Objective} & Value function (reward maximization) & EFE (reward + information gain maximization) & Unified objective (reward + information gain + policy entropy maximization) \\ \hline
\textbf{Optimization Mechanism} & Policy iteration & Enumeration & Unified policy iteration \\ \hline
\textbf{Learning Approach} & Actor-critic & Bootstrapping, MCTS & Unified actor-critic \\ \hline
\end{tabular}
\caption{{ Comparative overview of foundational and decision-making aspects in RL, AIF, and unified inference under continuous decision spaces. }}
\label{table:summary}
\end{table}
\subsection{Overview}
In Section~\ref{Sec:problem setting}, we present a comprehensive introduction to the problem's nature, which serves as the foundation for our study.
Section~\ref{Sec:problem} offers an introductory tutorial on POMDPs and covers essential concepts like policies, generative models, and inference.
We review the RL paradigm for MDPs and the AIF paradigm for POMDPs in Section~\ref{sec:overview}.
Section~\ref{sec:unified_inference} introduces the proposed unified inference framework and establishes its convergence to the optimal policy.
In Section~\ref{sec:design}, we delve into the implementation and modeling aspects of the proposed unified inference, demonstrating how our formulation extends various existing MDP-based actor-critic methods directly to their respective POMDP cases. Section~\ref{sec:related} discusses existing works related, and Section~\ref{sec:results} presents the experiments utilized to evaluate the performance of our method.
Finally, we conclude the paper and outline potential avenues for future research in Section~\ref{sec:con}.
\section{Problem characteristics} \label{Sec:problem setting}
This section outlines our assumptions concerning various key elements of the decision-making problem addressed in this paper. 
\begin{enumerate}
\item {  \textit{Finite or infinite horizon:}  We assume agents interact with environments over an infinite horizon, appropriate in scenarios where decisions have long-term impacts. Infinite horizon problems, requiring analysis of endless action sequences, are inherently more complex than their finite counterparts.}
\item {  \textit{Fully or partially observable:} We tackle sequential decision-making in partially observable environments. Partial observability is prevalent in many real-world tasks and challenges decision-making compared to fully observable settings~\cite{igl2018deep}.
}
\item {  \textit{Discrete or continuous state, observation, and action spaces:}  We consider environments with continuous state, observation, and action spaces. These continuous spaces are common in complex applications, as they enhance the representation of the environment and actions, and better handle complexity.}
\item { \textit{Stochastic or deterministic environment:}  We consider stochastic environments introducing randomness in action outcomes.  Unlike deterministic environments, stochastic environments require strategies that account for outcome variability, enhancing decision robustness and capturing real-world complexity.}
%
\item {  \textit{Stationary or non-stationary environment:} We assume stationary environments with consistent statistical properties, a common simplification to manage the computational complexity of non-stationary environments~\cite{sutton2018reinforcement, puterman2014markov}. While focused on stationary settings, the methodologies proposed in this paper also apply to non-stationary environments.}
\item { \textit{Markovian or non-Markovian policy:} 
We focus on Markovian policies that rely solely on the current belief state, streamlining computation and memory compared to non-Markovian (history-dependent) policies.}
\item { \textit{Stochastic or deterministic policy:} 
We adopt stochastic policies for their enhanced ability to explore environments and handle uncertainties, facilitating adaptive responses. These features promote better generalization and task-specific fine-tuning~\cite{ramicic2021uncertainty, haarnoja2018soft}.
}
\end{enumerate}

Table~\ref{Table:setting} provides a comprehensive comparison between the key characteristics of our problem and the problems addressed by most RL and AIF paradigms. This comparison covers aspects such as the observability of environment states, decision-making horizon, stochastic properties of the environment, and policy class. By analyzing these elements in comparison to typical RL and AIF problems, we gain insights into the distinct nature and unique challenges of our problem setting.
\begin{table*}[!t]
\caption{Comparison of our problem setting with common problem settings addressed by RL and AIF algorithms. 
}\label{Table:setting}
\centering
\begin{tabular}{|M{1.2cm}|M{1.25cm}|M{1.45cm}|M{2.102cm}|M{1.95cm}|M{1.75cm}|M{2.3cm}|  }
 \hline
 \centering { \textbf{Method}} &  \centering {\textbf{Horizon}} &   \textbf{ Observability}  & {\textbf{State \&  observation space}} &  {\textbf{Action space}}  &    \textbf{ Environment} & {\textbf{Policy}} 
\\ \hline
 \centering {RL} &   \centering { Finite/ Infinite}  & {Full}  &  { Discrete/ Continuous} & {  Discrete/ Continuous}  &  {Stochastic \& Stationary} & {Markovian \& Stochastic} 
\\ \hline
 \centering  AIF &   \centering { Finite} & {Partial}  &  { Discrete} & { Discrete}   &  {Stochastic \& Stationary} & {Non-Markovian \& Deterministic}  
\\ \hline
 \centering  Ours &   \centering { Infinite} & {Partial}  &  { Continuous} & { Continuous}   &  {Stochastic \& Stationary} & {Markovian \& Stochastic} 
\\ \hline
\end{tabular}
\end{table*}
\section{Preliminaries and Problem modeling} \label{Sec:problem}
In the previous section, we outlined the characteristics of the problem addressed in this paper, involving an agent making decisions within a stochastic environment with continuous state, observation, and action spaces. This section delves into modeling these characteristics, beginning with an overview of relevant concepts including POMDPs, policies, generative models, and belief state inference, and then detailing the specific models applied to our problem.
\\
We assume the agent receives observations and makes decisions at discrete time steps $t$, continuing indefinitely, where $t \in \{0, 1, 2, \ldots\}$. For notation, $x_t$ represents variable $x$ at time step $t$, and $x_{f:h}$ includes all elements from $t=f$ to $t=h$, i.e., $x_{f:h} = (x_f, x_{f+1}, ..., x_h)$. { Additionally, $\Delta(\mX)$ denotes the set of all probability distributions\footnote{We use the probability distribution function for both the probability mass function (PMF) and probability density function (PDF). } over set $\mX$, and $|\mX|$ represents its cardinality.} For further clarification, a comprehensive list of all notations used is provided in Table~\ref{Table:1} in the appendix.
\subsection{Partially observed Markov decision processes (POMDPs)} \label{subsec:POMDP}
A POMDP is fully specified by $\mathcal{M}=(\mS, \mO, \mA,  d_0, \Omega, \mR, U, \Theta)$, where $\mathcal{S} \in \mathbb{R}^{D_{\mS}}$ represents the state space encompassing all possible (hidden or latent) states of the environment. Here, $\mathbb{R}$ denotes the set of real numbers, and $D_{\mS}$ is the dimension of the state space.
$\mathcal{O} \in \mathbb{R}^{D_{\mO}}$ denotes the observation space, the set of all possible observations the agent can receive, with $D_{\mO}$ as the dimension of the observation space.
$\mathcal{A} \in \mathbb{R}^{D_{\mA}}$ is the action space, indicating all possible actions the agent can take, where $D_{\mA}$ is the dimension of action space.
These spaces can be either discrete or continuous.
 At time step $t$, $s_t$, $o_t$, and $a_t$ denote elements from $\mathcal{S}$, $\mathcal{O}$, and $\mathcal{A}$, respectively.
 \\
 $d_0: \mS \rightarrow \Delta(\mS)$ denotes the probability distribution of the initial latent state, with $d_0({s}_0)$ specifying the probability of the environment starting in state ${s}_0 \in \mS$.  
$\Omega:\mS \times \mA \rightarrow \Delta(\mS)$ is the (forward) transition function, such that $\Omega({s}\nt|{s}\t,a\t)$ specifies the probability of transitioning to ${s}\nt \in \mS$ from $s\t$ after acting $a\t$. \\
$\mR$ is the set of possible rewards and is called the reward space. We will use $r\t$ to denote an element of the reward set $\mR$ at time $t$. Adhering to RL and AIF standards~\cite{friston2017active, sutton2018reinforcement}, we assume that $\mR \in \mathbb{R}$, although more general vector-valued reward functions are also possible.
$U: \mS \times \mA \rightarrow \Delta(\mR)$ is the reward function, such that $U(r_t | s_t, a_t)$ indicates the probability of a reward value $r_t$ given state $s_t$ and action $a_t$. The reward function $U$ is also known as the external (or extrinsic) reward function since its value is determined by an external supervisor or designer.\footnote{In this work, we interchange the terms reward function and extrinsic reward function when referring to $U$. } 
\\
$\Theta:\mS \rightarrow \Delta(\mO)$ is the observation function, such that $\Theta(o\t|s\t)$ specifies the probability of observing $o\t$ given state $s\t$. This distinguishes POMDPs from MDPs, where the agent indirectly observes $s_t$ through $o_t$. In MDPs, observations directly reflect the true state, making $o_t = s_t$.
A POMDP is deemed deterministic if $\Omega$, $U$, and $\Theta$ are all deterministic. Otherwise, it is considered stochastic. Furthermore, the POMDP  is termed stationary if these functions remain constant over time and non-stationary if they vary.

Following assumptions 2-5 in Section~\ref{Sec:problem setting}, we represent our environment as a stationary stochastic POMDP with continuous state, observation, and action spaces.

\subsection{Policy} \label{sec:policy}
A policy is a sequence of decision rules denoted by $\pi_{0:\infty}=(\pi_0, \pi_1, \pi_2,  ..., \pi\t, ...)$. Each decision rule $\pi_t$ models how an agent selects action $a\t \in \mA$  at time step $t$. To identify an optimal policy, a specific optimality criterion is defined.  \\
A Markovian decision rule $\pi_t(a_t|o_t)$ operates solely on the current observation $o_t$, whereas a history-dependent decision rule $\pi_t(a_t|a_{0:t-1}, o_{0:t})$ utilize the complete history of actions $a_{0:t-1}$ and observations $o_{0:t-1}$ until time $t$. Decision rules can be deterministic, selecting actions with certainty, or stochastic, introducing probability into the selection process~\cite{puterman2014markov}. 

The distinctions between Markovian, history-dependent, deterministic, and stochastic decision rules give rise to corresponding classes of policies.  
\\
Moreover, a policy $\pi_{0:\infty}$ is considered stationary if the decision rule $\pi_t$ remains constant for all time instants $t \in \{0, 1, 2, \dots\}$. Otherwise, it is called non-stationary. In the case of a stationary POMDP with an infinite time horizon, it is common to assume a stationary policy~\cite{puterman2014markov, russell2002artificial}. This assumption arises from the understanding
that at each time step, an infinite number of future time steps exist, and the statistical properties of the environment remain unchanged. Adopting a stationary policy can simplify the agent’s decision-making process.

{ In an MDP where $o_t = s_t$ and states exhibit the Markovian property, all policies are Markovian \cite{puterman2014markov}. A Markovian policy within an MDP selects action $a_t$ based solely on the current state, denoted as $\pi(a_t|s_t)$, and is thus referred to as a state-action policy.
Conversely, in POMDPs where observations lack the Markovian property, history-dependent policies may outperform Markovian ones by enabling more informed decisions \cite{igl2018deep, montufar2015geometry, puterman2014markov}. However, each time step in POMDPs introduces one new action and one new observation to the history, adding $D_{\mathcal{O}} + D_{\mathcal{A}}$-dimensional data to the existing history. Furthermore, the addition of one observation and one action to the history at each time step leads to an exponential increase in the number of possible histories. Specifically, the number of possible histories expands at each step by a factor of $(|\Omega| \times |\mathcal{A}|)$, reflecting the agent's $|\mathcal{A}|$ possible actions and $|\Omega|$ possible observations. 
Consequently, the size of the policy space, which represents the number of possible policies derived from these histories, also undergoes the same exponential growth.
\\
To tackle these challenges, certain methods \cite{igl2018deep, kochenderfer2015decision, montufar2015geometry} employ belief state-action policies, where decisions at time $t$ are based on a belief state $b_t \in \mathcal{B}$, i.e., $\pi_t(a_t|b_t)$. Here, the belief state $b_t$ represents a probability distribution over potential states of the environment at time $t$, and $\mathcal{B} \in \mathbb{R}^{|\mathcal{S}|}$ encompasses all possible belief states. Derived using Bayes' rule through a process known as inference, the belief state consolidates all pertinent historical information necessary for action selection, effectively rendering belief state-action policies Markovian.
\\
Unlike history-dependent policies, belief states in belief state-action policies maintain a constant size, $|\mathcal{S}|$, at each time step. Consequently, the memory and computation required to store and evaluate each belief state and its corresponding policy does not inherently increase over time, providing significant efficiency advantages over history-dependent approaches, particularly in environments with infinite horizons or high-dimensional action and observation spaces. However, inferring and storing belief states can be computationally and memory-intensive, particularly in large or continuous state spaces. Nevertheless, this memory and computational requirement generally falls below that of history-dependent policies~\cite{yang2021recurrent}.
Further details comparing the complexity of history-based versus belief state-action policies are provided in Appendix~\ref{APP:belief_history}.}
\subsection{Generative model} \label{Sec:GM}
 With policy $\pi$ and initial state distribution $d_0$, interactions between an agent and a stationary POMDP $\mathcal{M}$ gives the following sequence $(s_0,o_0,a_0, s_1, o_{1}, a_{1}, \ldots )$ with the probability distribution:
\begin{eqnarray}
 \!\!\!\!\!\!\!\!\!\!\!\!\!\!\! p(s_0,o_0,a_0, s_1, o_{1}, a_{1}, ...|\pi) \footnotemark
  = p(s_0,o_0, s_1, o_{1}, ... |a_{0:\infty})  \prod_{k=1}^{\infty} p(a\pk|\pi), \label{Eq:GP}
\end{eqnarray}
\footnotetext{We abuse notation by writing $p(.)$ for both the PMF and PDF. Moreover, we use $p(x\t)$ as shorthand for $p(X\t = x\t)$, where $X\t$ is a random variable at time $t$ taking on values $x\t \in \mathcal{X}$. }
where  $p( a\pk| \pi)$ denotes the probability that policy $\pi$ chooses action $a\pk$.
Given $\Theta$ and $\Omega$, $p(s_0,o_0,s_1, o_{1},  ...|\pi)$ in Eq.~\eqref{Eq:GP} simplifies to:
\begin{eqnarray}
 \!\!\!\!\!\! \!\!\!\!\!\! p( s_0,o_0, s_1, o_{1}, ...|a_{0:\infty})  &\!\!\!=\!\!\!&  d_0(s_0)  p(o_0|s_0)  \prod_{k=1}^{\infty} p(o\k|{s}\k) p({s}\k|{s}\pk,a\pk) \nonumber \\
 &\!\!\!=\!\!\!&  d_0(s_0)  \Theta(o_0|s_0)  \prod_{k=1}^{\infty} \Theta(o\k|{s}\k) \Omega({s}\k|{s}\pk,a\pk). \label{Eq:HMM}
\end{eqnarray}

As explained in Sub-section~\ref{sec:policy}, the belief state is a variable estimated by the agent. To infer the belief state using Bayesian inference (as detailed in Sub-section~\ref{sub-sec:inference}), the agent relies on $p(s_{0:\infty}, o_{0:\infty}|a_{0:\infty})= p(s_0,o_0, s_1, o_{1}, \ldots|a_{0:\infty})$.
However, the agent lacks direct access to $\Omega$ and $\Theta$; thus, it constructs a generative model of the POMDP denoted as $P(s_{0:\infty}, o_{0:\infty}|a_{0:\infty})=P(s_0,o_0, s_1, o_{1}, \ldots|a_{0:\infty})$, which is decomposed as follows \cite{han2020variational, friston2017active, lee2020stochastic}: \footnote{We use $P$ to represent the agent's probabilistic model of the POMDP, which is a learned approximation, while $p$ denotes the true probabilistic components of the POMDP. Since the agent is unaware of the true probabilistic components, we solely refer to the agent's generative model from now on.}
\begin{eqnarray}
 P(s_{0:\infty}, o_{0:\infty}|a_{0:\infty}) = d_0(\s_0)  P(o_0|\s_0)  \prod_{k=1}^{\infty} P(o\k|{\s}\k) P({s}\k|{\s}\pk,a\pk), \label{Eq:final_GM} 
\end{eqnarray}
where $P(o_t | s_t)$ and $P(s_t|s_{t-1},a_{t-1})$  are the agent's models of observation function ${\Theta}$ and the transition function $\Omega$, respectively. These are referred to as the likelihood function or the observation model, and the transition model, respectively.
 
The generative model of an MDP is derived by setting $P(o_t|s_t)=1$ in the POMDP's model. Fig.~\ref{Fig:Relationship} shows the dependency relationships in the generative models for POMDPs and MDPs, illustrating how a POMDP's model encompasses that of an MDP.
%
%
\begin{figure}[!t]
\centering
\includegraphics[scale=1.]{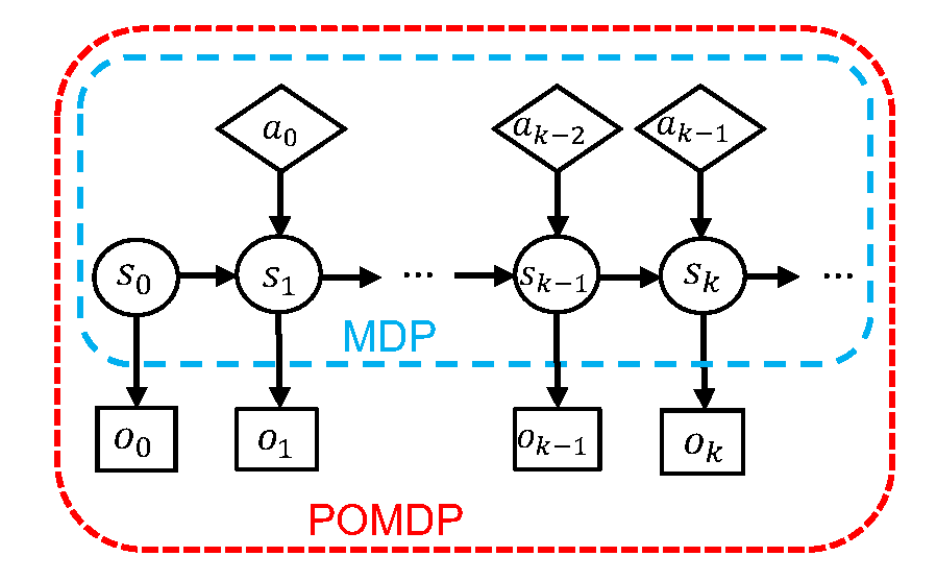}
\caption{ Relationship between generative models of  MDPs and POMDPs. Arrows indicate dependence.  }\label{Fig:Relationship}
\end{figure}
\subsection{Inference} \label{sub-sec:inference}
The belief state $b_t$ utilized by a Markovian belief state-action policy $\pi(a_t|b_t)$ signifies a probability distribution over the state space $\mathcal{S}$ at time $t$, where $b_t(s_t) = P(s_t | o_{0:t}, a_{0:t-1})$. Through a generative model for the environment, the agent estimates $b_t$, a process known as inference~\cite{igl2018deep}.
Inference is performed using Bayes' rule:
\begin{eqnarray}
b_t = \underbrace{ \frac{\int{b\pt(s\pt) P(o\t|\s\t)  P(\s\t|\s\pt,a\pt)}  ds\pt } {\int \int{b\pt(s\pt) P(o\t|\s\t) P(\s\t|\s\pt,a\pt)} ds\t ds\pt}}_{\text{BeliefUpdate} (b\pt, s\t, o\t, a\pt)}. \label{Eq:beleif-state}
\end{eqnarray}
Eq.~\eqref{Eq:beleif-state} presents Bayes' rule for inference in POMDPs with continuous state and observation spaces. For discrete spaces, summations would be used instead. Given the belief state $b_{t-1}$, an action $a_{t-1}$, an observation $o_{t}$, and the latent state $s_{t}$, the belief state $b_{t}$ is fully determined via the recurrent update function $\text{BeliefUpdate}$. Consequently, the belief state forward distribution $P(b_{t}| s_{t}, o_{t}, b_{t-1}, a_{t-1})$, representing the distribution over the subsequent belief state, is $\delta(b_t- \text{BeliefUpdate}(b_{t-1}, s_{t}, o_{t}, a_{t-1}))$, where $\delta$ denotes the Dirac delta function. Therefore, the belief state $b_{t}$ depends solely on the previous belief state $b_{t-1}$, exhibiting the Markovian property.
 
In practice, exact inference often becomes computationally intractable in continuous spaces due to the integrals involved. To address this, we use variational inference, a common approximation technique in the RL and AIF literature~\cite{Hafner2020Dream, lee2020stochastic, millidge2020deep, mazzaglia2021contrastive}. Details on variational inference are provided in Sub-section~\ref{sub:AIF_inference}.
%
\section{Review of the State-of-The-Art algorithms} \label{sec:overview}
RL and AIF are the primary frameworks for decision-making problems. RL algorithms typically assume full observability and model environments as MDPs, whereas AIF assumes partial observability and models them as POMDPs. This section overviews their fundamental concepts and underscores their key distinctions.

\textbf{Note 4.1:} In the rest of this paper, we fix the current time $t$ and assume that the agent focuses on optimizing future actions for time steps $\tau \in \{t, t+1, ...\}$.
\subsection{Reinforcement learning (RL)}
As RL algorithms are formulated within MDPs, it is sufficient to consider a Markovian state-action policy $\pi(a_\tau | s_\tau)$~\cite{sutton2018reinforcement}. The goal of RL is to find an optimal state-action policy $\pi^*$ maximizing the expected long-term reward $J^\pi$, expressed as $J^{\pi} = \mathbb{E}_{  \prod_{\tau=t}^{\infty} P({\s}_{\tau}|{\s}_{\tau-1},a_{\tau-1}) P(r_{\tau}|{\s}_{\tau},a_{\tau}) \pi(a\ta|\s\ta)} \left[  \sum_{\tau=t}^{\infty}  \gamma^{\tau-t}  r\ta \right]$, where  $\gamma \in [0,1)$ is the discount factor used to ensure the sum is bounded. 
RL employs various methods to find the optimal policy $\pi^*$. Three common categories relevant to our context are as follows: 
\begin{itemize}
\item \textit{Policy-based approaches:} These algorithms directly find $\pi^*$ by maximizing $J^{\pi}$ using  gradient ascent methods. Policy-based approaches have shown success in dealing with continuous state and action spaces~\cite{ma2016online, schulman2015trust}, but they often suffer from high variance in gradient estimates~\cite{tucker2018mirage}.
\item \textit{Value-based methods:} Value-based algorithms rely on learning either the state value function $V^{(\pi)}(s_t)$ or the state-action value function $Q^{(\pi)}(s_t, a_t)$, which are obtained by conditioning $J^\pi$ on the state $S_t = s_t$ and the state-action pair $(S_t = s_t, A_t = a_t)$, respectively:
 \begin{eqnarray}
  \!\!\!\!\!\!\!\!\! V^{(\pi)}(\s\t) &\!\!=\!\!& \mathbb{E}_{ \prod_{\tau=t}^{\infty} P({\s}_{\tau+1}|{\s}_{\tau},a_{\tau})P(r_{\tau}|{\s}_{\tau},a_{\tau}) \pi(a\ta|\s\ta)} \bigg[ \sum_{\tau=t}^{\infty}  \gamma^{\tau-t} r\ta| \s\t \bigg], \\
\!\!\!\!\!\!\!\!\!\!\!\! Q^{(\pi)}(\s\t,a\t) &\!\!=\!\!& \mathbb{E}_{ \prod_{\tau=t}^{\infty} P({\s}_{\tau+1}|{\s}_{\tau},a_{\tau}) P(r_{\tau}|{\s}_{\tau},a_{\tau}) \pi(a\ta|\s\ta)} \bigg[ \sum_{\tau=t}^{\infty}  \gamma^{\tau-t} r\ta | \s\t, a\t \bigg], 
\end{eqnarray}
Value-based algorithms calculate $V^{(\pi)}(s_t)$ and $Q^{(\pi)}(s_t, a_t)$ recursively using Bellman equations:
%
\begin{eqnarray}
\!\!\!\!\!\!\!\!\!\!\!\! V^{(\pi)}(\s\t) &\!\!=\!\!& \mathbb{E}_{\pi(a\t|\s\t) P(r\t|{\s}\t,a\t)  } \left[  {r\t} +  \gamma \mathbb{E}_{P(\s\nt|\s\t,a\t) }[V^{\pi}(\s\nt) ] \right], \label{Eq:Bellman}  \\
\!\!\!\!\!\!\!\!\!\!\!\! Q^{\pi}(\s\t, a\t) &\!\!=\!\!& \mathbb{E}_{P(r\t|{\s}\t,a\t)} [r\t] + \mathbb{E}_{P(\s\nt|\s\t,a\t)\pi( a\nt|\s\nt)  }[Q^{\pi}(\s\nt,a\nt) ]. \label{Eq:Bellman_Q} 
\end{eqnarray}
%
Using the Bellman equations, RL iteratively updates these value functions using the Bellman operators~\cite{sutton2018reinforcement}. The Bellman operator for state-action value function is denoted as $T^{RL}_{\pi}$ and is defined as:
\begin{eqnarray}
\!\!\!\!\!\!\!\!\!\!\!\! \!\!\!\! T^{RL}_{\pi} Q(\s\t, a\t) &\!\!\!\!\!: =\!\!\!\!\!& \mathbb{E}_{P(r\t|{\s}\t,a\t)} [r\t] + \mathbb{E}_{P(\s\nt|\s\t,a\t)\pi( a\nt|\s\nt)  }[Q(\s\nt,a\nt) ]. 
\end{eqnarray}
The optimal policy is then found $\pi^*=\arg\max_{\pi} V^{(\pi)}(\s\t)= \arg\max_{\pi} Q^{(\pi)}(\s\t,a\t)$. 
\\
It has been shown that the following Bellman optimality equations on the optimal state value function $V^*(\s\nt):= V^{(\pi^*)}(\s\t)$ and the optimal state-action value function $Q^*(\s\t,a\t):= Q^{(\pi^*)}(\s\t,a\t)$ hold~\cite{sutton2018reinforcement}:
\begin{eqnarray}
\!\!\!\!\!\!\!\!\!\!\!\!\!\!\!\! V^{*}(\s\t) &\!\!=\!\!& \max_{a \in \mA} \bigg\{\mathbb{E}_{P(r\t|\s\t,a)  }[ r\t] +  \gamma \mathbb{E}_{P(\s\nt|\s\t,a)  }[V^{*}(\s\nt) ] \bigg\}, \label{Eq:Bellman_opt} \\
\!\!\!\!\!\!\!\!\!\!\!\!\!\!\!\! Q^{*}(\s\t,a\t) &\!\!\!\!\!=\!\!\!\!\!&    \mathbb{E}_{P(r\t|\s\t,a)  }[ r\t] +  \gamma  \mathbb{E}_{P(\s\nt|\s\t,a\t)  }[\max_{a' \in \mA}  Q^{*}(\s\nt,a') ]. \label{Eq:Bellman_opt_Q}
\end{eqnarray}
Thus, value-based algorithms can calculate the optimal state value function \(V^*(s_t)\) or state-action value function \(Q^*(s_t, a_t)\), and derive the optimal action \(a^*\) as \(a^* = \arg\max_{a} Q^*(s_t,a)\) or \(a^* = \arg\max_{a} V^*(s_t)\). However, when dealing with large action spaces, using the max operator to select the best action can become computationally expensive~\cite{ okuyama2018autonomous}.
%
 %
\item \textit{{Actor-critic methods:}} These hybrid approaches blend policy-based and value-based techniques. The policy, or actor, selects actions, while the critic estimates the state or state-action value function, evaluating the actor's decisions. Using policy gradients, the actor improves its policy, while the critic assesses this enhanced policy by estimating its corresponding value function. This iterative process, known as policy iteration, guarantees convergence to the optimal policy under certain conditions with sufficient iterations.  Additionally, policy iteration is computationally efficient since it updates the policy at each iteration, making it suitable for problems with continuous state and action spaces. 
\end{itemize}
RL algorithms, including value-based, policy-based, and actor-critic methods, can be either model-free or model-based, depending on how they learn the optimal policy. Model-based algorithms learn a generative model of the MDP and a reward model, offering improved sample efficiency~\cite{Hafner2020Dream}. 
Conversely, model-free methods, while not relying on explicit models, often require more environment interactions for optimization. 
Recent approaches, like CVRL~\cite{ma2021contrastive}, integrate model-free and model-based components to leverage their respective strengths.

Despite RL's advancements in continuous spaces, it primarily operates in fully observable environments. As such, its applicability in partially observable scenarios remains limited~\cite{da2020relationship, han2020variational, igl2018deep}.
\subsection{Active inference (AIF)} 
AIF has gained attention as a framework unifying inference and action selection under the free energy principle in POMDPs \cite{friston2017active, friston2011action}. In AIF, the agent engages in inferring and learning a generative model of the POMDP and then utilizes the inferred belief state and generative model to seek an optimal plan, a sequence of future actions, that minimizes the EFE.
 %
 \subsubsection{Perceptual inference and learning } \label{sub:AIF_inference}
AIF assumes that the agent has access to past observations and actions and models the instant generative model as $P(\s_t, o_t | a_{t-1}, \s_{t-1})$~\cite{friston2017active, millidge2021whence}. Upon receiving the observation $o_t$, AIF learns $P(\s_t, o_t | a_{t-1}, \s_{t-1})$ and utilizes variational inference to approximate the belief state $b_t$ using the variational posterior $q(\s_t | o_t)$.\footnote{Throughout this paper, the notation $q$ is used to represent variational probability distributions.}
This is achieved by minimizing the VFE at time $t$, denoted as $F_t$. The VFE is an upper bound on the current Bayesian surprise, defined as $-\log p(o_t)$, where $\log$ represents the natural logarithm function. In machine learning, the VFE corresponds to the negative Evidence Lower Bound (ELBO) within the framework of Variational AutoEncoder (VAE)~\cite{kingma2013auto}. $F_t$ is defined as follows:
\begin{eqnarray}
\!\!\!\!\!\!\!\!  F\t &\!\!\!=\!\!\!& \mathbb{E}_{q(\s\t|o\t)} \left[ \text{log} q(\s\t|o\t) - \text{log} P(\s\t, o\t|a\pt,\s\pt) \right]. 
\end{eqnarray}
By factoring $P(\!\s_t,o_t|a_{t-1},\s_{t-1}\!)$ into $P(\!o_t|\s_t\!)$ and $P(\!\s_t|\s_{t-1}, a_{t-1}\!)$, $F_t$ can be rewritten as
\begin{eqnarray}
F\t  &=& -\mathbb{E}_{q(\s\t|o\t)} \left[\text{log} p(o\t|\s\t) \right] + D_{KL} [q(\s\t|o\t),P(\s\t|\s\pt,a\pt)],  \label{Eq:VFE}
\end{eqnarray}
where $D_{KL}$ is Kullback Leibler (KL)-divergence. By minimizing $F_t$, the agent ensures that its generative model aligns with the received observations and the inferred states.
The process of inferring $q(\s_t | o_t)$ by minimizing $F_t$ is called perceptual inference, and minimizing $F_t$ with respect to the transition model $P(\s_t | \s_{t-1}, a_{t-1})$ and the likelihood function $P(o_t | \s_t)$ is known as perceptual learning in AIF~\cite{friston2017active}. 
 \subsubsection{Plan selection}
AIF assumes that the agent has a desired distribution $\tilde{P}(o_{\tau+1})$ over future observations $o_\tau$ for $\tau \in \{ t, t+1, ..., T\}$, where $T$ represents a finite time horizon. This desired distribution, known as the prior preference, encodes the agent's goals.
\\
Looking ahead to future time steps, the agent in AIF aims to minimize its expected future Bayesian surprise based on its prior preference $\tilde{P}(o_{\tau+1})$. This is achieved by minimizing the EFE given the current observation $o_t$ over all possible plans $\tilde{a}:=a_{t:T-1}$~\cite{millidge2021whence}. The EFE is denoted as $G_{\text{AIF}}(o\t)$ and is defined as follows:
\begin{eqnarray}
 {G}_{\text{AIF}}(o\t) &=&  \mathbb{E}_{q({\s}_{t:T}, {o}_{t+1:T}, \tilde{a}|o\t )} \bigg[  \text{log} \frac{q(\s_{t+1:T},\tilde{a}|\s\t )}{{P}({\s}_{t+1:T}, o_{t+1:T}|\s\t,\tilde{a}) } \bigg], \label{Eq:EFE_def_prod}
\end{eqnarray}
where the variational distribution $q(\s_{t+1:T}, \tilde{a})$ infers future states and actions and the variational distribution $q({\s}_{t:T}, {o}_{t+1:T}, \tilde{a}|o\t )$ considers expectations for future observations. AIF factors $q(\s_{t+1:T},\tilde{a}|\s\t )$ as
$q(\s_{t+1:T},\tilde{a}|\s\t )=\prod_{\tau =t}^{T-1} q(\tilde{a} ) P(\s_{\tau+1}|\s\ta,a\ta)$ and approximates the generative model ${P}({\s}_{t+1:T}, o_{t+1:T}|s\t, \tilde{a})$ in Eq.~\eqref{Eq:EFE_def_prod} with a biased generative model $\tilde{P}({\s}_{t+1:T}, o_{t+1:T}|s\t, \tilde{a})$, defined as follows:
\begin{definition} {(Biased generative model)} \label{def:AIF_biased_GM} \\
Given the prior preference $\tilde{P}(o_{\tau+1})$, the generative model ${P}({\s}_{t+1:T}, o_{t+1:T}|s\t, \tilde{a})$ in Eq.~\eqref{Eq:EFE_def_prod} is approximated by a biased generative model $\tilde{P}({\s}_{t+1:T}, o_{t+1:T}|s\t, \tilde{a})$, which factors as: 
\begin{eqnarray}
\!\!\!\!\!\!\!\!\! \tilde{P}({\s}_{t+1:T}, o_{t+1:T}|s\t, \tilde{a}) = \prod_{\tau =t}^{T-1}  \tilde{P}(o_{\tau+1}|s\ta, a\ta) q({\s_{\tau+1}}|{o_{\tau+1}}). \label{Eq:AIF_GM} 
\end{eqnarray}
\end{definition}

By  replacing the generative model ${P}({\s}_{t+1:T}, o_{t+1:T}|s\t, \tilde{a})$ with the biased generative model $\tilde{P}({\s}_{t+1:T}, o_{t+1:T}|s\t, \tilde{a})$, $G_{\text{AIF}}(o\t)$ in Eq.~\eqref{Eq:EFE_def_prod} can be rewritten as:
\begin{eqnarray}
 {G}_{\text{AIF}}(o\t)  &=& \mathbb{E}_{q({\s}_{t:T}, {o}_{t+1:T}, \tilde{a}|o\t)} \bigg[ \sum_{\tau =t}^{T-1}  \text{log} \frac{q(\tilde{a}) P(\s_{\tau+1}|\s\ta,a\ta)}{\tilde{P}({o}_{\tau+1}|s\ta, a\ta)q({\s}_{\tau+1}|o_{\tau+1}) } \bigg]. \label{Eq:EFE_def}
\end{eqnarray}
Focusing on deriving the optimal variational distribution $q^*(\tilde{a})$~\cite{millidge2021whence,mazzaglia2022free} that minimizes ${G}_{\text{AIF}}(o\t)$, it has been shown that $q^*(\tilde{a})=\sigma \left( -{G}_{\text{AIF}}^{(\tilde{a})} (o\t) \right)$, where $\sigma$ is the Softmax function and ${G}_{\text{AIF}}^{(\tilde{a}\t)}(o\t)$ is the EFE for a fixed plan $\tilde{a}$, defined as:
\begin{eqnarray}
\!\!\!\!\!\!\!\!\!\!\!\!\! {G}_{\text{AIF}}^{(\tilde{a})}(o\t)  &\!\!\!\!=\!\!\!\!&   \mathbb{E}_{  q({\s}_{t:T}, {o}_{t+1:T}|o\t,\tilde{a} )}  \bigg[ \sum_{\tau =t}^{T-1} \text{log} \frac{ P(\s_{\tau+1}|\s\ta,a\ta)}{ q({\s}_{\tau+1}|o_{\tau+1})} - \text{log} \tilde{P}({o}_{\tau+1}|s\ta, a\ta)  \bigg].  \label{Eq:EFE_simple}
\end{eqnarray}
As $q^*(\tilde{a})=\sigma \left( -{G}_{\text{AIF}}^{(\tilde{a})} (o\t) \right)$, the most probable plan, denoted as $\tilde{a}^*$, corresponds to the one that minimizes ${G}_{\text{AIF}}^{(\tilde{a})}(o\t)$. Thus, $\tilde{a}^*$ is referred to as the optimal plan and can be found as $\tilde{a}^*=\arg\min_{\tilde{a}}{G}_{\text{AIF}}^{(\tilde{a})}(o\t)$. AIF achieves this optimal plan by running its biased generative model forward from the current time $t$ to the time horizon $T$, generating hypothetical future states and observations for all possible plans $\tilde{a}$. It then selects the plan $\tilde{a}^*$ that results in the minimum value of ${G}_{\text{AIF}}^{(\tilde{a})}(o\t)$.

\textbf{Note 4.2:} It should be noted that AIF~\cite{friston2017active} chooses its actions according to the optimal plan $\tilde{a}^*$. This is in contrast to the optimal state-action policy $\pi^*$ in RL, which maps states to probability distributions over the action space.

According to the complete class theorem in AIF~\cite{friston2012active}, $\mathbb{E}_{P(r\ta|\s\ta,a\ta)} [r\ta]$ can be encoded as $\text{log} \tilde{P}(o_{\tau+1}|s\ta, a\ta) $. 
Therefore, ${G}_{\text{AIF}}^{(\tilde{a})}(o\t)$ in Eq.~\eqref{Eq:EFE_simple} can be rewritten as 
%
\begin{eqnarray}
\!\!\!\!\!\!\!\!\!\!\!\!\!\!\!\! {G}_{\text{AIF}}^{(\tilde{a})}(o\t)  &\!\!\!\!\!=\!\!\!\!\!&   - \underbrace{\mathbb{E}_{\prod_{\tau =t}^{T-1} q({s}\ta|o\ta) \tilde{P}({o_{\tau+1}}|s\ta, a\ta) P(r\ta|s\ta,a\ta)  } \left[ \sum_{\tau =t}^{T-1}  r\ta \right]}_{\text{Expected long-term  reward}}     \nonumber \\
&&  \!\!\!\!\!\!\!\! - \underbrace{  \mathbb{E}_{ \prod_{\tau =t}^{T-1} q(\s\ta|o\ta)  \tilde{P}({o_{\tau+1}}| s\ta, a\ta)} \left[ \sum_{\tau =t}^{T-1} D_{KL} [q(.|o_{\tau+1}), P(.|\s\ta,{a}\ta) \right] .}_{\text{Expected information gain}} \label{Eq:epistemic}  
\end{eqnarray}
The first term in Eq.~\eqref{Eq:epistemic} represents the expected long-term (extrinsic) reward, which is known as the goal-directed term in AIF~\cite{millidge2021whence}. The second term, KL-divergence between the variation posterior $q(.|o\ta)$ and the transition model $P(.|\s\ta,a\ta)$, represents the expected information gain, referred to as the epistemic (intrinsic) value, which measures the amount of information acquired by visiting a particular state. By following the optimal plan $\tilde{a}^*$ that minimizes ${G}_{\text{AIF}}^{(\tilde{a})}(o\t)$, the agent indeed maximizes the expected long-term extrinsic reward (exploitation) while reducing its uncertainty and maximizing the expected information gain (exploration) about the hidden states.
%

In this section, we have explored the strengths and weaknesses of RL and AIF algorithms. RL excels in fully observable problems with high-dimensional continuous spaces, utilizing policy iteration and function approximations like DNNs. However, it struggles with partial observability and relies heavily on well-defined extrinsic reward signals. As a result, RL can face difficulties when the extrinsic reward function is absent or provides sparse or zero rewards from the environment.
In contrast, AIF addresses partial observability by integrating belief state inference and information-seeking exploratory behavior, which is often lacking in traditional RL algorithms~\cite{haarnoja2018soft, okuyama2018autonomous}. The information gain term allows AIF agents to learn even without explicitly defined extrinsic reward functions, as any plan inherently possesses intrinsic value. This aspect of AIF is particularly valuable when designing appropriate reward functions is challenging. However, the computational burden of considering all possible plans limits many AIF methods to discrete spaces or finite horizon POMDPs.
Combining RL and AIF can leverage their respective strengths, offering more effective decision-making in POMDPs. The following section introduces a unified framework that merges RL and AIF, addressing partially observable problems with infinite time horizons and continuous spaces.
\section{Unified inference integrating AIF and RL in continuous space POMDPs} \label{sec:unified_inference}
 In this section, we present a unified inference framework that combines the strengths of AIF and RL. This framework enables an agent to learn an optimal belief state-action policy in an infinite time horizon POMDP with continuous spaces, while promoting exploration through information-seeking.
The proposed unified inference approach introduces a unified objective function that formulates the action selection problem and establishes the optimality criteria for the policy. This unified objective function combines the objective functions of both the AIF and RL frameworks within the context of a POMDP with continuous state and action spaces.
Next, we show that the unified objective function obeys the so-called unified Bellman equation and the unified Bellman optimality equation. These equations generalize the standard Bellman equation and Bellman optimality equation from MDPs (i.e., Eqs.~\eqref{Eq:Bellman}, \eqref{Eq:Bellman_Q}, \eqref{Eq:Bellman_opt}, and \eqref{Eq:Bellman_opt_Q})  to the broader class of POMDPs. By utilizing these unified Bellman equations, we derive a unified policy iteration framework that iteratively optimizes the proposed unified objective function.
\\
The proposed unified policy iteration serves two main purposes. Firstly, it extends the guarantees of policy iteration from MDPs~\cite{puterman2014markov} to POMDPs, allowing us to apply insights from MDP-based RL algorithms, such as actor-critic algorithms, in the context of POMDP-based AIF. This extension enables the generalization of these MDP-based algorithms to POMDPs and facilitates computationally feasible action selection through AIF in problems with continuous spaces and an infinite time horizon.
Secondly, this approach enables us to leverage recent breakthroughs in RL methods that rely on extrinsic rewards and apply them to tasks without pre-specified reward functions. By incorporating this capability, we can address a broader range of problem domains beyond those limited to explicit reward-based learning.

Prior to detailing the unified objective function, we must outline assumptions regarding our decision-making model. 
To establish the foundational formulation of the performance criterion in a POMDP with continuous state, action, and observation spaces, it is necessary to consider specific regularity assumptions outlined by~\citeA{puterman2014markov}, known as the POMDP regularity. 
\begin{assumption} {(POMDP regularity)} \label{ASS:regula}
\begin{itemize}
\item [(A1)] The state space $\mS$ is a compact set in $\mathbb{R}^{D_{\mS}}$.
\item [(A2)] The action space $\mA$ is a compact set in $\mathbb{R} ^{D_{\mA}}$.
\item [(A3)] The observation space $\mO$ is a compact set in $\mathbb{R}^{D_{\mO}}$.
\item [(A4)]  The extrinsic reward space $\mR$ is a compact set in $\mathbb{R}$
\item [(A5)]  The transition function $\Omega$, observation function $\Theta$, and reward function $U$ are Lipschitz continuous.
 \end{itemize}
 \end{assumption}
{ Appendix~\ref{app:regularity} provides detailed explanations of each regularity assumption.} These conditions enable us to focus on deriving an objective function as an optimality criterion for an optimal policy. However, as discussed in Sub-section~\ref{sec:policy}, to mitigate the high memory requirement associated with history-dependent policies, we adopt a belief state-action policy $\pi(a_t|b_t)$, requiring the agent to infer the belief state $b_t$ before selecting an action at time step $t$.
In line with common practices in RL and AIF approaches applied to POMDPs~\cite{von2001concerning, Hafner2020Dream, millidge2021whence, han2020variational}, we assume that the agent performs perceptual inference and learning prior to action selection at time step $t$ as follows:
\begin{assumption} {(Perceptual inference and learning)} \label{ASS:percep}\\
Prior to action selection at time step $t$, given the current observation $o_t$, the agent performs inference by approximating its belief state $b_t$ with a variational posterior distribution.  This perceptual inference is performed concurrently with the perpetual learning of the generative model through the minimization of the VFE (ELBO in VAEs~\cite{kingma2013auto}). 
 \end{assumption}
Appendix~\ref{sub-sec:condionined_perc} offers further details on the learning process of variational inference and the generative model.
\subsection{Problem formulation: Unified objective function for AIF and RL in POMDPs}  \label{sec:problem}
Given the regularity assumption in Assumption~\ref{ASS:regula} and the Perceptual Inference and Learning assumption in Assumption~\ref{ASS:percep}, our attention now turns to defining a performance criterion, or objective function, for learning the optimal belief state-action policy.

In POMDPs, where agents lack complete knowledge of true environmental states, devising a performance criterion presents a challenge~\cite{nian2020dcrac, krishnamurthy2015structural, chatterjee2016decidable}. The RL literature concerning POMDPs often focuses on maximizing the expected long-term extrinsic reward~\cite{haklidir2021guided, montufar2015geometry} as the optimality criterion. The EFE in AIF, as expressed in Eq.~\eqref{Eq:epistemic},  inherently balances information-seeking exploration and reward maximization in POMDPs. Hence, if we can extend the EFE to a stochastic belief state-action policy $\pi$ and reduce the computational requirements for learning the optimal policy, the extended EFE can serve as a suitable objective function for our infinite time horizon POMDP with continuous state, action, and observation spaces.
\\
By extending the EFE to a stochastic belief state-action policy $\pi$, action $a_t$ is sampled from policy $\pi(a_t|b_t)$\footnote{In the context of our proposed unified inference framework, when we refer to the term "policy" denoted as $\pi$, we mean a Markovian belief state-action policy.} instead of being selected from a predetermined plan $\tilde{a}$. The main idea is to learn a mapping from belief states to actions without explicitly searching through all possible plans. Thus, our goal becomes finding an optimal stochastic belief state-action policy $\pi^*$ that minimizes ${G}^{(\pi)}_{\text{Unified}} (b_t)$, which represents the EFE corresponding to $\pi(a_t|b_t)$, i.e., $\pi^* = \arg \min_{\pi} {G}^{(\pi)}_{\text{Unified}} (b_t)$.
Learning the optimal belief state-action policy is computationally more efficient than finding the optimal plan in continuous spaces, facilitating quicker decision-making. Moreover, by selecting actions individually from the policy at each time step instead of a pre-determined plan, the agent can update its policy iteratively based on new observations over time, allowing it to adapt to environmental dynamics, avoid repeating errors, and generalize better to new scenarios.
\\
Theorem~\ref{the:EFE_stoch} states the extended EFE for the belief state-action policy $\pi$.
\begin{theorem} \label{the:EFE_stoch}
Let $\pi$ be a stochastic belief state-action policy selecting action $a\ta$ according to $\pi(a\ta|b\ta)$ for $\tau=\{t, t+1, t+2, ... \}$ in a POMDP satisfying Assumption~\ref{ASS:regula}. ${G}^{(\pi)}_{\text{Unified}}(b\t)$, the EFE corresponding to the policy $\pi$, can be achieved as
\begin{eqnarray}
\!\!\!\!\!\!\!\!\! {G}^{(\pi)}_{\text{Unified}} (b\t)&\!\!=\!\!& \mathbb{E}_{ \prod_{\tau =t}^{\infty} P(b_{\tau+1}| o_{\tau+1}, b_{\tau}, a_{\tau}) \tilde{P}(o_{\tau+1}|a\ta, b\ta) \pi(a\ta|b\ta) q(s_{\tau+1}| o_{\tau+1}, b_{\tau}) } \big[ \sum_{\tau =t}^{\infty} \text{log} \pi(a_{\tau}| {b_{\tau}})    \nonumber  \\
\!\!\!\!\!\!\!\!\!  &\!\!-\!\!& \text{log} \tilde{P}({o}_{\tau+1}|a\ta, b\ta) + \text{log} \frac {P({\s_{\tau+1}}|a\ta, b_{\tau}) }{q({\s_{\tau+1}}|o_{\tau+1}, b_{\tau})}  \big]. \label{Eq:Beleif_EFE_beleif}  
\end{eqnarray} 
\end{theorem}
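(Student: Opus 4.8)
The plan is to mirror the derivation of the plan-based EFE in Eqs.~\eqref{Eq:EFE_def_prod}--\eqref{Eq:EFE_simple}, carrying out three structural substitutions and then simplifying the resulting log-ratio. First I would write the generalized EFE as a KL-type functional $\mathbb{E}_{Q}\!\left[\log (Q/\tilde{P})\right]$ over the future trajectory $(\s_{t+1:\infty}, o_{t+1:\infty}, a_{t:\infty}, b_{t+1:\infty})$, where the three modifications relative to Eq.~\eqref{Eq:EFE_def_prod} are: (i) conditioning on the belief state $b_t$ rather than on $s_t$ or $o_t$ (legitimate by Assumption~\ref{ASS:percep}, since $b_t$ is inferred before action selection and summarizes $o_{0:t}, a_{0:t-1}$); (ii) replacing the single whole-plan distribution $q(\tilde{a})$ by the product of per-step policies $\prod_{\tau}\pi(a_\tau|b_\tau)$; and (iii) taking the horizon $T\to\infty$.

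Second, I would fix the two factorizations explicitly. The variational posterior factors as $\prod_{\tau=t}^{\infty}\pi(a_\tau|b_\tau)\,P(\s_{\tau+1}|a_\tau,b_\tau)\,P(b_{\tau+1}|o_{\tau+1},b_\tau,a_\tau)$, the belief-conditioned analogue of $\prod_\tau q(\tilde a)P(\s_{\tau+1}|\s_\tau,a_\tau)$, with $P(\s_{\tau+1}|a_\tau,b_\tau)=\int P(\s_{\tau+1}|\s_\tau,a_\tau)\,b_\tau(\s_\tau)\,d\s_\tau$ obtained by marginalizing the state over the belief. The denominator is the belief-conditioned biased generative model of Definition~\ref{def:AIF_biased_GM}, namely $\prod_{\tau=t}^{\infty}\tilde{P}(o_{\tau+1}|a_\tau,b_\tau)\,q(\s_{\tau+1}|o_{\tau+1},b_\tau)$. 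The belief-transition factor $P(b_{\tau+1}|o_{\tau+1},b_\tau,a_\tau)$ is the Dirac distribution $\delta(b_{\tau+1}-\text{BeliefUpdate}(b_\tau,\s_{\tau+1},o_{\tau+1},a_\tau))$ identified in Eq.~\eqref{Eq:beleif-state}; the crucial observation is that it enters only the sampling measure and not the log-ratio, because the belief is a deterministic function of its arguments and appears identically on both sides of the ratio.

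Third, I would substitute the two products into $\log(Q/\tilde P)$, turning the log of the products into a sum over $\tau$ that collapses into the per-step integrand $\log\pi(a_\tau|b_\tau)-\log\tilde{P}(o_{\tau+1}|a_\tau,b_\tau)+\log\big(P(\s_{\tau+1}|a_\tau,b_\tau)/q(\s_{\tau+1}|o_{\tau+1},b_\tau)\big)$; the replacement $\log q(\tilde a)\mapsto\sum_\tau\log\pi(a_\tau|b_\tau)$ is exactly what converts the single whole-plan term into the per-step policy (entropy) contribution. Collecting the surviving factors as the expectation measure $\prod_\tau P(b_{\tau+1}|o_{\tau+1},b_\tau,a_\tau)\tilde{P}(o_{\tau+1}|a_\tau,b_\tau)\pi(a_\tau|b_\tau)q(\s_{\tau+1}|o_{\tau+1},b_\tau)$ then yields Eq.~\eqref{Eq:Beleif_EFE_beleif}.

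The main obstacle is the passage to the infinite horizon: unlike the finite-$T$ AIF case, I must guarantee the object is well defined. Here the regularity conditions of Assumption~\ref{ASS:regula} (compact spaces and Lipschitz $\Omega,\Theta,U$) are what keep each per-step term bounded, while stationarity together with the Markovian nature of the belief recursion in Eq.~\eqref{Eq:beleif-state} lets the same per-step structure repeat indefinitely. A secondary but conceptually delicate point is justifying that the factorization stays internally consistent once actions are drawn step-by-step from $\pi(a_\tau|b_\tau)$ rather than from a fixed plan --- that is, that the biased generative model and the variational posterior still decompose so the chain telescopes cleanly into the displayed per-step sum rather than leaving residual cross-terms.
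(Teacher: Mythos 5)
Your proposal is correct and follows essentially the same route as the paper's proof: both start from the plan-based EFE of Eq.~\eqref{Eq:EFE_def}, replace the whole-plan distribution $q(\tilde{a})$ by the per-step product $\prod_\tau \pi(a_\tau|b_\tau)$, condition on the belief state (the paper does this via the law of total expectation over $P(b_{t:\infty})$ and then argues the direct dependence on $s_\tau$ is redundant given $b_\tau$, whereas you bake the belief conditioning in from the start), treat the belief transition as a Dirac delta that affects only the sampling measure, and collapse the log of the factorized ratio into the per-step sum. The only caveat is that your appeal to Assumption~\ref{ASS:regula} bounds each per-step term but not the undiscounted infinite sum itself; the paper does not resolve this inside the theorem either, deferring it to the introduction of the discount factor $\gamma$ in Eq.~\eqref{Eq:EFE_reward}.
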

\begin{proof}
See Appendix~\ref{APP:EFE_stoch}.
\end{proof} \vspace{.1in} \\
The expression $P(b_{\tau+1}|s_{\tau+1}, o_{\tau+1}, b_{\tau}, a_{\tau})$ in Eq.~\eqref{Eq:Beleif_EFE_beleif} represents the agent's model of the belief state forward distribution, as defined in Sub-section~\ref{sub-sec:inference}. ${P}(s_{\tau+1}|a_{\tau}, b_{\tau})$ represents the belief state-conditioned transition model $b_{\tau}$.
Furthermore, $q({\s_{\tau+1}}|o_{\tau+1}, b_{\tau})$ and $\tilde{P}(o_{\tau+1}|b\ta, a\ta)$ denotes the variational posterior $q({\s_{\tau+1}}|o_{\tau+1})$ and the prior preference $\tilde{P}(o_{\tau+1}|s\ta, a\ta)$ in AIF conditioned on the belief state $b_{\tau}$.\footnote{ As the belief state $b\ta$ encapsulates the agent's knowledge about the underlying state $s\ta$ based on all available observations and actions, the direct conditioning on $s_{\tau}$ can be omitted when conditioning on $b_{\tau}$}
Thus, Eq.~\eqref{Eq:Beleif_EFE_beleif} computes the EFE, where the agent possesses a biased generative model over the sequence $(s\nt,  ..., b\ta,s_{\tau+1}, o_{\tau+1},  b_{\tau+1}, ...)$ given $b\t$. This sequence involves $o_{\tau+1} \sim \tilde{P}(o_{\tau+1}|a\ta,b\ta)$, $s_{\tau+1} \sim q({\s_{\tau+1}}|o_{\tau+1}, b_{\tau})$, and $b_{\tau+1} \sim P(b_{\tau+1}|  s_{\tau+1},o_{\tau+1}, b_{\tau}, a_{\tau})$. We can regard this biased generative model over this sequence as an extension of the biased generative model of AIF defined in Eq.~\eqref{Eq:AIF_GM} to encompass a stochastic belief state-action policy $\pi$.  Hence, we denote this biased generative model as $\tilde{P}({s}_{t+1:\infty}, o_{t+1:\infty}, b_{t+1:\infty}|b\t, {a}_{t:\infty} )$ and refer to it as the biased belief state generative model, which is formally defined as follows:
\begin{definition}{(Biased belief state generative model)} \label{def:Beleif_biased_GM}\\
Given the prior preference $\tilde{P}(o_{\tau+1}|b\ta,a\ta)$ for $\tau=\{t, t+1, \ldots\}$, the biased belief state generative model is $\tilde{P}({s}_{t+1:\infty}, o_{t+1:\infty}, b_{t+1:\infty}|b\t, {a}_{t:\infty})$, which factorizes as:
\begin{eqnarray}
 \lefteqn{ \!\!\!\!\!\!\!\!\!\!\!\!\!\! \tilde{P}({s}_{t+1:\infty}, o_{t+1:\infty}, b_{t+1:\infty}|b\t, {a}_{t:\infty}  ) =  \prod_{\tau =t}^{\infty} \tilde{P}({s}_{\tau+1}, o_{\tau+1}, b_{\tau+1}|b\ta, {a}_{\tau}  ) } \nonumber \\
&& \,\,\,\,\,\,\,\,\,\,\,\,\,\,\,\,\,\, =  \prod_{\tau =t}^{\infty} \tilde{P}(o_{\tau+1}|b\ta, a\ta) q({\s_{\tau+1}}|o_{\tau+1}, b_{\tau}) P(b_{\tau+1}| s_{\tau+1}, o_{\tau+1}, b_{\tau}, a_{\tau}).  \label{Eq:AIF_beleif_GM}
\end{eqnarray} 
\end{definition} 
The biased belief state generative model approximates ${P}({s}_{t+1:\infty}, o_{t+1:\infty}, b_{t+1:\infty}|b\t, {a}_{t:\infty})$, which extends the generative model of a POMDP defined in Eq.~\eqref{Eq:final_GM} to account for a stochastic belief state-action policy. Hence, we refer to ${P}({s}_{t+1:\infty}, o_{t+1:\infty}, b_{t+1:\infty}|b\t, {a}_{t:\infty})$ as the belief state generative model, which can be factorized as:
\begin{align}
 \lefteqn{ \!\!\!\!\!\!\!\!\!\!\!\!\!\!\!\!\!\!  {P}({s}_{t+1:\infty}, o_{t+1:\infty}, b_{t+1:\infty}|b\t, {a}_{t:\infty} \! )= \prod_{\tau =t}^{\infty} {P}({s}_{\tau+1}, o_{\tau+1}, b_{\tau+1}|b\ta, {a}_{\tau}  )  } \nonumber \\
&& =  \prod_{\tau =t}^{\infty}\! {P}(s_{\tau+1}|a_{\tau}, \! b_{\tau}\!) {P}(o_{\tau+1}|s_{\tau+1},\! b_{\tau}\!) P(b_{\tau+1}| s_{\tau+1}, o_{\tau+1}, b_{\tau}, a_{\tau}\!),
\label{Eq:beleif_GM}
\end{align} 
where ${P}(o_{\tau+1}|s_{\tau+1},b_{\tau})$ represents the belief state-conditioned likelihood model.

To accommodate the reward-maximizing RL objective, following the complete class theorem in AIF~\cite{friston2012active}, we employ the reparameterization $\text{log} \tilde{P}(o_{\tau+1}|a\ta,b\ta)=\mathbb{E}_{P(r\ta| b\ta,a\ta)} [r\ta]$, where $P(r\ta| b\ta,a\ta)$ represents the belief state-conditioned reward model.
Additionally, to ensure that the value of $G^{(\pi)}_{\text{Unified}}(b\t)$ in Eq.~\eqref{Eq:Beleif_EFE_beleif} remains bounded, we use the discount factor $\gamma \in [0,1)$, following the convention in RL literature. We denote the modified version of $G^{(\pi)}_{\text{Unified}}(b\t)$ as ${G}^{(\pi)} (b\t)$, which can be expressed as follows:
 \begin{eqnarray}
\!\!\!\! {G}^{(\pi)} ( b\t ) &\!\!\!\!\!=\!\!\!\!& \mathbb{E}_{ \prod_{\tau =t}^{\infty} P(b_{\tau+1}|\s_{\tau+1}, o_{\tau+1}, b_{\tau}, a_{\tau}) \tilde{P}(o_{\tau+1}|a\ta, b\ta) \pi(a\ta|b\ta) q(s_{\tau+1}| o_{\tau+1}, b_{\tau})  } \bigg[ \! \sum_{\tau =t}^{\infty} \! \gamma^{\tau-t} \! \bigg(\! \text{log} \pi(a_{\tau}| {b_{\tau}}\!)   \nonumber     \\
\!\!\!\!\!\!\!\!\!  &\!\!\!\!-\!\!\!\!& \text{log} \tilde{P}({o}_{\tau+1}|b\ta, a\ta) + \text{log} \frac {P({\s_{\tau+1}}|a\ta,b\ta) }{q({\s_{\tau+1}}|o_{\tau+1},b\ta)}  \bigg) \bigg]  \label{Eq:EFE_reward} \\
\!\!\!\!\!\!\!\!\! &\!\!\!\!=\!\!\!\!&  \underbrace{ \mathbb{E}_{ \prod_{\tau =t}^{\infty} {P}(o_{\tau+1}, b_{\tau+1}, s_{\tau+1}| b\ta, a\ta)} \big[ \sum_{\tau =t}^{\infty} - \gamma^{\tau-t} H(\pi(.| {b_{\tau}})) \big]}_{\text{Expected entropy}}   \nonumber  \\
\!\!\!\!\!\!\!\!\!  &\!\!\!\!-\!\!\!\!& \underbrace{ \mathbb{E}_{  \prod_{\tau =t}^{\infty} \pi(a\ta|b\ta) {P}(s_{\tau+1},  o_{\tau+1}, b_{\tau+1}|  b\ta, a\ta) P(r\ta|b\ta, a\ta) } \big[ \sum_{\tau =t}^{\infty} \gamma^{\tau-t} r\ta \big]}_{\text{Expected long-term  extrinsic reward} } \nonumber   \\
\!\!\!\!\!\!\!\!\!  &\!\!\!\!-\!\!\!\!& \underbrace{ \mathbb{E}_{\prod_{\tau =t}^{\infty } \pi(a\ta|b\ta) {P}( o_{\tau+1}, b_{\tau+1}|  b\ta, a\ta) } \big[\sum_{\tau =t}^{\infty} \gamma^{\tau-t} D_{KL} [q(.|o_{\tau+1}, b\ta),P(.|a\ta,b\ta)] \big] }_{\text{Expected information gain}}, \label{Eq:Beleif_EFE_reward2}
\end{eqnarray} 
where $H(\pi(.| {b_{\tau}}))$ is the Shannon entropy of $\pi(.| {b_{\tau}})$ and is calculated as $H(\pi(.| {b_{\tau}}))=-\mathbb{E}_{\pi(a\ta|b\ta)} [\text{log}\pi(a\ta|b\ta)]$. This entropy term provides a bonus for random exploration, encouraging the agent to distribute the probability across all possible actions as evenly as possible. This helps prevent the phenomenon of policy collapse~\cite{millidge2020deep}, where the policy quickly converges to a degenerate distribution (i.e., a deterministic policy).
The second term in Eq.~\eqref{Eq:Beleif_EFE_reward2} corresponds to the agent's objective of maximizing the expected long-term extrinsic reward, emphasizing exploitation. Including this term encourages the agent to focus on immediate rewards and exploit its current knowledge.
The third term represents the expected information gain. By incorporating this term into the objective function, the agent is motivated to actively seek out information to reduce uncertainty about the hidden state of the environment.
By combining both reward maximization and information-seeking, ${G}^{(\pi)} (b\t)$ serves as a unified objective function for both RL and AIF in continuous space POMDPs. The problem is thus formulated as finding the optimal belief state-action policy $\pi^*$ that minimizes ${G}^{(\pi)} (b\t)$.

Since ${G}^{(\pi)} (b_t)$ is a composite objective function that comprises multiple terms, including the expected long-term extrinsic reward, the entropy of the policy, and the information gain, we introduce the scaling factors $0 \leq \alpha < \infty$, $0 \leq \beta < \infty$, and $0 \leq \zeta < \infty$ in ${G}^{(\pi)} (b_t)$ to adjust the relative weights of these components in deriving the optimal policy. This flexibility allows us to prioritize specific objectives according to the task and desired behaviour. 
By incorporating these scaling factors, we rewrite ${G}^{(\pi)} (b_t)$ in Eq.~\eqref{Eq:EFE_reward} as follows:
\begin{eqnarray}
\!\!\!\!\!\! G^{(\pi)}(b\t) &=&  \mathbb{E}_{ \prod_{\tau =t}^{\infty} \pi(a_{\tau}|b_{\tau}) {P}(s_{\tau+1},  o_{\tau+1}, b_{\tau+1}|  b\ta, a\ta) P(r\ta|b\ta, a\ta)  } \big[ \sum_{\tau =t}^{\infty}  \gamma^{\tau-t} \big( \beta \, \text{log}\pi(a_{\tau}|b_{\tau})   \nonumber \\
 \!\!\!\!\!\! &-& \alpha \, r\ta   +  \zeta \, \text{log} \frac {P({\s_{\tau+1}}|b\ta,a\ta) }{q({\s_{\tau+1}}|o_{\tau+1}, b\ta)}  \big]. \label{Eq:Objective}
\end{eqnarray}

\textbf{Note 5.1:} Throughout this paper, when we refer to $G^{(\pi)}(b\t)$, we specifically mean the scaled version expressed in Eq.~\eqref{Eq:Objective}. Furthermore, as $G^{(\pi)}(b\t)$ represents the EFE as a function of a given belief state $b\t$, we refer to ${G}^{(\pi)}(b\t)$ as the belief state EFE to distinguish it from the EFE ${G}^{(\tilde{a})}(o\t)$ defined in AIF (Eq.~\eqref{Eq:EFE_simple}).

Given the the belief state EFE ${G}^{(\pi)}(b\t)$, the problem can now be formulated as 
\begin{eqnarray}
 \pi^* \in \arg\min_{\pi} G^{(\pi)}(b\t). \label{Eq:objec}
\end{eqnarray}
The belief state EFE corresponding to the optimal policy $\pi^*$ is referred to as the optimal belief state EFE and is denoted as $G^{*}(b\t)$. In other words, we have $G^{*}(b\t) := G^{(\pi^*)}(b\t) = \min_{\pi} G^{(\pi)}(b\t)$. It should be noted that $\pi^* \in \arg\min_{\pi} G^{(\pi)}(b\t)$ and not $\pi^* = \arg\min_{\pi} G^{(\pi)}(b\t)$ because the optimization problem in Eq.~\eqref{Eq:objec} may admit more than one optimal policy.
\\
Considering that the action space $\mathcal{A}$ in our problem is continuous, the space of possible policies $\pi$ representing probability distribution functions over the continuous action space $\mathcal{A}$ is vast and continuous.  Therefore, solving the optimization problem stated in Eq.~\eqref{Eq:objec} is a challenging and non-trivial task. In the next sub-section, we will demonstrate that $G^{(\pi)}(b\t)$ exhibits a recursive relationship, which offers a pathway for solving the optimization problem in Eq.~\eqref{Eq:objec}.
\subsection{Unified Bellman equation}
In this sub-section, we introduce a recursive solution for the problem in Eq.~\eqref{Eq:objec} in the spirit of the classical Bellman approach in MDPs~\cite{bellman1952theory}. This recursion allows us to solve the optimization problem presented in Eq.~\eqref{Eq:objec} by breaking it down into smaller sub-problems. \\
We begin by demonstrating that the belief state EFE $G^{(\pi)}(b_t)$, defined in Eq.~\eqref{Eq:Objective}, follows a Bellman-like recursion in the context of a POMDP. Since this recursion is defined for the belief state EFE $G^{(\pi)}(b_t)$, which serves as a unified objective function for both RL and AIF in the POMDP setting, we refer to this Bellman-like recursion as the unified Bellman equation.
\begin{proposition} {(Unified Bellman equation for ${G}^{(\pi)}(b\t)$)} \label{cor:q1} \\
The belief state EFE ${G}^{(\pi)}(b\t)$ defined in Eq.~\eqref{Eq:Objective} for a POMDP satisfying Assumption~\ref{ASS:regula} can be computed recursively starting from the belief state $b\t$ and following policy $\pi$, as follows:
\begin{eqnarray}
  {G}^{(\pi)}(b\t) &=& \mathbb{E}_{ \pi(a\t|b\t)  P(r\t|b\t,a\t)} \bigg[  \beta \, \text{log} \pi(a\t|b\t) - \alpha \, r\t   \label{Eq:unified_Bellman} \\
   &+& \mathbb{E}_{ {P}(b\nt,o\nt, \s\nt|b\t, a\t)} \big[ \zeta \,   \text{log} \frac{P(\s\nt|  b\t, a\t)}{ q({\s}_{t+1}|{o}_{t+1}, b\t)} + \gamma  {G}^{(\pi)}(b\nt) \big]  \bigg]. \nonumber
\end{eqnarray}
\end{proposition}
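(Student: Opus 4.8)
The plan is to establish the recursion by peeling the $\tau = t$ summand off the infinite discounted sum that defines $G^{(\pi)}(b\t)$ in Eq.~\eqref{Eq:Objective}, and then recognizing the remaining tail as a $\gamma$-discounted copy of $G^{(\pi)}(b\nt)$. Abbreviating the per-step cost as $c\ta := \beta \log \pi(a\ta|b\ta) - \alpha r\ta + \zeta \log \frac{P(s_{\tau+1}|b\ta, a\ta)}{q(s_{\tau+1}|o_{\tau+1}, b\ta)}$, I would first split $\sum_{\tau=t}^\infty \gamma^{\tau-t} c\ta = c\t + \gamma \sum_{\tau=t+1}^\infty \gamma^{\tau-(t+1)} c\ta$ and push the expectation through the sum. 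The legitimacy of interchanging the expectation with the infinite summation is deferred to the regularity argument below.

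First I would handle the $\tau = t$ term. The quantity $c\t$ depends only on $(b\t, a\t, r\t, o\nt, s\nt)$, so its expectation under the full trajectory measure of Eq.~\eqref{Eq:Objective} collapses to an expectation under the one-step marginal. The $\beta \log \pi(a\t|b\t)$ and $-\alpha r\t$ contributions require only $\pi(a\t|b\t) P(r\t|b\t, a\t)$, whereas the $\zeta$-weighted log-ratio requires the one-step belief state generative model ${P}(s\nt, o\nt, b\nt|b\t, a\t)$ of Eq.~\eqref{Eq:beleif_GM}; since $c\t$ does not involve $b\nt$, taking its expectation under the joint or under the $b\nt$-marginalized law is equivalent, and this reproduces exactly the bracketed structure of the target Eq.~\eqref{Eq:unified_Bellman}.

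Next I would treat the tail $\gamma \, \mathbb{E}[\sum_{\tau=t+1}^\infty \gamma^{\tau-(t+1)} c\ta]$. The central step is a Markov collapse: conditioned on the belief state $b\nt$, the distribution of the future trajectory $(a\nt, s_{t+2}, o_{t+2}, b_{t+2}, \ldots)$ under the stationary policy $\pi$ and the belief state generative model coincides with the trajectory distribution that defines $G^{(\pi)}$ started at $b\nt$. This rests on (i) the Markov property of the belief state established in Sub-section~\ref{sub-sec:inference}, where $P(b_{\tau+1}|s_{\tau+1}, o_{\tau+1}, b\ta, a\ta)$ is the Dirac update that makes $b_{\tau+1}$ a deterministic function of $b\ta$ and the newly arrived data, and (ii) the stationarity of $\pi$, $P$, and the reward and preference models. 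Hence $\mathbb{E}[\sum_{\tau=t+1}^\infty \gamma^{\tau-(t+1)} c\ta \mid b\nt] = G^{(\pi)}(b\nt)$, and applying the tower property over the one-step law of $b\nt$ yields $\gamma \, \mathbb{E}_{\pi(a\t|b\t) P(b\nt, o\nt, s\nt|b\t, a\t)}[G^{(\pi)}(b\nt)]$. Combining this with the $\tau = t$ term gives Eq.~\eqref{Eq:unified_Bellman}.

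The main obstacle I expect is rigor rather than ingenuity: justifying both the interchange of the infinite sum with the expectation and the conditional Markov collapse. For the former I would invoke Assumption~\ref{ASS:regula}, under which the compactness of $\mS$, $\mA$, $\mO$, $\mR$ together with the Lipschitz continuity of $\Omega$, $\Theta$, and $U$ render the reward, the policy entropy, and the KL log-ratio uniformly bounded; the geometric factor $\gamma^{\tau-t}$ with $\gamma \in [0,1)$ then makes the series absolutely convergent, so that Fubini and dominated convergence validate the term-by-term manipulations and the splitting of the sum. For the latter, the key is to exploit stationarity to argue that the functional form of $G^{(\pi)}$ is independent of the absolute time index, so that the shifted tail is genuinely $G^{(\pi)}(b\nt)$ and not merely an expression of the same shape.
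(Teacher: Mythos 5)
Your proposal is correct and follows essentially the same route as the paper's proof in Appendix~\ref{APP:q1}: peel off the $\tau=t$ summand, apply the law of total expectation over the one-step factors, and identify the discounted tail with ${G}^{(\pi)}(b\nt)$ via the Markovianity of the belief state and stationarity of the policy and models. Your additional remarks on invoking Assumption~\ref{ASS:regula} to justify the sum--expectation interchange simply make explicit a step the paper leaves implicit.
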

\begin{proof}
See Appendix~\ref{APP:q1}.
\end{proof}
 \vspace{.1in} \\
The unified Bellman equation establishes a recursive relationship between the belief state EFE at a given belief state $b\t$ and the belief state EFE at its successor belief state $b\nt$. This recursive relationship allows us to calculate the belief state EFE $G^{(\pi)}(b\t)$ at time instant $t$ by considering the belief state EFE for the remaining time horizon, starting from the next belief state $b\nt$.

To demonstrate how the unified Bellman equation in Proposition~\ref{cor:q1} facilitates the optimization problem in Eq.~\eqref{Eq:objec}, we present the following theorem, which proves that the optimal belief state EFE $G^{*}(b\t)$ can be computed recursively using the optimal belief state EFE at the successor belief state $b\nt$, i.e., $G^{*}(b\nt)$. We refer to this recursive computation of the optimal belief state EFE as the unified Bellman optimality equation.
\begin{theorem} {(Unified Bellman optimality equation for $G^*(b\t)$)} \\ \label{theo:optimal_policy}
The optimal belief state EFE $G^*(b\t)=\min_{\pi} G^{(\pi)}(b\t)$ in a POMDP satisfying Assumption~\ref{ASS:regula} can be calculated recursively starting from the belief state $b\t$ and following the optimal policy $\pi^*$ as follows:
\begin{eqnarray}
   {G}^{*}(b\t) &=& \min_{\pi(a\t|b\t)} \mathbb{E}_{ \pi(a\t|b\t)} \bigg[ \mathbb{E}_{  p(r\t|b\t,a\t)} \big[\beta \, \text{log} \pi(a\t|b\t) - \alpha \, r\t  \label{Eq:unified_optimality_Bellman} \\
 &+& \mathbb{E}_{ P(b\nt,o\nt, \s\nt| b\t,a\t)} [ \zeta \,   \text{log} \frac{P(\s\nt|  b\t, a\t)}{ q({\s}_{t+1}|{o}_{t+1}, b\t)} + \gamma  {G}^{*}(b\nt) ] \big]  \bigg]. \nonumber
\end{eqnarray}
\end{theorem}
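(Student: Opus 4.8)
The plan is to read the claimed identity as the assertion that $G^*$ is a fixed point of a one-step Bellman optimality operator, and to prove it by the classical pair of inequalities from dynamic programming, leaning entirely on the per-policy recursion already established in Proposition~\ref{cor:q1}. To keep the bookkeeping light I would first bundle the policy-independent one-step terms into a cost $\ell(b_t,a_t) := \mathbb{E}_{P(r_t|b_t,a_t)}[-\alpha r_t] + \mathbb{E}_{P(b_{t+1},o_{t+1},s_{t+1}|b_t,a_t)}\bigl[\zeta \log\tfrac{P(s_{t+1}|b_t,a_t)}{q(s_{t+1}|o_{t+1},b_t)}\bigr]$, and introduce, for a fixed policy $\pi$ and any bounded function $G$ on belief space $\mathcal{B}$, the operators $(\mathcal{T}^{\pi} G)(b) := \mathbb{E}_{\pi(a|b)}\bigl[\beta\log\pi(a|b) + \ell(b,a) + \gamma\,\mathbb{E}_{P(b'|b,a)}[G(b')]\bigr]$ and $(\mathcal{T}^{*} G)(b) := \min_{\pi(\cdot|b)}(\mathcal{T}^{\pi} G)(b)$. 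In this language Proposition~\ref{cor:q1} reads $G^{(\pi)} = \mathcal{T}^{\pi} G^{(\pi)}$, the definition of the optimum reads $G^{*} = \inf_{\pi} G^{(\pi)}$ pointwise, and the target Eq.~\eqref{Eq:unified_optimality_Bellman} is precisely $G^{*} = \mathcal{T}^{*} G^{*}$. Two elementary facts drive everything: each $\mathcal{T}^{\pi}$ is monotone (since $\gamma\ge 0$ and $P(b'|b,a)$ is a probability measure) and a $\gamma$-contraction in the sup-norm (the $\beta\log\pi$ and $\ell$ terms cancel on taking differences, leaving $\gamma$ times a belief-state expectation of $G_1-G_2$).

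For the lower bound, I would fix an arbitrary $\pi$. Since $G^{(\pi)} \ge G^{*}$ pointwise and $\mathcal{T}^{\pi}$ is monotone, Proposition~\ref{cor:q1} gives $G^{(\pi)}(b_t) = (\mathcal{T}^{\pi} G^{(\pi)})(b_t) \ge (\mathcal{T}^{\pi} G^{*})(b_t) \ge (\mathcal{T}^{*} G^{*})(b_t)$, where the last step holds because $(\mathcal{T}^{\pi} G^{*})(b)$ depends on $\pi$ only through the current-step distribution $\pi(\cdot|b)$, so minimizing over the full policy coincides with the single-step minimization defining $\mathcal{T}^{*}$. As the right-hand side does not depend on $\pi$, taking the infimum over $\pi$ yields $G^{*}(b_t) \ge (\mathcal{T}^{*} G^{*})(b_t)$.

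For the reverse inequality, let $\pi^{\dagger}(\cdot|b)$ attain the minimum defining $(\mathcal{T}^{*} G^{*})(b)$ at each $b$; because $\pi(\cdot|b)\mapsto\mathbb{E}_{\pi}[\beta\log\pi+\cdots]$ is strictly convex (negative entropy plus a linear term), the minimizer is unique, and for $\beta>0$ it is the explicit Gibbs form $\pi^{\dagger}(a|b)\propto\exp\bigl(-\beta^{-1}[\ell(b,a)+\gamma\,\mathbb{E}_{P(b'|b,a)}[G^{*}(b')]]\bigr)$, which is measurable in $b$ by continuity of the integrand. By construction $\mathcal{T}^{\pi^{\dagger}} G^{*} = \mathcal{T}^{*} G^{*}$, and the lower bound already gives $G^{*} \ge \mathcal{T}^{*} G^{*} = \mathcal{T}^{\pi^{\dagger}} G^{*}$. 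Applying the monotone map $\mathcal{T}^{\pi^{\dagger}}$ repeatedly to this inequality produces a non-increasing sequence $G^{*} \ge \mathcal{T}^{\pi^{\dagger}} G^{*} \ge (\mathcal{T}^{\pi^{\dagger}})^2 G^{*} \ge \cdots$, which converges (by the $\gamma$-contraction, $0\le\gamma<1$) to the unique fixed point $G^{(\pi^{\dagger})}$. Hence $G^{(\pi^{\dagger})} \le \mathcal{T}^{\pi^{\dagger}} G^{*} = \mathcal{T}^{*} G^{*}$; and since $\pi^{\dagger}$ is a particular policy, $G^{*} \le G^{(\pi^{\dagger})} \le \mathcal{T}^{*} G^{*}$. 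Combining the two bounds gives $G^{*} = \mathcal{T}^{*} G^{*}$, which is Eq.~\eqref{Eq:unified_optimality_Bellman}.

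I expect the main obstacle to be rigour in the continuous-space setting rather than the algebra, and this is exactly where Assumption~\ref{ASS:regula} is indispensable. Compactness of $\mS$, $\mA$, and $\mO$, boundedness of the reward space, and Lipschitz continuity of $\Omega$, $\Theta$, $U$ are needed to guarantee that $\ell(b,a)$ and hence $G^{(\pi)}$ and $G^{*}$ are finite and bounded, so that the sup-norm on $\mathcal{B}$-functions is well-defined and $\mathcal{T}^{\pi}$, $\mathcal{T}^{*}$ are genuine $\gamma$-contractions with unique fixed points. The delicate part is controlling the information-gain KL term $D_{KL}[q(\cdot|o_{t+1},b_t),P(\cdot|a_t,b_t)]$ and the entropy term, neither of which is a priori bounded; showing that the biased belief-state generative model of Definition~\ref{def:Beleif_biased_GM} keeps them finite under the regularity assumptions is the crux. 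Finally, compactness of $\mA$ with continuity of the per-step objective secures existence and measurable selection of the minimizer $\pi^{\dagger}$: for $\beta>0$ the Gibbs form above makes this explicit, while the limiting case $\beta=0$ reduces to a standard measurable-selection argument in the spirit of the soft policy iteration results of~\citeA{haarnoja2018soft}.
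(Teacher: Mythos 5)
Your proof is correct, but it takes a genuinely different route from the paper's. The paper proves Eq.~\eqref{Eq:unified_optimality_Bellman} by splitting the policy as $\pi=(\pi(\cdot|b_t),\tilde\pi)$, pulling the minimization inside the recursion of Proposition~\ref{cor:q1}, and then justifying the interchange $\min_{\tilde\pi}\mathbb{E}_{P(b_{t+1},o_{t+1},s_{t+1}|b_t,a_t)}[G^{(\tilde\pi)}(b_{t+1})]=\mathbb{E}[\min_{\tilde\pi}G^{(\tilde\pi)}(b_{t+1})]$ via the pair of inequalities $\min\mathbb{E}\ge\mathbb{E}\min$ and $\mathbb{E}[G^{(\tilde\pi^*)}]\le\min_{\tilde\pi}\mathbb{E}[G^{(\tilde\pi)}]$, where $\tilde\pi^*$ is a single tail policy that is simultaneously optimal at every realization of $b_{t+1}$ (which is where the Markov/stationary structure of belief state-action policies is silently used). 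Your argument instead packages everything into the operators $\mathcal{T}^{\pi}$ and $\mathcal{T}^{*}$ and runs the classical two-inequality fixed-point proof: monotonicity plus $G^{(\pi)}=\mathcal{T}^{\pi}G^{(\pi)}$ gives $G^{*}\ge\mathcal{T}^{*}G^{*}$, and the greedy policy $\pi^{\dagger}$ together with the $\gamma$-contraction gives the reverse. What your route buys is that the delicate min/expectation interchange is replaced by the cleanly verifiable monotonicity and contraction properties, you obtain uniqueness of the fixed point and the explicit Gibbs form of the one-step minimizer as byproducts (the paper only establishes these later, in Theorem~\ref{the:instant_optimal} and Theorem~\ref{lem:policy_evaluation}), and you correctly flag that boundedness of the entropy and information-gain terms under Assumption~\ref{ASS:regula} is the real crux in continuous spaces --- a point the paper's proof glosses over. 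What the paper's route buys is brevity and the fact that it needs nothing beyond Proposition~\ref{cor:q1}, whereas you must establish the contraction and the existence/measurability of $\pi^{\dagger}$ up front; both of those are standard under the compactness and Lipschitz assumptions, so this is a cost in length rather than in validity.
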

\begin{proof}
See Appendix~\ref{APP:optimal_policy}.
\end{proof} \vspace{.1in}
\\
Using the unified Bellman optimality equation, the following corollary demonstrates how finding the optimal instant policy $\pi^*(a_t|b_t) \in \arg\min_{\pi(a_t|b_t)} G^*(b_t)$ for selecting the instant action $a_t$ based on the belief state $b_t$, involves finding an optimal policy $\tilde{\pi}^* \in \arg\min_{\tilde{\pi}} G^{(\tilde{\pi})}(b_{t+1})$ for the remaining actions, starting from the belief state $b_{t+1}$ resulting from the first action $a_t$.
\begin{corollary}  \label{corr:optimal_principle} 
The instant optimal policy $\pi^*(a\t|b\t) \in \arg\min_{\pi(a\t|b\t)} G^{*}(b\t)$ for choosing instant action $a\t$ can be achieved in terms of the optimal policy $\tilde{\pi}^* \in \arg\min_{\tilde{\pi}} G^{(\tilde{\pi})}(b\nt)$ with regard to the next belief state $b\nt$ resulting from $a\t$:
\begin{eqnarray}
   \pi^*(a\t|b\t) &\!\!\! \in \!\!\! &    \arg\min_{\pi(a\t|b\t)} \mathbb{E}_{ \pi(a\t|b\t)} \bigg[ \mathbb{E}_{  P(r\t|b\t,a\t)} [\beta \, \text{log} \pi(a\t|b\t) - \alpha \, r\t  ]  \nonumber   \\
  &\!\!\! + \!\!\!& \mathbb{E}_{ P(b\nt,o\nt, \s\nt| s\t,a\t)} \big[ \zeta \,   \text{log} \frac{P(\s\nt|  b\t, a\t)}{ q({\s}_{t+1}|{o}_{t+1}, b\t)} + \gamma \min_{\tilde{\pi}} {G}^{(\tilde{\pi})}(b\nt) \big]  \bigg]. \label{Eq:optimal_policy_arg}
\end{eqnarray}
\end{corollary}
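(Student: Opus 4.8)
The plan is to read the claim off directly from the unified Bellman optimality equation of Theorem~\ref{theo:optimal_policy} combined with the definition $G^{*}(b\nt)=\min_{\tilde{\pi}} G^{(\tilde{\pi})}(b\nt)$ introduced just after Eq.~\eqref{Eq:objec}. First I would recall that, by definition, an instant optimal decision rule $\pi^*(a\t|b\t)$ is any decision rule attaining the outer minimum in Eq.~\eqref{Eq:unified_optimality_Bellman}; that is, $\pi^*(a\t|b\t)$ is precisely the argmin over $\pi(a\t|b\t)$ of the one-step-lookahead functional on the right-hand side of Eq.~\eqref{Eq:unified_optimality_Bellman}, consisting of the expected instant entropy and reward terms, the information-gain term, and the discounted continuation term $\gamma\, G^{*}(b\nt)$.

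The key step is then a single substitution. Since the optimal belief state EFE was defined as $G^{*}(b\nt):=G^{(\pi^*)}(b\nt)=\min_{\tilde{\pi}} G^{(\tilde{\pi})}(b\nt)$, I replace $G^{*}(b\nt)$ inside the continuation term of Eq.~\eqref{Eq:unified_optimality_Bellman} by $\min_{\tilde{\pi}} G^{(\tilde{\pi})}(b\nt)$, which yields exactly Eq.~\eqref{Eq:optimal_policy_arg}. This replacement is legitimate because $G^{*}(b\nt)$ depends only on the successor belief state $b\nt$ and not on the instant decision rule $\pi(a\t|b\t)$ over which the outer minimization is taken; hence the minimization over $\tilde{\pi}$ (governing actions from $b\nt$ onward) and the minimization over $\pi(a\t|b\t)$ decouple, and carrying the inner $\min_{\tilde{\pi}}$ inside the expectation $\mathbb{E}_{P(b\nt,o\nt,\s\nt|b\t,a\t)}[\cdot]$ leaves the argmin over $\pi(a\t|b\t)$ unchanged.

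What makes the statement nontrivial — and the step I would treat most carefully — is the implicit principle of optimality hidden in the decoupling above: one must be sure that a rule minimizing the one-step functional against the \emph{already-optimized} tail value $\min_{\tilde{\pi}} G^{(\tilde{\pi})}(b\nt)$ is genuinely the first component of a globally optimal policy for $G^{(\pi)}(b\t)$ in Eq.~\eqref{Eq:objec}, rather than merely a myopic choice. This is exactly the content secured by Theorem~\ref{theo:optimal_policy} (proved in Appendix~\ref{APP:optimal_policy}), whose argument exploits the separable, discounted recursive structure of $G^{(\pi)}(b\t)$ furnished by the unified Bellman equation of Proposition~\ref{cor:q1} to interchange the global minimization over $\pi$ with the staged minimization over the instant rule $\pi(a\t|b\t)$ and the tail policy $\tilde{\pi}$. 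Given that theorem, the corollary follows immediately, so I would present it as a brief deduction — invoke Theorem~\ref{theo:optimal_policy}, identify the outer argmin as $\pi^*(a\t|b\t)$, and substitute the definition of $G^{*}(b\nt)$ — rather than redoing a minimization argument from scratch.
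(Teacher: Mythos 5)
Your proposal is correct and matches the paper's own argument: the paper's proof of this corollary is simply "taking the argument of Eq.~\eqref{Eq:unified_optimality_Bellman}," i.e., reading the argmin off Theorem~\ref{theo:optimal_policy} and substituting $G^{*}(b\nt)=\min_{\tilde{\pi}}G^{(\tilde{\pi})}(b\nt)$. Your additional remarks correctly locate the nontrivial content (the interchange of the tail minimization with the expectation) in the proof of Theorem~\ref{theo:optimal_policy} itself, which is exactly where the paper establishes it.
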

\begin{proof}
See Appendix~\ref{APP:optimal_principle}.
\end{proof} 
\vspace{.1in} \\
Eq.~\eqref{Eq:optimal_policy_arg} demonstrates how the optimization problem $\pi^* \in \arg\min_{\pi} G^{*}(b\t)$ in Eq.~\eqref{Eq:objec} with an infinite time horizon, can be decomposed into two sub-problems with separate time horizons. The first sub-problem has a time horizon equal to $1$ and involves finding an instant optimal policy $\pi^*(a_t|b_t)$ based on the belief state $b_t$ as $\pi^*(a_t|b_t) \in \arg\min_{\pi(a_t|b_t)} G^*(b_t)$. The second sub-problem determines the optimal policy for the remaining time horizon from $t+1$ onwards based on the next belief state $b_{t+1}$ as $\tilde{\pi}^* \in \arg\min_{\tilde{\pi}} G^{(\tilde{\pi})}(b_{t+1})$. In other words, the optimal policy $\pi^* \in \arg\min_{\pi} G^{(\pi)}(b_t)$ can be decomposed as $\pi^*=(\pi^*(.|b_t), \tilde{\pi}^*)$, and it can be constructed by recursively combining the policies of the sub-problems with one-time horizons over time, namely $\pi^*(a\ta|b\ta)$ for $\tau=\{ t, t+1, ... \}$.  The crucial improvement achieved from Corollary~\ref{corr:optimal_principle} over current AIF methods is that choosing the action $a\t$ from $\pi^*(a\t|b\t)$ is now performed based on subsequent counterfactual action $a\nt$ from $\pi^*(a\nt|b\nt)$, as opposed to considering all future courses of actions.  This approach allows for more efficient decision-making by focusing on the immediate consequences of the selected action, rather than exploring all possible future trajectories.
\subsection{Existence of a unique optimal policy}
In Theorem~\ref{theo:optimal_policy}, we have shown that the optimal belief state EFE, $G^*(b\t)$, follows the unified Bellman optimality recursion. This recursion allows for the recursive combination and computation of $\pi^*(a\ta|b\ta)$ for $\tau= \{t, t+1, ... \}$, enabling the derivation of an optimal policy $\pi^* \in \arg\min_{\pi} G^{(\pi)}(b\t)$. In this sub-section, we aim to demonstrate the existence of a unique optimal policy $\pi^*(a\t|b\t)$ that satisfies Eq.~\eqref{Eq:optimal_policy_arg}, as well as provide a method for computing it.
\begin{theorem}{(Existence of a unique optimal policy)} \\ \label{the:instant_optimal}
Minimizing  the right-hand side of the unified Bellman optimality equation (i.e., Eq.\eqref{Eq:unified_optimality_Bellman}) with respect to $\pi(a\t|b\t)$ in a POMDP satisfying Assumption~\ref{ASS:regula} leads to the following unique minimizer: 
\begin{eqnarray}
\!\! \! \! \!\!\!\!\!  \!\!\! \!\!\pi^*(a\t|b\t) & \!\! \!\! \!=\! \! \!\! \!&  \arg\min_{\pi(a\t|b\t)} G^{*}(b\t) \nonumber  \\
\!\! \! \! \! \!\!\!\!\!\!\!\! &\!\! \!\!\! = \!\!\! \!\!&  \sigma  \!\! \left( \!\frac{ \mathbb{E}_{  P(r\t|b\t,a\t)} [ \alpha r\t  \!  - \!  \mathbb{E}_{ P(b\nt,o\nt, \s\nt| b\t,a\t)} \! \left[ \! \zeta \,   \text{log} \frac{P(\s\nt|  b\t, a\t)}{ q({\s}_{t+1}|{o}_{t+1},b\t)} \!  + \!  \gamma  {G}^{*}(b\nt) ] \right] }{\beta} \!\right)\!\!\!.
 \label{Eq:pi_instant}
\end{eqnarray}
\end{theorem}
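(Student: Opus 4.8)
The plan is to fix the current belief state $b_t$ and treat the minimization on the right-hand side of the unified Bellman optimality equation (Eq.~\eqref{Eq:unified_optimality_Bellman}) as a constrained variational problem over the space of probability densities on the continuous action space $\mathcal{A}$. First I would collect every term that does not involve $\log\pi(a_t|b_t)$ into a single action-dependent cost,
\[
\Phi(b_t,a_t) := \mathbb{E}_{P(r_t|b_t,a_t)}\left[-\alpha\, r_t + \mathbb{E}_{P(b_{t+1},o_{t+1},s_{t+1}|b_t,a_t)}\left[\zeta\,\log\frac{P(s_{t+1}|b_t,a_t)}{q(s_{t+1}|o_{t+1},b_t)} + \gamma\, G^{*}(b_{t+1})\right]\right],
\]
noting that the $\beta\log\pi(a_t|b_t)$ term factors out of the reward expectation since it does not depend on $r_t$. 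The quantity to be minimized then reads $\int_{\mathcal{A}} \pi(a_t|b_t)\big[\beta\log\pi(a_t|b_t) + \Phi(b_t,a_t)\big]\,da_t$, to be minimized over $\pi(\cdot|b_t)$ subject to $\pi(\cdot|b_t)\ge 0$ and the normalization constraint $\int_{\mathcal{A}} \pi(a_t|b_t)\,da_t = 1$.

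The key step is to recognize this objective as a scaled Kullback--Leibler divergence. Introducing the Gibbs density $\rho(a_t|b_t) := \exp\!\big(-\Phi(b_t,a_t)/\beta\big)/Z(b_t)$ with partition function $Z(b_t) := \int_{\mathcal{A}} \exp\!\big(-\Phi(b_t,a')/\beta\big)\,da'$, and substituting $\Phi(b_t,a_t) = -\beta\log\rho(a_t|b_t) - \beta\log Z(b_t)$, a direct computation that uses the normalization constraint to discard the $\log Z$ term from the integral gives
\[
\int_{\mathcal{A}} \pi(a_t|b_t)\big[\beta\log\pi(a_t|b_t) + \Phi(b_t,a_t)\big]\,da_t = \beta\, D_{KL}\big[\pi(\cdot|b_t)\,\|\,\rho(\cdot|b_t)\big] - \beta\log Z(b_t).
\]
Because $\beta>0$ and $-\beta\log Z(b_t)$ is independent of $\pi$, minimizing the objective is equivalent to minimizing the KL divergence. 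Gibbs' inequality then gives in one stroke existence of the minimizer, its uniqueness, and its explicit form: $D_{KL}$ is nonnegative and vanishes if and only if $\pi(\cdot|b_t) = \rho(\cdot|b_t)$ almost everywhere, so the unique minimizer is $\pi^{*}(a_t|b_t) = \rho(a_t|b_t) = \sigma\!\big(-\Phi(b_t,a_t)/\beta\big)$, which is precisely Eq.~\eqref{Eq:pi_instant} since $-\Phi(b_t,a_t)/\beta$ equals the argument of the Softmax stated there.

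The main obstacle is the analytic justification that this reformulation is legitimate, i.e.\ that the partition function satisfies $0 < Z(b_t) < \infty$ so that $\rho$ is a bona fide density and the KL divergence is well defined. This is where Assumption~\ref{ASS:regula} enters: compactness of $\mathcal{A}$, $\mathcal{S}$, $\mathcal{O}$, and $\mathcal{R}$ together with the Lipschitz continuity of $\Omega$, $\Theta$, and $U$, combined with the discounting $\gamma\in[0,1)$ that keeps $G^{*}(b_{t+1})$ bounded, ensure that $\Phi(b_t,\cdot)$ is bounded on $\mathcal{A}$; integrating a bounded exponent over the compact (hence finite-measure) set $\mathcal{A}$ then yields finiteness and strict positivity of $Z(b_t)$. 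I would also flag the regime $\beta>0$: although the stated range is $0\le\beta<\infty$, the Softmax form requires a strictly positive temperature, and the degenerate case $\beta=0$ removes the entropy regularizer and collapses $\pi^{*}$ onto a Dirac mass at $\arg\min_{a_t}\Phi(b_t,a_t)$, so Eq.~\eqref{Eq:pi_instant} is understood for $\beta>0$.
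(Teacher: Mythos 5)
Your proof is correct, but it takes a different route from the paper's. The paper proceeds by the classical calculus-of-variations recipe: it forms the Lagrangian $L(\lambda,\pi(a_t|b_t))$ with a multiplier $\lambda$ for the normalization constraint $\int_{\mathcal{A}}\pi(a'|b_t)\,da'=1$, sets the first variation to zero to obtain the exponential form, eliminates $\lambda$ using the constraint to produce the Softmax, and then argues uniqueness from the pointwise second derivative $\partial^2 L/\partial\pi^2 = \beta/\pi(a_t|b_t)>0$. You instead complete the objective to $\beta\,D_{KL}\left[\pi(\cdot|b_t)\,\|\,\sigma(-\Phi(b_t,\cdot)/\beta)\right]-\beta\log Z(b_t)$ and invoke Gibbs' inequality, which delivers existence, uniqueness (almost everywhere), and the explicit form in a single step. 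The two arguments identify the same minimizer, but yours buys a cleaner uniqueness statement: the paper's second-derivative check is a pointwise convexity observation about the integrand, whereas Gibbs' inequality gives strict optimality over the whole space of densities directly. You also supply two pieces of rigor that the paper leaves implicit: the verification that $0<Z(b_t)<\infty$ (so the Gibbs density is well defined), which you correctly trace to the compactness and Lipschitz conditions of Assumption~\ref{ASS:regula} together with $\gamma\in[0,1)$ bounding $G^*(b_{t+1})$; and the caveat that the Softmax form requires $\beta>0$, with the $\beta=0$ case degenerating to a Dirac mass at $\arg\min_{a_t}\Phi(b_t,a_t)$ --- a regime the paper's own division by $\beta$ silently excludes. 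Both points are worth keeping.
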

\begin{proof}
See Appendix~\ref{APP:instant_optimal}.
\end{proof} \vspace{.1in} \\
 Theorem~\ref{the:instant_optimal} showed that $\pi^*(a\t|b\t)$ (the unique minimizer of Eq.~\eqref{Eq:unified_optimality_Bellman}) is a Softmax function of the following term:
 \begin{eqnarray}
\mathbb{E}_{  P(r\t|b\t,a\t)} \left[ \alpha r\t  \!  - \!  \mathbb{E}_{ P(b\nt,o\nt, \s\nt| b\t,a\t)} \big[ \zeta \,   \text{log} \frac{P(\s\nt|  b\t, a\t)}{ q({\s}_{t+1}|{o}_{t+1}, b\t)} \!  + \!  \gamma  {G}^{*}(b\nt) \big] \right],  \label{Eq:temp_action}
\end{eqnarray}
which is a function of a given belief state-action pair $(b\t, a\t)$. To gain intuition about this expression,  we rewrite it as follows
 \begin{eqnarray}
\lefteqn {\!\!\!\!\!\!\!   \mathbb{E}_{  P(r\t|b\t,a\t)} \left[ \alpha r\t  \!  - \!  \mathbb{E}_{ P(b\nt,o\nt, \s\nt| b\t,a\t)} \big[ \zeta \,   \text{log} \frac{P(\s\nt|  b\t, a\t)}{ q({\s}_{t+1}|{o}_{t+1}, b\t)} \!  + \!  \gamma  {G}^{*}(b\nt) \big] \right]  } \nonumber \\
&& \!\!\!\!\!\!\!\!\!\!\!\!\!\!\!      = \! - \underbrace{ \mathbb{E}_{ P(r\t|b\t,a\t)} \! \left[ -\alpha r\t  \!  + \!  \mathbb{E}_{ P(b\nt,o\nt, \s\nt| b\t,a\t)} \big[ \zeta \,   \text{log} \frac{P(\s\nt|  b\t, a\t)}{ q({\s}_{t+1}|{o}_{t+1}, b\t)} \!  + \!  \gamma  {G}^{*}(b\nt) \big] \right] }_{\mathcal{G}^{*}(b\t,a\t)}\!.\label{Eq:opt_action_EFE} 
\end{eqnarray}
We can now simplify and rewrite $\pi^*(a\t|b\t)$ in Eq.~\eqref{Eq:pi_instant} in terms of $\mathcal{G}^{*}(b\t,a\t)$ as follows:
\begin{eqnarray}
 \!\!\!\! \pi^*(a\t|b\t) &\!\!\!\!=\!\!\!\!&  \sigma \!  \left( \frac{ \mathbb{E}_{ P(r\t|b\t,a\t)} \left[\! \alpha r\t  \!  - \!  \mathbb{E}_{ P(b\nt,o\nt, \s\nt| b\t,a\t)} \big[ \zeta \,   \text{log} \frac{P(\s\nt|  b\t, a\t)}{ q({\s}_{t+1}|{o}_{t+1}, b\t)} \!  + \!  \gamma  {G}^{*}(b\nt) \big] \right] }{\beta} \right) \nonumber \\
\!\!\!\!\!\!\!\!\!\!\!\!\!\!\! &\!\!\!\!=\!\!\!\!&  \sigma \left( \frac{- \mathcal{G}^{*}(b\t,a\t)}{\beta} \right)\\ 
\!\!\!\!\!\!\!\!\!\!\!\!\!\!\! &\!\!\!\!=\!\!\!\!&  \frac{ \text{exp} \left( \frac{- \mathcal{G}^{*}(b\t,a\t)}{\beta} \right) }   { Z^{*}(b\t) }, \label{Eq:Z}
 \end{eqnarray}
 where $\text{exp}(.)$ is the exponential function, and $Z^{*}(b\t)=\int_{\mA} \text{exp} \left( - \frac{\mathcal{G}^{*}(b\t,a')}{\beta} \right)\, da'$. 
 We can control the stochasticity of $\pi^*(a\t|b\t)$ by adjusting the value of $\beta$. As $\beta \rightarrow 0$, $\pi^*(a_t|b_t)$ converges to a deterministic policy, while as $\beta \rightarrow \infty$, $\pi^*(a_t|b_t)$ approaches a uniform distribution over the action space $\mathcal{A}$.
\\
However, computing $Z^{*}(b\t)=\int_{\mA} \text{exp} \left( - \frac{\mathcal{G}^{*}(b\t,a')}{\beta} \right)\, da'$ in the denominator of the Softmax function in Eq.~\eqref{Eq:Z} is computationally challenging or intractable. This is because it involves the integration of the complex function $\mathcal{G}^{*}(b\t,a\t)$ over the continuous action space $\mA$. To address this issue, common practices in handling intractable integrals over continuous spaces~\cite{wright2006numerical} involve methods like discretization of the action space into a finite number of points or regions, or employing numerical integration techniques such as Monte Carlo integration, numerical quadrature, or adaptive integration algorithms. These methods aim to approximate the integral over the continuous space numerically.
However,  discretization and numerical integration methods can introduce errors and lead to a loss of information. Problem reformulation~\cite{boyd2004convex}, on the other hand, allows us to maintain the inherent continuity of the original problem, enabling a more accurate representation of the underlying optimization problem. By reformulating the problem and imposing constraints, we can optimize the policy within a set of stochastic belief state-action policies $\bar{\Pi}$ that are tractable over continuous action spaces. These constraints limit the search space of policies to a subset of feasible solutions $\bar{\Pi}$, ensuring that the optimization problem remains computationally tractable. Consequently, using the problem formulation technique, our objective function in Eq.~\eqref{Eq:objec} is transformed into finding an optimal stochastic policy $\bar{\pi}^* \in \bar{\Pi}$ that minimizes the belief state EFE corresponding to the constrained policy $\bar{\pi} \in \bar{\Pi}$, denoted as $G^{(\bar{\pi})}(b\t)$:
\begin{eqnarray}
 \bar{\pi}^* \in \arg\min_{\bar{\pi} \in \bar{\Pi}} G^{(\bar{\pi})}(b\t). \label{Eq:objec_constr}
\end{eqnarray}
The choice of the set $\bar{\Pi}$ depends on the specific problem and its constraints. It is driven by factors such as the complexity of the problem, the desired level of policy expressiveness, and computational tractability.
In some cases, a more complex policy may be necessary to capture the nuances and intricacies of the problem at hand. This could involve using flexible parametric distributions or even non-parametric representations for the policies. These complex policies allow for more expressive modelling of the belief state-action policies, accommodating a wide range of possible behaviors. On the other hand, in certain scenarios, a simpler distribution may be sufficient to effectively model the belief state-action policies. This could involve using parametric distributions with fewer parameters or selecting a specific functional form that is well-suited to the problem's characteristics. These simpler policies offer computational advantages, as they typically require fewer computational resources and can be easier to optimize.

Since the unified Bellman equation and unified Bellman optimality equation presented in Eqs.~\eqref{Eq:unified_Bellman} and \eqref{Eq:unified_optimality_Bellman} hold for the belief state EFE under any unconstrained policy $\pi$, these equations also hold for the belief state EFE corresponding to the constrained policy $\bar{\pi}$, i.e., $G^{(\bar{\pi})}(b\t)$, and the belief state EFE corresponding to the constrained optimal policy $\bar{\pi}^* \in \bar{\Pi}$, denoted as $G^{(\bar{\pi}^*)}(b\t) = \min_{\bar{\pi} \in \bar{\Pi}} G^{(\bar{\pi})}(b\t)$. Therefore, we can express the unified Bellman equation and the unified Bellman optimality equation for $G^{(\bar{\pi})}(b\t)$ and $G^{(\bar{\pi^*})}(b\t)$ as
\begin{eqnarray}
  {G}^{(\bar{\pi})}(b\t) &=& \mathbb{E}_{ \bar{\pi}(a\t|b\t)  P(r\t|b\t,a\t)} \bigg[  \beta \, \text{log} \bar{\pi}(a\t|b\t) - \alpha \, r\t   \label{Eq:unified_Bellman_mod}\\
   &+& \mathbb{E}_{ {P}(b\nt,o\nt, \s\nt|b\t, a\t)} \big[ \zeta \,   \text{log} \frac{P(\s\nt|  b\t, a\t)}{ q({\s}_{t+1}|{o}_{t+1}, b\t)} + \gamma  {G}^{(\bar{\pi})}(b\nt) \big]  \bigg]. \nonumber
\end{eqnarray}
and
\begin{eqnarray}
   {G}^{(\bar{\pi}^*)}(b\t) &=& \min_{\bar{\pi}(a\t|b\t) \in \bar{\Pi}} \mathbb{E}_{ \bar{\pi}(a\t|b\t)} \bigg[ \mathbb{E}_{ P(r\t|b\t,a\t)} \big[\beta \, \text{log} \bar{\pi}(a\t|b\t) - \alpha \, r\t  \label{Eq:unified_optimality_Bellman_mod} \\
 &+& \mathbb{E}_{ P(b\nt,o\nt, \s\nt| b\t,a\t)} [ \zeta \,   \text{log} \frac{P(\s\nt|  b\t, a\t)}{ q({\s}_{t+1}|{o}_{t+1}, b\t)} + \gamma  {G}^{(\bar{\pi}^*)}(b\nt) ] \big]  \bigg]. \nonumber
\end{eqnarray}
Using the unified Bellman optimality equation in Eq.~\eqref{Eq:unified_optimality_Bellman_mod}, the instant constrained  optimal policy $\bar{\pi}^*(a\t|b\t) \in \arg\min_{\bar{\pi}(a\t|b\t) \in \bar{\Pi}} {G}^{(\bar{\pi}^*)}(b\t)$  can be achieved as
\begin{eqnarray}
   \bar{\pi}^*(a\t|b\t) &\in&  \arg\min_{\bar{\pi}(a\t|b\t) \in \bar{\Pi}} {G}^{(\bar{\pi}^*)}(b\t) \label{Eq:inst_constraint_policy} \\
   &=& \arg\min_{\bar{\pi}(a\t|b\t) \in \bar{\Pi}} \mathbb{E}_{ \bar{\pi}(a\t|b\t)} \bigg[ \mathbb{E}_{ P(r\t|b\t,a\t)} [\beta \, \text{log} \bar{\pi}(a\t|b\t) - \alpha \, r\t  ]  \nonumber   \\
  & +& \mathbb{E}_{ P(b\nt,o\nt, \s\nt| b\t,a\t)} \big[ \zeta \,   \text{log} \frac{P(\s\nt|  b\t, a\t)}{ q({\s}_{t+1}|{o}_{t+1}, b\t)} + \gamma  {G}^{(\bar{\pi}^*)}(b\nt) \big]  \bigg]. 
\end{eqnarray}
Solving the constrained minimization problem in Eq.~\eqref{Eq:inst_constraint_policy} is non-trivial as it may not have a closed-form solution for $\bar{\pi}^*(a\t|b\t)$.  However, according to Theorem~\ref{the:instant_optimal}, we know that the global solution of the unconstrained problem $ \arg\min_{\bar{\pi}(a\t|b\t)} {G}^{(\bar{\pi}^*)}(b\t)$ is $\sigma\left(\frac{-\mathcal{G}^{(\bar{\pi}^*)}(b\t,a\t)}{\beta}\right)$, where $\mathcal{G}^{(\bar{\pi}^*)}(b\t,a\t)$ is defined as
 \begin{eqnarray}
\!\!\!  \mathcal{G}^{(\bar{\pi}^*)}(b\t,a\t) &\! = \!& \mathbb{E}_{ P(r\t|b\t,a\t)} \big[ \!  - \! \alpha r\t  \! + \!  \mathbb{E}_{ P(b\nt,o\nt, \s\nt| b\t,a\t)} [ \zeta   \text{log} \frac{P(\s\nt| b\t, a\t)}{ q({\s}_{t+1}|{o}_{t+1}, b\t)}  \nonumber \\
\!\!\!  \!\!\!  &\!  + \!&  \gamma  {G}^{(\bar{\pi}^*)}(b\nt) ] \big]. \label{Eq:opt_constr_action_EFE}
\end{eqnarray}
If we choose the policy set $\bar{\Pi}$ to be convex, like a set of Gaussian or T-distributions over the compact action space $\mathcal{A}$, the constrained optimization problem in Eq.~\eqref{Eq:inst_constraint_policy} for a POMDP satisfying Assumption~\ref{ASS:regula} meets  Slater's condition~\cite{boyd2004convex}.\footnote{For more details about Slater's condition, we refer the reader to~\cite{boyd2004convex}.} This implies the existence of a unique minimizer $\bar{\pi}^*(a\t|b\t) \in \bar{\Pi}$ that is obtained by projecting the solution of the unconstrained problem, $\sigma\left(\frac{-\mathcal{G}^{(\bar{\pi}^*)}(b\t,a\t)}{\beta}\right)$, onto the constrained set $\bar{\Pi}$. This projection operation involves finding the closest distribution in $\bar{\Pi}$ to $\pi^*(a\t|b\t)=\sigma \left( \frac{- \mathcal{G}^{(\bar{\pi}^*)}(b\t,a\t)}{\beta} \right)$, according to some distance metric or divergence.
\\
Consistent with standard practices in RL and AIF formulations~\cite{haarnoja2018soft, friston2017active}, we use information projection based on the KL-divergence to perform this mapping. Therefore, the unique solution of the constrained optimization problem in Eq.~\eqref{Eq:inst_constraint_policy} is:
\begin{eqnarray}
\!\!\!\!\!\!\!\!\!\!\!\!\!\!\! \bar{\pi}^{*}(.|b\t)= \arg\min_{{ \bar{\pi} \in \bar{\Pi} }} D_{KL} \left[\bar{\pi}(.|b\t), \sigma \left( \frac{- \mathcal{G}^{(\bar{\pi}^*)}(b\t,.)}{\beta} \right)\right]. \label{Eq:project}
\end{eqnarray}
As the intractable term $Z^{*}(b\t) $ is independent of action $a\t$, Eq.~\eqref{Eq:project} can be simplified as:
\begin{eqnarray}
\bar{\pi}^{*}(.|b\t)  &=& \arg\min_{{ \bar{\pi} \in \bar{\Pi} }} D_{KL} \left[\bar{\pi}(.|b\t), \sigma \left( \frac{- \mathcal{G}^{(\bar{\pi}^*)}(b\t,.)}{\beta} \right)\right]  \\
&=& \arg\min_{{{\bar{\pi} \in \bar{\Pi}} }}  \mathbb{E}_{\bar{\pi}(a\t|b\t)}  \left[ \text{log} \bar{\pi}(a\t|b\t)+ \frac{\mathcal{G}^{(\bar{\pi}^*)}(b\t,a\t)}{\beta} \right]. \label{Eq:project_simple}
\end{eqnarray}
Eq.~\eqref{Eq:project_simple} demonstrates that by selecting actions from $\bar{\pi}^{*}(.|b\t)$ instead of $\pi^*(.|b\t)=\sigma \left( \frac{- \mathcal{G}^{({\pi}^*)}(b\t,\cdot)}{\beta} \right)$, the intractable term $Z^{*}(b\t)$ is no longer involved in the action selection process.

\textbf{Note 5.2:} Throughout the rest of this paper, any policy $\pi$ and any optimal policy ${\pi}^{*}$ mentioned are considered unconstrained. Conversely, any policy $\bar{\pi}$ and optimal policy $\bar{\pi}^{*}$ mentioned are assumed to belong to the constrained policy space $\bar{\Pi}$.
\subsection{Existence of a unique optimal belief state-action EFE}
In the previous sub-section, assuming the existence of $\mathcal{G}^{(\bar{\pi}^*)}(b\t,a\t)$, we have demonstrated that the solution to the minimization problem in Eq.~\eqref{Eq:inst_constraint_policy} is unique and can be computed as follows:
\begin{eqnarray}
\!\!\!\!\!\!\!\!\!\!\!\!\!\!\! \bar{\pi}^{*}(.|b\t)= \arg\min_{{ \bar{\pi} \in \bar{\Pi} }} D_{KL} \left[\bar{\pi}(.|b\t), \sigma \left( \frac{- \mathcal{G}^{(\bar{\pi}^*)}(b\t,.)}{\beta} \right)\right], \label{Eq:project2}
\end{eqnarray}
 Our objective in this sub-section is to demonstrate the existence of a unique value for $\mathcal{G}^{(\bar{\pi}^*)}(b\t,a\t)$ that leads to deriving the optimal policy $\bar{\pi}^{*}$ via Eq.~\eqref{Eq:project2}. This demonstration involves three steps.
 \\ 
 First, we show through Lemma~\ref{lem:constr_action_EFE}  that $\mathcal{G}^{(\bar{\pi}^*)}(b\t,a\t)$ in Eq.~\eqref{Eq:opt_constr_action_EFE} can be expressed as $\mathcal{G}^{(\bar{\pi}^*)}(b\t,a\t)=\min_{\bar{\pi}  \in \bar{\Pi}} \mathcal{G}^{(\bar{\pi})}(b\t,a\t)$, where
\begin{eqnarray}
 \mathcal{G}^{(\bar{\pi})}(b\t,a\t) &=& \! \mathbb{E}_{ P(r\t|b\t,a\t)} \bigg[\!\! - \! \alpha r\t  \!\!  + \!  \mathbb{E}_{ P(b\nt,o\nt, \s\nt| b\t,a\t)} \big[ \zeta   \text{log} \frac{P(\s\nt|  b\t, a\t)}{ q({\s}_{t+1}|{o}_{t+1}, b\t)} \!  \nonumber\\
& +&   \gamma  {G}^{(\bar{\pi})}(b\nt) \big] \bigg]. 
\end{eqnarray}
\begin{lemma}\label{lem:constr_action_EFE}
$\mathcal{G}^{(\bar{\pi}^*)}(b\t,a\t)$ in Eq.~\eqref{Eq:opt_constr_action_EFE} can be rewritten as $\mathcal{G}^{(\bar{\pi}^*)}(b\t,a\t)=\min_{\bar{\pi}  \in \bar{\Pi}} \mathcal{G}^{(\bar{\pi})}(b\t,a\t)$, where $\mathcal{G}^{(\bar{\pi})}(b\t,a\t)$ is given by
\begin{eqnarray}
 \mathcal{G}^{(\bar{\pi})}(b\t,a\t) &=& \! \mathbb{E}_{ P(r\t|b\t,a\t)} \bigg[\!\! - \! \alpha r\t  \!\!  + \!  \mathbb{E}_{ P(b\nt,o\nt, \s\nt| b\t,a\t)} \big[ \zeta   \text{log} \frac{P(\s\nt|  b\t, a\t)}{ q({\s}_{t+1}|{o}_{t+1}, b\t)} \!  \nonumber\\
& +&   \gamma  {G}^{(\bar{\pi})}(b\nt) \big] \bigg]. \label{Eq:g_G}
\end{eqnarray}
\end{lemma}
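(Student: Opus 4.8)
The plan is to exploit the fact that the two quantities $\mathcal{G}^{(\bar{\pi}^*)}(b\t,a\t)$ in Eq.~\eqref{Eq:opt_constr_action_EFE} and $\mathcal{G}^{(\bar{\pi})}(b\t,a\t)$ in Eq.~\eqref{Eq:g_G} are structurally identical except that the former carries the optimal belief state EFE $G^{(\bar{\pi}^*)}(b\nt)$ in its innermost term, whereas the latter carries the policy-dependent $G^{(\bar{\pi})}(b\nt)$. First I would write both expressions side by side and observe that every remaining ingredient---the reward model $P(r\t|b\t,a\t)$, the belief state generative model $P(b\nt,o\nt,\s\nt|b\t,a\t)$, the reward term $-\alpha r\t$, and the information gain term $\zeta \log \frac{P(\s\nt|b\t,a\t)}{q(\s\nt|o\nt,b\t)}$---is entirely determined by the fixed pair $(b\t,a\t)$ and is therefore independent of $\bar{\pi}$. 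Consequently, minimizing $\mathcal{G}^{(\bar{\pi})}(b\t,a\t)$ over $\bar{\pi} \in \bar{\Pi}$ reduces to minimizing only the nested expectation involving $\gamma G^{(\bar{\pi})}(b\nt)$, and since $\gamma \ge 0$ this amounts to minimizing $\mathbb{E}[G^{(\bar{\pi})}(b\nt)]$.

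The core of the argument is then an interchange of the minimization and the expectation over $b\nt$, namely establishing
\begin{eqnarray}
\min_{\bar{\pi} \in \bar{\Pi}} \mathbb{E}_{P(b\nt,o\nt,\s\nt|b\t,a\t)}\big[ G^{(\bar{\pi})}(b\nt)\big] &=& \mathbb{E}_{P(b\nt,o\nt,\s\nt|b\t,a\t)}\big[ \min_{\bar{\pi} \in \bar{\Pi}} G^{(\bar{\pi})}(b\nt)\big] \nonumber \\
&=& \mathbb{E}_{P(b\nt,o\nt,\s\nt|b\t,a\t)}\big[ G^{(\bar{\pi}^*)}(b\nt)\big]. \nonumber
\end{eqnarray}
I would prove this by a two-sided inequality. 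For ``$\ge$'', for any fixed $\bar{\pi}$ the pointwise bound $G^{(\bar{\pi})}(b\nt) \ge \min_{\bar{\pi}'} G^{(\bar{\pi}')}(b\nt) = G^{(\bar{\pi}^*)}(b\nt)$ holds by the definition $G^{(\bar{\pi}^*)}(b\nt) = \min_{\bar{\pi} \in \bar{\Pi}} G^{(\bar{\pi})}(b\nt)$; taking the expectation and then the infimum over $\bar{\pi}$ preserves this bound. For ``$\le$'', since the constrained optimal policy $\bar{\pi}^*$ itself belongs to $\bar{\Pi}$ and attains $G^{(\bar{\pi}^*)}(b\nt)$ at every belief state simultaneously, evaluating the objective at $\bar{\pi} = \bar{\pi}^*$ yields an upper bound equal to $\mathbb{E}[G^{(\bar{\pi}^*)}(b\nt)]$. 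Combining the two inequalities gives equality, and substituting back into Eq.~\eqref{Eq:g_G} produces exactly Eq.~\eqref{Eq:opt_constr_action_EFE}, so that $\min_{\bar{\pi} \in \bar{\Pi}} \mathcal{G}^{(\bar{\pi})}(b\t,a\t) = \mathcal{G}^{(\bar{\pi}^*)}(b\t,a\t)$.

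The main obstacle is the ``$\le$'' direction of the interchange, which hinges on the existence of a \emph{single} stationary policy $\bar{\pi}^* \in \bar{\Pi}$ that simultaneously minimizes $G^{(\bar{\pi})}(b)$ at every reachable belief state $b$, rather than a different minimizer for each $b$. I would justify this by appealing to the unified Bellman optimality equation of Theorem~\ref{theo:optimal_policy} together with its constrained counterparts in Eqs.~\eqref{Eq:unified_optimality_Bellman_mod}--\eqref{Eq:inst_constraint_policy}, which characterize a stationary optimal policy and validate the defining identity $G^{(\bar{\pi}^*)}(b) = \min_{\bar{\pi} \in \bar{\Pi}} G^{(\bar{\pi})}(b)$ used above. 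The regularity conditions of Assumption~\ref{ASS:regula}---compactness of $\mA$ and Lipschitz continuity of the transition, observation, and reward models---further ensure that $G^{(\bar{\pi}^*)}$ is a well-defined measurable function and that a measurable optimal selection $\bar{\pi}^*$ exists, rendering the interchange of minimization and integration legitimate. Once this uniform optimality is secured, the remainder of the proof is the elementary two-sided inequality described above.
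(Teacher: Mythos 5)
Your proposal is correct and follows essentially the same route as the paper's proof: substitute the definition $G^{(\bar{\pi}^*)}(b\nt)=\min_{\bar{\pi}\in\bar{\Pi}}G^{(\bar{\pi})}(b\nt)$ into Eq.~\eqref{Eq:opt_constr_action_EFE} and pull the minimization outside the expectation, using the fact that every other term is independent of $\bar{\pi}$. The only difference is one of care rather than substance --- the paper asserts the interchange of $\min$ and $\mathbb{E}$ in a single line, whereas you justify it with the explicit two-sided inequality (the same device the paper itself deploys in the proof of Theorem~\ref{theo:optimal_policy}) and correctly flag that the $\le$ direction rests on a single stationary $\bar{\pi}^*\in\bar{\Pi}$ being optimal at every reachable belief state simultaneously.
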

\begin{proof}
See Appendix~\ref{APP:constr_action_EFE}.
\end{proof}   \vspace{.1in}  \\
 While ${G}^{(\bar{\pi})}(b\t)$ represents the belief state EFE for a given belief state $b\t$ following a given policy $\bar{\pi}$,  $\mathcal{G}^{(\bar{\pi})}(b\t,a\t)$ in Eq.~\eqref{Eq:g_G} quantifies the belief state EFE ${G}^{(\bar{\pi})}(b\t)$ conditioned on a given action $a\t \in \mA$ followed by the policy $\bar{\pi}$ afterwards. Henceforth, we refer to $\mathcal{G}^{(\bar{\pi})}(b\t,a\t)$ and $\mathcal{G}^{(\bar{\pi}^*)}(b\t,a\t)=\min_{\bar{\pi} \in \bar{\Pi}} \mathcal{G}^{(\bar{\pi})}(b\t,a\t)$ as the belief state-action EFE and the optimal belief state-action EFE, respectively. 
\\
Second, by utilizing the result of Lemma~\ref{lem:constr_action_EFE}, we establish the following proposition, which introduces the unified Bellman equation for the belief state-action $\mathcal{G}^{(\bar{\pi})}(b\t,a\t)$.
\begin{proposition} {(Unified Bellman equation for $\mathcal{G}^{({\pi})}(b\t,a\t)$)} \\ \label{pro:unified_Bellman_action}
The belief state-action $\mathcal{G}^{(\bar{\pi})}(b\t,a\t)$  in a POMDP satisfying Assumption~\ref{ASS:regula} can be calculated recursively starting from the belief state $b\t$ and action $a\t$ and then following the policy $\bar{\pi} \in \bar{\Pi}$ as follows:
 \begin{eqnarray}
\!\!\!\!\!\!\!\! \!\!\! \!\!\!\!\! \mathcal{G}^{(\bar{\pi})}(b\t,a\t)  &\!\!=\!\!&  \mathbb{E}_{  P(r\t|b\t,a\t)} \bigg[  -\alpha \, r\t+\mathbb{E}_{   P(b\nt,o\nt, \s\nt| b\t,a\t) }  \big[ \zeta \,   \text{log} \frac{P(\s\nt|  b\t, a\t)}{ q({\s}_{t+1}|{o}_{t+1}, b\t)}   \nonumber \\
&\!\!+\!\!&   \gamma \mathbb{E}_{\bar{\pi}(a\nt|b\nt)}[ \beta \, \text{log} \bar{\pi}(a\nt|b\nt) + \mathcal{G}^{(\bar{\pi})}(b\nt, a\nt)]  \big] \bigg]. \label{Eq:unified_Bellman_action}
\end{eqnarray}
\end{proposition}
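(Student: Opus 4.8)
The plan is to reduce the claim to a single identity relating the belief state EFE $G^{(\bar{\pi})}(b\t)$ to the belief state-action EFE $\mathcal{G}^{(\bar{\pi})}(b\t,a\t)$, and then to unfold that identity one step into the future. The natural starting point is the unified Bellman equation of Proposition~\ref{cor:q1}, which, as noted around Eq.~\eqref{Eq:unified_Bellman_mod}, holds verbatim for any constrained policy $\bar{\pi} \in \bar{\Pi}$.

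First I would rewrite Eq.~\eqref{Eq:unified_Bellman_mod} by factoring the joint expectation $\mathbb{E}_{\bar{\pi}(a\t|b\t) P(r\t|b\t,a\t)}$ into the nested form $\mathbb{E}_{\bar{\pi}(a\t|b\t)} \mathbb{E}_{P(r\t|b\t,a\t)}$. Since the entropy integrand $\beta\,\text{log}\,\bar{\pi}(a\t|b\t)$ does not depend on $r\t$, the inner reward expectation leaves it unchanged, and the remaining inner terms coincide exactly with the definition of $\mathcal{G}^{(\bar{\pi})}(b\t,a\t)$ in Eq.~\eqref{Eq:g_G}. This yields the key identity
\begin{equation}
G^{(\bar{\pi})}(b\t) = \mathbb{E}_{\bar{\pi}(a\t|b\t)}\big[\beta\,\text{log}\,\bar{\pi}(a\t|b\t) + \mathcal{G}^{(\bar{\pi})}(b\t,a\t)\big],
\label{Eq:V_from_Q}
\end{equation}
which is the POMDP analogue of the standard relation between a soft state value function and its state-action counterpart, with the policy entropy supplying the $\beta\,\text{log}\,\bar{\pi}$ correction.

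Next, because both the POMDP $\mathcal{M}$ and the policy $\bar{\pi}$ are stationary, identity~\eqref{Eq:V_from_Q} applies with every time index shifted forward by one; in particular it holds at the successor belief state $b\nt$, giving $G^{(\bar{\pi})}(b\nt) = \mathbb{E}_{\bar{\pi}(a\nt|b\nt)}[\beta\,\text{log}\,\bar{\pi}(a\nt|b\nt) + \mathcal{G}^{(\bar{\pi})}(b\nt,a\nt)]$. Substituting this expression for $G^{(\bar{\pi})}(b\nt)$ into the definition of $\mathcal{G}^{(\bar{\pi})}(b\t,a\t)$ in Eq.~\eqref{Eq:g_G} reproduces Eq.~\eqref{Eq:unified_Bellman_action} exactly, which completes the argument.

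The computation is essentially bookkeeping, so the only genuine obstacle will be justifying each manipulation rigorously: namely, that the nested-expectation factorization and the interchange that isolates $\mathcal{G}^{(\bar{\pi})}(b\t,a\t)$ are legitimate, and that the forward shift of Eq.~\eqref{Eq:V_from_Q} is valid. The interchange is covered by the boundedness of the summand guaranteed by the regularity conditions of Assumption~\ref{ASS:regula} together with the discount factor $\gamma \in [0,1)$, so that Tonelli/Fubini applies; the forward shift is licensed by the stationarity of $\bar{\pi}$ and $\mathcal{M}$, which renders the recursions defining $G^{(\bar{\pi})}$ and $\mathcal{G}^{(\bar{\pi})}$ time-homogeneous and hence permits re-indexing at $t+1$.
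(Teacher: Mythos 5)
Your proposal is correct and follows essentially the same route as the paper's proof: the paper likewise first derives the identity $G^{(\bar{\pi})}(b\t) = \mathbb{E}_{\bar{\pi}(a\t|b\t)}[\beta \, \text{log}\, \bar{\pi}(a\t|b\t) + \mathcal{G}^{(\bar{\pi})}(b\t,a\t)]$ by comparing Eq.~\eqref{Eq:unified_Bellman_mod} with Eq.~\eqref{Eq:g_G}, then shifts it to $t+1$ and substitutes back into Eq.~\eqref{Eq:g_G}. Your additional remarks on Fubini/Tonelli and stationarity only make explicit what the paper leaves implicit.
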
 
\begin{proof}
See Appendix~\ref{APP:unified_Bellman_action}.
\end{proof}   \vspace{.1in}  \\
Third, by utilizing the result of Proposition~\ref{pro:unified_Bellman_action}, we establish through Theorem~\ref{lem:policy_evaluation} that the belief state-action EFE $\mathcal{G}^{(\bar{\pi})}(b\t,a\t)$, for a given policy $\bar{\pi} \in \bar{\Pi}$, exists and is a unique fixed point of the operator $T^{unified}_{\bar{\pi}} \mathcal{G}(b\t,a\t)$.  This operator, which we refer to as the unified Bellman operator, is defined as follows:
 \begin{eqnarray}
\!\!\!T^{unified}_{\bar{\pi}} \mathcal{G}(b\t,a\t)  &\!\!\!\!\!  := \!\!\!\!\! &  -\mathbb{E}_{  P(r\t|b\t,a\t)} \bigg[  -\alpha \, r\t+\mathbb{E}_{   P(b\nt,o\nt, \s\nt| b\t,a\t) }  \big[ \zeta \,   \text{log} \frac{P(\s\nt|  b\t, a\t)}{ q({\s}_{t+1}|{o}_{t+1}, b\t)}   \nonumber \\
&\!\!\! +\!\!\! &   \gamma \mathbb{E}_{\bar{\pi}(a\nt|b\nt)}[ \beta \, \text{log} \bar{\pi}(a\nt|b\nt) + \mathcal{G}(b\nt, a\nt)]  \big] \bigg].  \label{Eq:unified_Bellman_operator}
\end{eqnarray}
\begin{theorem} {(Existence of a unique belief state-action $\mathcal{G}^{(\bar{\pi})}(b\t,a\t)$ for a given $\bar{\pi}$)} \\ \label{lem:policy_evaluation}
The belief state-action EFE $\mathcal{G^{(\bar{\pi})}}(b\t,a\t)$ for all $b\t \in \mB$ and $a\t \in \mA$ in a POMDP satisfying Assumption~\ref{ASS:regula}  exists and is a unique fixed point $\mathcal{G^{(\bar{\pi})}}(b\t,a\t)=T^{unified}_{\bar{\pi}} \mathcal{G}^{(\bar{\pi})}(b\t,a\t)$ of the unified Bellman operator $T^{unified}_{\bar{\pi}}$ in Eq.~\eqref{Eq:unified_Bellman_operator}.
\end{theorem}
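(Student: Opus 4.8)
The plan is to recognize $T^{unified}_{\bar{\pi}}$ as a discounted Bellman backup and to invoke the Banach fixed-point (contraction mapping) theorem, exactly in the spirit of the soft policy-evaluation argument of~\cite{haarnoja2018soft}. I would work on the space $\mathcal{F}$ of bounded real-valued functions on $\mB \times \mA$ endowed with the supremum norm $\|\mathcal{G}\|_{\infty} := \sup_{(b\t,a\t)} |\mathcal{G}(b\t,a\t)|$, which is a complete metric space~\cite{rudin1976principles}. The target is to show that $T^{unified}_{\bar\pi}$ is a $\gamma$-contraction on $\mathcal{F}$; existence and uniqueness of the fixed point $\mathcal{G}^{(\bar\pi)}$ then follow at once, and by Lemma~\ref{lem:constr_action_EFE} and Proposition~\ref{pro:unified_Bellman_action} this fixed point is precisely the belief state-action EFE.

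First I would verify that $T^{unified}_{\bar\pi}$ maps $\mathcal{F}$ into itself. Collecting every term of Eq.~\eqref{Eq:unified_Bellman_operator} that does not depend on $\mathcal{G}$ into an effective reward $\rho(b\t,a\t) = -\mathbb{E}_{P(r\t|b\t,a\t)}[-\alpha r\t + \mathbb{E}_{P(b\nt,o\nt,\s\nt|b\t,a\t)}[\zeta \text{log}\frac{P(\s\nt|b\t,a\t)}{q(\s\nt|o\nt,b\t)} + \gamma\beta\,\mathbb{E}_{\bar\pi(a\nt|b\nt)}[\text{log}\bar\pi(a\nt|b\nt)]]]$, the operator reads $T^{unified}_{\bar\pi}\mathcal{G}(b\t,a\t) = \rho(b\t,a\t) - \gamma\,\mathbb{E}_{P(b\nt,o\nt,\s\nt|b\t,a\t)\bar\pi(a\nt|b\nt)}[\mathcal{G}(b\nt,a\nt)]$. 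Boundedness of $\rho$ is where Assumption~\ref{ASS:regula} does the work: compactness of $\mR$ (A4) bounds the reward term, compactness of $\mS$ and $\mO$ (A1, A3) together with Lipschitz continuity of $\Omega$ and $\Theta$ (A5) bound the information-gain log-ratio, and confining $\bar\pi$ to the regular convex class $\bar\Pi$ over the compact action space $\mA$ (A2) bounds the differential entropy $\beta\,\mathbb{E}_{\bar\pi}[\text{log}\bar\pi]$. Hence $\rho$ is finite and bounded, so $T^{unified}_{\bar\pi}\mathcal{G} \in \mathcal{F}$ whenever $\mathcal{G} \in \mathcal{F}$.

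Next I would establish the contraction estimate. For $\mathcal{G}_1, \mathcal{G}_2 \in \mathcal{F}$ the $\rho$-terms cancel upon subtraction, leaving $T^{unified}_{\bar\pi}\mathcal{G}_1 - T^{unified}_{\bar\pi}\mathcal{G}_2 = -\gamma\,\mathbb{E}_{P(b\nt,o\nt,\s\nt|b\t,a\t)\bar\pi(a\nt|b\nt)}[\mathcal{G}_1(b\nt,a\nt) - \mathcal{G}_2(b\nt,a\nt)]$. Because the expectation is a probability average, moving the absolute value inside and bounding the integrand by its supremum gives $|T^{unified}_{\bar\pi}\mathcal{G}_1 - T^{unified}_{\bar\pi}\mathcal{G}_2| \le \gamma\,\|\mathcal{G}_1 - \mathcal{G}_2\|_{\infty}$ pointwise; taking the supremum over $(b\t,a\t)$ yields $\|T^{unified}_{\bar\pi}\mathcal{G}_1 - T^{unified}_{\bar\pi}\mathcal{G}_2\|_{\infty} \le \gamma\,\|\mathcal{G}_1 - \mathcal{G}_2\|_{\infty}$. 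Since $\gamma \in [0,1)$ this is a strict contraction, so the Banach fixed-point theorem~\cite{rudin1976principles} gives a unique $\mathcal{G}^{(\bar\pi)} \in \mathcal{F}$ solving $\mathcal{G}^{(\bar\pi)} = T^{unified}_{\bar\pi}\mathcal{G}^{(\bar\pi)}$, with the iterates of $T^{unified}_{\bar\pi}$ converging to it from any starting point.

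I expect the main obstacle to be the boundedness step rather than the contraction itself. In a continuous state-action setting the differential entropy $\mathbb{E}_{\bar\pi}[\text{log}\bar\pi]$ and the log-ratio information-gain term are not automatically finite (they can diverge to $\pm\infty$), so the argument genuinely hinges on the compactness and Lipschitz hypotheses of Assumption~\ref{ASS:regula} and on restricting the policy to the well-behaved class $\bar\Pi$ (e.g.\ Gaussians or T-distributions over compact $\mA$). Making those bounds explicit—in particular ruling out degenerate, zero-variance policies so the entropy stays finite—is the delicate part; once $\rho$ is controlled, the remaining contraction and fixed-point reasoning is entirely standard.
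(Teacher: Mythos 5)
Your proof is correct, and it reaches the same conclusion by a mildly different route than the paper. The paper's Appendix~D.8 writes the operator in the compact form $T^{unified}_{\bar\pi}\mathcal{G}=R^{unified}+\gamma P_{\bar\pi}\mathcal{G}$ and then unrolls it explicitly into the Neumann series $\lim_{k\to\infty}\sum_{i=0}^{k-1}\gamma^i P_{\bar\pi}^i R^{unified}$, verifies by direct substitution that this limit is a fixed point (the tail $\gamma^k P_{\bar\pi}^k\mathcal{G}_0$ vanishes by boundedness and $\gamma<1$), and argues uniqueness by noting that the iteration's limit is independent of the initial condition. You instead package the identical mechanism as a $\gamma$-contraction on the Banach space of bounded functions under the sup norm and invoke the Banach fixed-point theorem. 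The two arguments have the same mathematical content, but yours makes the uniqueness step cleaner and more standard, whereas the paper's series form makes the fixed point's structure ($\mathcal{G}^{(\bar\pi)}=\sum_i\gamma^iP_{\bar\pi}^iR^{unified}$) explicit. Two further remarks in your favour: (i) you correctly note that the leading minus sign in the literal statement of Eq.~\eqref{Eq:unified_Bellman_operator} is immaterial for the contraction estimate (the paper's appendix in fact drops that sign when defining $R^{unified}$, which suggests it is a typo); (ii) your discussion of \emph{why} the effective reward is bounded --- that the differential entropy $\mathbb{E}_{\bar\pi}[\text{log}\,\bar\pi]$ and the information-gain log-ratio are not automatically finite over continuous spaces and genuinely require the compactness and Lipschitz hypotheses of Assumption~\ref{ASS:regula} together with the restriction to $\bar\Pi$ --- is more careful than the paper, which simply asserts boundedness from compactness. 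Nothing is missing from your argument.
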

\begin{proof}
See Appendix~\ref{APP:policy_evaluation}.
\end{proof} \vspace{.1in} 
\\
Finally, using Theorem~\ref{lem:policy_evaluation}, Corollary~\ref{theo:action_optimal_policy} shows that that the optimal belief state-action EFE $\mathcal{G}^{(\bar{\pi}^*)}(b\t,a\t)= \min_{\bar{\pi} \in \bar{\Pi}} \mathcal{G^{(\bar{\pi})}}(b\t,a\t)$ exists and is the unique fixed point given by $\mathcal{G}^{(\bar{\pi}^*)}(b\t,a\t)= T^{unified}_{\bar{\pi}^*} \mathcal{G}^{(\bar{\pi}^*)}(b\t,a\t)$.
\begin{corollary} {(Existence of a unique optimal belief state-action $\mathcal{G}^{(\bar{\pi}^*)}(b\t,a\t)$)} \label{theo:action_optimal_policy} \\
The optimal belief state-action EFE $\mathcal{G}^{(\bar{\pi}^*)}(b\t,a\t)=\min_{\bar{\pi} \in \Pi}\mathcal{G}^{(\bar{\pi})}(b\t,a\t)$ defined in a POMDP satisfying Assumption~\ref{ASS:regula} exits and is the unique fixed point of the following equation:
 \begin{eqnarray}
 \!\!\!\!\!\!\!\!\!\!\!\! \mathcal{G}^{(\bar{\pi}^*)}(b\t,a\t)&\!\!\!\!= \!\!\!\!& T^{unified}_{\bar{\pi}^*  } \mathcal{G}^{(\bar{\pi}^*)}(b\t,a\t)\\ 
  &\!\!\! \!= \!\!\! \!& \mathbb{E}_{ P(r\t|b\t,a\t)} \bigg[  -\alpha \, r\t+\mathbb{E}_{   P(b\nt,o\nt, \s\nt| b\t,a\t) }  \big[ \zeta \,   \text{log} \frac{P(\s\nt|  b\t, a\t)}{ q({\s}_{t+1}|{o}_{t+1}, b\t)}   \nonumber \\
&\!\!\!\!+\!\!\!\!&   \gamma  \mathbb{E}_{\bar{\pi}^*(a\nt|b\nt)  }[ \beta \, \text{log} \bar{\pi}^*(a\nt|b\nt) + \mathcal{G}^{(\bar{\pi}^*)}(b\nt, a\nt)]  \big] \bigg].  \label{Eq:unified_action_optimal-Bellman_operator}
\end{eqnarray}
\end{corollary}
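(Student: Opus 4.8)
The plan is to obtain the corollary as an immediate consequence of Lemma~\ref{lem:constr_action_EFE} and Theorem~\ref{lem:policy_evaluation}, specialized to the single constrained policy $\bar{\pi}^*$. First I would invoke the construction of the preceding sub-section, where Slater's condition and the information projection in Eq.~\eqref{Eq:project2} already guarantee that $\bar{\pi}^*$ is a well-defined, unique member of the constrained policy class $\bar{\Pi}$, obtained by projecting $\sigma\!\left(\frac{-\mathcal{G}^{(\bar{\pi}^*)}(b\t,\cdot)}{\beta}\right)$ onto $\bar{\Pi}$. This places $\bar{\pi}^*$ squarely within the hypotheses of the fixed-point theory developed for arbitrary $\bar{\pi} \in \bar{\Pi}$, so nothing further about its existence needs to be re-derived here.

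The core step is the observation that Theorem~\ref{lem:policy_evaluation} is stated for \emph{every} policy in $\bar{\Pi}$ and therefore applies verbatim to $\bar{\pi}^*$ itself. Instantiating that theorem with $\bar{\pi} = \bar{\pi}^*$ yields at once that $\mathcal{G}^{(\bar{\pi}^*)}(b\t,a\t)$ exists for all $b\t \in \mB$, $a\t \in \mA$, and is the unique fixed point $\mathcal{G}^{(\bar{\pi}^*)}(b\t,a\t) = T^{unified}_{\bar{\pi}^*}\mathcal{G}^{(\bar{\pi}^*)}(b\t,a\t)$. Substituting the definition of $T^{unified}_{\bar{\pi}^*}$ from Eq.~\eqref{Eq:unified_Bellman_operator} then reproduces the explicit recursion claimed in Eq.~\eqref{Eq:unified_action_optimal-Bellman_operator}, with $\bar{\pi}^*$ appearing both as the measure and inside the log-term of the inner expectation, exactly as when one sets $\bar{\pi}=\bar{\pi}^*$ in the Bellman equation of Proposition~\ref{pro:unified_Bellman_action}. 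Finally, Lemma~\ref{lem:constr_action_EFE} identifies this same quantity with $\min_{\bar{\pi} \in \bar{\Pi}} \mathcal{G}^{(\bar{\pi})}(b\t,a\t)$, so the function whose existence and uniqueness I have just established is precisely the optimal belief state-action EFE, closing the argument.

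The point that warrants the most care is the apparent circularity in the characterization of $\bar{\pi}^*$, since $\bar{\pi}^*$ is defined through $\mathcal{G}^{(\bar{\pi}^*)}$ while $\mathcal{G}^{(\bar{\pi}^*)}$ is defined through $\bar{\pi}^*$. I would resolve this by stressing that the two ingredients enter at different logical levels and neither presupposes the conclusion: Lemma~\ref{lem:constr_action_EFE} supplies the optimality identity $\mathcal{G}^{(\bar{\pi}^*)} = \min_{\bar{\pi} \in \bar{\Pi}} \mathcal{G}^{(\bar{\pi})}$ on its own, whereas Theorem~\ref{lem:policy_evaluation} supplies existence and uniqueness of the fixed point for the \emph{already-fixed} policy $\bar{\pi}^*$ through the Banach contraction argument underlying it, which depends only on the discount factor $\gamma \in [0,1)$ and the regularity of Assumption~\ref{ASS:regula}. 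Because the corollary does not ask for a re-derivation of $\bar{\pi}^*$ but only for the fixed-point characterization of $\mathcal{G}^{(\bar{\pi}^*)}$ once $\bar{\pi}^*$ is taken as given, the contraction property of $T^{unified}_{\bar{\pi}}$ transfers directly to $T^{unified}_{\bar{\pi}^*}$ and no genuine circularity remains.
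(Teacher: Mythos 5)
Your proposal is correct and follows essentially the same route as the paper: the paper's own proof is a one-line instantiation of Theorem~\ref{lem:policy_evaluation} with $\bar{\pi}=\bar{\pi}^*$ and $\mathcal{G}^{(\bar{\pi})}$ replaced by $\mathcal{G}^{(\bar{\pi}^*)}$, which is exactly your core step. Your additional remarks on Lemma~\ref{lem:constr_action_EFE} and on the apparent circularity are sound elaborations of what the paper leaves implicit, but they do not constitute a different argument.
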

\begin{proof}
See Appendix~\ref{APP:action_optimal_policy}.
\end{proof} \vspace{.1in}
\\
Corollary~\ref{theo:action_optimal_policy} demonstrates the existence and uniqueness of $\mathcal{G}^{(\bar{\pi}^*)}(b_t,a_t)$, which plays a crucial role in Eq.~\eqref{Eq:project2} to determine the optimal policy $\bar{\pi}^*(a\t|b\t)$. Furthermore, Eq.~\eqref{Eq:unified_action_optimal-Bellman_operator} extends the unified Bellman optimality equation to incorporate the optimal belief state-action EFE $\mathcal{G}^{(\bar{\pi}^*)}(b\t,a\t)$, and therefore, we refer to it as the unified Bellman optimality equation for $\mathcal{G}^{(\bar{\pi}^*)}(b\t,a\t)$.
%
 \subsection{Derivation of unified policy iteration}
So far, we have proven  the existence and uniqueness of the optimal policy $\bar{\pi}^{*}(.|b\t)$ as
\begin{eqnarray}
\!\!\!\!\!\!\!\!\!\!\!\!\!\!\! \bar{\pi}^{*}(.|b\t)= \arg\min_{{ \bar{\pi} \in \bar{\Pi} }} D_{KL} \left[\bar{\pi}(.|b\t), \sigma \left( \frac{- \mathcal{G}^{(\bar{\pi}^*)}(b\t,.)}{\beta} \right)\right], \label{Eq:project3}
\end{eqnarray}
where
 \begin{eqnarray}
 \!\!\!\!\!\!\!\! \!\!\! \!\!\!\!\!\!\!  \mathcal{G}^{(\bar{\pi}^*)}(b\t,a\t) &\!\!\!=\!\!\!&  \mathbb{E}_{ {P}(r\t|b\t,a\t)} \bigg[  -\alpha \, r\t+\mathbb{E}_{  P(b\nt,o\nt, \s\nt| b\t,a\t) }  \big[ \zeta \,   \text{log} \frac{P(\s\nt|  b\t, a\t)}{ q({\s}_{t+1}|{o}_{t+1}, b\t)}   \nonumber \\
&\!\!\!+\!\!\!&   \gamma  \mathbb{E}_{\bar{\pi}^*(a\nt|b\nt)}[ \beta \, \text{log} \bar{\pi}^*(a\nt|b\nt) + \mathcal{G}^{(\bar{\pi}^*)}(b\nt, a\nt)]  \big] \bigg]. \label{Eq:unified_const_action_Bellman3}
\end{eqnarray}
From Eqs.~\eqref{Eq:project3} and \eqref{Eq:unified_const_action_Bellman3}, it is evident that the learning of the optimal policy $\bar{\pi}^*(a_t|b_t)$ at time $t$ depends on the optimal belief state-action $\mathcal{G}^{(\bar{\pi}^*)}(b_t,a_t)$, which, in turn, depends on the optimal policy $\bar{\pi}^*(a_{t+1}|b_{t+1})$ at the subsequent time step. As a result,  $\bar{\pi}^*(a_t|b_t)$ and its corresponding belief state-action $\mathcal{G}^{(\bar{\pi}^*)}(b\t,a\t)$ follow a recursive pattern over time, enabling their recursive determination.
To initiate this recursive process, we can start with an arbitrary policy, which serves as the initial optimal policy (baseline policy), and initialize its corresponding belief state-action EFE with arbitrary values. We then update the belief state-action EFE corresponding to the baseline policy using Eq.~\eqref{Eq:unified_const_action_Bellman3}. Next, we update the baseline policy using Eq.~\eqref{Eq:project3}. This updated policy becomes the new baseline policy, and the process is repeated iteratively over $k \in \{0, 1, 2, ... \}$.
Inspired by this recursive procedure, we propose an iterative algorithm called unified policy iteration for concurrent learning of $\bar{\pi}^*$ and $\mathcal{G}^{(\bar{\pi}^*)}$. The algorithm involves updating the baseline policy and its corresponding belief state-action EFE iteratively in a computationally efficient manner. We provide a proof of convergence for the proposed unified policy iteration, showing that it converges to the optimal policy $\bar{\pi}^*$ and its corresponding belief state-action EFE $\mathcal{G}^{(\bar{\pi}^*)}$. This algorithm offers an effective approach for solving the POMDP problem with continuous state, action, and observation spaces, providing a practical framework for decision-making in complex environments.

Consider a randomly selected policy $\bar{\pi}^{(old)} \in \bar{\Pi}$ and its randomly initialized belief state-action EFE $\mathcal{G}^{({\bar{\pi}}^{(old)})}$. We can update  $\mathcal{G}^{({\bar{\pi}}^{(old)})}$ by applying the unified Bellman operator $T^{unified}_{\bar{\pi}^{(old)}} \mathcal{G}^{({\bar{\pi}}^{(old)})}(b\t,a\t)$, as follows
 \begin{eqnarray}
\!\!\!\!\!\!\!\! \!\!\! \!\!\!\!\! \mathcal{G}^{(\bar{\pi}^{(old)})}(b\t,a\t)   &\!\!\!\! \leftarrow \!\!\!\!& T^{unified}_{\bar{\pi}^{(old)}} \mathcal{G}^{({\bar{\pi}}^{(old)})} (b\t,a\t) \\
&\!\!\!\!=\!\!\!\!& \mathbb{E}_{  P(r\t|b\t,a\t)} \bigg[  -\alpha \, r\t+\mathbb{E}_{   P(b\nt,o\nt, \s\nt| b\t,a\t) }  \big[ \zeta \,   \text{log} \frac{P(\s\nt|  b\t, a\t)}{ q({\s}_{t+1}|{o}_{t+1}, b\t)}   \nonumber \\
\!\!\! &\!\!\!\!+\!\!\!\!&   \gamma \mathbb{E}_{\bar{\pi}^{(old)}(a\nt|b\nt)}[ \beta \, \text{log} \bar{\pi}^{(old)}(a\nt|b\nt) + \mathcal{G}^{(\bar{\pi}^{(old)})}(b\nt, a\nt)]  \big] \bigg]. \label{Eq:update_Bellman_action}
\end{eqnarray}
Next, let's consider a new policy ${\bar{\pi}}^{(new)}(.|b\t)$ resulting from the following equation:
\begin{equation}
\!\!\!\!\! {\bar{\pi}}^{(new)}(.|b\t)\! =\!\arg\min_{{{\bar{\pi} \in \bar{\Pi}} }} \! D_{KL} \!\! \left[\bar{\pi}(.|b\t), \sigma \left( \frac{- \mathcal{G}^{({\bar{\pi}}^{(old)})}(b\t,.)}{\beta} \right)\right]. \label{Eq:project_improv}
\end{equation}
The next lemma proves that the belief state-action EFE corresponding to ${\bar{\pi}}^{(new)}$ has a lower value than the belief state-action EFE corresponding to the baseline policy $\bar{\pi}^{(old)}$. We thus refer to the new policy $\bar{\pi}^{(new)}$ as an improved version of the baseline policy $\bar{\pi}^{(old)}$, and we refer to the next lemma as the unified policy improvement.
\begin{lemma} {(Unified policy improvement)} \\ \label{lem:policy_impr}
Let ${\bar{\pi}}^{(old)} \in \Pi$ be a randomly selected policy, and let ${\bar{\pi}}^{(new)}(.|b\t) $ be ${\bar{\pi}}^{(new)}(.|b\t) =\arg\min_{{{\bar{\pi} \in \bar{\Pi}} }}  D_{KL} [\bar{\pi}(.|b\t), \sigma ( \frac{- \mathcal{G}^{({\bar{\pi}}^{(old)})}(b\t,.)}{\beta} )]$ in a POMDP satisfying Assumption~\ref{ASS:regula}. Then:
\begin{eqnarray}
\!\!\!\!\!\!\!\mathcal{G}^{(\bar{\pi}^{(new)})}(b\t, a\t) \leq \mathcal{G}^{(\bar{\pi}^{(old)})}(b\t ,a\t)
\end{eqnarray}
for all $(b\t, a\t) \in \mB \times \mA $.
\end{lemma}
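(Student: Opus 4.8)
The plan is to mirror the soft policy-improvement argument of \cite{haarnoja2018soft}, but carried out for the belief state-action EFE and its unified Bellman recursion. First I would extract the one-step inequality implied by the definition of $\bar{\pi}^{(new)}$. By Eq.~\eqref{Eq:project_improv} (equivalently the simplified form in Eq.~\eqref{Eq:project_simple}), $\bar{\pi}^{(new)}(.|b\t)$ is the minimiser over $\bar{\Pi}$ of $\mathbb{E}_{\bar{\pi}(a\t|b\t)}[\text{log}\,\bar{\pi}(a\t|b\t) + \mathcal{G}^{(\bar{\pi}^{(old)})}(b\t,a\t)/\beta]$. Since $\bar{\pi}^{(old)}$ itself lies in $\bar{\Pi}$ and is therefore an admissible competitor in this $\arg\min$, the minimiser can only do at least as well, which after multiplying through by $\beta>0$ gives, for every belief state $b\t$,
\[
\mathbb{E}_{\bar{\pi}^{(new)}(a\t|b\t)}\!\big[\beta\,\text{log}\,\bar{\pi}^{(new)}(a\t|b\t)+\mathcal{G}^{(\bar{\pi}^{(old)})}(b\t,a\t)\big]\le \mathbb{E}_{\bar{\pi}^{(old)}(a\t|b\t)}\!\big[\beta\,\text{log}\,\bar{\pi}^{(old)}(a\t|b\t)+\mathcal{G}^{(\bar{\pi}^{(old)})}(b\t,a\t)\big].
\]

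Second, I would substitute this inequality into the unified Bellman equation for $\mathcal{G}^{(\bar{\pi}^{(old)})}$ (Proposition~\ref{pro:unified_Bellman_action}, Eq.~\eqref{Eq:unified_Bellman_action}). The quantity appearing under the discount factor $\gamma$ in that recursion is exactly $\mathbb{E}_{\bar{\pi}^{(old)}(a\nt|b\nt)}[\beta\,\text{log}\,\bar{\pi}^{(old)}(a\nt|b\nt)+\mathcal{G}^{(\bar{\pi}^{(old)})}(b\nt,a\nt)]$, and the one-step inequality above, applied now at the successor belief state $b\nt$, bounds it from below by the same expression with $\bar{\pi}^{(old)}$ replaced by $\bar{\pi}^{(new)}$. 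Because $\gamma\ge 0$ and all enclosing operators are expectations (hence order-preserving), this yields
\[
\mathcal{G}^{(\bar{\pi}^{(old)})}(b\t,a\t)\ \ge\ \big(B_{\bar{\pi}^{(new)}}\,\mathcal{G}^{(\bar{\pi}^{(old)})}\big)(b\t,a\t),
\]
where $B_{\bar{\pi}^{(new)}}$ denotes the backup map on the right-hand side of Eq.~\eqref{Eq:unified_Bellman_action} evaluated with the fixed policy $\bar{\pi}^{(new)}$.

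Third, I would iterate. The map $B_{\bar{\pi}^{(new)}}$ is monotone, being affine in $\mathcal{G}$ with the nonnegative multiplier $\gamma$ composed with order-preserving expectations, so applying it repeatedly to both sides preserves the inequality and produces a nonincreasing sequence $\mathcal{G}^{(\bar{\pi}^{(old)})}\ge B_{\bar{\pi}^{(new)}}\mathcal{G}^{(\bar{\pi}^{(old)})}\ge B_{\bar{\pi}^{(new)}}^{2}\mathcal{G}^{(\bar{\pi}^{(old)})}\ge\cdots$. By Theorem~\ref{lem:policy_evaluation}, the recursion defining $B_{\bar{\pi}^{(new)}}$ (the unified Bellman backup under the fixed policy $\bar{\pi}^{(new)}$) is a $\gamma$-contraction whose unique fixed point is $\mathcal{G}^{(\bar{\pi}^{(new)})}$; hence $B_{\bar{\pi}^{(new)}}^{n}\mathcal{G}^{(\bar{\pi}^{(old)})}\to\mathcal{G}^{(\bar{\pi}^{(new)})}$. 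Passing to the limit along the chain of inequalities gives $\mathcal{G}^{(\bar{\pi}^{(old)})}(b\t,a\t)\ge\mathcal{G}^{(\bar{\pi}^{(new)})}(b\t,a\t)$ for all $(b\t,a\t)\in\mB\times\mA$, which is the claim.

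The main obstacle I anticipate is the limit step rather than the one-step comparison: I must certify that the monotone iteration converges precisely to $\mathcal{G}^{(\bar{\pi}^{(new)})}$ and that the inequality survives the limit. This relies on the contraction and boundedness guarantees of Theorem~\ref{lem:policy_evaluation}, which in turn invoke the compactness and Lipschitz conditions of Assumption~\ref{ASS:regula} to keep the reward, policy-entropy, and information-gain terms bounded so that the discounted tail $\gamma^{n}(\cdot)$ vanishes. A secondary technical point is verifying that $\bar{\pi}^{(old)}$ is genuinely feasible in the constrained $\arg\min$ of Eq.~\eqref{Eq:project_improv}, since it is precisely this feasibility that licenses the comparison underlying the first step.
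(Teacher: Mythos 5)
Your proposal is correct and follows essentially the same route as the paper's own proof: both derive the one-step comparison from the feasibility of $\bar{\pi}^{(old)}$ in the KL-projection defining $\bar{\pi}^{(new)}$, then repeatedly unroll the unified Bellman recursion for $\mathcal{G}^{(\bar{\pi}^{(old)})}$ while applying that bound at each successor belief state, and finally invoke Theorem~\ref{lem:policy_evaluation} to identify the limit of the monotone iteration with $\mathcal{G}^{(\bar{\pi}^{(new)})}$. Your write-up is, if anything, more explicit than the paper's about the monotonicity of the backup map and the justification of the limit step.
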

\begin{proof}
See Appendix~\ref{APP:policy_impr}.
\end{proof} \vspace{.1in}\\
Using Lemma~\ref{lem:policy_impr}, we formally state our proposed unified policy iteration algorithm through the following theorem.
\begin{theorem} {(Unified policy iteration)}  \label{the:policy_iteraion} \\
Consider a POMDP satisfying Assumption~\ref{ASS:regula}.  By starting from an initial policy $\bar{\pi}_0 \in \bar{\Pi}$ and an initial mapping $\mathcal{G}_0: \mathcal{B} \times \mathcal{A} \rightarrow \mathbb{R}$ and recursively applying the following updates for $k= 0, 1, 2, \ldots $:
\begin{align}
\mathcal{G}_{k+1}(b_t,a_t) &= T^{unified}_{\bar{\pi}_k} \mathcal{G}_k(b_t,a_t), \label{Eq:value_update} \\
\bar{\pi}_{k+1}(\cdot|b_t) &= \arg\min_{\bar{\pi} \in \bar{\Pi}} D_{KL} \left[\bar{\pi}(\cdot|b_t), \sigma \left( \frac{-\mathcal{G}\nk(b_t,\cdot)}{\beta} \right) \right], \label{Eq:policy_update} 
\end{align}
the iterative process converges to $\bar{\pi}^*=\arg\min_{\bar{\pi} \in \bar{\Pi}} {G}^{(\bar{\pi})}(b_t)$.
\end{theorem}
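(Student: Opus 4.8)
The plan is to adapt the classical (soft) policy iteration convergence argument, in the spirit of the soft actor-critic proof of~\citeA{haarnoja2018soft}, to our belief state-action setting. The overall strategy has three ingredients: (i) each evaluation phase converges to the correct belief state-action EFE, (ii) each improvement phase produces a monotonically non-increasing sequence of belief state-action EFEs, and (iii) a monotone sequence that is bounded below converges, and its limit must satisfy the optimality condition. I would therefore structure the proof so that the heavy lifting is delegated to the results already established: Theorem~\ref{lem:policy_evaluation} for the evaluation step and Lemma~\ref{lem:policy_impr} for the improvement step.

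First I would treat the evaluation update in Eq.~\eqref{Eq:value_update}. For a fixed policy $\bar{\pi}_k \in \bar{\Pi}$, the operator $T^{unified}_{\bar{\pi}_k}$ is a $\gamma$-contraction with respect to the supremum norm (this is exactly what underlies the uniqueness claim of Theorem~\ref{lem:policy_evaluation}), so iterating the backup $\mathcal{G}_{k+1} = T^{unified}_{\bar{\pi}_k}\mathcal{G}_k$ drives the estimate to the unique fixed point $\mathcal{G}^{(\bar{\pi}_k)}(b\t,a\t)$ for every $(b\t,a\t) \in \mB \times \mA$. I would thus carry out the convergence analysis at the level of the converged estimates $\mathcal{G}^{(\bar{\pi}_k)}$. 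Next I would invoke Lemma~\ref{lem:policy_impr}: since $\bar{\pi}_{k+1}$ is obtained from $\mathcal{G}^{(\bar{\pi}_k)}$ through the KL projection in Eq.~\eqref{Eq:policy_update}, the unified policy improvement lemma guarantees
\begin{equation}
\mathcal{G}^{(\bar{\pi}_{k+1})}(b\t,a\t) \leq \mathcal{G}^{(\bar{\pi}_{k})}(b\t,a\t) \quad \text{for all } (b\t,a\t) \in \mB \times \mA. \nonumber
\end{equation}
Hence the sequence $\{\mathcal{G}^{(\bar{\pi}_k)}(b\t,a\t)\}_{k \geq 0}$ is non-increasing at every point.

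I would then establish that this sequence is bounded below. Under Assumption~\ref{ASS:regula}, the reward space is compact and the reward function is Lipschitz, so the discounted extrinsic reward term is bounded; the expected-entropy term is bounded below because the differential entropy of a distribution on the compact action space $\mA$ is bounded above; and the information-gain (KL) contribution is nonnegative. Consequently each $\mathcal{G}^{(\bar{\pi}_k)}(b\t,a\t)$ is bounded below uniformly, and a monotone bounded sequence converges pointwise to some limit $\mathcal{G}^{(\infty)}(b\t,a\t)$ with associated policy $\bar{\pi}^*$. The final step is to characterize this limit: at convergence the improvement step can no longer strictly decrease the value, so by the optimality (first-order/KL) condition used in Lemma~\ref{lem:policy_impr}, the limiting policy $\bar{\pi}^*$ must satisfy $\mathcal{G}^{(\bar{\pi}^*)}(b\t,a\t) \leq \mathcal{G}^{(\bar{\pi})}(b\t,a\t)$ for every $\bar{\pi}\in\bar{\Pi}$ and every $(b\t,a\t)$, i.e. $\bar{\pi}^*$ attains $\min_{\bar{\pi}\in\bar{\Pi}}\mathcal{G}^{(\bar{\pi})}(b\t,a\t)$; by the uniqueness of this minimizer established in Corollary~\ref{theo:action_optimal_policy}, $\mathcal{G}^{(\infty)} = \mathcal{G}^{(\bar{\pi}^*)}$. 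Recovering $G^{(\bar{\pi})}(b\t)$ from $\mathcal{G}^{(\bar{\pi})}(b\t,a\t)$ via the relation $G^{(\bar{\pi})}(b\t)=\mathbb{E}_{\bar{\pi}(a\t|b\t)}[\beta\,\text{log}\,\bar{\pi}(a\t|b\t)+\mathcal{G}^{(\bar{\pi})}(b\t,a\t)]$ then yields $\bar{\pi}^* \in \arg\min_{\bar{\pi}\in\bar{\Pi}} G^{(\bar{\pi})}(b\t)$, which is the claim.

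The main obstacle I anticipate is the characterization of the limit rather than the monotone-convergence bookkeeping: one must argue rigorously that the pointwise limit of the $\mathcal{G}^{(\bar{\pi}_k)}$ is itself the fixed point corresponding to the limiting policy (so that the KL-optimality condition can be applied at the limit), and that "no further improvement is possible" genuinely forces the global minimization over the constrained set $\bar{\Pi}$ rather than merely a stationary point of the projection. Here convexity of $\bar{\Pi}$ and the Slater-type condition already used around Eq.~\eqref{Eq:project} are what make the KL projection produce a true minimizer, and invoking Corollary~\ref{theo:action_optimal_policy} closes the gap by supplying uniqueness of the optimal belief state-action EFE.
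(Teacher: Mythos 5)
Your proposal follows essentially the same route as the paper's proof: Lemma~\ref{lem:policy_impr} gives pointwise monotone decrease of $\mathcal{G}^{(\bar{\pi}_k)}$, boundedness of the unified reward gives a lower bound, hence pointwise convergence, and the limit is shown to be optimal by re-running the improvement argument against an arbitrary competitor policy in $\bar{\Pi}$. If anything you are more explicit than the paper about the two points it leaves implicit — that the evaluation step must be understood as iterated to the fixed point $\mathcal{G}^{(\bar{\pi}_k)}$ of $T^{unified}_{\bar{\pi}_k}$ (Theorem~\ref{lem:policy_evaluation}), and that characterizing the limit requires the KL-projection optimality condition rather than mere stationarity — so no substantive gap separates the two arguments.
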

\begin{proof}
See Appendix~\ref{APP:policy_iteraion}.
\end{proof} 
\vspace{.1in}\\
Theorem~\ref{the:policy_iteraion} demonstrates that the optimal policy $\bar{\pi}^*=\arg\min_{\bar{\pi} \in \bar{\Pi}} {G}^{(\bar{\pi})}(b_t)$ can be obtained by recursively alternating between the belief state-action EFE update step, given by Eq.~\eqref{Eq:value_update}, and the policy improvement step, given by Eq.~\eqref{Eq:policy_update}.
%
\subsection{Unified reward function for AIF and RL in POMDPs} \label{sec:unified_R}
The unified Bellman equation for the belief state EFE $G^{(\bar{\pi})} (b_t)$ and the Bellman optimality equation for $G^{(\bar{\pi}^*)} (b_t)$ in a POMDP, as stated in Eqs.~\eqref{Eq:unified_Bellman_mod} and \eqref{Eq:unified_optimality_Bellman_mod} respectively, exhibit a strong resemblance to the Bellman equation for the state value function $V^{(\pi)}(s_t)$ and the Bellman optimality equation for $V^*(s_t)$ in an MDP, as represented by Eqs.~\eqref{Eq:Bellman} and \eqref{Eq:Bellman_opt} respectively. Given this similarity and considering that the proposed unified inference algorithm aims to minimize the belief state EFE while RL endeavours to maximize the state value function, we can interpret $G^{(\bar{\pi})} (b_t)$ as $-\bar{V}^{(\bar{\pi})}(b_t)$, where $\bar{V}^{(\bar{\pi})}(b_t)$ is referred to as the belief state value function, defined as follows:
\begin{eqnarray}
\bar{V}^{(\bar{\pi})}(b\t) &\!\!\!=\!\!\!&  \mathbb{E}_{ \prod_{\tau =t}^{\infty} \pi(a_{\tau}|b_{\tau}) {P}(s_{\tau+1}, r\ta, o_{\tau+1}, b_{\tau+1}|  b\ta, a\ta) } \bigg[ \sum_{\tau=t}^{\infty}  \gamma^{\tau-t} \bigg(\alpha \, r\ta- \zeta \, \text{log} \frac {P({\s_{\tau+1}}|a\ta,b\ta) }{q({\s_{\tau+1}}|o_{\tau+1}, b\ta)}  \nonumber \\
 \!\!\!\!\!\! &\!\!\!-\!\!\!&     \beta \, \text{log}\bar{\pi}(a_{\tau}|b_{\tau})   \bigg)  \bigg]. \label{Eq:G-v_relation}
\end{eqnarray}
The belief state value function $\bar{V}^{(\bar{\pi})}(b\t)$ in Eq.~\eqref{Eq:G-v_relation} can be considered as a value function for a POMDP with the reward function $r^{unified}\ta$ at time step $\tau$ defined as:
\begin{eqnarray}
r^{\text{unified}}\ta &\!\!\!=\!\!\!& \alpha \underbrace{ r\ta}_{r^{\text{extrinsic}}\ta}+ \zeta \underbrace{  \left( -  \text{log} \frac {P({\s_{\tau+1}}|a\ta,b\ta) }{q({\s_{\tau+1}}|o_{\tau+1}, b\t)} \right)}_{r^{\text{intrinsic}}\ta} + \beta \, \underbrace{  \left(-\text{log} \bar{\pi}(a\ta|b\ta) \right) }_{r^{\text{entropy}}\ta}. \label{Eq:R_unified}
\end{eqnarray}
The reward function $r^{\text{unified}}\ta$ represents the agent's total reward at time step $\tau$ after performing action $a\ta$ in belief state $b\ta$.  It encompasses the reward $r_{\tau}$ obtained, the transition to the next state $s_{\tau+1}$, and the observation $o_{\tau+1}$ within the POMDP framework.  The first term of $r^{\text{unified}}\ta$ (i.e., $r^{\text{extrinsic}}\ta$) corresponds to the extrinsic reward, which is the reward function commonly used in MDP-based RL. The second term ($r^{\text{intrinsic}}\ta$) represents the intrinsic reward that encourages the agent to visit states providing the most information about the hidden states of the environment.
The combination of the extrinsic reward $r^{\text{extrinsic}}\ta$ and the intrinsic reward $r^{\text{intrinsic}}\ta$ balances exploration for information gain with the exploitation of extrinsic rewards.  The third term in Eq.~\eqref{Eq:R_unified} arises from taking the expected value over future actions and corresponds to entropy maximization.  This term, referred to as the entropy reward $r^{\text{entropy}}_{\tau}$, promotes random exploration and prevents the policy from becoming overly deterministic, thereby encouraging the learning of stochastic policies $\bar{\pi}$. 
As explained in Section~\ref{Sec:problem setting}, learning a stochastic policy enables the agent to adapt to different situations resulting from environmental changes, leading to improved stability in the proposed unified policy iteration approach. The reward function $r^{\text{unified}}\ta$ combines both the extrinsic reward employed in RL and the intrinsic reward that promotes information-seeking exploration as used in AIF. Hence,  we designate $r^{\text{unified}}\ta$ as a unified reward function applicable to both AIF and RL in POMDPs.

Given that the state-action value function $Q^{(\pi)} (\s_t,a_t)$ and the belief state-action EFE $\mathcal{G}^{(\bar{\pi})} (b_t,a_t)$ are obtained by conditioning $V^{(\pi)}(\s_t)$ and $G^{(\bar{\pi})} (b_t)$ on the current action $A_t=a_t$, we can similarly interpret $\mathcal{G}^{(\bar{\pi})} (b_t,a_t)$ as negative counterpart of the belief state-action value function  $\bar{Q}^{(\bar{\pi})} (b_t,a_t)$, defined as follows:
\begin{eqnarray}
\!\!\!\! \bar{Q}^{(\bar{\pi})} (b\t,a\t) &\!\!\!\!=\!\!\!\!&  \mathbb{E}_{ \prod_{\tau =t}^{\infty} \pi(a_{\tau+1}|b_{\tau+1}) {P}(s_{\tau+1}, r\ta, o_{\tau+1}, b_{\tau+1}|  b\ta, a\ta) } \bigg[ \sum_{\tau=t}^{\infty}  \gamma^{\tau-t} \bigg(\alpha \, r\ta- \zeta \, \text{log} \frac {P({\s_{\tau+1}}|a\ta,b\ta) }{q({\s_{\tau+1}}|o_{\tau+1}, b\ta)}  \nonumber \\
 \!\!\!\!\!\! &\!\!\!\!-\!\!\!\!&     \beta \, \text{log}\bar{\pi}(a_{\tau+1}|b_{\tau+1})   \bigg)  \bigg]. \label{Eq:g-Q_relation}
\end{eqnarray}

By setting $\alpha = 1$, $\beta = 0$, and $\zeta = 0$ in the unified reward function $r^{\text{unified}}\ta$ in Eq.~\eqref{Eq:R_unified}, the optimal policy $\bar{\pi}^* = \arg\max_{\bar{\pi} \in \bar{\Pi}} \bar{V}^{(\bar{\pi})}(b_t)$ aims to maximize the expected long-term extrinsic reward alone, which is the objective in MDP-based RL algorithms.  However, by setting $\alpha = 1$, $\beta = 0$, and $\zeta = 1$, the optimal policy $\bar{\pi}^* = \arg\max_{\bar{\pi} \in \bar{\Pi}} \bar{V}^{(\bar{\pi})}(b_t)$ aims to maximize both the expected long-term extrinsic reward through $r^{extrinsic}\ta$ and the expected long-term information gain through  $r^{intrinsic}\ta$, which is crucial in the context of POMDPs. Therefore, we can extend RL methods developed for MDPs to POMDPs by augmenting the external reward $r\ta$ in these RL approaches with the intrinsic reward $-\log \frac{P({s_{\tau+1}}|b\ta, a\ta)}{q({s_{\tau+1}}|o_{\tau+1}, b\ta)}$.
\\
Furthermore, by setting $\alpha = 0$, $\beta = 0$, and $\zeta = 1$ in the unified reward function $r^{unified}\ta$ in Eq.~\eqref{Eq:R_unified}, the optimal policy $\bar{\pi}^* = \arg\max_{\bar{\pi} \in \bar{\Pi}} \bar{V}^{(\bar{\pi})}(b_t)$ solely focuses on maximizing the expected information gain through the intrinsic reward term  $r^{intrinsic}\ta$. Therefore, by replacing the extrinsic reward $r\ta$ in these RL approaches with the intrinsic reward $-\log \frac{P({\s_{\tau+1}}|b\ta, a\ta)}{q({\s_{\tau+1}}|o_{\tau+1}, b\t)}$, we can extend extrinsic reward-dependent RL algorithms to a setting where extrinsic reward values have not been determined by an external supervisor. This extension is particularly important in scenarios where designing extrinsic reward values is costly or specifying them is challenging.
\section{Unified inference model} \label{sec:design}
As mentioned in Assumption~\ref{ASS:percep}, the proposed unified inference assumes that the agent performs perceptual inference and learning before action selection at each time step by minimizing the VFE. The action selection phase is facilitated by the proposed unified policy iteration method outlined in Theorem~\ref{the:policy_iteraion}.
\\
However, in scenarios with continuous state, action, and observation spaces, the infinite number of possible states, actions, and observations makes it impractical to explicitly represent the generative model, variational posterior distribution, belief state-action EFE, and policy using a finite set of values~\cite{sutton2018reinforcement, haarnoja2018soft}. 
To overcome this challenge, following common practices in both RL and AIF~\cite{haarnoja2018soft, mnih2013playing, ueltzhoffer2018deep, lee2020stochastic}, we utilize DNNs to approximate the generative model, variational posterior distribution, belief state-action EFE, and policy.
\\
{ Considering that perceptual inference and learning are derived from existing literature, detailed explanations are provided in Appendix~\ref{sub-sec:condionined_perc}. This section primarily focuses on our contributions: learning the optimal belief state-action policy through our proposed unified policy iteration method.}
\subsection{Unified actor-critic} \label{Seb_unified _AC}
 In this sub-section, we focus on approximating the belief state-action EFE $\mathcal{G}^{(\bar{\pi})}(b\t,a\t)$ and the belief state-action policy $\bar{\pi}$ within the context of the proposed unified policy iteration, which involves alternating between the belief state-action EFE update step, as given by Eq.~\eqref{Eq:value_update}, and the policy update step, as given by Eq.~\eqref{Eq:policy_update}.   Inspired by actor-critic algorithms in RL, we refer to the approximated policy as the actor and the approximated belief state-action EFE as the critic. Hence, we term our approximation of the unified policy iteration framework as the unified actor-critic.
\\
By utilizing the relationship $G^{(\bar{\pi})}(b_t)=-\bar{V}^{(\bar{\pi})}(b_t)$, which arises from the unified reward function $r^{\text{unified}}_t$ in Eq.~\eqref{Eq:R_unified}, we can leverage and adapt a wide range of advanced MDP-based actor-critic algorithms to learn the critic $\mathcal{G}^{(\bar{\pi})}(b_t, a_t)$ and the actor $\bar{\pi}(a_t | b_t)$ in our unified actor-critic algorithm. This enables us to take advantage of the existing methods and techniques developed for MDPs and extend them to address the challenges posed by partial observability in POMDPs.

To approximate $\mathcal{G}^{(\bar{\pi})}(b_t, a_t)$ and $\bar{\pi}(a_t | b_t)$, we parameterize them with $\psi$ and $\phi$, respectively. However, directly inputting the infinite-dimensional continuous belief state $b_t$ to the DNNs modeling the policy and the belief state-action EFE function is impractical. Instead, we use a belief state representation $h\t \in \mathbb{R}^{D_{\mathcal{H}}}$, where $D_{\mathcal{H}}$ is the dimension of $h_t$. This representation is explained in Appendix~\ref{sub-sec:condionined_perc}.
We then model the belief state-action EFE and the policy as $\mathcal{G}^{(\bar{\pi})}_{\psi}(h_t, a_t)$ and $\bar{\pi}_{\phi}(a_t | h_t)$, respectively.
 Thus, the belief state-action EFE update and the policy improvement steps can be expressed as follows
 \begin{eqnarray}
\!\!\!\!\!\!\!\! \!\!\!  \mathcal{G}^{(\bar{\pi})}_{\psi}(h\t,a\t)  &\!\!=\!\!& \mathbb{E}_{ P(r\t|h\t,a\t)} \bigg[  -\alpha \, r\t+\mathbb{E}_{   P(h\nt,o\nt, \s\nt| h\t,a\t) }  \big[ \zeta \,   \text{log} \frac{P(\s\nt|  h\t, a\t)}{ q({\s}_{t+1}|{o}_{t+1})}   \nonumber \\
&\!\!+\!\!&   \gamma \mathbb{E}_{\bar{\pi}(a\nt|h\nt)}[ \beta \, \text{log} \bar{\pi}(a\nt|h\nt) + \mathcal{G}^{(\bar{\pi})}_{\psi}(h\nt, a\nt)]  \big] \bigg], \label{Eq:update_Bellman_action3}
\end{eqnarray}
and
\begin{equation}
\!\!\!\!\! {\bar{\pi}}_{\phi}(.|h\t)\! =\!\arg\min_{{{\bar{\pi} \in \bar{\Pi}} }} \! D_{KL} \!\! \left[\bar{\pi}(.|h\t), \sigma \left( \frac{- \mathcal{G}^{({\bar{\pi}})}_{\psi}(h\t,.)}{\beta} \right)\right]. \label{Eq:project_improv3}
\end{equation}

We train $\psi$ to minimize the squared residual error derived from Eq.~\eqref{Eq:update_Bellman_action3} using batches of size $B$.
Each batch can consist of $M$ sequential data points denoted as $\{( a\pk, r_k, o_k)_{k=1}^{M} \}_{i=1}^B$, sampled from a replay buffer $\mathcal{D}$ containing real environment interactions. Alternatively, batches may include $N$ simulated interaction data points $\{ (a\ta, r\ta, o_{\tau+1})_{\tau=t}^{t+N} \}_{i=1}^B$, starting from $h\t$ and  simulating interactions up to $N$ steps.
We can also use a combination of both real and imagined interactions for training $\psi$. Depending on the type of data used, we devise three unified actor-critic approaches: model-based unified actor-critic, model-free unified actor-critic, and hybrid unified actor-critic.
\subsubsection{Model-free unified actor-critic} \label{sec:MF_AC}
 Inspired by model-free RL algorithms, our proposed model-free unified actor-critic learns $\psi$ by minimizing the squared residual error from Eq.~\eqref{Eq:update_Bellman_action3} over a batch of real data $\{ (a\k, r\k, o\nk)_{k=1}^{M} \}_{i=1}^B$ sampled from the replay buffer $\mathcal{D}$. This involves minimizing the squared residual error $L_{\mathcal{G}}^{\text{MF}}(\psi)$, which is defined as 
\begin{eqnarray}
 \lefteqn{ \!\!\!\!\!\!\!\!\!\!\!\!\!    L_{\mathcal{G}}^{\text{MF}}({\psi})  = \mathbb{E}_{\mathcal{D}( a\k, r\k, o\nk) } \bigg[ \frac{1}{2}  \bigg( \mathcal{G}^{(\bar{\pi})}_{\psi}({h}\t,{a}\k)+ \alpha \, r\k -\mathbb{E}_{  q(s\nt|h\t,o\nk) }  \big[ \zeta \,   \text{log} \frac{P(\s\nt|  h\t, a\t)}{ q({\s}_{t+1}|{o}_{t+1}, h\t) } \nonumber} \\
 && \!\!\!\!\!\!\!\!\!\!\!\!\!\!\!\!\!  - \, \gamma \mathbb{E}_{P(h\nt|h\t, a\k,  o\nk, s\nt) \bar{\pi}_{\phi}(a\nt|h\nt)}[ \beta \, \text{log} \bar{\pi}_{\phi}(a\nt|h\nt) + \mathcal{G}^{(\bar{\pi})}_{\psi}(h\nt, a\nt)]  \big] \bigg)^2 \bigg] \!, \label{Eq:loss_psi_MF}
\end{eqnarray}
After updating $\psi$, the parameter $\phi$ of the policy $\bar{\pi}_{\phi}(a_{t}|h_{t})$ can be trained by minimizing the right-hand side of Eq.~\eqref{Eq:project_improv3}:
\begin{eqnarray}
L_{\bar{\pi}}^{\text{MF}}({\phi}) &=& \  D_{KL} \left[\bar{\pi}_{\phi}(.|h\t), \sigma \left( \frac{- \mathcal{G}_{\psi}^{(\bar{\pi})}(h\t,.)}{\beta} \right) \right]    \label{Eq:obj_param} \\
&=& \mathbb{E}_{ \bar{\pi}_{\phi}(a\t|h\t)} \left[  \beta\, \text{log} \bar{\pi}_{\phi}( a\t|h\t )   + \mathcal{G}^{(\bar{\pi})}_{\psi} (h\t,a\t) \right].  \label{Eq:pi_loss} 
\end{eqnarray}
To minimize Eq.~\eqref{Eq:pi_loss}, we need to calculate its gradient with respect to $\phi$ by sampling from $\bar{\pi}_{\phi}( a_t|h_t )$. However, computing this gradient involves differentiating with respect to $\bar{\pi}_{\phi}(a_t|h_t)$, which we sample from. Therefore, we use the reparameterization trick~\cite{kingma2013auto} for sampling from $\bar{\pi}_{\phi}(a_t|h_t)$.

By employing the relationship $G^{(\bar{\pi})}_{\psi}(h_t)=-\bar{V}^{(\bar{\pi})}_{\psi}(h_t)$ and considering a specific case of the model-free unified actor-critic algorithm, where $\alpha=1$ and $\zeta=0$,  we recover the variational recurrent model (VRM) algorithm proposed in~\cite{han2020variational} as an extension of SAC~\cite{haarnoja2018soft} to POMDPs. Additionally, by assigning $\alpha=0$ and $\beta \neq 0$ in the model-free unified actor-critic algorithm, we can formulate the so-called reward-free variants of SAC~\cite{haarnoja2018soft, haarnoja2018soft2} in settings without an extrinsic reward function.
Moreover, by setting $\alpha=1$ and assigning non-zero values to $\zeta$ (i.e., $\zeta \neq 0$), our model-free unified actor-critic algorithm is transformed into a generalized version of SAC for POMDPs. This generalized version maximizes both the expected future extrinsic reward and information gain. We refer to this extension as the generalized SAC (G-SAC).
\vspace{.1in}\\
 \textbf{Generalized SAC (G-SAC)} generalizes model-free SAC~\cite{haarnoja2018soft} to POMDPs with the following loss functions:\footnote{ In the original formulation of SAC~\cite{haarnoja2018soft}, they introduced an additional function approximator for the state value function, but later they found it to be unnecessary~\cite{haarnoja2018soft2}.}
\begin{eqnarray}
 \lefteqn{ \!\!\!\!\!\!\!\!\!\!\!\!   L_{\mathcal{G}}^{\text{G-SAC}}({\psi}) = \mathbb{E}_{\mathcal{D}( a\k, r\k, o\nk) } \bigg[ \frac{1}{2}  \bigg( \mathcal{G}^{(\bar{\pi})}_{\psi}({h}\t,{a}\k)+  r\k -\mathbb{E}_{  q(s\nt|h\t,o\nk) }  \big[ \zeta \,   \text{log} \frac{P(\s\nt|  h\t, a\t)}{ q({\s}_{t+1}|{o}_{t+1}, h\t) } \nonumber} \\
 && \!\!\!\!\!\!\!\!\!\!\!\!\!\!\!\!\! - \, \gamma \mathbb{E}_{P(h\nt|h\t, a\k,  o\nk, s\nt) \bar{\pi}_{\phi}(a\nt|h\nt)}[ \beta \, \text{log} \bar{\pi}_{\phi}(a\nt|h\nt) + \mathcal{G}^{(\bar{\pi})}_{\psi}(h\nt, a\nt)]  \big] \bigg)^2 \bigg] \!,  \label{Eq:unified_SAC} 
\end{eqnarray}
and
\begin{eqnarray}
 L_{\bar{\pi}}^{\text{G-SAC}}({\phi}) = \mathbb{E}_{ \bar{\pi}_{\phi}(a\t|h\t)} \left[  \beta\, \text{log} \bar{\pi}_{\phi}( a\t|h\t )   + \mathcal{G}^{(\bar{\pi})}_{\psi} (h\t,a\t) \right].\label{Eq:unified_SAC_policy}
 \end{eqnarray}

\textbf{Note 6.1:} It is important to clarify that the proposed model-free unified actor-critic algorithm incorporates a learned belief state generative model to compute the intrinsic reward $r^{\text{intrinsic}}_t$ and the belief state representation $h_t$.  Therefore, from a strict RL perspective, it is not considered a purely model-free algorithm. However, in this context, the term "model-free" refers to the fact that the generative model is not used for data selection during belief state-action EFE (critic) learning. Instead, the agent directly learns the policy and belief state-action EFE from observed interactions with the environment.

During the iterative process of the model-free unified actor-critic, the agent interacts with the environment, updates its belief state-action EFE based on observed rewards and observations, and learns through trial and error. Model-free methods, not relying on a learned model for data selection, are more robust to modeling errors or inaccuracies. However, this trial-and-error approach can be time-consuming and sample inefficient, often requiring numerous samples to converge to an optimal or near-optimal policy. To tackle this, we introduce a model-based version of the unified actor-critic method in the next phase to learn the optimal policy.
\subsubsection{Model-based unified actor-critic} \label{sec:MB_AC}
The proposed model-based unified actor-critic algorithm utilizes the learned belief state generative model to simulate and predict future observations and rewards resulting from different actions. This enables the agent to make decisions and optimize its policy without needing direct interaction with the environment.
\\
To implement the model-based unified actor-critic approach, the agent collects trajectories $\{ (a\ta, r\ta, o_{\tau+1})_{\tau=t}^{t+N} \}_{i=1}^B$ starting from the initial belief state $h_t$ and forwarding till $N$. These trajectories  are generated by imagining following the policy $a\ta \sim \bar{\pi}_{\phi}(a\ta | h\ta)$, utilizing the transition model $P(s_{\tau+1}|h\ta, a\ta)$, the reward model $P(r\ta|h\ta,a\ta)$, and the likelihood model $P(o_{\tau+1}|h\ta,s_{\tau+1})$.
It's important to note that the reward model $P(r_t|\s_t,a_t)$ also needs to be learned, with the process detailed in Appendix~\ref{APP:reward_model}.
\\
Next, the critic parameter $\psi$ is updated by minimizing the following loss function:
\begin{eqnarray}
\! L_{\mathcal{G}}^{\text{MB}}({\psi}) &\!\!\!\! =\!\!\!\! & \mathbb{E}_{ \bar{\pi}_{\phi} (a\t|h\t) P(r\t|h\t,a\t) } \bigg[ \frac{1}{2}  \bigg( \mathcal{G}^{(\bar{\pi})}_{\psi}({h}\t,{a}\t)+ \alpha \, r\t \nonumber \\
 \!\!\!\!\!\!\!\!\!\!\!\!\!\! &\!\!\! \! -\!\!\!\! &  \mathbb{E}_{ P(\s\nt|  h\t, a\t) P(o\nt|  h\t, s\nt)}  \big[ \zeta \,   \text{log} \frac{P(\s\nt|  h\t, a\t)}{ q({\s}_{t+1}|{o}_{t+1}, h\t) }  \label{Eq:loss_psi_MB} \\
 \!\!\!\!\!\!\!\!\!\!\!\!\!\! &\!\!\! \! -\!\!\!\! & \, \gamma \, \mathbb{E}_{P(h\nt|h\t, a\t,  o\nt, s\nt) \bar{\pi}_{\phi}(a\nt|h\nt)}  [   \beta \, \text{log} \bar{\pi}_{\phi}({a}\nt|{h}\nt)+  \mathcal{G}^{(\bar{\pi})}_{\psi}({h}_{t+1}, {a}_{t+1}) ]  \bigg)^2 \bigg], \nonumber 
\end{eqnarray}
where  the expectations are estimated under the imagined trajectories $\{ (a\ta, r\ta, o_{\tau+1})_{\tau=t}^{t+N} \}_{i=1}^B$. The actor parameter $\phi$ is estimated by minimizing the following loss function:
\begin{eqnarray}
L_{\bar{\pi}}^{\text{MB}}({\phi}) &=& \mathbb{E}_{ \bar{\pi}_{\phi}(a\t|h\t)} \big[  \beta\, \text{log} \bar{\pi}_{\phi}( a\t|h\t )   + \mathcal{G}^{(\bar{\pi})}_{\psi} (h\t,a\t) \big].  \label{Eq:pi_loss_2} 
\end{eqnarray}
%

\textbf{Note 6.2:} This model-based unified actor-critic approach reduces memory usage and improves sample efficiency compared to the model-free unified actor-critic. However, it requires learning the reward model $P(r_\tau|h_\tau,a_\tau)$, unlike the model-free unified actor-critic. Learning the reward model of an environment is typically more challenging than learning the transition and likelihood models since prediction errors in the state and observation of the environment provide a richer source of information than reward prediction errors. This is because rewards are usually scalar, while the environment's state and observation are typically characterized by high dimensionality.

By relating $G^{(\bar{\pi})}_{\psi}(h_t)$ to $-\bar{V}^{(\bar{\pi})}_{\psi}(h_t)$, we can recover the model-based actor-critic Dreamer algorithm~\cite{Hafner2020Dream}, which focuses on maximizing rewards, as a special case of our model-based unified actor-critic algorithm when $\alpha=1$ and $\zeta=0$.\footnote{Dreamer computes estimates of $G^{(\bar{\pi})}(h_t)$ instead of $\mathcal{G}^{(\bar{\pi})}(h_t, a\t)$.  However, these two estimates are equivalent due to the relationship between $G^{(\bar{\pi})}(h_t)$ and $\mathcal{G}^{(\bar{\pi})}(h_t, a\t)$ illustrated in Appendix~\ref{APP:unified_Bellman_action}. Specifically, we have ${G}^{(\bar{\pi})}(h\t)  = \mathbb{E}_{\bar{\pi}(a\t|h\t)} \left[\beta \, \text{log} {\bar{\pi}}(a\t|h\t) + \mathcal{G}^{(\bar{\pi})}(h\t, a\t)     \right]$. } Additionally, we can create a reward-free version of the Dreamer by assigning $\alpha=0$ and $\beta \neq 0$ in the model-based unified actor-critic algorithm. Furthermore, by selecting $\alpha=1$ and assigning non-zero values to $\zeta$,  we extend Dreamer to POMDPs, aiming to maximize both the expected long-term extrinsic reward and the expected long-term information gain during action selection. This extended version is referred to as the Generalized Dreamer (G-Dreamer).
\vspace{.1in}\\
\textbf{Generalized Dreamer (G-Dreamer)} generalizes model-based Dreamer framework~\cite{Hafner2020Dream} to POMDPs  with the following loss functions:
\begin{eqnarray}
 \! L_{\mathcal{G}}^{\text{G-Dreamer}}({\psi}) &\!\!\!\! =\!\!\!\! & \mathbb{E}_{ \bar{\pi}_{\phi} (a\t|h\t) P(r\t|h\t,a\t) } \bigg[ \frac{1}{2}  \bigg( \mathcal{G}^{(\bar{\pi})}_{\psi}({h}\t,{a}\t)+  r\t \nonumber \\
 \!\!\!\!\!\!\!\!\!\!\!\!\!\! &\!\!\! \! -\!\!\!\! &  \mathbb{E}_{ P(\s\nt|  h\t, a\t) P(o\nt|  h\t, s\nt)}  \big[ \zeta \,   \text{log} \frac{P(\s\nt|  h\t, a\t)}{ q({\s}_{t+1}|{o}_{t+1}, h\t) }  \label{Eq:unified_Dream}  \\
 \!\!\!\!\!\!\!\!\!\!\!\!\!\! \!\!\!\!\!\!\!\!\!\!\!\!\!\! &\!\!\! \! -\!\!\!\! & \, \gamma \, \mathbb{E}_{P(h\nt|h\t, a\k, o\nt, s\nt) \bar{\pi}_{\phi}(a\nt|h\nt)}  [   \beta \, \text{log} \bar{\pi}_{\phi}({a}\nt|{h}\nt)+  \mathcal{G}^{(\bar{\pi})}_{\psi}({h}_{t+1}, {a}_{t+1}) ]  \bigg)^2 \bigg], \nonumber  
\end{eqnarray}
and
\begin{eqnarray}
L_{\bar{\pi}}^{\text{G-Dreamer}}({\phi}) &=& \mathbb{E}_{ \bar{\pi}_{\phi}(a\t|h\t)} \big[  \beta\, \text{log} \bar{\pi}_{\phi}( a\t|h\t )   + \mathcal{G}^{(\bar{\pi})}_{\psi} (h\t,a\t) \big].   \label{Eq:pi_loss_Dream} 
\end{eqnarray}
 %
\subsubsection{Hybrid unified actor-critic} \label{sec:hybrid_AC}
While model-based unified actor-critic methods offer the potential for higher sample efficiency and advanced trajectory integration, their reliance on accurate models for performance is crucial. Errors in learned generative models can accumulate over time, leading to significant deviations from desired behavior~\cite{ma2021contrastive}. Learning precise models, especially with complex observations, is challenging. Inspired by~\cite{ma2021contrastive}, to mitigate model errors, we adopt a hybrid unified actor-critic learning scheme. This approach combines the sample efficiency of model-based learning with the robustness of model-free learning to address inaccuracies.
\\
 In the hybrid unified actor-critic scheme, the critic parameter ${\psi}$ is learned using both real trajectories $\{ (a\k, r\k, o\nk)_{k=1}^{M} \}_{i=1}^B$ and imagined trajectories $\{ (a\ta, r\ta, o_{\tau+1})_{\tau=t}^{t+N} \}_{i=1}^B$:
\begin{eqnarray}
L_{\mathcal{G}}^{\text{Hybrid}}({\psi}) &=& L_{\mathcal{G}}^{\text{MB}} ({\psi})+ c\, L_{\mathcal{G}}^{\text{MF}}({\psi}),  \label{Eq:loss_psi_hybrid}
\end{eqnarray}
where $c$ represents the scaling factor that determines the relative importance of the model-free critic loss function $L_{\mathcal{G}}^{\text{MF}}({\psi})$ compared to the model-based critic loss function  $L_{\mathcal{G}}^{\text{MB}} ({\psi})$. The actor parameter $\phi$ is learned by minimizing the following loss function:
\begin{eqnarray}
L_{\bar{\pi}}^{\text{Hybrid}}({\phi}) &=& \mathbb{E}_{\bar{\pi}_{\phi}(a\t|h\t)} \big[  \beta\, \text{log} \bar{\pi}_{\phi}( a\t|h\t )   + \mathcal{G}^{(\bar{\pi})}_{\psi} (h\t,a\t) \big].  \label{Eq:pi_loss_hybrid} 
\end{eqnarray}

The complete learning process of the unified inference model is outlined in Algorithm~\ref{Algorithm:unified iteration} of the appendix.
\section{Related work} \label{sec:related}
This section provides a concise overview of relevant literature, specifically addressing the extension of RL and AIF techniques to POMDPs featuring continuous spaces.
\subsection{RL approaches for continuous space POMDPs } \label{sub-sec:RL_related}
While RL conventionally targets MDPs, recent progress has broadened its application to POMDPs. These advancements frequently employ memory-based neural networks to encode past observations and actions or employ belief state inference.
\subsubsection{Memory-based approaches} \label{sub-sec:memeory_related}
\citeA{hausknecht2015deep} developed a variant of~\cite{mnih2013playing} to handle POMDPs by incorporating a recurrent layer, such as Long Short-Term Memory (LSTM) or Gated Recurrent Unit (GRU), to capture the history of observations. However, this method did not consider the history of actions, focusing solely on the observation sequence. Later,~\citeA{zhu2017improving, heess2015memory, nian2020dcrac} utilized recurrent layers to capture both the observation and action history. They demonstrated that it is possible to store only the necessary statistics of the history using recurrent layers instead of storing the entire preceding history. It is worth noting that these works primarily focused on tasks with discrete action spaces rather than continuous ones.
 \citeA{haklidir2021guided} proposed the guided SAC approach, which augments the original SAC with a guiding policy. The guided SAC architecture consists of two actors and a critic, where the original actor incorporates the history of observations and actions, while the guiding actor uses the true state as input. Although the guided SAC has been applied to tasks with continuous observation and action spaces, it still requires storing the history of observations and actions and relies on an external supervisor to provide additional information about the true state of the environment, which remains a challenge.
\citeA{meng2021memory, ni2022recurrent, yang2021recurrent} extended actor-critic algorithms to POMDPs by adding recurrent layers to both the actor and critic components to compress history into a into a fixed-size representation. This compresses history into a manageable form, passing to the actor and critic. This adaptation allowed the models to effectively handle continuous action spaces.

{ Although memory-based approaches have demonstrated promise in tasks involving continuous state, action, and observation spaces, our proposed unified inference framework, which expands AIF into continuous spaces through the extension of the EFE to stochastic belief state-action policies,  offers three main advantages:
\vspace{.1in}\\
\textit{i) Computational and memory efficiency:} Firstly, although we also transform the continuous belief state $b_t$ into a fixed-length representation $h_t$ using an LSTM, the LSTM in our method updates the belief state representation based on fixed-size inputs $h_{t-1}$, $a_{t-1}$, $o_t$, and $s_t$ (see Appendix~\ref{sub-sec:condionined_perc} for more details). In contrast, the memory-based methods in \cite{meng2021memory, ni2022recurrent, yang2021recurrent} require passing the entire history to their LSTM, leading to substantial memory demands in large or infinite time horizon problems. Although our belief state inference-based method may not require as much memory since it does not need to store extensive historical data, it still demands significant computational resources. These demands arise primarily from the complexity involved in learning the generative model, belief state, and belief state representation.  However, belief state inference-based algorithms, including ours, often become more computationally efficient over time compared to memory-based methods. This efficiency stems from two main factors: a) Belief state inference-based approaches learn generative models and belief states through feed-forward neural networks, which are less computationally demanding than RNNs \cite{pascanu2013difficulty}. b) In memory-based methods, the dimension of inputs to LSTMs grows as history accumulates, eventually surpassing the fixed input dimension of our LSTM for belief state representation learning in our infinite horizon POMDP setting. This increase in the input dimension of RNNs leads to increasingly computationally expensive matrix multiplications at each step, while the computational complexity of our algorithms remains fixed across time steps. This illustrates one of the advantages of basing our proposed unified inference on AIF, as it allows us to leverage inference from AIF for decision-making through a belief state-action policy.
\\
Therefore, over time, the rapid increase in input dimensions in memory-based methods surpasses the memory and computational demand of learning generative models, belief states, and belief state representations in our method, especially in scenarios with large/infinite horizons. A detailed comparison of the memory and computational complexities between our algorithm and memory-based baselines is provided in Appendix~\ref{app: computation}.
\vspace{.1in}\\
\textit{ii) Enhanced exploration:} Beyond the computational and memory efficiencies, our approach incorporates an information gain exploratory term when making decisions, highlighting another necessity of AIF for managing the uncertainty inherent in partially observable environments. 
 In recent years, extensive exploration methods for RL in fully observable environments have emerged~\cite{pathak2017curiosity, choi2018contingency, savinov2018episodic}. However, due to the limited observability of states in partially observable tasks, designing intrinsic exploration methods is non-trivial and challenging.
For tasks with partial observability, one prominent line of exploration utilized in memory-based RL relates to prediction error-based approaches, such as the intrinsic curiosity model (ICM)~\cite{pathak2017curiosity} and random network distillation (RND)~\cite{burda2018exploration}. 
These methods, which can be enhanced by incorporating memory mechanisms such as RNNs to handle partial observability, learn a forward model to predict the next state and sometimes a backward model to infer past states.  The prediction loss from these models serves as intrinsic motivation for the agent to explore new and informative states. \citeA{oh2019learning} introduced a triplet ranking loss to push the prediction output of the forward dynamics model to be far from the output generated by taking alternative actions.
However, these prediction error-based approaches face challenges in accurately discerning the novelty of an agent's state in partially observable settings for two main reasons: 1) As states are not fully observable, applying these methods in partially observable environments requires constructing forward and backward models based solely on observations, which are noisy or incomplete representations of states. Relying solely on local observations is insufficient for accurately inferring novelty over the true world state in tasks with partial observability. This is because two observations might appear identical at various locations on the map, although their underlying true world states are fundamentally different. 2) Prediction errors might restrict the expressiveness of the inferred intrinsic reward scores. This challenge is especially prominent in environments characterized by continuous state spaces, where state changes occur subtly over short intervals. Consequently, agents may incorrectly perceive such states as familiar or well-understood, leading to minimal prediction errors and inadequate novelty detection. Moreover, in stochastic environments where outcomes following the same action can vary unpredictably, prediction errors may persistently remain high due to the intrinsic randomness of the environment. This situation often results in the agent receiving high intrinsic rewards for exploring parts of the environment it may already comprehend, albeit appearing different due to randomness. As a result, these methods often fall short in generating a robust novelty measure capable of accurately distinguishing novel states from those previously encountered by the agent~\cite{yin2021sequential}.
\\
Apart from such prediction error-based approaches,~\citeA{houthooft2016vime} proposed variational information maximizing exploration (VIME), which utilizes a Bayesian neural network to learn a state forward model and considers the information gain from the network parameters as an intrinsic reward function. Although this method does not suffer from the issue of subtle changes in states and high variability in stochastic environments (issue 2) found in other prediction error-based methods, it still struggles with issue 1: it relies on observations, which contain noisy or incomplete information about the true states of the environment.
\\
In contrast, our unified inference approach actively engages with the environment to sequentially infer the information gain, which then determines the intrinsic reward of a state. Our method not only encourages exploration but does so in a manner inherently aligned with the agent's imperative to reduce uncertainty and enhance its understanding of the true states of the environment. Even when states exhibit similarities, the identification of subtle differences that offer new insights about the environment contributes to increased information gain. This approach provides a more nuanced and sensitive measure compared to traditional prediction error-based methods.
Moreover, the incorporation of information gain encompasses the entire distribution of potential states, rather than solely predicting the most probable next state. This broader perspective renders our method more resilient to the variability introduced by stochasticity. The inclusion of information gain in our generalized actor-critic methods, as detailed in Sub-sections~\ref{sub-sec:POMDP_results} and \ref{sub-sec:exploration}, not only enhances robustness to noisy observations but also improves sample efficiency compared to other memory-based and exploration baselines.
\vspace{.1in}\\
\textit{iii) Flexibility in extrinsic reward design:} Another significant advantage of our approach over memory-based algorithms in POMDPs is its flexibility in designing extrinsic rewards. This flexibility stems from AIF's inherent capability to integrate the information gain term. In contrast, memory-based POMDP methods often rely heavily on well-defined extrinsic reward functions, which can be limiting in complex environments where specifying every desired outcome is impractical. Our method allows for a more arbitrary design of extrinsic rewards, reducing dependence on explicit reward structures. By reducing reliance on rigid reward structures, our method provides a robust framework for developing more adaptable decision-making agents capable of operating in a broader array of environments, especially where crafting specific rewards is challenging or rewards are inherently sparse.}

{ In conclusion, our proposed unified inference framework inherently embodies the three aforementioned advantages: computational and memory efficiency, improved exploration through state information gain, and the ability to function without extrinsic rewards. These benefits, derived from AIF, provide a comprehensive approach within the free energy principle, surpassing memory-based RL methods.}
\subsubsection{Belief state inference-based approaches}\label{sub-sec:belief_related}
\citeA{igl2018deep} introduced deep variational RL, which utilizes particle filtering to infer belief states and learn an optimal Markovian policy. This approach minimizes the ELBO to maximize the expected long-term extrinsic reward using the A2C algorithm. Building upon this work, \citeA{lee2020stochastic} and \citeA{han2020variational} proposed stochastic latent actor-critic (SLAC) and VRM, respectively, to extend SAC to POMDPs. SLAC and VRM also employ the ELBO objective to learn belief states and generative models. SLAC focuses on pixel-based robotic control tasks, where velocity information is inferred from third-person images of the robot. While SLAC utilizes the inferred belief state solely in the critic network, VRM incorporates the belief state in both the actor and critic networks. VRM does not utilize the generative model for action selection, and as shown in Sub-section~\ref{sec:MF_AC}, VRM can be derived from our proposed model-free actor-critic. It is important to note that these methods do not explicitly consider the information gain associated with the inferred belief state for action selection.

Some other works, including Dreamer~\cite{Hafner2020Dream} and PlaNet~\cite{hafner2019learning}, learn belief state and generative model along with the extrinsic reward function model. These methods adopt a model-based learning approach that utilizes image observations to maximize the expected long-term reward. However, unlike our inference model, Dreamer does not consider the belief state representation transition model and, therefore, does not incorporate this representation into its variational distribution, generative model, and reward model. As discussed in Sub-section~\ref{sec:MB_AC}, Dreamer can be viewed as a special case of our proposed model-based unified actor-critic. A recent evolution of Dreamer, called Dreamer-v2~\cite{hafnermastering}, has been proposed; however, it is applicable only to discrete state spaces. 
It is important to note that these model-based RL methods do not explicitly take into account the information gained regarding the inferred belief state for action selection. In contrast, our proposed model-based unified actor-critic approach encourages the agent to engage in information-seeking exploration, enabling it to leverage that information to reduce uncertainty about the true states of the environment. 
\\
Additionally, \citeA{ma2020contrastive} and \citeA{laskin2020curl} focused on learning the generative model and belief state in pixel-based environments with image-based observations, utilizing a form of consistency enforcement known as contrastive learning between states and their corresponding observations. However, contrastive learning poses distinct challenges that are beyond the scope of our work. Nevertheless, our idea of using a hybrid model-based and model-free approach in our proposed unified actor-critic is inspired by their work.

{ Recently, there have been several RL works~\cite{yin2021sequential, mazzaglia2022curiosity,klissarov2019variational} that have introduced information gain as an intrinsic reward function for exploration. These methods involve inferring the belief state and learning a generative model by minimizing the ELBO.  They then use actor-critic RL methods to learn an optimal policy that maximizes both the expected long-term extrinsic reward and information gain. In these approaches, the information gain term is added in an ad-hoc manner. In contrast, our method provides a theoretical justification by extending the EFE within AIF to infinite horizon POMDPs with continuous state, action, and observation spaces to incorporate stochastic belief state-action policies.
\\
Therefore, our unified inference method frames belief state inference, generative model learning, and the integration of information gain as an intrinsic reward function, aspects that have previously been considered (partially) heuristic in those studies, under a comprehensive interpretation of free energy principle~\cite{friston2017active}. }
\subsection{AIF approaches for continuous space POMDP}
As mentioned earlier, AIF approaches are mostly limited to discrete spaces or short, finite-horizon POMDPs. This limitation arises from the computational expense of evaluating each plan based on its EFE and selecting the next action from the plan with the lowest EFE.
However, recent efforts have been made to extend AIF to POMDPs with continuous observation and/or action spaces. These efforts include approaches that focus on deriving the optimal distribution over actions (i.e., state-action policy) instead of plans or utilizing MCTS for plan selection. These advancements aim to address the computational challenges associated with scaling AIF to continuous spaces.
\subsubsection{State-action policy learning} 
\citeA{ueltzhoffer2018deep} introduced a method where action $a\t$ is sampled from a state-action policy $\pi(a\t|\s\t)$ instead of placing probabilities over a number of plans $\tilde{a}$. They minimize the EFE by approximating its gradient with respect to $\pi(a\t|\s\t)$. However, this approach requires knowledge of the partial derivatives of the observation given the action, which involves propagating through the unknown transition model $P(\s_{\tau+1}|\s\ta, a\ta)$. To address this challenge, \citeA{ueltzhoffer2018deep} used a black box evolutionary genetic optimizer, which is considerably sample-inefficient.
\\
Later, \citeA{millidge2020deep} proposed a similar scheme that includes the transition model $P(\s_{\tau+1}|\s\ta, a\ta)$. They heuristically set the optimal state-action policy as a Softmax function of the EFE. They introduced a recursive scheme to approximate the EFE through a bootstrapping method inspired by deep Q-learning~\cite{mnih2013playing,mnih2016asynchronous}, which is limited to discrete action spaces. However, our approach is specifically designed for continuous action spaces.
In addition, it is important to note that their approach assumes that the current hidden state is fully known to the agent after inference and does not consider the agent's belief state for action selection in the EFE and policy. In contrast, our approach focuses on learning the belief state representation, which is utilized for action selection in the belief state-action EFE and belief state-action policy. We also provide an analytical demonstration of the recursion in our proposed belief state-action EFE and establish its convergence in Proposition~\ref{pro:unified_Bellman_action} and Theorem~\ref{lem:policy_evaluation}. Furthermore, we prove the expression $\pi^*(a\t|b\t)=\sigma \left( \frac{- \mathcal{G}^{*}(b\t,a\t)}{\beta} \right)$ in Theorem~\ref{the:instant_optimal}. 
\\
In a related context, \citeA{friston2021sophisticated} {explored} a recursive form of the EFE in a problem with discrete state and action spaces, considering it as a more sophisticated form of AIF. This approach involves searching over sequences of belief states and considering the counterfactual consequences of actions rather than just states and actions themselves. Building upon the work of~\citeA{millidge2020deep}, \citeA{mazzaglia2021contrastive} assumed that $D_{KL}[q(.|o\t), P(.|\s\pt,a\pt)]=0$ and focused on contrastive learning for the generative model and belief state in environments with image-based observations.

In a subsequent study, \citeA{da2020relationship} investigated the relationship between AIF and RL methods in finite-horizon fully observable problems modeled as MDPs.  They demonstrated that the optimal plan $\tilde{a}^*$ that minimizes the EFE also maximizes the expected long-term reward in the RL setting, i.e., $\arg \min_{\tilde{a}} {G}^{(\tilde{a})}_{\text{AIF}}(o\t) \subset \arg \max_{\pi} V^{(\pi)}(\s\t)$. \citeA{shin2021prior} extended AIF to continuous observation spaces by showing that the minimum of the EFE follows a recursive form similar to the Bellman optimality equation in RL. Based on this similarity, they derived a deterministic optimal policy akin to deep Q-learning. However, their method is limited to discrete action spaces and only provides a deterministic policy. In contrast, our approach is designed for continuous action spaces and learns a stochastic policy, which enhances robustness to environmental changes.
\subsubsection{MCTS plan  selection}
\citeA{tschantz2020scaling} extended AIF to continuous observation and action space problems by limiting the decision-making time horizon. They parametrized a probability distribution over all possible plans and sampled multiple plans. Each sample was weighted proportionately to its EFE value, and the mean of the sampling distribution was then returned as the optimal plan $\tilde{a}^*$. However, this solution cannot capture the precise shape of the plans in AIF, as $q^*(\tilde{a})=\sigma \left( -{G}^{(\tilde{a})}_{\text{AIF}}(o\t) \right)$, and is primarily suitable for short time horizon problems.
\\
\citeA{fountas2020deep} and \citeA{maisto2021active} proposed an amortized version of MCTS for plan selection. They consider the probability of choosing action $a\t$ as the sum of the probabilities of all the plans that begin with action $a\t$. However, these methods are limited to discrete action spaces and are not applicable to continuous action spaces.
\section{Experimental results}  \label{sec:results}
This section describes the experimental design used in our study to evaluate the effectiveness of our unified inference approach for extending RL and AIF methods to POMDP settings with continuous state, action, and observation spaces. Our principal aim is to compare the performance of our approach with state-of-the-art techniques proposed in the literature. We assess our approach across various tasks characterized by partially observable continuous spaces, which are modeled as continuous space POMDPs.
\\
{ Furthermore, by focusing on the exploration behaviors, we compare the information gain intrinsic reward in our unified inference approach with other exploration methods in the RL literature.}
\\
Furthermore, we delve into the individual contributions of different components within our framework, the extrinsic reward, intrinsic reward, and entropy reward term, to the overall performance. To achieve this, we carry out a series of ablation studies on these tasks, analyzing the effects of removing or modifying these components.
\subsection{Comparative evaluation on partially observable continuous space tasks} \label{sub-sec:POMDP_results}
In this sub-section, we examine the effectiveness of our unified inference approach in overcoming the limitations of existing RL and AIF algorithms when applied to partially observable problems with continuous state, action, and observation spaces. To evaluate its performance, we conducted experiments under two specific conditions of partial observability:
\textit{(i)} Partial state information (partial observations): In this setting, the agent does not have access to complete information about the environment states. This condition emulates scenarios where the agent has limited visibility into the true state of the environment.
\textit{(ii)} Noisy state information (noisy observations):  In this condition, all observations received from the environment are noisy or contain inaccuracies. This situation replicates real-world scenarios where observations are affected by noise or errors, making it challenging to accurately estimate the underlying state of the environment.
\vspace{.1in} \\
\textbf{Environments and tasks:} The experimental evaluations encompassed four continuous space  Roboschool tasks (HalfCheetah, Walker2d, Hopper, and Ant) from the PyBullet~\cite{coumans2016pybullet}, which is the replacement of the deprecated OpenAI Roboschool~\cite{brockman2016openai}. These tasks have high-dimensional state spaces characterized by quantities such as positions, angles, velocities (angular and linear), and forces. Each episode in these tasks terminates on failure (e.g. when the hopper or walker falls over). The choice of these environments is driven by two key factors: (i) they offer challenging tasks with high-dimensional state spaces and sparse reward functions, and (ii) recent efforts have been made to enhance the sample efficiency of model-free and model-based RL methods in the partially observable variants of these benchmarks, providing suitable baselines for comparison.
\\
To create partially observable versions of these tasks, we made modifications to the task environments. The first modification restricts the agent's observations to velocities only, transforming the tasks into partial observation tasks. This modification proposed by~\citeA{han2020variational} is relevant in real-world scenarios where agents may estimate their speed but not have direct access to their position. For the noisy observation versions of the tasks, we added zero-mean Gaussian noise with a standard deviation of $\sigma=0.05$ to the original states returned from the environment. This modification allows us to simulate realistic sensor noise in the environment.  Further details of the modifications made to create partially and noisy observable environments are provided in Appendix~\ref{sec:APP_POMDP_modification}. We denote the partial observation modification and noisy observation modification of the four tasks as \{Hopper, Ant, Walker-2d, Cheetah\}-\{P\} and \{Hopper, Ant, Walker-2d, Cheetah\}-\{N\}, respectively. 
\vspace{.1in}  \\
\textbf{Baselines:} To evaluate the potential of the proposed unified inference framework in extending MDP-based RL methods to POMDPs, we compare the performance of our proposed model-free G-SAC and model-based G-Dreamer algorithms on the \{Hopper, Ant, Walker, Cheetah\}-\{P,N\} tasks with the following state-of-the-art model-based and model-free algorithms from the literature:
\begin{itemize}
\item \textbf{SAC:} SAC~\cite{haarnoja2018soft} is a model-free actor-critic RL algorithm designed for MDPs. We include experiments showing the performance of SAC based on true states (referred to as State-SAC) as an upper bound on performance. The State-SAC serves as an oracle approximation representing an upper bound on the performance that any POMDP method should strive to achieve.
\item \textbf{VRM:} VRM~\cite{han2020variational} is a POMDP-based actor-critic RL algorithm that learns a generative model, infers the belief state, and constructs the state value function in a model-free manner. By comparing our approach with VRM, we can evaluate the impact of the information gain exploration term on model-free learning in POMDP settings.
\item \textbf{Dreamer:} Dreamer~\cite{Hafner2020Dream} is a model-based actor-critic RL method designed for image observations.  It learns a generative model, infers the belief state, and learns the state value function through imagined trajectories using the generative model. To compare our approach with Dreamer, we made some modifications to its implementation (see Appendix~\ref{sec:APP_Implementation Details}). Despite these modifications, we expect the comparison to demonstrate the effects of the information gain exploration term on model-based learning in POMDP settings.
\item \textbf{Recurrent Model-Free:} Recurrent Model-Free~\cite{ni2022recurrent}  is a model-free memory-based RL algorithm for POMDPs. It explores different architectures, hyperparameters, and input configurations for the recurrent actor and recurrent value function across Roboschool environments and selects the best-performing configuration. We consider Recurrent Model-Free as a baseline to compare the performance of memory-based approaches with belief state inference-based methods in partially observable environments.
\end{itemize}
\textbf{Evaluation metrics:}  The performance of RL and AIF algorithms can be evaluated using multiple metrics. In this sub-section, we assess the performance based on the commonly used metric of cumulative extrinsic reward (return) after $1$ million steps. This metric quantifies the sum of all extrinsic rewards obtained by the agent up to that point, providing an indication of the agent's overall success in accomplishing its task within the given time frame. Additionally, we evaluate the sample efficiency of each baseline algorithm by measuring the number of steps required for them to reach the best performance achieved at $1$ million steps.
It should be noted that the number of environment steps in each episode is variable, depending on the termination.
\vspace{.1in} \\ 
\textbf{Experimental setup:}  We implemented SAC, VRM, and Recurrent Model-Free using their original implementations on the {Hopper, Ant, Walker, Cheetah}-{P,N} tasks. For state-SAC, we utilized the results from~\cite{raffin2022smooth}. Regarding Dreamer, we mostly followed the implementation described in the original paper~\cite{Hafner2020Dream}. However, there was one modification: since their work employed pixel observations, we replaced the convolutional neural networks (CNNs) and transposed CNNs with two-layer multi-layer perceptrons (MLPs) consisting of $256$ units each for the variational posterior and likelihood model learning. Feed-forward neural networks were used to represent the actors (policies) and critics (state-action value functions or the negative of belief state-action EFE). To ensure a fair comparison, we maintained identical hyperparameters for the actor and critic models of Dreamer, G-SAC, G-Dreamer, and VRM.
\\
The same set of hyperparameters was used for both the noisy observation and partial observation versions of each task. We trained each algorithm for a total of $1$ million environment steps on each task and conducted each experiment with $5$ different random seeds. For additional information regarding the model architectures, please refer to Appendix~\ref{sec:APP_Implementation Details}. 
\vspace{.1in} \\ 
\textbf{Results:} Table~\ref{Table:summary_POMDP} provides a summary of the results, displaying the mean return averaged over the last $20\%$ of the total $1$ million environment steps across the five random seeds. The complete learning curves can be found in Appendix~\ref{APP:visu}. We will now analyze the quantitative results in the following:
{ \begin{table*}[!t]
\caption{The mean of the return on  Roboschool tasks (partial and noisy observations)  averaged at the last $20\%$ of the total $1$ million environment
steps across $5$ seeds. The model-free baseline State-SAC is used as a reference for the performance. }\label{Table:summary_POMDP}
\centering
\begin{tabular}{|P{2.6cm}|P{1cm}|P{.8cm}|P{1.35cm}|P{.9cm}|P{1.cm}|P{1.4cm}|P{2cm}|}
 \hline
{ Task} & {State-SAC} & {SAC} &   {Dreamer} & {VRM} &  {G-SAC} &  {G-Dreamer} &  {Recurrent Model-Free}
\\ \hline
%
{ HalfCheetah-P} & { $2994 $} & $193 $ & $1926 $ & { $ 2938$} & $\mathbf{3859} $ &  $3413 $ & $1427 $
 \\ \hline
 {HalfCheetah-N} & { $2994$} & $-512$ & $1294 $ &  { $2170$} & $\mathbf{3655} $ &  $3226$ & $408 $
\\ \hline
{{Hopper-P}} & {  $2434 $ } & $724 $ &  $ 2043$ & $1781$  & $2468 $ & $\mathbf{2766}$ &  $1265 $ 
\\ \hline
{{Hopper-N}} & {  $2434 $ } & $466 $ &  { $1674 $} &  $1343 $  & $2308$ & $\mathbf{2448} $ &   $771 $
\\ \hline
{{Ant-P}} & {  $3394 $ } & $411$ &  $847$ &  { $1503 $} & $\mathbf{2743} $ & $2325 $ & $994 $
\\ \hline
{{Ant-N}} & {  $3394 $ } & $328 $ &  $762 $ &  { $1256 $} & $\mathbf{2545} $ & $1996$ & $367 $
\\ \hline
{Walker2d-P} & {  $2225 $ } & $305 $ &  $821 $ &  { $761 $} & $1837 $ & $\mathbf{2173}$ & $210$
\\ \hline
{Walker2d-N} & {  $2225 $ } & $123 $ &  $578 $ &  { $424$} & $1698 $ & $\mathbf{1956}$ & $116$
\\ \hline
\end{tabular}
\end{table*} }
\vspace{.1in}  \\
\textit{(i) Unified inference model successfully generalizes model-free and model-based RL to POMDPs}.
As expected, SAC encountered difficulties in solving the tasks with partial and noisy observations due to its MDP-based implementation. In contrast, our proposed G-SAC algorithm demonstrated superior performance compared to SAC. Furthermore, while Dreamer and VRM are regarded as state-of-the-art methods for POMDP tasks, G-Dreamer and G-SAC consistently outperformed them in all scenarios. This emphasizes the advantages of leveraging the belief state representation in both the actor and critic, along with the information gain term in our G-Dreamer and G-SAC algorithms.   The number of steps required for Dreamer and VRM to match the performance of G-Dreamer and G-SAC at the end of $1$ million steps was significantly higher, indicating that the information gain intrinsic term in G-SAC and G-Dreamer enhances sample efficiency. It should be noted that while G-SAC and G-Dreamer outperform Dreamer and VRM algorithms in terms of final performance and sample efficiency in both the partial observation and noisy observation cases, they achieve these improvements with comparable computational requirements to the compared algorithms. \\
{Remarkably, both our proposed methods, G-SAC and G-Dreamer, performed on par with, or even surpassed, the oracle State-SAC framework, which has access to the true state of the environment. This further confirms: 1) Effectiveness of the perceptual inference and learning method for inferring the belief state-conditioned variational posterior $q({\s_{t}}|o_{t}, b_{t-1})$ and the belief state generative model $P(o\t,\s\t, b\t|a_{t-1}, b_{t-1})$, as well as the belief state representation model for learning a belief representation $h\t$, used in our approaches. 2) Effectiveness of the information-seeking exploration facilitated by the intrinsic reward term in our unified objective function $G^{(\pi)}(b_{t})$.}
\vspace{.05in} \\
\textit{(ii) Belief state inference-based approaches are more robust than memory-based baseline given noisy observations.} Belief state inference-based approaches, such as VRM, Dreamer, and G-SAC, consistently outperform the memory-based baseline, Recurrent Model-Free, in both partial observations and noisy observations settings. This superiority can be attributed to the ability of VRM, Dreamer, and G-SAC to encode the observations and actions history into the variational posterior, enabling more effective encoding of underlying states compared to Recurrent Model-Free when dealing with velocity observations or noisy observations. The performance gap between these approaches becomes more pronounced in the case of noisy observations, highlighting the advantage of inferring the belief state for use in the actor and critic, especially in the presence of observation noise. 
{This advantage is linked to the complexity inherent in noisy environments, where various observations may correspond to a single underlying state due to noise interference. Belief state methods address this uncertainty by maintaining a probability distribution over possible states, allowing them to consolidate information from multiple noisy inputs and update their beliefs accordingly. Thus, even when faced with differing noisy observations that might relate to the same state, belief state methods can assimilate this uncertainty and assign probabilities to each possible state based on the collected data.
Conversely, memory-based methods often falter under such uncertainty. These methods typically depend on storing and processing an extensive history of observations and actions, which can become cumbersome in noisy settings where observations might be ambiguous or misleading. Lacking a structured way to handle and update uncertainty, memory-based approaches may struggle to differentiate between observations leading to the same underlying state, potentially resulting in inferior decision-making.
\\
It is noteworthy that while memory-based methods like Recurrent Model-Free typically result in higher memory consumption due to storing extensive past observations and actions, belief state inference methods such as VRM, G-SAC, and G-Dreamer often require less memory. However, they may necessitate additional computational resources for inference and learning the generative model.
Nonetheless, as elaborated in Appendix~\ref{app: computation}, the computational burden induced by ongoing belief state inference and generative model learning in VRM and our proposed G-SAC and G-Dreamer frameworks is generally lower than that of processing a sequence of past actions and observations through an RNN, as seen in Recurrent Model-Free. This contrast is especially noticeable in large or infinite time horizon problems with high-dimensional action and observation spaces.
 }
\vspace{.05in} \\
\textit{(iii) Maximizing expected long-term information gain improves robustness to the noisy observations}.
While the performance of VRM and Dreamer degraded from the partial observation setting to the noisy observation setting, G-SAC and G-Dreamer were able to maintain comparable performance in the presence of observation noise. This robustness to observation noise can be attributed to the KL-divergence term in the intrinsic information-seeking term, as highlighted by~\citeA{Hafner2022} in the context of divergence minimization frameworks. 

The results presented in this sub-section highlight the effectiveness of the proposed unified inference algorithm in various aspects:
\textit{(i)} It successfully generalizes MDP actor-critic methods to the POMDP setting, allowing for more effective exploration and learning under partial observability.
\textit{(ii)} It outperforms memory-based approaches in scenarios with noisy observations, indicating the advantage of leveraging the belief state representation in handling observation noise.
\textit{(iii)} The inclusion of the information gain intrinsic term into the generalized actor-critic methods improves their robustness to noisy observations.
\subsection{{Comparative evaluation of exploration methods}} \label{sub-sec:exploration}
{
In this sub-section, we evaluate the performance of our unified inference approach, employing information gain as an intrinsic reward for exploration, in contrast to RL methods incorporating alternative exploratory intrinsic rewards. We particularly investigate its efficacy across deterministic and stochastic partially observable environments. To facilitate this analysis, we utilize extrinsic reward-free agents dedicated solely to exploration. This emphasis enables us to comprehensively scrutinize and highlight the exploratory behaviors exhibited by the agents.
\vspace{.1in} \\
\textbf{Environments and tasks:}  
We conduct a series of experiments on a partially observable variant of the MountainCarContinuous-v0 environment, where only the velocity is observable. The problem's state space is continuous and includes the car's position and velocity along the horizontal axis. The action space is one-dimensional, allowing control over the force applied to the car for movement, and the transition function is deterministic.
We chose the MountainCarContinuous-v0 task as it is relatively easy to solve when extrinsic rewards are available. Thus, by considering the reward-free case, we can emphasize the exploration challenge and evaluate the effectiveness of different exploration methods.
As stated in Sub-section~\ref{sub-sec:memeory_related}, exploration based on the prediction error of a state forward model is sensitive to the inherent stochasticity of the environment~\cite{burda2018exploration}. Therefore, we also performed an experiment on the same task but with a stochastic transition function. We used the stochastic version of the environment introduced in~\cite{mazzaglia2022curiosity}, which adds a one-dimensional state referred to as the NoisyState, and a one-dimensional action ranging from $[-1,1]$ that acts as a remote for the NoisyState. When this action's value is higher than $0$, the remote is triggered, updating the NoisyState value by sampling uniformly from the $[-1,1]$ interval. 
\vspace{.1in}\\
\textbf{Baselines:}  We compare G-SAC against the following RL frameworks that incorporate exploratory terms as intrinsic rewards in SAC:
}
\begin{itemize}
\item { \textbf{ICM}: ICM~\cite{pathak2017curiosity} is a prediction error-based RL algorithm that generates intrinsic rewards through a state forward-backward dynamics model. The main source of intrinsic reward is the prediction error from the forward model. The backward transition model supports state feature learning by reconstructing the previous state's features.}
\item { \textbf{RND}:  RND~\cite{burda2018exploration} is a prediction error-based RL method where state features are learned using a fixed, randomly initialized neural network. Intrinsic rewards are calculated based on the prediction errors between the next state features and the outputs of a distillation network. This distillation network is continuously trained to emulate the outputs of the randomly initialized feature network.}
\item { \textbf{VIME} VIME~\cite{houthooft2016vime} employs a Bayesian neural network to learn the forward model. It utilizes intrinsic rewards based on the information gain about the parameters of the Bayesian network. These rewards are quantified by the change in information before and after updating the network with data from new interactions.}
\end{itemize}
{ \textbf{Performance metrics:}  We measure exploration ability directly by calculating an agent’s environment coverage.  Following the approach in~\cite{mazzaglia2022curiosity}, we discretize the state space into $100$ bins and evaluate the coverage percentage of the number of bins explored. An agent visiting a certain bin corresponds to the agent successfully accomplishing a task that requires reaching that particular area of the state space. Hence, it is crucial that a good exploration method can explore as many bins as possible.
\vspace{.1in} \\
\textbf{Experimental Setup:} For G-SAC, we use the same model architectures and hyperparameters described in Sub-section~\ref{sub-sec:POMDP_results} of the main manuscript. Since ICM, RND, and VIME were originally developed for fully observable environments, we adapted them for our partially observable setting of MountainCarContinuous-v0. Specifically, the forward and backward models are reconstructed based on observations rather than fully observable states. Consequently, the state-action policies are changed to history-dependent policies, as the observations no longer have the Markovian property. This adaptation allows both our method and these three baselines to incorporate exploratory intrinsic rewards, enabling a fair comparison.
\\
We then train G-SAC and the baselines on the deterministic and stochastic partially observable MountainCarContinuous-v0 for a total of $100$ episodes. An episode terminates when the agent reaches the goal or the episode length exceeds $1000$ steps.
\vspace{.1in} \\
\begin{figure}[!b]
    \centering 
\begin{subfigure}{0.49\textwidth}
\centering
  \includegraphics[width=1.5\linewidth, height=4.9cm, keepaspectratio=true ]{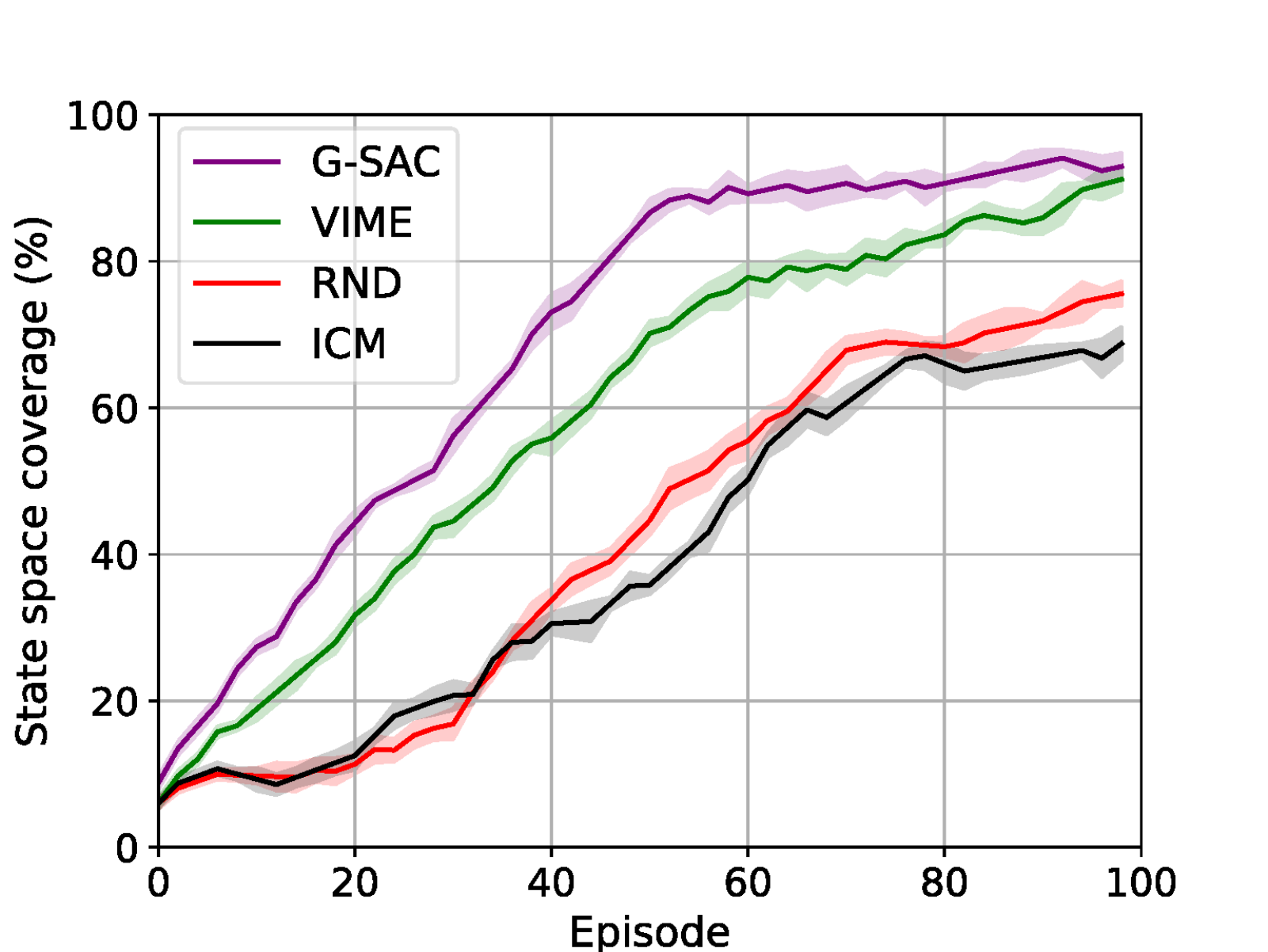}
  \caption{Partially observable deterministic mountain car}
\end{subfigure} 
\hspace{.03in} 
\begin{subfigure}{0.49\textwidth}
\centering
  \includegraphics[width=1.5\linewidth, height=4.9cm, keepaspectratio=true]{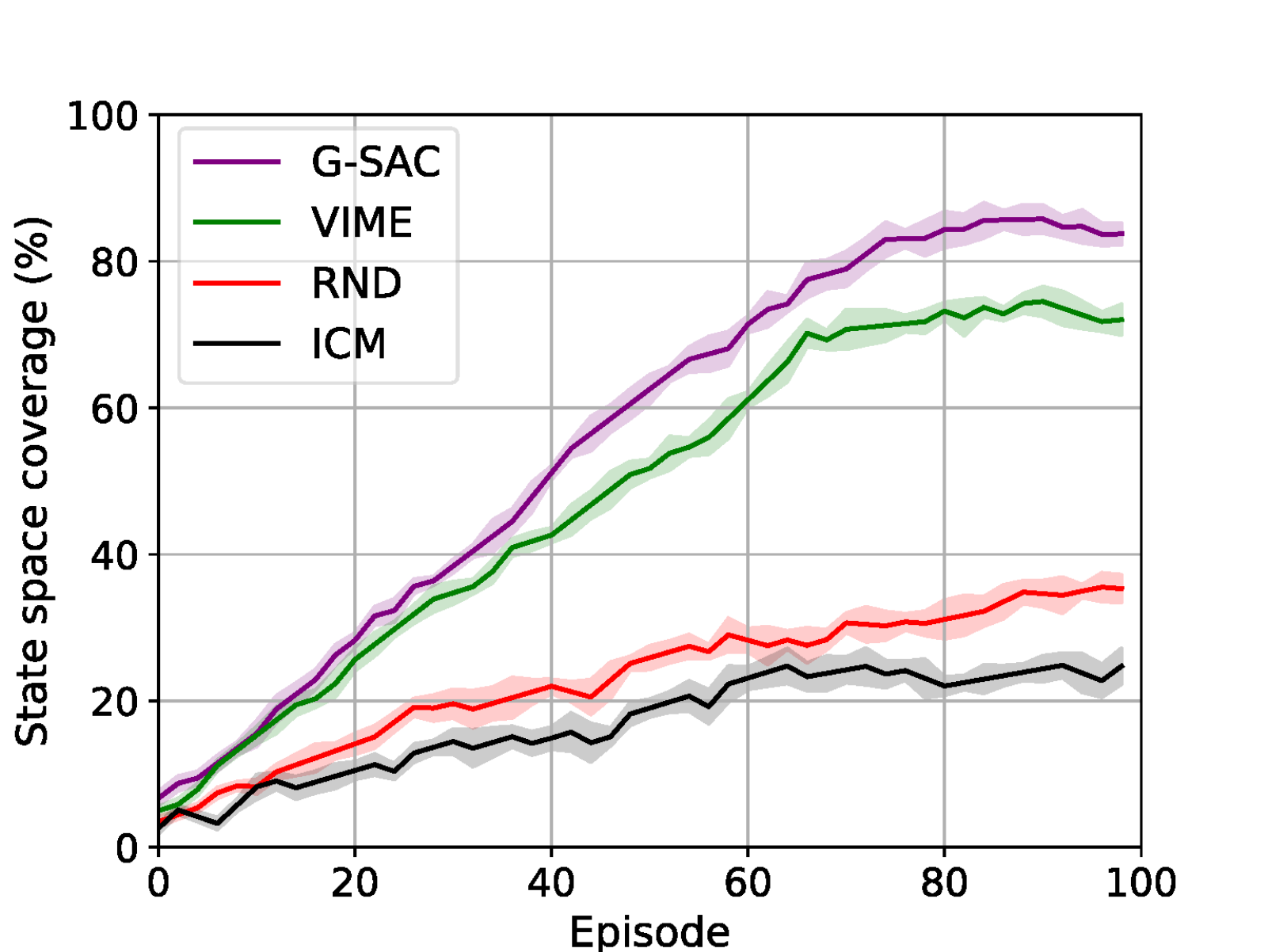}
  \caption{Partially observable stochastic mountain car}
\end{subfigure}
\caption{{ The average state-space coverage in terms of percentage of bins visited by the agents for deterministic and stochastic partially observable MountainCarContinuous-v0. The more state space coverage in an episode, the better the agent explores the environment and thus performs in that episode.}}
\label{fig:exploration_results}
\end{figure}
\textbf{Results:} Fig.~\ref{fig:exploration_results} presents training curves averaged over 10 different random seeds. The results show that methods utilizing intrinsic rewards based on information gain, specifically G-SAC and VIME, learn significantly faster in deterministic environments. This accelerated learning suggests that their exploration mechanisms are more effective than those of ICM and RND, which rely on prediction error-based exploration. The superiority of information gain-based methods largely stems from their effectiveness in handling state similarities within continuous state spaces. Unlike prediction error-based methods that struggle to differentiate between subtly different states, information gain methods assess the entire distribution of possible states, enabling more precise and meaningful exploration even in the presence of similar states. Notably, G-SAC performs slightly better than VIME because it directly assesses uncertainty reduction based on inferred states, while VIME considers uncertainty reduction based on observations, which include the car's velocity but not its position. Therefore, VIME cannot effectively capture uncertainty reduction in the car's position for exploration.}
\\
{Moreover, although VIME explores less than G-SAC, both methods demonstrate resilience to stochasticity. In contrast, the performance of ICM and RND is significantly compromised by randomness, with ICM being the most adversely affected. It is well-documented that intrinsic motivation strategies based on the prediction error of a forward model are vulnerable to the inherent stochasticity of the environment~\cite{burda2018exploration}. This degradation is largely due to the inherent stochasticity of environments where outcomes vary unpredictably following the same action, causing persistently high prediction errors and leading to misguided exploration efforts. This highlights the vulnerability of prediction error-based intrinsic motivation strategies in stochastic settings.}

{In conclusion, information gain-based methods like G-SAC and VIME, which respectively assess the entire distribution of possible latent states and parameters of neural networks generating next states rather than just the most likely ones, demonstrate robustness against subtle changes in an agent’s state and the variability introduced by stochastic conditions. This comprehensive consideration of potential states and parameters significantly improves their effectiveness, especially in stochastic environments with continuous state and action spaces.}
\subsection{Ablation studies}
In this sub-section, we conduct a comprehensive ablation study on the partial observation variants of the four Roboschool tasks discussed in Sub-section~\ref{sub-sec:POMDP_results}, namely \{Hopper, Ant, Walker-2d, Cheetah\}-\{P\}. The primary objective of this study is to gain a deeper understanding of the contribution of each individual component in the proposed unified actor-critic framework. We evaluate the performance based on the average return across  $5$ different random seeds.
\subsubsection{Stochastic policy versus deterministic policy} \label{subsec:stochastic_ablation}
The proposed G-SAC framework learns an optimal stochastic policy by minimizing the loss functions corresponding to the belief state-action EFE update step (Eq.~\eqref{Eq:unified_SAC}) and policy update step (Eq.~\eqref{Eq:unified_SAC_policy}), with the policy entropy term included in both of these loss functions. In the belief state-action EFE update step, the entropy term encourages exploration by reducing the belief state-action EFE in regions of the state space that lead to high-entropy behavior. In the policy update step, the entropy term helps prevent premature policy convergence.
\\
To assess the impact of policy stochasticity (policy entropy term) on G-SAC's performance, we compare it to an algorithm called G-DDPG, obtained by setting $\beta=0$ in G-SAC. G-DDPG is a generalization of the MDP-based Deep Deterministic Policy Gradient (DDPG) algorithm~\cite{silver2014deterministic} to POMDPs, where a deterministic policy is learned by removing the policy entropy term from the belief state-action EFE.
\vspace{.1in} \\ 
\textbf{Experimental setup:} For G-DDPG, we utilize identical hyperparameters and network architectures as those employed in G-SAC. We train G-DDPG on {Hopper, Ant, Walker-2d, Cheetah}-{P} for a total of $1$ million time steps using $5$ different random seeds.
\begin{figure*}[!t]
\centering
\begin{subfigure}{0.42\textwidth}
\!\!\!
\includegraphics[width=1.3\linewidth, keepaspectratio=true]{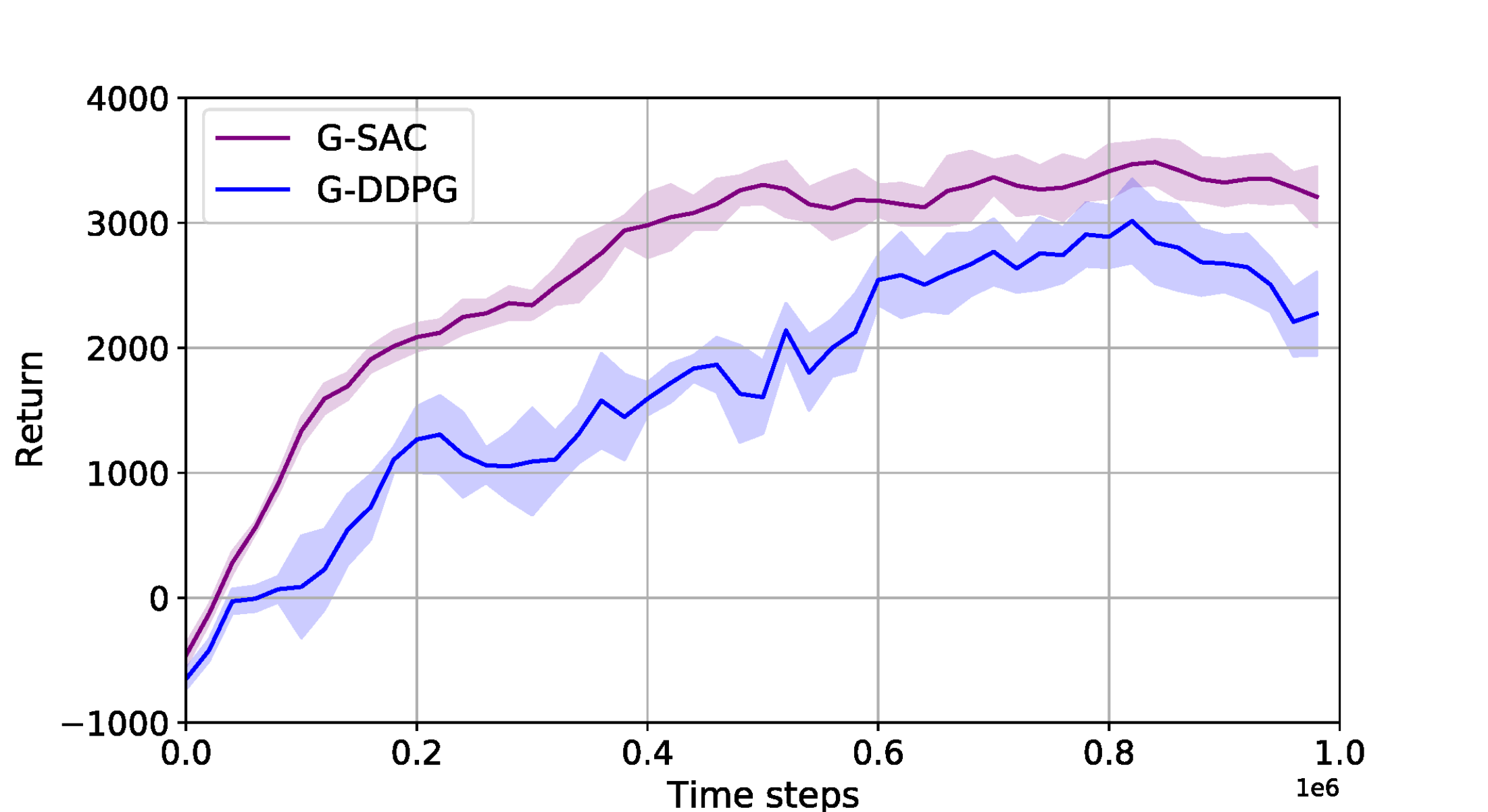}
\caption{HalfCheetah-P} 
\end{subfigure} 
 \quad \quad \quad \quad
\begin{subfigure}{0.455\textwidth}
  \includegraphics[width=1.11\linewidth,keepaspectratio=true]{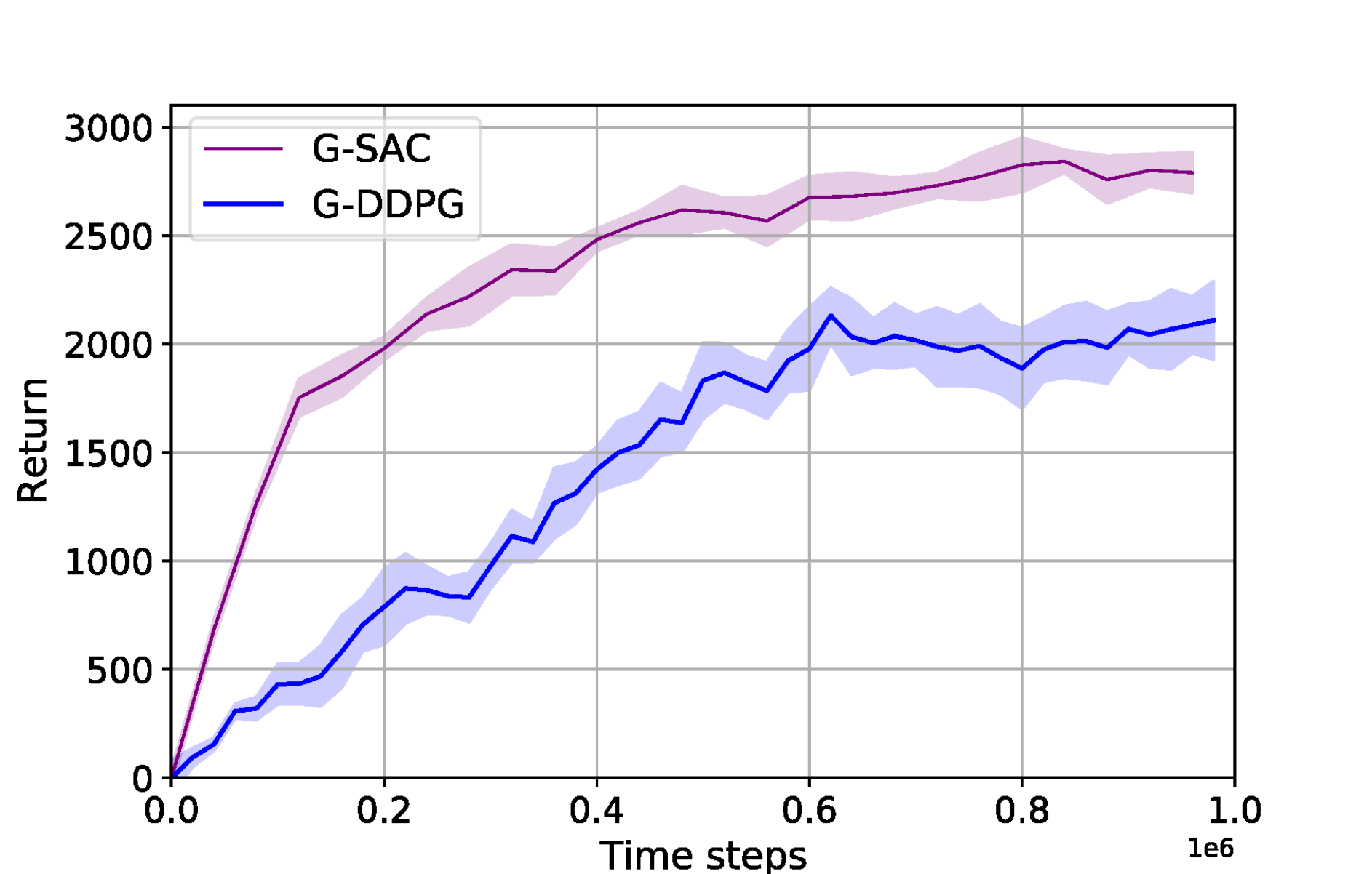}
  \caption{Hopper-P}
\end{subfigure}
\\
\begin{subfigure}{0.444\textwidth}
  \includegraphics[width=1.41\linewidth, height=5.2cm, keepaspectratio=true]{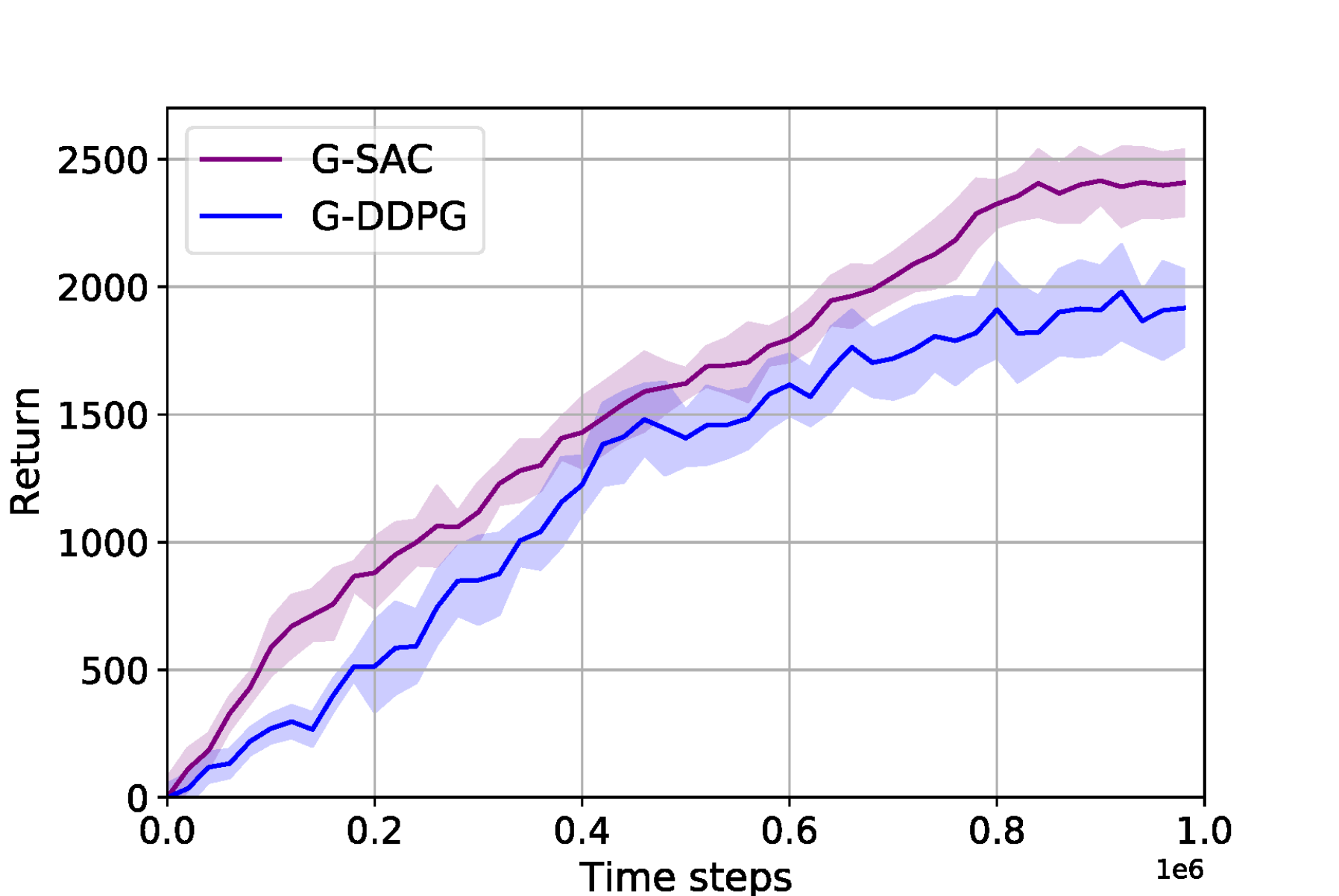}
  \caption {Ant-P}
\end{subfigure}
 \quad \quad \quad 
\begin{subfigure}{0.455\textwidth}
  \includegraphics[width=1.38\linewidth, height=5.1cm, keepaspectratio=true]{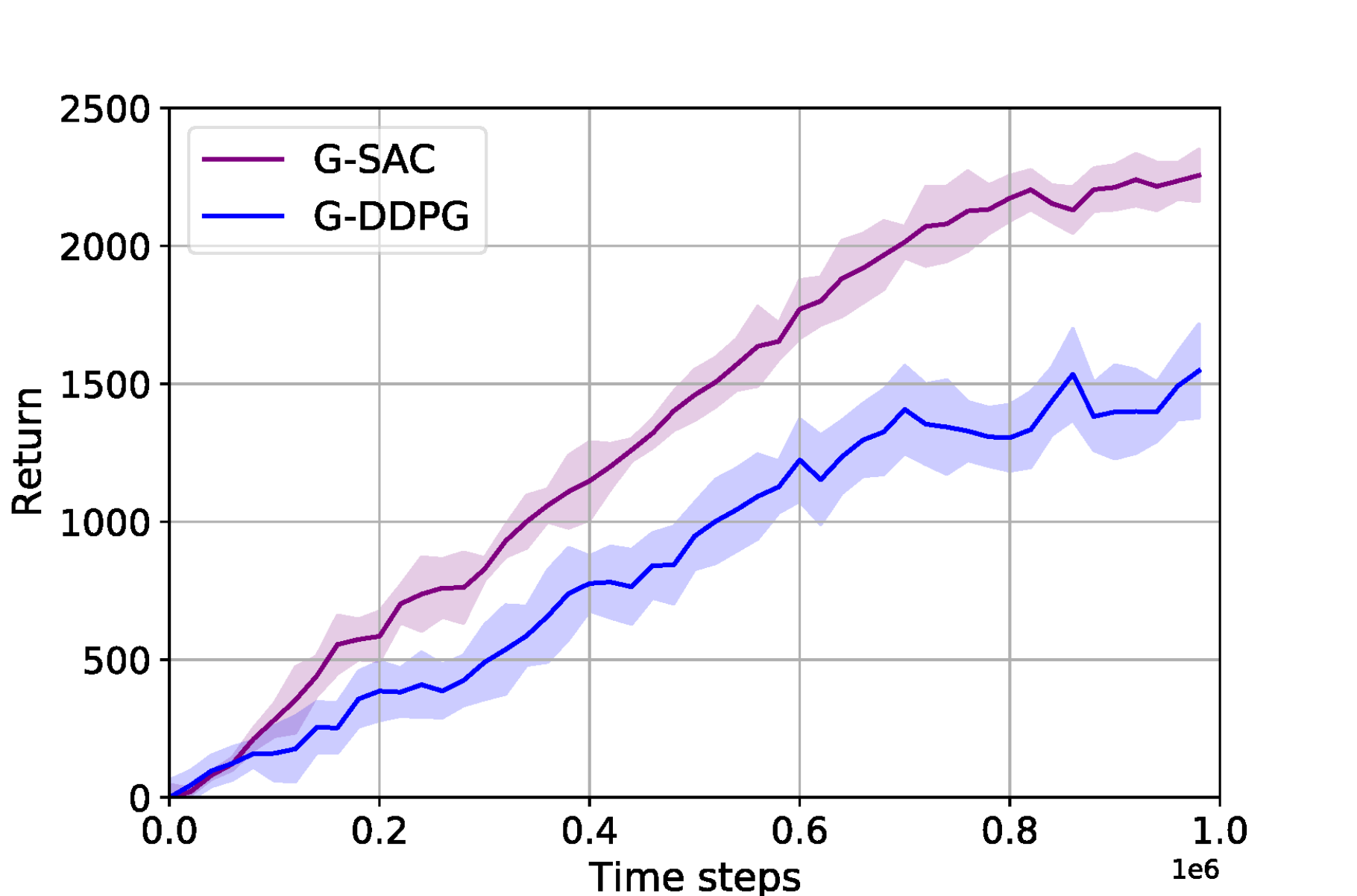}
  \caption {Walker2d-P}
\end{subfigure}
\caption{Ablation study comparing the average return of G-SAC and G-DDPG algorithms across the partial observation version of Roboschool tasks.}
\label{fig:determinstic_rewards}
\end{figure*}
\vspace{.1in} \\ 
\textbf{Results:} Fig.~\ref{fig:determinstic_rewards} presents the performance comparison between G-SAC and G-DDPG. The results indicate that G-DDPG suffers from premature convergence due to the absence of the entropy term. Furthermore, G-DDPG exhibits a higher standard deviation, resulting in reduced stability compared to G-SAC. This finding suggests that learning a stochastic policy with policy entropy maximization in POMDPs with uncertain environmental states can significantly enhance training stability, particularly for more challenging tasks where hyperparameter tuning can be difficult.
\subsubsection{Model-based actor-critic vs hybrid actor-critic} \label{subsec:MB_Hybrid_ablation}
When $\alpha=1$ in the hybrid unified actor-critic method, the critic loss function $L_{\bar{\pi}}^{\text{Hybrid}}({\psi})$ combines the critic loss functions of G-SAC ($L_{\mathcal{G}}^{\text{G-SAC}}({\psi})$) and G-Dreamer ($L_{\mathcal{G}}^{\text{G-Dreamer}}({\psi})$) using a scaling factor $c$. We adopt a hybrid G-Dreamer-SAC approach by setting $c$ to $1$ in $L_{\bar{\pi}}^{\text{Hybrid}}({\psi})$. To assess the influence of the G-SAC component on the performance of the hybrid G-Dreamer-SAC, we compare its performance with that of G-Dreamer, which can be considered a special case of the hybrid unified actor-critic when $c=0$.
\vspace{.1in} \\ 
\textbf{Results:} Fig.~\ref{fig:hybrid_MB} presents a performance comparison between G-Dreamer and G-Dreamer-SAC. G-Dreamer-SAC outperforms G-Dreamer on HalfCheetah-P and Ant-P, while performing on par with G-Dreamer on Hopper-P and Walker2d-P. This discrepancy can be attributed to the higher number of hidden values in the state vector that need to be inferred solely from velocities in HalfCheetah-P and Ant-P, compared to Hopper-P and Walker2d-P. Accurately learning the belief state and generative model becomes more challenging when a larger number of states are unknown. Therefore, relying solely on the learned generative model for actor and critic learning in HalfCheetah-P and Ant-P leads to inaccurate data trajectories. However, by utilizing ground-truth trajectories from the replay buffer, the G-SAC component of G-Dreamer-SAC can provide accurate data to compensate for the compositional errors of the generative models. As a result, G-Dreamer-SAC benefits from the sample efficiency of model-based learning while maintaining the robustness to complex observations, which are characteristic of model-free learning.
\begin{figure*}[!t]
\centering
\!\!\! \begin{subfigure}{0.44\textwidth}
\includegraphics[ width=1.15\linewidth, height=5.2cm, keepaspectratio=true]{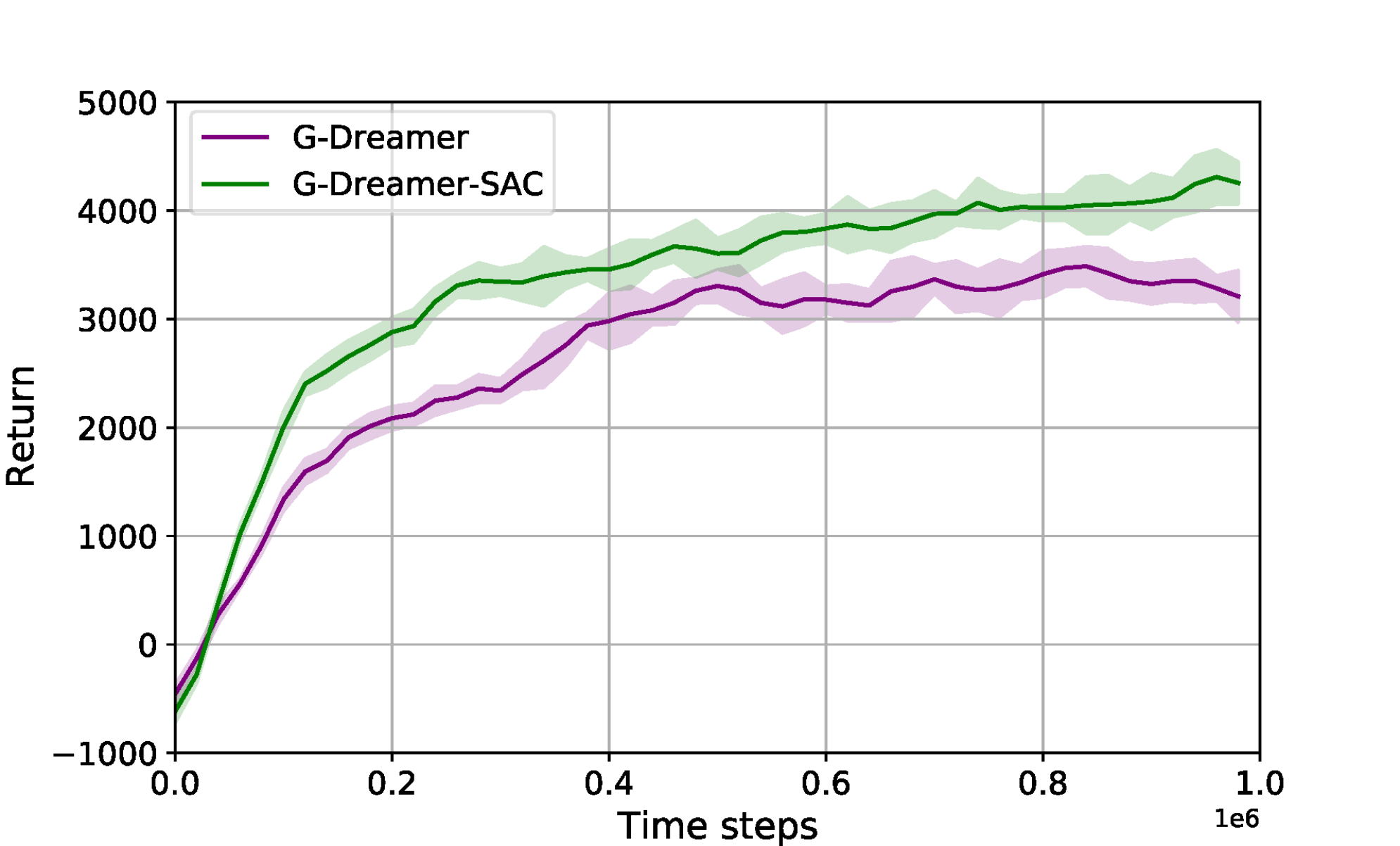}
\caption{HalfCheetah-P} 
\end{subfigure} 
\quad \quad \,\,\,
\begin{subfigure}{0.43\textwidth}
  \includegraphics[width=1.22\linewidth, height=4.8cm, keepaspectratio=true]{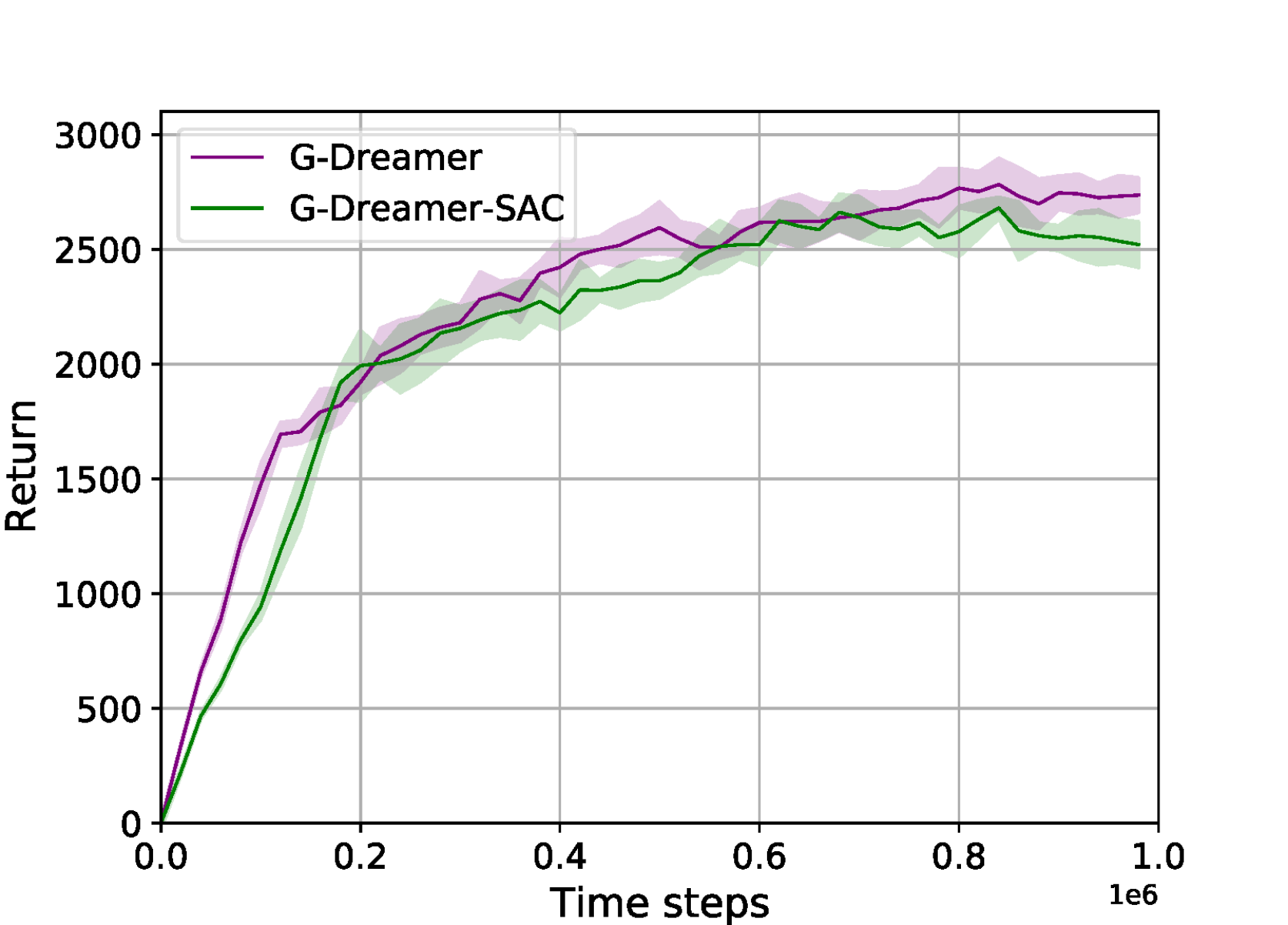}
  \caption{Hopper-P}
\end{subfigure}
\\
\begin{subfigure}{0.44\textwidth}
  \includegraphics[width=1.3\linewidth, height=5.cm, keepaspectratio=true]{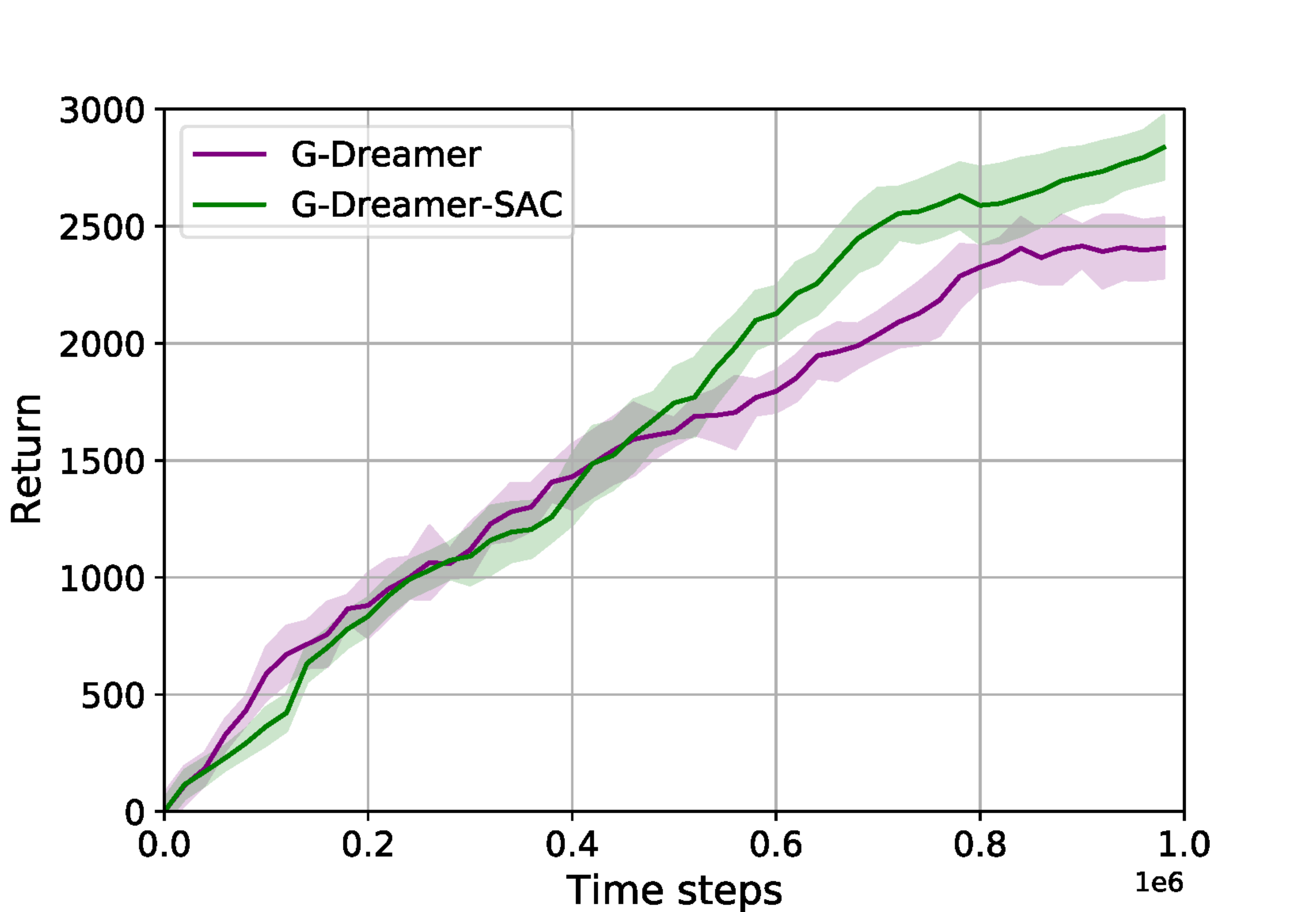}
  \caption {Ant-P}
\end{subfigure}
\quad \quad
\begin{subfigure}{0.44\textwidth}
  \includegraphics[width=1.3\linewidth, height=5.cm, keepaspectratio=true]{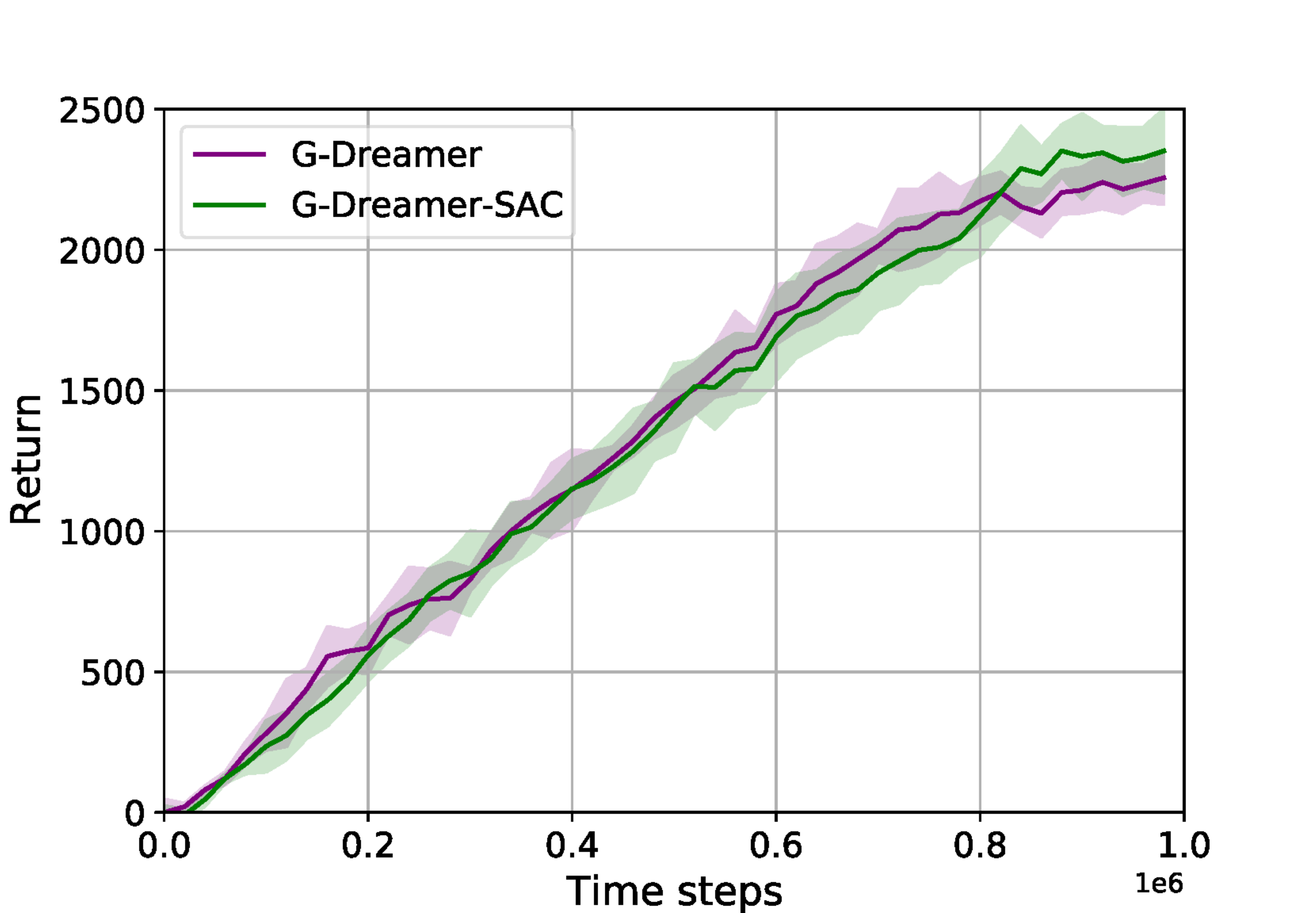}
  \caption {Walker2d-P}
\end{subfigure}
\caption{Ablation study comparing the average return of the hybrid G-Dreamer-SAC algorithm and the model-based G-Dreamer algorithm across four partial observation versions of Roboschool tasks.}
\label{fig:hybrid_MB}
\end{figure*}
\section{Conclusion and Perspectives}\label{sec:con}
In this paper, we extended the EFE formulation to stochastic Markovian belief state-action policies, which allowed for a unified objective function formulation encompassing both exploitation of extrinsic rewards and information-seeking exploration in POMDPs. We then introduced a unified policy iteration framework to optimize this objective function and provided proof of its convergence to the optimal solution.
Our proposed unified policy iteration framework not only generalized existing RL and AIF algorithms but also revealed a theoretical relationship between them, showing that the belief state EFE can be interpreted as a negative state value function. Additionally, our method successfully scaled up AIF to tasks with continuous state and action spaces and enhanced actor-critic RL algorithms to handle POMDPs while incorporating an inherent information-seeking exploratory term. We evaluated our approach on high-dimensional Roboschool tasks with partial and noisy observations, and our unified policy iteration algorithm outperformed recent alternatives in terms of expected long-term reward, sample efficiency, and robustness to estimation errors, while requiring comparable computational resources.
Furthermore, our experimental results indicated that the proposed information-seeking exploratory behavior is effective in guiding agents towards their goals even in the absence of extrinsic rewards, making it possible to operate in reward-free environments without the need for external supervisors to define task-specific reward functions.

However, there are still challenges to overcome, particularly in scaling AIF to high-dimensional environments, such as those based on images. Accurate variational posterior and generative models are required to reconstruct observations in detail. One potential solution is to incorporate a state-consistency loss that enforces consistency between states and their corresponding observations, which has shown promise in self-supervised learning methods~\cite{he2020momentum,grill2020bootstrap}. We plan to explore the combination of our unified inference agents with such learning methods in RL.
\\
Another exciting direction for future research is to investigate the impact of information-seeking exploration in a multi-task setting, where initially explored agents may benefit from transferring knowledge across tasks.

\clearpage
\bibliographystyle{apacite}
\bibliography{ref}
\clearpage


\begin{appendices}
\renewcommand{\thesubsection}{\Alph{subsection}} 
\renewcommand{\thesubsubsection}{\thesubsection.\arabic{subsubsection}}

\titleformat{\section}[block]
  {\normalfont\normalsize\bfseries} 
  {}
  {1em}
  {\!\!\!\!\! \!\!}  

\titleformat{\subsection}[block]
  {\normalfont\normalsize\bfseries} 
  {Appendix \thesubsection}
  {1em}
  {}

\counterwithin{figure}{subsection}
\counterwithin{table}{subsection}
\counterwithin{equation}{subsection}
\counterwithin{algorithm}{subsection}

\section{Appendices}
\subsection{{Markovian belief state-action versus history-dependent policies: memory and computational complexity}} \label{APP:belief_history}
{ In POMDPs, each time step introduces one new action and one new observation to the history, adding $D_{\mO} + D_{\mA}$-dimensional data to the existing history. Consequently, starting with a $D_{\mO}$-dimensional observation $o_0$, the total dimensionality of the history by time $t$ is given by  $D_{\mO}+ (D_{\mO} + D_{\mA})\times t$. This increase in the dimensionality of inputs necessitates more substantial memory for storage and larger matrix operations at each step, escalating computational demands.
\\
Furthermore, the addition of one observation and one action to the history at each time step leads to an exponential increase in the number of possible histories.  Specifically, this number expands at each step by a factor of $(|\Omega| \times |\mathcal{A}|)$, reflecting the agent's $|\mathcal{A}|$ possible actions and $|\Omega|$ possible observations. Thus, starting from any of  $|\Omega|$ potential initial observations $o_0 \in \Omega$, the number of possible histories grows exponentially as $(|\Omega| \times |\mathcal{A}|)^t$ across $t$ time steps, resulting in a total number of histories at time $t$ of $|\Omega| \times (|\Omega| \times |\mathcal{A}|)^t$. Consequently, the size of the policy space, which represents the number of possible policies derived from these histories, also undergoes the same exponential growth.
This exponential growth over time significantly heightens memory and computational requirements to store and evaluate each possible history and its corresponding policy. 
These challenges are particularly pronounced in scenarios with high-dimensional action and observation spaces, where $D_{\mA}$ and $D_{\mO}$  are large, and in environments with extensive or continuous action and observation spaces, where $|\mA|$ and $|\Omega|$ are also large. Additionally, these challenges are accentuated in environments with long time horizons, where $t$ is substantial.}

{ Unlike history-dependent policies, where the length of history grows linearly, belief states in belief state-action policies maintain a constant size, $|\mathcal{S}|$, at each time step. Moreover, the size of all possible belief states (policy space) remains fixed as $|\mB|$. Consequently, the memory required to store each belief state and its corresponding policy does not inherently increase over time, offering significant memory efficiency advantages over history-dependent approaches, particularly in environments with infinite horizons or high-dimensional action and observation spaces. However, storing belief states can be memory-intensive, especially in large or continuous state spaces where $|\mS|$ is large. Nonetheless, this memory requirement is generally more manageable compared to the expansion of history dimension and policy space in history-dependent policies~\cite{yang2021recurrent}. }

{To infer the belief state using Bayesian inference (as detailed in Sub-section~\ref{sub-sec:inference}),  the agent must learn the structure of the environment's observation and transition functions through a generative model. While belief state-action policies offer memory efficiency advantages, their computational complexity introduces distinct challenges, primarily contingent on the learning process for the generative model and the belief state update mechanism. Nonetheless, particularly in environments with infinite horizons, belief state-action policies often achieve greater computational efficiency over time by utilizing fixed-dimension inputs, which streamline processing and ensure consistent computational demands regardless of the duration of operation.}
\subsection{{Reward model}} \label{Sec:RM}
{As mentioned in Sub-section~\ref{Sec:GM},  interacting with a POMDP through a policy $\pi$ generates the sequence $(s_0,o_0,a_0, s_1, o_{1}, a_{1}, \ldots )$. During the interaction, the agent receives reward values $(r_0, r_1, \ldots, r\t, \ldots)$, where $r\t$ is generated according to the reward function $U(r_t|s_t,a_t)$.
Given  the trajectory $(s_0,a_0,s_1,a_1,\ldots)$, we can calculate the probability distribution of receiving the reward sequence $(r_0, r_1, \ldots)$ as follows:
\begin{eqnarray}
 \!\!\!\!\!\! p(r_0, r_1, ... |s_0,a_0, s_1,a_1,, ...) &\!\!\!=\!\!\!& \prod_{k=0}^{\infty} p(r\k|{s}\k,a\k) =\prod_{k=0}^{T} U(r\k|{s}\k,a\k). \label{Eq:R_GP} 
\end{eqnarray}
Due to the agent's lack of knowledge about the true reward function $U(r_t|s_t,a_t)$, it learns a reward model $P(r_t|s_t,a_t)$ in a supervised manner through environmental interactions, enabling the agent to estimate the reward sequence's distribution:
\begin{eqnarray}
 \!\!\!\!\!\! p(r_0, r_1, ...|s_0,a_0, s_1,a_1,, ...) &\!\!\! \approx \!\!\!&  \prod_{k=0}^{\infty} P(r\k|{s}\k,a\k). \label{Eq:R-GM}
\end{eqnarray}
}

{It is worth noting that learning the reward model poses challenges due to reward sparsity and its scalar nature. While the generative model's learning is crucial for a belief state-action policy, our study explores scenarios with and without reward model learning, as detailed in Sub-section~\ref{Seb_unified _AC}.}
%
\subsection{{The POMDP regularity assumptions}} \label{app:regularity}
{ The regularity assumptions play a critical role in establishing the convergence of algorithms that solve POMDPs~\cite{puterman2014markov, paternain2020policy, dufour2012approximation, dufour2013finite}. These assumptions ensure the existence of solutions for specific types of optimization problems that arise in POMDPs. The compactness assumptions on the state, action, observation, and reward spaces are standard requirements in the continuous space MDP and POMDP literature~\cite{puterman2014markov, paternain2020policy, dufour2012approximation, dufour2013finite}. They also facilitate the numerical implementation of MDP and POMDP problems with continuous spaces. For instance, in grid-based methods, which involve dividing the continuous state or action space into a finite set of discrete grid points, the compactness assumption ensures that the discretization is not excessively coarse, which could lead to inaccuracies in the solutions.  It's worth noting that the compactness assumption of the extrinsic reward space $\mathcal{R}$ is not a strict requirement as it is user-defined.
\\
The Lipschitz continuity assumption ensures that the transition function, observation function, and reward function are well-behaved and exhibit smoothness and boundedness. This assumption prevents overly steep or abrupt changes in these functions, which could result in instability and convergence issues. The Lipschitz continuity assumption is crucial for effective and reliable decision-making in continuous spaces, as it guarantees the stability and convergence of the algorithms~\cite{puterman2014markov, paternain2020policy, dufour2012approximation, dufour2013finite}.}

\subsection{Theoretical analysis}
This section presents a detailed theoretical analysis of the main paper. To facilitate understanding, Table~\ref{Table:1} lists the relevant notations used in our proposed unified inference framework.
\begin{table*}[!t]
\caption{ Overview of the notations used in our proposed unified inference framework.}\label{Table:1}
\centering
\begin{tabular}{p{4.5cm} |l}
\hline
\textbf{\!\!\!\ Expression\!\!\!} &  \textbf{Explanation} \\
\hline
$T \in \{0, \mathbb{N} \}$ & Finite time horizon in AIF\\
$t \in \{ 0, 1,  ..., \}$ & Current time step\\
$\tau \in \{t, t+1,   ..., \}$ & Future time step\\
$\s_t \in \mS$ & Current hidden state\\
$D_{\mS} \in \mathbb{N}$ & Dimension of state space \\
$\mS \subset \mathbb{R} ^{D_{\mS}}$ & Continuous state space\\
$b_t \in \mB$ & Current belief state\\
$\mB$ & Continuous belief state space\\
$h_t \in \mB$ & Representation of belief state $b\t$\\
$D_{\mathcal{H}} \in \mathbb{N}$ & Dimension of belief state representation \\
$\mathcal{H} \subset \mathbb{R} ^{D_{\mathcal{H}}}$ & Continuous belief state representation space\\
$o_t \in \mO$ &   Current observation \\
$D_{\mO} \in \mathbb{N}$ & Dimension of observation space \\
$\mO \subset \mathbb{R} ^{D_{\mO}}$ & Continuous observation space\\
$a_t \in \mA$ &   Current action (decision) \\
$D_{\mA} \in \mathbb{N}$ & Dimension of action space \\
$\mA \subset \mathbb{R} ^{D_{\mA}}$ & Continuous action space\\
$r_t \in \mathbb{R}$ &   Current (extrinsic)  reward \\
$\mR \subset \mathbb{R} $ & Continuous (extrinsic) reward space\\
$\tilde{a}=(a_t, a\nt, ... ,a_{T-1} )$ & Plan in AIF (sequence of future actions including current time step $t$) \\
$\bar{\Pi}$ & A set of desired stochastic Markovian belief state-action policies $\bar{\pi}$ \\
$\mathbb{E}[.]$ &  Expectation function \\
$\text{log}$ &  Natural log \\
$P(.)$ &  Agent's probabilistic model\\
$H(.)$ &  Shannon entropy \\
$D_{KL}(.)$ & Kullback Leibler divergence \\
$\sigma(.)$ & Boltzmann (softmax) function \\
$P(o_t, \s_t, b\t|a\pt, b\pt, )$ &  Belief state generative model\\
$P(o_t| \s_t, b\pt)$ &   Belief state-conditioned observation (likelihood) model  \\
$P(\s\t|b\pt, a\pt)$ &  Belief state-conditioned transition model \\
$q(\s_{t}| o_{t}, b\pt)$ &  Belief state-conditioned variational posterior distribution\\
$F\t $ &  Original definition of VFE (i.e., ELBO) in AIF~\cite{friston2017active}\\
$F^{\text{VRNN}}$ &  Extension of VFE (i.e., ELBO) in VRNN~\cite{chung2015recurrent}\\
${G}_{\text{AIF}}(o\t) $  & Original definition of the EFE in AIF~\cite{friston2017active} \\
${G}_{\text{AIF}}^{(\tilde{a})}(o\t) $  & Original definition of the EFE~\cite{friston2017active} for plan $\tilde{a}$   \\
${G}^{(\pi)}(b\t) $  & Our proposed belief state EFE for belief state-action policy $\pi$ \\
$\mathcal{G}^{({\pi})}(b\t,a\t) $  & Our proposed belief state-action EFE for belief state-action policy ${\pi}$  \\
\end{tabular}
\end{table*}
\FloatBarrier
\begin{table*}[h] 
\nonumber
\centering
\begin{tabular}{p{4.5cm} |l}
${{\pi}}^{*}=\arg\min_{{\pi}} {G}^{(\pi)}(b\t)$  & Optimal unconstrained belief state-action policy\\
${\bar{\pi}}^{*}=\arg\min_{\bar{\pi} \in \bar{\Pi}} {G}^{(\bar{\pi})}(b\t)$  & Optimal constrained belief state-action policy\\
$V^{(\pi)}(\s\t) $ & State value function in RL for state-action policy ${\pi}$  \\
$Q^{(\pi)}(\s\t,a\t) $ & State-action value function in RL  for state-action policy ${\pi}$\\

$\gamma \in [0,1)$ & Discount factor \\
$0 \leq \alpha, \beta, \zeta   < \infty$ & Scaling factors \\
{ $\Delta(.)$} &  {The set of all probability distributions over the specified set} \\
$|.|$ &  Cardinality operator \\
\end{tabular}
\end{table*}
\subsubsection{Proof of Theorem.~\ref{the:EFE_stoch}}\label{APP:EFE_stoch}
\begin{proof}
By conditioning the EFE ${G}_{\text{AIF}}(o\t)$ in Eq.~\eqref{Eq:EFE_def} on the belief state-action policy $\pi$ and taking the limit as $T \rightarrow \infty$, we obtain the resulting expression ${G}_{\text{Unified}}^{(\pi)}(o\t)$ as:
\begin{eqnarray}
 {G}_{\text{Unified}}^{(\pi)}(o\t)  &=& \mathbb{E}_{q({\s}_{t:\infty}, {o}_{t+1:\infty}, {a}_{t:\infty}|o\t, \pi)} \bigg[ \sum_{\tau =t}^{\infty}  \text{log} \frac{q({a}_{t:\infty}| \pi) P(\s_{\tau+1}|s\ta, a\ta)}{\tilde{P}({o}_{\tau+1}|{a}\ta)q({\s}_{\tau+1}|o_{\tau+1}) } \bigg]. \label{Eq:EFE_def3}
\end{eqnarray}
Using the biased generative model from AIF literature~\cite{friston2017active,pezzulo2015active}, the variational distribution $q({\s}_{t:\infty}, {o}_{t+1:\infty}, {a}_{t:\infty}|o\t, \pi)$ can be factored as:
\begin{eqnarray}
\!\!\!\! \!\!\!\! q({\s}_{t:\infty}, {o}_{t+1:\infty}, {a}_{t:\infty}|o\t, \pi) &=&  q({a}_{t:\infty}| \pi)  \prod_{\tau =t}^{\infty} q({\s_{\tau}}|o_{\tau})\tilde{P}({o_{\tau+1}}|{\s_{\tau}}, a\ta). \label{Eq:factor2}
\end{eqnarray}
Since action $a\t$ is chosen based on the stochastic belief state-action policy $\pi$ and does not depend on the other actions, we have $q({a}_{t:\infty}| \pi)=\prod_{\tau =t}^{\infty} q(a_{\tau}| \pi)$. Consequently, Eq.~\eqref{Eq:factor2} can be rewritten as:
\begin{eqnarray}
\!\!\!\! \!\!\!\! q({\s}_{t:\infty}, {o}_{t+1:\infty}, {a}_{t:\infty}|o\t, \pi) &=&   \prod_{\tau =t}^{\infty} q(a_{\tau}| \pi) q({\s_{\tau}}|o_{\tau})\tilde{P}({o_{\tau+1}}|{\s_{\tau}}, a\ta). \label{Eq:fact_exten}
\end{eqnarray}
Now, by substituting $ q({a}_{t:\infty}| \pi)$  with $\prod_{\tau =t}^{\infty} q(a_{\tau}| \pi)$ and  $q({\s}_{t:\infty}, {o}_{t+1:\infty}, {a}_{t:\infty}|o\t, \pi)$ with the right-hand side of Eq.~\eqref{Eq:fact_exten}, ${G}_{\text{Unified}}^{(\pi)}(o\t)$ in Eq.~\eqref{Eq:EFE_def3} can be rewritten as:
\begin{eqnarray}
     \!\!\!\! \!\!\!\!  \!\!\!\! \!\!\!\!  \!\!\!\!  {G}_{\text{Unified}}^{(\pi)}(o\t)   &\!\!\!=\!\!\!& \mathbb{E}_{ \prod_{\tau =t}^{\infty} q(a_{\tau}| \pi) q({\s_{\tau}}|o_{\tau})\tilde{P}({o_{\tau+1}}|{\s_{\tau}}, a\ta)} \bigg[ \sum_{\tau =t}^{T-1} \text{log} q(a_{\tau}| \pi)      \\
 \!\!\!\!  \!\!\!\!  \!\!\!\!  \!\!\!\!  \!\!\!\!  &\!\!\!-\!\!\!& \text{log} \tilde{P}({o}_{\tau+1}|\pi, s\ta) + \text{log} \frac {P({\s_{\tau+1}}|a\ta, s_{\tau}) }{q({\s_{\tau+1}}|o_{\tau+1})}  \bigg].  
\end{eqnarray}
Furthermore, since policy $\pi$ selects action $a\ta$ based on the belief state $b\ta$, i.e., $a\ta \sim \pi(a\ta|b\ta)$, we can express ${G}_{\text{Unified}}^{(\pi)}(o\t)$ using the law of total expectation as follows:
\begin{eqnarray}
 \!\!\!\!  {G}_{\text{Unified}}^{(\pi)}(o\t)   &\!\!\!\!\!=\!\!\!\!\!& \mathbb{E}_{P(b_{t:\infty}) } \bigg[  \mathbb{E}_{q({\s_{t}}|o_{t}) \prod_{\tau =t}^{\infty} q(a_{\tau}| \pi, b\ta) q({\s_{\tau+1}}|o_{\tau+1}, b\ta)\tilde{P}({o_{\tau+1}}|{\s_{\tau}}, a\ta, b\ta)} \big[ \sum_{\tau =t}^{\infty} \text{log} q(a_{\tau}| \pi,  b\ta)  \nonumber    \\
\!\!\!\! &\!\!\!\!\!-\!\!\!\!\!& \text{log} \tilde{P}({o}_{\tau+1}|a\ta, s\ta, b\ta) + \text{log} \frac {P({\s_{\tau+1}}|a\ta, s_{\tau}, b\ta) }{q({\s_{\tau+1}}|o_{\tau+1}, b\ta)} \big]  \bigg] \\
\!\!\!\! &\!\!\!\!\! \overset{\mathrm{(a)}}{=}\!\!\!\!\!& \mathbb{E}_{P(b_{t:\infty}) } \bigg[ \mathbb{E}_{ q({\s_{t}}|o_{t}) \prod_{\tau =t}^{\infty} \pi(a_{\tau}| b\ta) q({\s_{\tau+1}}|o_{\tau+1}, b\ta)\tilde{P}({o_{\tau+1}}|{\s_{\tau}}, a\ta, b\ta)} \big[ \sum_{\tau =t}^{\infty} \text{log} \pi(a_{\tau}| b\ta)      \nonumber  \\
\!\!\!\! &\!\!\!-\!\!\!& \text{log} \tilde{P}({o}_{\tau+1}|a\ta, s\ta, b\ta) + \text{log} \frac {P({\s_{\tau+1}}|a\ta, s_{\tau}, b\ta) }{q({\s_{\tau+1}}|o_{\tau+1}, b\ta)} \big]  \bigg]
\\
\!\!\!\! &\!\!\!\!\! \overset{\mathrm{(b)}}{=} \!\!\!\!\!& \mathbb{E}_{P(b_{t:\infty}) } \bigg[ \mathbb{E}_{ \prod_{\tau =t}^{\infty} \pi(a_{\tau}| b\ta) q({\s_{\tau+1}}|o_{\tau+1}, b\ta)\tilde{P}({o_{\tau+1}}| a\ta, b\ta)} \big[ \sum_{\tau =t}^{\infty} \text{log} \pi(a_{\tau}| b\ta)       \nonumber \\
&\!\!\!-\!\!\!& \text{log} \tilde{P}({o}_{\tau+1}|a\ta,  b\ta) + \text{log} \frac {P({\s_{\tau+1}}|a\ta,  b\ta) }{q({\s_{\tau+1}}|o_{\tau+1}, b\ta)} \big]  \bigg]
\\
\!\!\!\! &\!\!\!\!\!\overset{\mathrm{(c)}}{=} \!\!\!\!\!&  \mathbb{E}_{ P(b_{t}| s\t, o_{t}, b_{t-1}, a_{t-1}) \prod_{\tau =t}^{\infty} P(b_{\tau+1}| s_{\tau+1},o_{\tau+1}, b_{\tau}, a_{\tau}) \pi(a\ta|b\ta) q(s_{\tau+1}| o_{\tau+1}, b_{\tau}) \tilde{P}(o_{\tau+1}|a\ta, b\ta)   }    \nonumber  \\
\!\!\!\!\!\!\!\!\! &&  \bigg[ \sum_{\tau =t}^{\infty} \text{log} \pi(a_{\tau}| {b_{\tau}}) - \text{log} \tilde{P}({o}_{\tau+1}|a\ta, b\ta) + \text{log} \frac {P({\s_{\tau+1}}|a\ta, b_{\tau}) }{q({\s_{\tau+1}}|o_{\tau+1}, b_{\tau})}  \bigg], \label{Eq:Beleif_EFE_beleif4}  
\end{eqnarray}
%
where (a) follows from $q(a_{\tau}| \pi, b\ta) = \pi(a\ta|b\ta)$, and (b) follows because the belief state $b\ta$ contains the necessary information about the hidden state $s\ta$ that is relevant for predicting $s_{\tau+1}$ and $o_{\tau+1}$. As a result, the direct dependence on $s\ta$ becomes redundant when conditioned on the belief state $b\ta$. (c) follows due to the factorization resulting from the belief state update function in Eq.~\eqref{Eq:beleif-state} of the main manuscript:
\begin{eqnarray}
\!\!\!\! P(b_{t:\infty}) \!\!& \!\! = \!\! &\!\! \prod_{\tau =t}^{\infty}  P(b_{\tau}| b_{\tau-1}) ] = P(b_{t}| s\t, o_{t}, b_{t-1}, a_{t-1}) \\
\!\!& \!\! \times \!\! &\!\!  \prod_{\tau =t}^{\infty} \mathbb{E}_{ \prod_{\tau =t}^{\infty} \pi(a\ta|b\ta) q(s_{\tau+1}| o_{\tau+1}, b_{\tau}) \tilde{P}(o_{\tau+1}|a\ta, b\ta)   } [{P(b_{\tau+1}| s_{\tau+1}, o_{\tau+1}, b_{\tau}, a_{\tau})}]. \nonumber
\end{eqnarray}
As stated in Assumption~\ref{ASS:percep}, following AIF, we assume that prior to action selection at time step $t$, the agent performs inference and approximates its belief state $b_t$ by a variational posterior (i.e., here $q(.|o_t, b\pt)$). Hence, the belief state $b_t$ is uniquely determined, and thus $P(b_{t}| s\t, o_{t}, b_{t-1}, a_{t-1})$ is equivalent to $\delta(b_t- q(.|o_t, b_{t-1}))$. Therefore, taking the expected value with respect to $P(b_{t}| s\t, o_{t}, b_{t-1}, a_{t-1})$ in Eq.~\eqref{Eq:Beleif_EFE_beleif4} simply evaluates ${G}_{\text{Unified}}^{(\pi)}(o\t)$ at the given $b\t=q(.|o\t, b\pt)$. Therefore, ${G}_{\text{Unified}}^{(\pi)}(o\t)$ can be expressed as a function of both $o\t$ and $b\t$. Since $b_t$ contains sufficient information about $o_t$, it is adequate to express ${G}_{\text{Unified}}^{(\pi)}(o\t)$ in Eq.~\eqref{Eq:Beleif_EFE_beleif4} solely as a function of $b_t$, i.e.,
\begin{eqnarray}
\!\!\!\!\!\!\!\!\! {G}^{(\pi)}_{\text{Unified}} (b\t)&\!\!=\!\!& \mathbb{E}_{ \prod_{\tau =t}^{\infty} P(b_{\tau+1}| s_{\tau+1}, o_{\tau+1}, b_{\tau}, a_{\tau}) \tilde{P}(o_{\tau+1}|a\ta, b\ta) \pi(a\ta|b\ta) q(s_{\tau+1}| o_{\tau+1}, b_{\tau}) } \bigg[ \sum_{\tau =t}^{\infty} \text{log} \pi(a_{\tau}| {b_{\tau}})    \nonumber  \\
\!\!\!\!\!\!\!\!\!  &\!\!-\!\!& \text{log} \tilde{P}({o}_{\tau+1}|a\ta, b\ta) + \text{log} \frac {P({\s_{\tau+1}}|a\ta, b_{\tau}) }{q({\s_{\tau+1}}|o_{\tau+1}, b_{\tau})} \bigg]. \label{Eq:Beleif_EFE_beleif_APP}  
\end{eqnarray} 
\end{proof}
\subsubsection{Proof of Proposition~\ref{cor:q1}} \label{APP:q1}
\begin{proof}
The result is obtained by taking out the terms $\beta \pi(a_{t}|b_{t})$, $\alpha r\t$, and $\zeta \text{log} \frac{P(\s\nt|  b\t, a\t)}{ q({\s}_{t+1}|{o}_{t+1}, b\t)}$ from $G^{(\pi)} (b\t)$ in Eq.~\eqref{Eq:Objective}:
\begin{eqnarray}
\!\!\!\!\!\! G^{(\pi)}(b\t) &=&  \mathbb{E}_{ \prod_{\tau =t}^{\infty} \pi(a_{\tau}|b_{\tau}) {P}(s_{\tau+1},  o_{\tau+1}, b_{\tau+1}|  b\ta, a\ta) P(r\ta|b\ta, a\ta)  } \big[ \sum_{\tau =t}^{\infty}  \gamma^{\tau-t} \big( \beta \, \text{log}\pi(a_{\tau}|b_{\tau})   \nonumber \\
 \!\!\!\!\!\! &-& \alpha \, r\ta   +  \zeta \, \text{log} \frac {P({\s_{\tau+1}}|b\ta,a\ta) }{q({\s_{\tau+1}}|o_{\tau+1}, b\ta)}  \big] \nonumber  \\
&\!\!\! \!\!\overset{\mathrm{(a)}}{=} \!\!\! \!\!& \mathbb{E}_{\pi(a\t|b\t)  P(s\nt, o\nt|b\t,a\t)  P(r\t|b\t,a\t)} \bigg[ \beta \, \text{log} \pi(a\t|b\t) - \alpha \, r\t + \zeta \, \text{log} \frac{P(\s\nt|  b\t, a\t)}{ q({\s}_{t+1}|{o}_{t+1}, b\t)} \nonumber \\
 \!\!\!\!\!\!\!\!\!\!\!\!  &\!\!\!\!\!+\!\!\! \!\!&   \gamma \, \mathbb{E}_{P(b\nt|s\nt, b\t,a\t, o\nt)} \big[ \mathbb{E}_{\prod_{\tau' =t+1}^{\infty} \pi(a_{\tau'}|b_{\tau'})  P({\s_{\tau'+1}},{o_{\tau'+1}}, b_{\tau'+1}| b_{\tau'}, a_{\tau'}) P(r_{\tau'}|b_{\tau'},a_{\tau'})}   \nonumber  \\
  & & \!\!\!\!\! [\sum_{\tau' =t+1}^{\infty} \gamma^{\tau'-t} \big(  \beta \, \text{log} \pi(a_{\tau'}|b_{\tau'}) - \alpha \, r_{\tau'} + \zeta \, \text{log} \frac{P(\s_{\tau'+1}| a_{\tau'}, b_{\tau'})}{ q({\s}_{\tau'+1}|{o}_{\tau'+1}, b_{\tau'})} \big) ] \big] \bigg] \nonumber  \\
 \!\!\!\!\!\!\!\!\!\!\!\!\!   &\!\!\! \!\!\overset{\mathrm{(b)}}{=}\!\!\! \!\!& \mathbb{E}_{\pi(a\t|b\t)  P(r\t|b\t,a\t) P(s\nt, o\nt|b\t,a\t)} \bigg[ \beta \, \text{log} \pi(a\t|b\t) - \alpha \, r\t + \zeta \, \text{log} \frac{P(\s\nt|  b\t, a\t)}{ q({\s}_{t+1}|{o}_{t+1}, b\t)}   \nonumber   \\
     \!\!\!\!\!\!\!\!\!\!\!\!\! &\!\!\!\!\! +\!\!\!\!\!&   \gamma \mathbb{E}_{P(b\nt|b\t,a\t, o\nt)} \big[ {G}^{(\pi)}(b\nt) \big] \bigg]  \nonumber  \\
   \!\!\!\!\!\!\!\!\!\!\!\!\!  & \!\!\!\!\! \overset{\mathrm{(c)}}{=} \!\! \!\!\!& \mathbb{E}_{ \pi(a\t|b\t)  P(r\t|b\t,a\t)} \bigg[ \beta \, \text{log} \pi(a\t|b\t) - \alpha \, r\t + \mathbb{E}_{ P(s\nt, o\nt, b\nt|b\t,a\t)} \big[ \zeta \,   \text{log} \frac{P(\s\nt|  b\t, a\t)}{ q({\s}_{t+1}|{o}_{t+1}, b\t)}  \nonumber \\
\!\!\!\!\!\!\!\!\!\!\!\!\!\!  &\!\!\!\!\!+ \!\!\!\!\!& \gamma  {G}^{(\pi)}(b\nt) \big]  \bigg], \nonumber 
\end{eqnarray}
where (a) follows by applying the law of total expectation, (b) is because of the definition of the belief state EFE, and (c) follows since $\pi(a\t|b\t)$ and $r\t$ are independent of $P(s\nt, o\nt, b\nt|b\t,a\t)$.
\end{proof}
\subsubsection{Proof of Theorem~\ref{theo:optimal_policy}}  \label{APP:optimal_policy}
\begin{proof}
According to Proposition~\ref{cor:q1}, we have:
\begin{eqnarray}
 {G}^{(\pi)}(b\t) &=& \mathbb{E}_{ \pi(a\t|b\t)  P(r\t|b\t,a\t)} \bigg[  \beta \, \text{log} \pi(a\t|b\t) - \alpha \, r\t    \\
   &+& \mathbb{E}_{ {P}(b\nt,o\nt, \s\nt|b\t, a\t)} \big[ \zeta \,   \text{log} \frac{P(\s\nt|  b\t, a\t)}{ q({\s}_{t+1}|{o}_{t+1},b\t)} + \gamma  {G}^{(\pi)}(b\nt) \big]  \bigg]. \nonumber
\end{eqnarray}
Therefore:
\begin{eqnarray}
\lefteqn{\!\!\!\!\!\!\!\!\!\!\!\!\!\!\!\!\!\!\!\!\!\!\!\!\!\! \!\!\!\!\!\!\!\!  {G}^{*}(b\t)= \min_{\pi} {G}^{(\pi)}(b\t) = \min_{\pi} \mathbb{E}_{ \pi(a\t|b\t)  P(r\t|b\t,a\t)} \bigg[ \beta \, \text{log} \pi(a\t|b\t) - \alpha \, r\t } \nonumber \\
 &&\!\!\!\!\!\!\!\!\!\!\!\!\!\!\!\!\!\!\!\!\!\!\!\!    + \mathbb{E}_{{P}(b\nt,o\nt, \s\nt|b\t, a\t)} \big[ \zeta \,   \text{log} \frac{P(\s\nt|  b\t, a\t)}{ q({\s}_{t+1}|{o}_{t+1}, b\t)} + \gamma  {G}^{(\pi)}(b\nt) \big]  \bigg] \nonumber \\
 &&\!\!\!\!\!\!\!\!\!\!\!\!\!\!\!\!\!\!\!\!\!\!\! \overset{\mathrm{(a)}}{=} \min_{\pi(.|b\t), \tilde{\pi}} \mathbb{E}_{ \pi(a\t|b\t)} \bigg[ \mathbb{E}_{ P(r\t|b\t,a\t)} \big[ \beta \, \text{log} \pi(a\t|b\t) - \alpha \, r\t \\
 &&\!\!\!\!\!\!\!\!\!\!\!\!\!\!\!\!\!\!\!\!\!\!\!\! + \mathbb{E}_{ {P}(b\nt,o\nt, \s\nt|b\t, a\t)} [ \zeta \,   \text{log} \frac{p(\s\nt|  \s\t, a\t)}{ q({\s}_{t+1}|{o}_{t+1})} + \gamma  {G}^{(\tilde{\pi})}(\s\nt) ] \big]  \bigg] \nonumber \\
 &&\!\!\!\!\!\!\!\!\!\!\!\!\!\!\!\!\!\!\!\!\!\!\! \overset{\mathrm{(b)}}{=} \min_{\pi(.|b\t)} \mathbb{E}_{ \pi(a\t|b\t)} \bigg[ \mathbb{E}_{ P(r\t|b\t,a\t)} \big[ \beta \, \text{log} \pi(a\t|b\t) - \alpha \, r\t \\
 &&\!\!\!\!\!\!\!\!\!\!\!\!\!\!\!\!\!\!\!\!\!\!\!\!  + \mathbb{E}_{ {P}(b\nt,o\nt, \s\nt|b\t, a\t)} [ \zeta \,   \text{log} \frac{P(\s\nt|  b\t, a\t)}{ q({\s}_{t+1}|{o}_{t+1}, b\t)} + \gamma \min_{\tilde{\pi}} {G}^{(\tilde{\pi})}(b\nt)] \big]  \bigg] \nonumber \\
&&\!\!\!\!\!\!\!\!\!\!\!\!\!\!\!\!\!\!\!\!\!\!\! \overset{\mathrm{(c)}}{=} \min_{\pi(.|b\t)} \mathbb{E}_{ \pi(a\t|b\t)} \bigg[ \mathbb{E}_{ P(r\t|b\t,a\t)} \big[  \beta \, \text{log} \pi(a\t|b\t) - \alpha \, r\t \\
 &&\!\!\!\!\!\!\!\!\!\!\!\!\!\!\!\!\!\!\!\!\!\!\!\!  + \mathbb{E}_{ {P}(b\nt,o\nt, \s\nt|b\t, a\t)} [ \zeta \,   \text{log} \frac{P(\s\nt|  b\t, a\t)}{ q({\s}_{t+1}|{o}_{t+1}, b\t)} + \gamma {G}^{*}(b\nt)] \big]  \bigg], \nonumber
 \end{eqnarray}
where (a) is because of the decomposition of the policy $\pi$ as $\pi=(\pi(.|\s\t), \tilde{\pi})$. Let $\tilde{\pi}^*=\arg\min_{\tilde{\pi}} {G}^{(\tilde{\pi})}(b\nt)$, then (b) follows because of the trivial inequality:
\begin{eqnarray}
\!\!\!\!\!\!\!\!\!\!\!\!\!     \min_{\tilde{\pi}}  \mathbb{E}_{ {P}(b\nt,o\nt, \s\nt|b\t, a\t)} [{G}^{(\tilde{\pi})}(b\nt)] \geq \mathbb{E}_{ {P}(b\nt,o\nt, \s\nt|b\t, a\t)} [\min_{\tilde{\pi}} {G}^{(\tilde{\pi})}(b\nt)] , \nonumber
 \end{eqnarray}
and
\begin{eqnarray}
  \mathbb{E}_{ {P}(b\nt,o\nt, \s\nt|b\t, a\t)} [\min_{\tilde{\pi}} {G}^{(\tilde{\pi})}(b\nt) ] &=& \mathbb{E}_{ {P}(b\nt,o\nt, \s\nt|b\t, a\t)} [{G}^{\tilde{\pi}^*}(b\nt) ] \nonumber  \\
  &\leq& \min_{\tilde{\pi}} \mathbb{E}_{ p(\s\nt,o\nt| a\t)} [{G}^{(\tilde{\pi})}(b\nt)] .
 \end{eqnarray}
Finally, (c) follows from the definition of the optimal belief state EFE.
\end{proof}
\subsubsection{Proof of Corollary.~\ref{corr:optimal_principle}} \label{APP:optimal_principle}
\begin{proof}
 Proof of this corollary is straightforward by taking the argument of Eq.~\eqref{Eq:unified_optimality_Bellman}.
\end{proof}
\subsubsection{Proof of Theorem~\ref{the:instant_optimal}} \label{APP:instant_optimal}
\begin{proof}
From Corollary~\ref{corr:optimal_principle}, we have:
\begin{eqnarray}
 \pi^*(a\t|b\t) &\in&    \arg\min_{\pi(a\t|b\t)} \mathbb{E}_{ \pi(a\t|b\t)} \bigg[ \mathbb{E}_{  P(r\t|b\t,a\t)} [\beta \, \text{log} \pi(a\t|b\t) - \alpha \, r\t  ]  \label{Eq:APP_Bellman_optimal}    \\
  & +& \mathbb{E}_{ P(b\nt,o\nt, \s\nt| s\t,a\t)} \big[ \zeta \,   \text{log} \frac{P(\s\nt|  b\t, a\t)}{ q({\s}_{t+1}|{o}_{t+1}, b\t)} + \gamma \min_{\tilde{\pi}} {G}^{(\tilde{\pi})}(b\nt) \big]  \bigg].\nonumber
\end{eqnarray}
The minimization in the equation above subject to the constraint $\int_{\mA} \pi(a'|b\t) \, da'= 1$  yields minimizing the Lagrangian function $L(\lambda, \pi(a\t|b\t))$ with the Lagrange multiplier $\lambda$:
\begin{eqnarray}
\!\! \!\! L(\lambda, \pi(a\t|b\t)) &\!\!\! \!\!=\!\!\! \!\!& \mathbb{E}_{ \pi(a\t|b\t)} \bigg[ \mathbb{E}_{  P(r\t|b\t,a\t)} [\beta \, \text{log} \pi(a\t|b\t) - \alpha \, r\t  ]  
  + \mathbb{E}_{ P(b\nt,o\nt, \s\nt| s\t,a\t)} \big[ \zeta \,   \text{log} \frac{P(\s\nt|  b\t, a\t)}{ q({\s}_{t+1}|{o}_{t+1}, b\t)}     \nonumber \\
\!\! \!\!  &\!\!\! \!\!+\!\!\! \!\!&   \gamma {G}^{*}(b\nt) \big]  \bigg] - \lambda \left( \int_{\mA} \pi(a'|b\t) \, da'- 1 \right).
\end{eqnarray}
The derivative of $L(\lambda, \pi(a\t|b\t)) $ with respect to $\pi(a\t|b\t)$ is given by:
\begin{eqnarray}
 \frac{\partial L(\lambda, \pi(a\t|b\t))}{\partial \pi(a\t|b\t))} &\!\!\! \!\!=\!\!\! \!\!& \mathbb{E}_{  P(r\t|b\t,a\t)} [ \beta \, \text{log} \pi(a\t|b\t) + \beta - \alpha \, r\t ]\\
  &\!\!\! \!\!+\!\!\! \!\!& \mathbb{E}_{ P(b\nt,o\nt, \s\nt| s\t,a\t)} \big[ \zeta \,   \text{log} \frac{P(\s\nt|  b\t, a\t)}{ q({\s}_{t+1}|{o}_{t+1},b\t)} + \gamma {G}^{*}(b\nt) \big] -\lambda.  \nonumber
\end{eqnarray}
 Setting $ \frac{\partial L(\lambda, \pi(a\t|b\t))}{\partial \pi(a\t|b\t))}=0$ gives the optimal ${\pi^*}(a\t|b\t)$ for the specific action $a\t$ as:
\begin{align}
\!\!\! \!\!\! {\pi^*}(a\t|b\t) \!\! = \!\! \text{exp} \!\! \left( \! \frac{ \lambda \!- \! \beta - \! \mathbb{E}_{  P(r\t|b\t,a\t)} \big[ \alpha r\t  \!  - \!  \mathbb{E}_{ P(b\nt,o\nt, \s\nt| b\t,a\t)} [ \zeta \,   \text{log} \frac{P(\s\nt|  b\t, a\t)}{ q({\s}_{t+1}|{o}_{t+1},b\t)} \!  + \!  \gamma  {G}^{*}(b\nt) ] \big] }{\beta}\! \right). \nonumber \\ \label{Eq:La1}
\end{align}
By accounting the constraint $\int_{\mA} \pi(a'|b\t) \, da'= 1$:
\begin{eqnarray}
\lefteqn{\!\!\!\! \int_{\mA} \pi(a'|b\t) \, da' = \text{exp} (\frac{\lambda - \beta}{\beta})} \label{Eq:La2}  \\
 &&\!\!\!\!\!\!\!\!\!\!\!  \int_{\mA} \text{exp} \left( \frac{ \mathbb{E}_{  P(r\t|b\t,a\t)} \big[ \alpha r\t  \!  - \!  \mathbb{E}_{ P(b\nt,o\nt, \s\nt| b\t,a\t)} [ \zeta \,   \text{log} \frac{P(\s\nt|  b\t, a\t)}{ q({\s}_{t+1}|{o}_{t+1},b\t)} \!  + \!  \gamma  {G}^{*}(b\nt) ] \big] }{\beta} \right) =1. \nonumber
\end{eqnarray}
Putting together Eqs.~\eqref{Eq:La1} and \eqref{Eq:La2} results in:
\begin{eqnarray}
\!\!\!\!\!\!\!\!\!\!\!\!\!\! \pi^*(a\t|b\t) &\!\!\!\!\!\!=\!\!\!\!\!\!& \frac{ \text{exp} \left(  \frac {\mathbb{E}_{  P(r\t|b\t,a\t)} \big[ \alpha r\t  \!  - \!  \mathbb{E}_{ P(b\nt,o\nt, \s\nt| b\t,a\t)} [ \zeta \,   \text{log} \frac{P(\s\nt|  b\t, a\t)}{ q({\s}_{t+1}|{o}_{t+1},b\t)} \!  + \!  \gamma  {G}^{*}(b\nt) ] \big]} {\beta} \right) }   { \int_{\mA} \text{exp} \left(  \frac{\mathbb{E}_{  P(r\t|b\t,a')} \big[ \alpha r\t  \!  - \!  \mathbb{E}_{ P(b\nt,o\nt, \s\nt| b\t,a')} [ \zeta \,   \text{log} \frac{P(\s\nt|  b\t, a')}{ q({\s}_{t+1}|{o}_{t+1},b\t)} \!  + \!  \gamma  {G}^{*}(b\nt) ] \big]}{\beta} \right)\, da' } \nonumber \\
\!\!\!\!\!\! &\!\!\!\!\!\!=\!\!\!\!\!\!& \sigma  \!\! \left( \!\frac{ \mathbb{E}_{  P(r\t|b\t,a\t)} \big[ \alpha r\t  \!  - \!  \mathbb{E}_{ P(b\nt,o\nt, \s\nt| b\t,a\t)} [ \zeta \,   \text{log} \frac{P(\s\nt|  b\t, a\t)}{ q({\s}_{t+1}|{o}_{t+1},b\t)} \!  + \!  \gamma  {G}^{*}(b\nt) ] \big] }{\beta} \!\right). \nonumber 
\\ 
\label{Eq:APP_optimal}
\end{eqnarray}
Taking the second derivative of $L(\lambda, \pi(a\t|b\t))$ with respect to $\pi(a\t|b\t)$ yields:
\begin{eqnarray}
\frac{\partial^2 L(\lambda, \pi(a\t|b\t))}{\partial \pi^2(a\t|b\t)} = \frac{\beta}{\pi(a\t|b\t)},   \label{Eq:La3}
\end{eqnarray}
which is a positive value for the compact action space $\mA$. Therefore, the expression in Eq.~\eqref{Eq:APP_optimal} is the unique minimizer of $L(\lambda, \pi(a\t|b\t))$ and, consequently, the unique minimizer of the unified Bellman equation.
\end{proof}
\subsubsection{Proof of Lemma~\ref{lem:constr_action_EFE}} \label{APP:constr_action_EFE}
\begin{proof}
From the definition of $\mathcal{G}^{(\bar{\pi}^*)}(b\t,a\t)$ in Eq.~\eqref{Eq:opt_constr_action_EFE}:
\begin{eqnarray}
 \mathcal{G}^{(\bar{\pi}^*)}(b\t,a\t) &\!\!\!\!=\!\!\!\!& \mathbb{E}_{  P(r\t|b\t,a\t)} \left[  -  \alpha r\t    + \!  \mathbb{E}_{ P(b\nt,o\nt, \s\nt| b\t,a\t)} \big[ \zeta   \text{log} \frac{P(\s\nt| b\t, a\t)}{ q({\s}_{t+1}|{o}_{t+1}, b\t)} \!  + \!  \gamma  {G}^{(\bar{\pi}^*)}(b\nt) \big] \right] \nonumber  \\
\!\!\!\!\!\!  &\!\!\!\! \overset{\mathrm{(a)}}{=} \!\!\!\!&  \mathbb{E}_{  P(r\t|b\t,a\t)} \left[ -  \alpha r\t    + \!  \mathbb{E}_{ P(b\nt,o\nt, \s\nt| b\t,a\t)} \big[ \zeta   \text{log} \frac{P(\s\nt| b\t, a\t)}{ q({\s}_{t+1}|{o}_{t+1}, b\t)} \!  + \!  \gamma  \min_{\bar{\pi} \in \Pi} {G}^{(\bar{\pi})}(b\nt) \big] \right]  \nonumber \\
 &\!\!\!\! \overset{\mathrm{(b)}}{=}\!\!\!\!&  \min_{\bar{\pi} \in \Pi} \underbrace{\mathbb{E}_{  P(r\t|b\t,a\t)} \left[  -  \alpha r\t   + \!  \mathbb{E}_{ P(b\nt,o\nt, \s\nt| b\t,a\t)} \big[ \zeta   \text{log} \frac{P(\s\nt| b\t, a\t)}{ q({\s}_{t+1}|{o}_{t+1}, b\t)} \!  + \!   \gamma  {G}^{(\bar{\pi})}(b\nt) \big] \right]}_{\mathcal{G}^{(\bar{\pi})}(b\t,a\t)} \nonumber \\
&\!\!\!\!=\!\!\!\!& \min_{\bar{\pi} \in \Pi} \mathcal{G}^{(\bar{\pi})}(b\t,a\t),
\end{eqnarray}
where (a) follows from the definition of ${G}^{(\bar{\pi}^*)}(b\nt)$, and (b) follows since $P(r\t|b\t,a\t)$, $r\t$, $ P(b\nt,o\nt, \s\nt| b\t,a\t)$, and $  \text{log} \frac{P(\s\nt|  b\t, a\t)}{ q({\s}_{t+1}|{o}_{t+1}, b\t)} $ are independent of $\bar{\pi}$.
\end{proof}
\subsubsection{Proof of Proposition~\ref{pro:unified_Bellman_action}} \label{APP:unified_Bellman_action}
\begin{proof}
The proof follows two steps:\\
First, by replacing  $\mathcal{G}^{(\bar{\pi})}(b\t,a\t)$  with $\mathbb{E}_{P(r\t|b\t,a\t)} \big[-\alpha r\t + \mathbb{E}_{P(b\nt,o\nt, \s\nt| b\t,a\t)} [\zeta \text{log} \frac{P(\s\nt| b\t, a\t)}{q({\s}_{t+1}|{o}_{t+1}, b\t)} + \gamma {G}^{(\bar{\pi})}(b\nt)] \big]$ in Eq.~\eqref{Eq:unified_Bellman_mod}, we can express the belief state EFE ${G}^{(\bar{\pi})}(b\t)$ as a function of the belief state-action EFE ${G}^{(\bar{\pi})}(b\t,a\t)$ as follows:
\begin{eqnarray}
\!\!\!\!\!\!\!\!\!\!\!\!\!\!\!\!   {G}^{(\bar{\pi})}(b\t)  &\!\!\! =\!\!\! & \mathbb{E}_{\bar{\pi}(a\t|b\t)} \left[\beta \, \text{log} {\bar{\pi}}(a\t|b\t) + \mathcal{G}^{(\bar{\pi})}(b\t, a\t)     \right].  \label{Eq:G_g} 
\end{eqnarray}
Now, by plugging ${G}^{(\bar{\pi})}(b\nt) = \mathbb{E}_{\bar{\pi}(a\nt|b\nt)} \left[\text{log} \bar{\pi}(a\nt|b\nt) + \mathcal{G}^{(\bar{\pi})}(b\nt, a\nt) \right] $ obtained from Eq.~\eqref{Eq:G_g} into the result of Lemma~\ref{lem:constr_action_EFE}, the proof is completed:
\begin{eqnarray}
\!\!\!\!\!\!\!\! \!\!\! \!\!\!\!\! \mathcal{G}^{(\bar{\pi})}(b\t,a\t)  &\!\!=\!\!&  \mathbb{E}_{  P(r\t|b\t,a\t)} \bigg[  -\alpha \, r\t+\mathbb{E}_{   P(b\nt,o\nt, \s\nt| b\t,a\t) }  \big[ \zeta \,   \text{log} \frac{P(\s\nt|  b\t, a\t)}{ q({\s}_{t+1}|{o}_{t+1}, b\t)}   \nonumber \\
&\!\!+\!\!&   \gamma \mathbb{E}_{\bar{\pi}(a\nt|b\nt)}[ \beta \, \text{log} {\pi}(a\nt|b\nt) + \mathcal{G}^{(\bar{\pi})}(b\nt, a\nt)]  \big] \bigg].
\end{eqnarray}
\end{proof}
%
\subsubsection{Proof of Theorem~\ref{lem:policy_evaluation}} \label{APP:policy_evaluation}
\begin{proof}
Following~\cite{sutton2018reinforcement, NEURIPS2019_13384ffc}, let's define $P^{\pi}(b\nt, o\nt, \s\nt, b\t, a\t)$ and $R^{unified}(b\t, a\t)$ as
\begin{eqnarray}
\!\!\!\!\! P_{\bar{\pi}}( b\t,a\t, b\nt, a\nt) :=\mathbb{E}_{P(o\nt, s\nt| b\t,a\t)} [P(b\nt|s\nt, o\nt, b\t, a\t) \bar{\pi}(a\nt|b\nt)  ],
 \end{eqnarray}
\begin{eqnarray}
\!\!\!\!\!\!\!\!\!\!\!\!\!\!\!\! R^{unified}(b\t,a\t) &\!\!\! := \!\!\! & \mathbb{E}_{  P(r\t|b\t,a\t)} \bigg[  -\alpha \, r\t+\mathbb{E}_{   P(b\nt,o\nt, \s\nt| b\t,a\t) }  \big[ \zeta \,   \text{log} \frac{P(\s\nt|  b\t, a\t)}{ q({\s}_{t+1}|{o}_{t+1}, b\t)}   \nonumber \\
&\!\!+\!\!&   \gamma \mathbb{E}_{\bar{\pi}(a\nt|b\nt)}[ \beta \, \text{log} {\pi}(a\nt|b\nt) ] \big] \bigg]. \label{Eq:r_unified}
 \end{eqnarray}
The compactness assumptions on the state space $\mathcal{S}$, action space $\mathcal{A}$, and observation space $\mathcal{O}$ guarantee that the space of all belief states $b\ta \in \mathcal{B}$ is a compact subset of a separable Hilbert space. As a result, both $P_{\bar{\pi}}( b\t,a\t, b\nt, a\nt)$ and $R^{unified}(b\t, a\t)$ are bounded. With these assumptions in place, we can express $T^{unified}_{\bar{\pi}} \mathcal{G}$ in a compact form as follows:
\begin{eqnarray}
\!\!\!\!\! T^{unified}_{\bar{\pi}} \mathcal{G} =  R^{unified} + \gamma P^{\pi}  \mathcal{G}, \nonumber
 \end{eqnarray}
By starting from an initial bounded mapping $\mathcal{G}_0: \mathcal{B} \times \mathcal{A} \rightarrow \mathbb{R}$ and recursively applying $\mathcal{G}\nk = T^{unified}_{\bar{\pi}} \mathcal{G}\k$ for $k = 0, 1, 2, \ldots$, we have
\begin{eqnarray}
\!\!\!\!\! \mathcal{G}^{\bar{\pi}} := \lim_{k \rightarrow \infty} T^{unified}_{\bar{\pi}} \mathcal{G}\k &\!\!\! = \!\!\! & \text{lim}_{k \rightarrow \infty} \sum_{i=0}^{k-1} \gamma^i P_{\bar{\pi}}^i  R^{unified} + \gamma^k P_{\bar{\pi}}^k \mathcal{G}_0 \nonumber \\
&\!\!\! \overset{\mathrm{(a)}}{=}\!\!\! & \text{lim}_{k \rightarrow \infty} \sum_{i=0}^{k-1} \gamma^i P_{\bar{\pi}}^i  R^{unified},
 \end{eqnarray}
where (a)  is because  $\mathcal{G}_0$ and $P_{\bar{\pi}} $ are bounded and thus the term $\gamma^k P_{\bar{\pi}}^k \mathcal{G}_0$ will converge to zero. Therefore the convergence of $T^{unified}_{\bar{\pi}} \mathcal{G}^{\bar{\pi}}$ does not depend on the initial value $\mathcal{G}_0$. Now, by substituting $\mathcal{G}^{\bar{\pi}}$ with $\text{lim}_{k \rightarrow \infty} \sum_{i=0}^{k-1} \gamma^i P_{\bar{\pi}}^i  R^{unified}$, we have:
\begin{eqnarray}
\!\!\!\!\! T^{unified}_{\bar{\pi}} \mathcal{G}^{\bar{\pi}} &\!\!\! = \!\!\! & R^{unified} + \gamma P^{\pi}  \lim_{k \rightarrow \infty} \sum_{i=0}^{k-1} \gamma^i P_{\bar{\pi}}^i  R^{unified} = \lim_{k \rightarrow \infty} \sum_{i=0}^{k} \gamma^i P_{\bar{\pi}}^i  R^{unified} \nonumber  \\
&\!\!\! \overset{\mathrm{(a)}}{=} \!\!\! &\lim_{k \rightarrow \infty} \sum_{i=0}^{k-1} \gamma^i P_{\bar{\pi}}^i  R^{unified} + \underbrace{\gamma^k P_{\bar{\pi}}^k R^{unified}}_{0} = \mathcal{G}^{\bar{\pi}}, \label{Eq:fixed_point}
 \end{eqnarray}
where (a)  follows since  $R^{unified}$ is bounded. Eq.~\eqref{Eq:fixed_point} demonstrates that $\mathcal{G}^{\bar{\pi}}$ is a fixed point of $T^{unified}_{\bar{\pi}} \mathcal{G}^{\bar{\pi}}$. \\
To establish the uniqueness of this fixed point, let's assume there exists another fixed point $\mathcal{G}'$ of $T^{unified}_{\bar{\pi}}$ such
that $T^{unified}_{\bar{\pi}}\mathcal{G}'=\mathcal{G}'$. Then, $\mathcal{G}'=\lim_{k \rightarrow \infty} T^{unified}_{\bar{\pi}} \mathcal{G}\k$ with $\mathcal{G}_0 = \mathcal{G}'$ converges to $\mathcal{G}^{\bar{\pi}}$ since the convergence behavior of $T^{unified}_{\bar{\pi}}$ is independent of the initial value $\mathcal{G}'$. Consequently, we can conclude that $\mathcal{G}' = \mathcal{G}^{\bar{\pi}}$. 
\end{proof}
 %
\subsubsection{Proof of Corollary~\ref{theo:action_optimal_policy}} \label{APP:action_optimal_policy}
\begin{proof}
This result can be easily obtained by replacing $\bar{\pi}$ and $\mathcal{G}^{(\bar{\pi})}(b\t,a\t)$ in the result of Theorem~\ref{lem:policy_evaluation} with the optimal policy $\bar{\pi}^*$ and its corresponding belief state-action value function $\mathcal{G}^{(\bar{\pi}^*)}(b\t,a\t)$, i.e., the optimal belief state-action EFE.
\end{proof}
\subsubsection{Proof of Lemma~\ref{lem:policy_impr}} \label{APP:policy_impr}
\begin{proof}
Based on the definition of $\bar{\pi}^{(new)}$, for any $\bar{\pi} \in \bar{\Pi}$:
\begin{eqnarray}
   \! \!  \! \! \! \!  \! \! \! \!  D_{KL} \!\left[{\bar{\pi}}^{(new)}(.|b\t), \sigma \left( \frac{- \mathcal{G}^{(\bar{\pi}^{(old)})}(b\t,.)}{\beta} \right) \right] 
\! \leq D_{KL} \! \! \left[\bar{\pi}(.|b\t), \sigma \left( \frac{-\mathcal{G}^{(\bar{\pi}^{(old)})}(b\t,.)}{\beta} \right) \right] \! \! . \label{Eq:KL_in_APP}
\end{eqnarray}
By choosing $\bar{\pi}=\bar{\pi}^{(old)}$ in Eq.~\eqref{Eq:KL_in_APP} and using definition of KL-divergence:
\begin{eqnarray}
\lefteqn{  \!\!\!\!\!\!\!\!\!\!\!\!\!\!\!\! \!\!\!\!  - \beta\, \mathcal{H} ({\bar{\pi}}^{(new)}(.|b\t)) + \mathbb{E}_{{\bar{\pi}}^{(new)}(a\t|b\t)} \left[ \mathcal{G}^{(\bar{\pi}^{(old)})}(b\t, a\t)     \right]} \label{Eq:compare}     \\
&& \,\,\,\,\,\,\,\,\,\,\,\,\,\,\,\,\,\,\,\,   \leq - \beta\, \mathcal{H} (\bar{\pi}^{(old)}(.|b\t)) + \mathbb{E}_{\bar{\pi}^{(old)}(a\t|b\t)} \left[ \mathcal{G}^{(\bar{\pi}^{(old)})}(b\t, a\t)     \right] =  {G}^{(\bar{\pi}^{(old)})}(b\t). \nonumber
\end{eqnarray}
Moreover, from Eq.~\eqref{Eq:g_G}, we have:
\begin{eqnarray}
\!\!\!\!\!\!\!\!\!\!\!\!\!\!\!\!\! \mathcal{G}^{(\bar{\pi}^{(old)})}(b\t, a\t)  &\!\!=\!\!&  \mathbb{E}_{  P(r\t|b\t,a\t)} \bigg[  -\alpha \, r\t+\mathbb{E}_{   P(b\nt,o\nt, \s\nt| b\t,a\t) }  \big[ \zeta \,   \text{log} \frac{P(\s\nt|  b\t, a\t)}{ q({\s}_{t+1}|{o}_{t+1}, b\t)}   \nonumber \\
&\!\!+\!\!&   \gamma  {G}^{(\bar{\pi}^{(old)})}(b\nt)  \big] \bigg] \label{Eq:extend} 
\end{eqnarray}
Now,  by repeatedly applying Eq.~\eqref{Eq:extend} and the bound in Eq.~\eqref{Eq:compare}:
\begin{eqnarray}
\!\!\!\!\!\!\!\!\!\!\!\!\!\!\!\!\! \mathcal{G}^{(\bar{\pi}^{(old)})}(b\t, a\t)  &\!\!=\!\!&  \mathbb{E}_{  P(r\t|b\t,a\t)} \bigg[  -\alpha \, r\t+\mathbb{E}_{   P(b\nt,o\nt, \s\nt| b\t,a\t) }  \big[ \zeta \,   \text{log} \frac{P(\s\nt|  b\t, a\t)}{ q({\s}_{t+1}|{o}_{t+1}, b\t)}   \nonumber \\
&\!\!+\!\!&   \gamma  {G}^{(\bar{\pi}^{(old)})}(b\nt)  \big] \bigg] \nonumber \\
\!\!\!\!\!\!\!\!\!\!\!\!\!\!\!\!\!\!\!\!\!\! &\!\! {\geq} \!\!&  \mathbb{E}_{  P(r\t|b\t,a\t)} \bigg[  -\alpha \, r\t+\mathbb{E}_{   P(b\nt,o\nt, \s\nt| b\t,a\t) }  \big[ \zeta \,   \text{log} \frac{P(\s\nt|  b\t, a\t)}{ q({\s}_{t+1}|{o}_{t+1}, b\t)}   \nonumber \\
\!\!\!\!\!\!\!\!\!\!\!\!\!\!\!\!\!\!\!\!   &\!\!-\!\!& \beta\, \mathcal{H} ({\bar{\pi}}^{(new)}(.|b\t)) + \mathbb{E}_{{\bar{\pi}}^{(new)}(a\t|b\t)} \left[ \mathcal{G}^{(\bar{\pi}^{(old)})}(b\t, a\t)     \right] \\
&& \vdots \nonumber \\
\!\! \!\!\!\!\!\!\!\!\!\!\!\!\!\!\!\! &\!\! \overset{\mathrm{(a)}} {\geq} \!\!&    \mathcal{G}^{(\bar{\pi}^{(new)})}(b\t, a\t),
\end{eqnarray}
where (a) results from Theorem~\ref{lem:policy_evaluation}. 
\end{proof}

\subsubsection{Proof of Theorem~\ref{the:policy_iteraion}} \label{APP:policy_iteraion}
\begin{proof}
Let $\bar{\pi}\k$ be the policy at iteration $k$ of the policy iteration process. According to Lemma~\ref{lem:policy_impr}, the sequence $\mathcal{G}^{(\bar{\pi}\k)}$ is monotonically decreasing. Since $R^{unified}$ defined in Eq.~\eqref{Eq:r_unified} is bounded, $\mathcal{G}^{(\bar{\pi})}=\mathbb{E}[\sum_{\tau=t}^{\infty} \gamma^{\tau-t} R^{unified}(b\ta, a\ta))]$  is also bounded. The sequence $\mathcal{G}^{(\bar{\pi}\k)}$ and consequently sequence $\bar{\pi}\k$ converge to some $\mathcal{G}^{(\bar{\pi}^*)}$ and policy $\bar{\pi}^*$. We still need to show that $\bar{\pi}^*$ is the optimal policy. To do so, we need to show that the belief state-action of the converged policy is lower than any other policy in $\bar{\Pi}$, i.e., $\mathcal{G}^{(\bar{\pi}^*)}(b\t,a\t) < \mathcal{G}^{(\bar{\pi})}(b\t,a\t)$ for all $\bar{\pi} \in \Pi$ and all $(b\t,a\t) \in \mB \times \mA $. This can be achieved by following the same iterative steps outlined in proof of Lemma~\ref{lem:policy_impr}. 
\end{proof}
\subsection{Unified inference model}
In this section, we elaborate on the proposed unified inference model, encompassing perceptual inference and learning, as well as the unified actor-critic framework. We delve into three training methods for the unified actor-critic: model-based, model-free, and hybrid approaches. Furthermore, we present the pseudocode for the unified inference model.
\subsubsection{{Perceptual inference and learning model}} \label{sub-sec:condionined_perc}
{ As stated in Sub-section~\ref{sub:AIF_inference}, AIF approximates generative model $P(o\t,\s\t|a_{t-1}, \s_{t-1})=P(o\t|\s\t)P(\s\t|s\pt, a\pt)$ and the variational distribution $q(s\t|o\t)$ by minimizing the VFE (negative of ELBO in VAE). However, in our proposed unified inference, we need  to approximate the belief state-conditioned variational posterior $q({\s_{t}}|o_{t}, b_{t-1})$ and the belief state generative model defined in Eq.~\eqref{Eq:beleif_GM}, i.e.,
\begin{eqnarray}
 \!\!\!\!\!\!\!\!\!\!\!\! P(o\t,\s\t, b\t|a_{t-1}, b_{t-1})=P(o\t|\s\t,b\pt)P(\s\t|b\pt, a\pt) P(b\t|s\t, o\t, b\pt,a\pt ). \label{Eq:instant_beleif_GM}
 \end{eqnarray}
To achieve this, we use the Variational Recurrent Neural Networks (VRNN) model~\cite{chung2015recurrent}. VRNNs combine the variational inference of VAEs with the sequential modeling capabilities of RNNs. They minimize the VFE conditioned on a variable of an RNN, enabling the modeling of temporal dependencies and generating sequential data. However, in our case, where the POMDP assumes a continuous state space (i.e., an infinite number of latent states), the corresponding belief space $b_{t-1}$ is a continuous probability distribution function with an infinite number of dimensions. Therefore, we need to approximate $b\pt$ to use it as an input to the VRNN model for approximating  $P({o}_t|{s}_t, {b}_{t-1})$, $P({s}_t|{b}_{t-1},{a}_{t-1})$, $P(b\t|s\t, o\t, b\pt,a\pt )$, and $q({s}_t|{o}_t, {b}_{t-1})$. 
\\
Various approaches can be used to approximate a continuous belief state for input to a neural network model, including vector representation, kernel density estimation (KDE), discretization, and particle filtering~\cite{igl2018deep}. These methods aim to transform the continuous belief state into a fixed-length representation that can be fed into the neural network. One common approach is to learn a belief representation through the belief state update function in Eq.~\eqref{Eq:beleif-state}~\cite{gregor2019shaping}. Following established practices, we learn an explicit belief state representation $h_t \in \mathbb{R}^{D_{\mathcal{H}}}$, where $D_{\mathcal{H}}$ represents the dimension of the belief state representation $h_t$.\footnote{While our unified inference framework can be combined with various algorithms for approximating the belief state, we employ a commonly used approximation method that has demonstrated effectiveness in many recent approaches.} We learn this belief state representation  through a deterministic non-linear function $f$ that updates the belief state representation based on the previous belief state representation, the previous action, the current observation, and the current state:
\begin{eqnarray}
 \!\!\!\!\!\!\!\!\!\!\!\! h\t &\!\! = \!\!&   f(h\pt, a\pt, o\t, s\t).  \label{Eq:RNN}
 \end{eqnarray}
 By iteratively applying the update function $f$, the belief state representation can adapt and evolve over time as new observations and actions are encountered.  This approach allows us to approximate the belief state representation in a continuous space and utilize it as an input to a VRNN model for approximating $P({o}_t|{s}_t, {h}_{t-1})$, $P({s}_t|{h}_{t-1},{a}_{t-1})$, $P(h\t|s\t, o\t, h\pt,a\pt )$, and $q({s}_t|{o}_t, {h}_{t-1})$. Given the transition model $P(s\t|b\pt, a\pt)$ and the update function $f(h\pt, a\pt, o\t, s\t)$, we can intuitively say that the states are split into a stochastic part $\s\t$ and a deterministic part $h\t$. This aligns with the modifications made in previous RL and AIF works that heuristically incorporate this separation~\cite{Hafner2020Dream, ogishima2021reinforced, lee2020stochastic, han2020variational}. 
 \\
 We parameterize the posterior distribution with $\nu$, denoted as $q_{\nu}(\s\t|o\t, h\pt)$, and the agent's belief state generative model with $\theta$, i.e., $P_{\theta}(o\t|\s\t, h\pt)$, $P_{\theta}(\s\t|h\pt,a\pt)$, and $P_{\theta}(h\t|s\t, o\t, h\pt,a\pt )=\delta(h\t-f_{\theta}(h\pt, a\pt, o\t, s\t))$.  
  To model $q_{\nu}(\s\t|o\t, b\pt)$, $P_{\theta}(o\t|\s\t. b\pt)$, and $P_{\theta}(\s\t|b\pt,a\pt)$ we use DNNs that output the mean and standard deviation of the random variables according to the Gaussian distribution. For example, we model $q_{\nu}({s}_t|{o}_t, {b}_{t-1})$ as a network that takes ${o}_t$ and ${b}_{t-1}$ as inputs, calculates through several hidden layers, and outputs the Gaussian distribution of ${s}_t$. The update function $f_{\theta}$ can be implemented using gated activation functions such as Long Short-Term Memory (LSTM) or Gated Recurrent Unit (GRU). We use LSTM~\cite{hochreiter1997long} as it has shown good performance in general cases.
\\
The parameters $\nu$ and $\theta$ are trained by minimizing the objective function of VRNN, given by
\begin{eqnarray}
 \!\!\!\!\!\!\!\!\!\!\!\! F^{\text{VRNN}}_{\theta, \nu} \!\! &\!\! = \!\!& \!\!  -\mathbb{E}_{q(\s\t|o\t, h\pt)} \left[\text{log} P(o\t|\s\t, h\pt) \right] +  D_{KL} [q_{\nu}(.|o\t, h\pt), P_{\theta}(.|b\pt,a\pt)].  \label{Eq:VFE_VRNN} 
 \end{eqnarray}
%
The VRNN objective function $F^{\text{VRNN}}_{\theta, \nu}$ is minimized using gradient descent on batches of data sampled from a dataset called a replay buffer, denoted as $\mathcal{D}$. The replay buffer contains quadruples consisting of the agent's action $a_k$, the received extrinsic reward $r\nk$ (caused by the action), and the subsequent observation $o_{k+1}$. The VFE can be expressed as an expectation over a batch of size $B$, consisting of $M$ sequential data points, denoted as $\{( a\pk, o_k)_{k=1}^{M} \}_{i=1}^B$, which are sampled from the replay buffer $\mathcal{D}$. 
\begin{eqnarray}
 \!\!\!\!\!\!\!\!\!\!\!\! F^{\text{VRNN}}_{\theta, \nu} &\!\! = \!\!&  \mathbb{E}_{ \mathcal{D}(a\pk,o\k)}\bigg[ -\mathbb{E}_{q_{\nu}(\s\t|o\k, h\pt)} \left[\text{log} P_{\theta}(o\k|\s\t, h\pt) \right] \nonumber \\
&\!\! + \!\!&  D_{KL} [q_{\nu}(.|o\t, h\pt), P_{\theta}(.|h\pt,a\pk)] \bigg] \label{Eq:VFE_replay} 
\end{eqnarray}
The expectation with respect to $q_{\nu}(s_t | o_k, h\pt)$ in Eq.~\eqref{Eq:VFE_replay} involves integrating over the continuous state space $\mathcal{S}$, which is not analytically tractable. To estimate $F^{\text{VRNN}}_{\theta, \nu}$, we use Monte Carlo estimation, which provides an unbiased estimation by sampling $s_t\l$ for $l\in \{1,2, ..., L \}$ from $q_{\nu}(s_t | o_k, h\pt)$ and evaluating $F^{\text{VRNN}}_{\theta, \nu}$ for each sampled state. The estimate of $F^{\text{VRNN}}_{\theta, \nu}$ is obtained by taking the average over the samples.
However, computing the gradient of $F^{\text{VRNN}}_{\theta, \nu}$ with respect to $\nu$ requires differentiating with respect to $q_{\nu}(s_t | o_k, h\pt)$, which is the distribution we sampled from. To enable gradient-based optimization, it is necessary to have a differentiable sampling process that generates samples from a probability distribution. The reparameterization trick~\cite{kingma2013auto} provides a solution to this problem by reparameterizing the original distribution and introducing an auxiliary variable that follows a fixed distribution, such as a standard Gaussian. This reparameterization allows us to obtain differentiable samples by applying a deterministic transformation to the auxiliary variable. As a result, we can backpropagate gradients through the sampling operation and compute gradients with respect to the distribution parameters.
In our approach, we draw inspiration from established practices in the field~\cite{lee2020stochastic, han2020variational, Hafner2020Dream} and leverage the reparameterization trick~\cite{kingma2013auto} to sample $s_t^{l}$ from the distribution $q_{\nu}(s_t | o_k, h\pt)$. Given samples $s_t^{(l)}$, belief state representation $h_t$ is then computed as $h_t = \frac{1}{L} \sum_{l=1}^L f_{\theta}(h_{t-1}, a_{k-1}, o_k,s\t\l )$.
}
\subsubsection{Reward model learning} \label{APP:reward_model}
We approximate $P(r_t|h_t,a_t)$ using a DNN with parameter $\xi$, denoted as $P_{\xi}(r_t|h_t,a_t)$. The reward model approximation is a supervised learning problem that can be addressed through likelihood maximization, using samples from the replay buffer $\mathcal{D}$. This results in minimizing the following loss function:
\begin{eqnarray}
 L_{r}(\xi) = -\mathbb{E}_{\mathcal{D}( a\k, r\k)} [\text{log} P_{\xi}(r\k|h\t,a\k)].  \label{Eq:r_loss}
\end{eqnarray}
\subsubsection{Pseudocode for the unified inference } \label{App:algorithm}
Algorithm~\ref{Algorithm:unified iteration} outlines the pseudocode for the proposed unified inference framework. We use a variable $mode$ to represent the approach (model-based, model-free or hybrid) used for training the proposed unified actor-critic model.
\begin{algorithm}[tp]
\caption{\textproc{Unified inference model}}
\label{Algorithm:unified iteration}
\begin{algorithmic}[1]
\State \textbf{Model components:} Likelihood $P_{\theta}(o\t|\s\t, h\pt)$, variational posterior $q_{\nu}(\s\t|o\t, h\pt)$, transition model $P_{\theta}(\s\nt|h\t,a\t)$,  belief state representation update function $f_{\theta}(h\pt, a\pt, o\t,s\t)$, reward model $P_\xi(r_t|h_t,a_t)$, belief state-action EFE $\mathcal{G}_{\psi}(h\t,a\t)$, belief state-action policy $\bar{\pi}_{\phi}(a\t|h\t)$.
\State \textbf{Hyperparameters:} Scaling parameters $\alpha$, $\beta$, $\zeta$, and $c$, discount factor $\gamma$, learning rate $\lambda$, actor-critic learning mode indicator $I_{mode}$, batch size $B$, sequence  length $M$, imagination horizon $N$.
\State \textbf{Initialize:}  Neural network parameters $\theta$,  $\nu$, $\xi$, $\psi$, and $\phi$, global step $t \leftarrow 0$.
\State  \textbf{while} not converged  \textbf{do}:
\State \quad Reset the environment and initialize $h_0$ randomly.
\State \quad \textbf{for} each environment step $t$ \textbf{do}:
\State \quad  \quad Receive observation $o\t$
  \State \quad \quad   Sample $\s\t \sim q_{\nu}(\s\t|o\t, h\pt)$   
\State \quad \quad Sample action ${a}\t\sim \bar{\pi}_{\phi}(a\t|h\t)$ and execute $a\t$ in the environment.
\State \quad  \quad Receive observation $o\nt$ and extrinsic reward $ r\t$.
\State \quad \quad  Record $( a\t, r\t, o\nt)$ into $\mathcal{D}$.
\State \quad \textbf{end for}
\State \quad  \textbf{for} each gradient step \textbf{do}:
\State \quad  \quad Sample a minibatch $\{( a\pk, o\k)_{k=1}^{M} \}_{i=1}^B$ from $\mathcal{D}$.
\State \quad \quad Compute $F^{\text{VRNN}}_{\theta, \nu}$ from Eq.~\eqref{Eq:VFE_replay}. 
\State \quad \quad  \multiline{ Update $\theta \leftarrow \theta - \lambda  \nabla_{\theta}{F^{\text{VRNN}}_{\theta, \nu}  } .$ } 
\State \quad \quad  \multiline{ Update $\nu \leftarrow \nu - \lambda  \nabla_{\nu}{F^{\text{VRNN}}_{\theta, \nu}  } .$ }
\State \quad \quad  \textbf{if} $I_{mode}==model-free$ \textbf{then}: \algorithmiccomment{Model-free unified actor-critic.}
\State \quad \quad  \quad Sample a minibatch $\{ (a\k, r\k, o\nk)_{k=1}^{M} \}_{i=1}^B$ from $\mathcal{D}$.
\State \quad \quad  \quad Compute $L_{\mathcal{G}}^{\text{MF}}(\psi)$ from Eq.~\eqref{Eq:loss_psi_MF} and set $L_{\mathcal{G}}(\psi) = L_{\mathcal{G}}^{\text{MF}}(\psi)$.
\State \quad \quad  \textbf{end if}
\State \quad \quad  \textbf{if} $I_{mode}==model-based$ \textbf{then}:  \algorithmiccomment{Model-based unified actor-critic.}
\State \quad \quad  \quad Compute $ L_{r}(\xi) $ from Eq.~\eqref{Eq:r_loss}. 
\State \quad \quad \quad  \multiline{ Update ${\xi} \leftarrow \xi - \lambda_{r}  \nabla_{\xi}{ L_{r}(\xi) } .$ }
\State \quad \quad \quad  Imagine trajectories $\{ (a\ta, r\ta, o_{\tau+1})_{\tau=t}^{t+N} \}_{i=1}^B$ starting from $h\t$.
\State \quad \quad \quad  Compute  $L_{\mathcal{G}}^{\text{MB}}(\psi)$ from Eq.~\eqref{Eq:loss_psi_MB} and set $L_{\mathcal{G}}(\psi) = L_{\mathcal{G}}^{\text{MB}}(\psi)$. 
\State \quad \quad  \textbf{end if}
\State \quad \quad  \textbf{if} $I_{mode}==hybrid$ \textbf{then}: \algorithmiccomment{Hybrid unified actor-critic.}
\State \quad \quad  \quad   Sample a minibatch $\{( a\pk, o\k)_{k=1}^{M} \}_{i=1}^B$ from $\mathcal{D}$.
\State \quad \quad \quad Compute $L_{\mathcal{G}}^{\text{MF}}(\psi)$ from Eq.~\eqref{Eq:loss_psi_MF}. 
\State \quad \quad \quad   Imagine trajectories $\{ (a\ta, r\ta, o_{\tau+1})_{\tau=t}^{t+N} \}_{i=1}^B$ starting from $h\t$.
\State \quad \quad \quad  Compute $L_{\mathcal{G}}^{\text{MB}}(\psi)$ from Eq.~\eqref{Eq:loss_psi_MB}. 
\State \quad \quad \quad Set $L_{\mathcal{G}}(\psi) =c L_{\mathcal{G}}^{\text{MF}}(\psi)+L_{\mathcal{G}}^{\text{MB}}(\psi) $.
\State \quad \quad  \textbf{end if}
\State \quad \quad   \multiline{  $\psi \leftarrow \psi - \lambda \nabla_{\psi}{L_{\mathcal{G}}(\psi)} .$ }
\State \quad \quad Compute $L_{\mathcal{\bar{\pi}}}(\phi)$ from Eq.~\eqref{Eq:pi_loss}. 
\State \quad \quad   \multiline{  $\phi \leftarrow \phi - \lambda \nabla_{\phi}{L_{\mathcal{\bar{\pi}}}(\phi)} .$ }
\State \quad \quad  $t \leftarrow t+1$.
\State \quad  \textbf{end for}
\State \textbf{end while}
\end{algorithmic}
\end{algorithm}

\subsection{Environments} \label{sec:APP_POMDP_modification}
We utilize the PyBullet~\cite{coumans2016pybullet} benchmarks, replacing the deprecated Roboschool as recommended by the official GitHub repository\footnote{\url{https://github.com/openai/roboschool}}. To convert these PyBullet benchmarks into tasks with partial observations, we made modifications by retaining the velocities while removing all position/angle-related entries from the observation vector. Table~\ref{Table:POMDP_modification} provides a summary of key information for each environment.
\begin{table}[htb]
\caption{Information of the Roboschool PyBullet environments we used.}\label{Table:POMDP_modification}
\centering
\begin{tabular}{|P{5.cm}|P{.7cm}|P{3.7 cm}|}
\hline
\textbf{\!\!\!\ {Task} \!\!\!} & { \textbf{$D_{\mS}$}} & { \textbf{$D_{\mO}$} (velocities only)} \\
\hline
{HalfCheetahPyBulletEnv} & {26} & {9}   \\
\hline
{HopperPyBulletEnv} & {15} &  {6 } \\
\hline
{AntPyBulletEnv} & {28} &  { 9 } 
\\ \hline 
Walker2DPyBulletEnv  & {22} &  { 9 } 
\\ \hline 
\end{tabular}
\end{table}
\subsection{Implementation details} \label{sec:APP_Implementation Details}
In this section, we describe the implementation details for our algorithm and the alternative approaches.
\\
For  Recurrent Model-Free~\cite{ni2022recurrent}, we followed its original implementation, including the hyperparameters. The official implementation of VRM~\cite{han2020variational} was also used for the inference, encoding, decoding, and actor-critic networks. We ensured a fair comparison for G-Dreamer and G-SAC by adopting the same network structure as VRM, including VRNN and actor-critic networks.

 While the original Dreamer framework~\cite{Hafner2020Dream} used pixel observations and employed convolutional encoder and decoder networks, we made adaptations to suit our partially observable case. Specifically, we replaced CNNs and transposed CNNs with two-layer MLPs, each containing $256$ units. Additionally, we adjusted Dreamer's actor and critic network parameters to match those of VRM, G-Dreamer, and G-SAC, ensuring a consistent configuration for fair comparison.
\subsubsection{VRNN} 
In accordance with~\cite{han2020variational}, we set the dimensionality of the belief state representation $h$ to $256$. To model the variational posterior $q_{\nu}(\s\t|o\t, h\pt)$, we employ a one-hidden-layer fully-connected network with $128$ hidden neurons. The likelihood function $P_{\theta}(o\t|\s\t, h\pt)$, the transition function $P_{\theta}(\s\nt|h\t,a\t)$, and the reward function $P_{\xi}(r_t|h_t,a_t)$ are modelled using two-layer MLPs with $128$ neurons in each layer.  All of these networks are designed as Gaussian layers, wherein the output is represented by a multivariate normal distribution with diagonal variance. The output functions for the mean are linear, while the output functions for the variance employ a non-linear softplus activation. 
For the belief state representation update model $f_{\theta}(h\pt, a\pt, o\t,s\t)$, we utilize an LSTM architecture.
\subsubsection{Actor-Critic} 
We represent the actor $\bar{\pi}_{\phi}(a\t|h\t)$ as a diagonal multivariate Gaussian distribution. Both the actor network $\bar{\pi}_{\phi}(a\t|h\t)$ and the critic network $ \mathcal{G}_{\psi}(h\t,a\t)$ consist of $4$ fully connected layers with an intermediate hidden dimension of $256$.
The actor model generates a linear mean for the Gaussian distribution, while the standard deviation is computed using the softplus activation function. The resulting standard deviation is then transformed using the tanh function.
On the other hand, the critic model utilizes a linear output layer, which provides the estimated value of the given belief state-action pair.
\subsubsection{Model Learning}
The VRNN parameters are trained with a learning rate of $0.0008$. The training is conducted using batches of $4$ sequences, each with a length of $64$.
The critic and actor parameters are trained with a learning rate of $0.0003$ for G-SAC. Batches of $4$ sequences, each with a length of $64$, are used during training.
For G-Dreamer, the actor and critic are trained with a learning rate of $0.00008$, and batches of $50$ sequences, each with a length of $50$, are employed.
All of the model parameters are optimized using the Adam optimizer. During training, a single gradient step is taken per environment step.
The imagination horizon, $N$, is set to $15$, and the discount factor $\gamma$ is set to $0.99$ for both G-Dreamer and G-SAC. In addition, the scaling parameter $\beta$  and $\zeta$ is set to $1$ for both algorithms.
For a comprehensive summary of the hyperparameters, please refer to Tables~\ref{Table:hyperparams}.
\begin{table*}[htb]
\caption{ Hyperparameters and network setup used to implement the proposed G-SAC and G-Dreamer algorithms.  }\label{Table:hyperparams}
\centering
\begin{tabular}{l |c|c}
\hline
\textbf{\!\!\!\ Hyperparameter \!\!\!} & \textbf{G-SAC} & \textbf{G-Dreamer}\\
\hline
Discount factor $\gamma$ & $0.99$ & $0.99$ \\
Scaling factor $\beta$  and $\zeta$ & $1$ & $1$ \\
Optimizer for all the networks & Adam  & Adam\\
Learning rate for VRNN parameters $\nu$ and $\theta$ & $0.0008$ & $0.0008$\\
Learning rate for the reward parameter $\xi$ & $-$ & $0.0008$\\
Batch size for $\nu$, $\theta$, and $\xi$ & $4$  & $4$\\
Sequence length for $\nu$, $\theta$, and $\xi$ & $64$  & $64$\\
Learning rate for $\mathcal{G}_{\psi}$ and $\bar{\pi}_{\phi}$ & $0.0003$ & $0.00008$ \\
Batch size for $\psi$ and $\phi$ & $4$  & $50$\\
Sequence length for for $\psi$ and $\phi$ & $64$  & $50$\\
The imagination horizon $N$ & $-$  & $15$\\
\end{tabular}
\end{table*}
\subsection{{Comparison of computational and memory complexity}}\label{app: computation}
{Table~\ref{Table:computation} compares the computational speeds and memory usage of VRM, Recurrent Model-Free, G-SAC, and G-Dreamer when implemented over $1$ million steps in the Hopper-P environment, utilizing an Nvidia V100 GPU and 10 CPU cores for each training run. This comparison allows us to assess the computational time required for each method as follows:}
\begin{table}[!b]
\caption{{ Time and memory cost comparison in Hopper-P for $1$ million steps. }}\label{Table:computation}
\centering
\begin{tabular}{|P{6.9cm}|P{3.cm}| P{3.cm}|}
\hline
\textbf{\!\!\!\ {Method} \!\!\!}  & { \textbf{Time}} & { \textbf{Memory}}  \\
\hline
{G-Dreamer (ours)} &  {8.7 hours} &  {840 MB}   \\
\hline
{VRM~\cite{han2020variational}} &  {3 hours} &  {600 MB}  \\
\hline
{G-SAC (ours)} & {4.8 hours} &  {720 MB}  \\
\hline
{Recurrent Model-Free~\cite{ni2022recurrent}} &  {17.5 hours}  &  {1.2 GB} 
\\ \hline 
\end{tabular}
\end{table}
\\
{ VRM has the lowest computing time and memory. VRM involves perceptual learning and inference by learning a generative model and inferring belief states via feed-forward neural networks. It also learns belief state representations through an LSTM. Our proposed model-free G-SAC algorithm exhibits similar computational efficiency, with the addition of KL divergence calculation in the information gain term of its unified objective function (referenced in Eq.~\eqref{Eq:Objective}), leading to slightly increased computational and memory demands compared to VRM. Our proposed model-based G-Dreamer requires more computation and memory than both VRM and G-SAC, as it not only encompasses the tasks performed by G-SAC but also learns the reward function, further elevating its computational load.
\\
Recurrent Model-Free, on the other hand, showcases the highest computational time, primarily attributed to its RNN layer (LSTM), which processes the complete history of observations and actions as inputs. At each time step $t$, an additional $7$-dimensional input (comprising a $6$-dimensional observation $o_t$ and a scalar action $a_t$ in Hopper-P) is incorporated into the LSTM used in Recurrent Model-Free, posing challenges in RNN training.
\\
It should be noted that while VRM, G-SAC, and G-Dreamer also entail significant computational demands arising from generative model learning, belief state inference, and representation via $h_t = f(h_{t-1}, a_t, o_t, s_t)$, with $h_t$ being $256$-dimensional, their complexity in our experiments is comparatively lower than that of Recurrent Model-Free. This disparity can be attributed to two key factors:
\\
\textit{i)} VRM, G-SAC, and G-Dreamer learn the generative model and belief state via feed-forward neural networks, which are less computationally demanding than RNNs.
\\
\textit{ii)} Although belief state representation learning $h_t = f(h_{t-1}, a_t, o_t, s_t)$ in VRM, G-SAC, and G-Dreamer is conducted via an LSTM with a high-dimensional input (a $278$-dimensional combination, including a $256$-dimensional $h_{t-1}$, a scalar $a_t$, a $6$-dimensional $o_t$, and a $15$-dimensional $s_t$ in Hopper-P), the input length remains fixed over all decision-making steps into the future.  In contrast, the LSTM in Recurrent Model-Free starts with an initial $6$-dimensional observation $o_0$, with an additional $7$ dimensions added after each step. Consequently, after about $39$ time steps, the input dimension of the LSTM in Recurrent Model-Free ($6 + 39 \times 7 = 279$) surpasses the $278$-dimensional input in VRM, G-SAC, and G-Dreamer. Table~\ref{Table:computation} shows that this increase in input dimension leads to an increase in computation that exceeds that of VRM, G-SAC, and G-Dreamer. The subsequent increase in input dimensionality leads to higher computational demands, surpassing those of VRM, G-SAC, and G-Dreamer. This is because inputs with higher dimensions in a network require larger matrix multiplications per step, leading to increased computational costs.
This highlights the significant impact of increased input dimensions in RNNs on computational demand, especially relevant in our infinite horizon POMDP setting, where the agent may operate over a vast number of steps. Furthermore, more memory is required to store inputs, outputs, intermediate states, and gradients.}

{In conclusion, within an infinite horizon POMDP setting where the agent consistently interacts with its environment, the escalating memory and computation demands per step, driven by increased input dimensions, become increasingly significant. Employing belief state inference-based methods, rather than history-based approaches, not only substantially reduces the memory footprint by abstracting detailed histories into compact probabilistic representations but also improves computational efficiency, despite the initial complexities associated with learning the underlying generative model and belief state representation.}
%
\subsection{Visualizations of results in Sub-section~\ref{sub-sec:POMDP_results}} \label{APP:visu}
In this sub-section, we provide the learning curves for all the compared methods in sub-section~\ref{sub-sec:POMDP_results}. The average returns are shown in Fig.~\ref{fig:results}, with the shaded areas indicating the standard deviation.
\begin{figure}[bp]
    \centering 
  \begin{subfigure}{0.49\textwidth}
  \includegraphics[keepaspectratio=true, scale = 0.365]{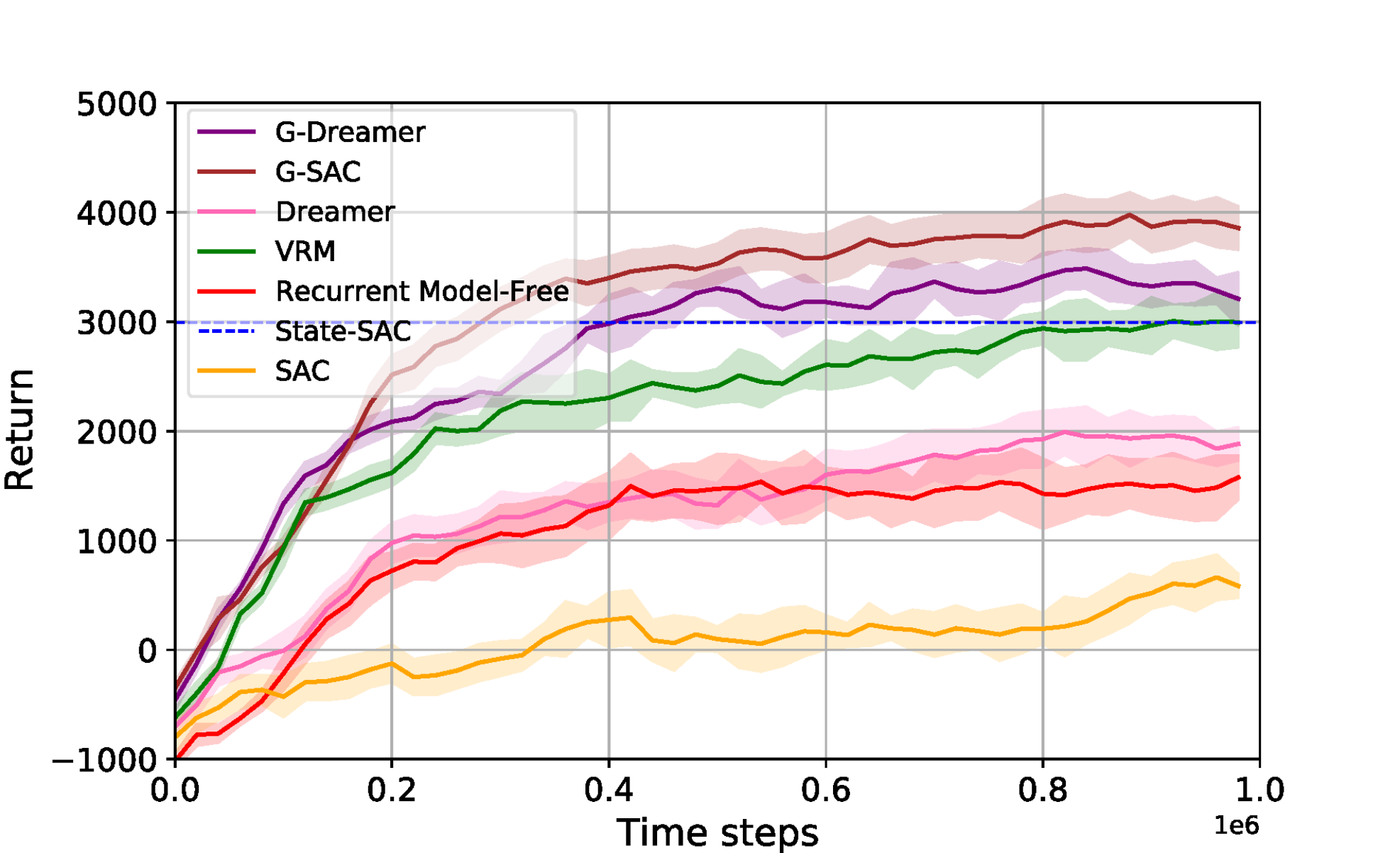}
  \caption{  HalfCheetah-P}
\end{subfigure} 
\begin{subfigure}{0.49\textwidth}
  \includegraphics[keepaspectratio=true, scale = 0.36]{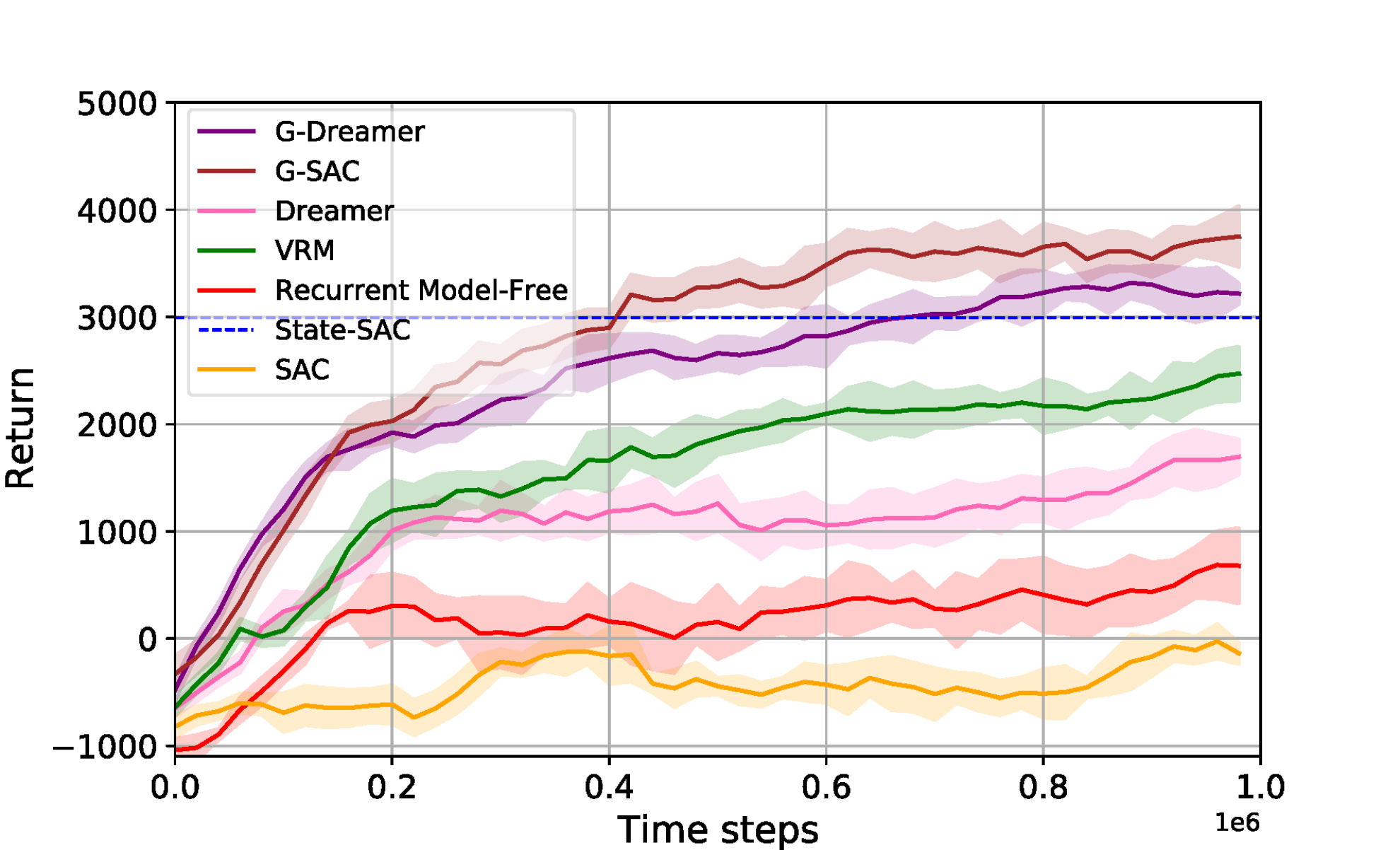}
  \caption {  HalfCheetah-N }
\end{subfigure}\hfil 
\vspace{-.05in }\\
\begin{subfigure}{0.49\textwidth}
  \includegraphics[keepaspectratio=true, scale = 0.42]{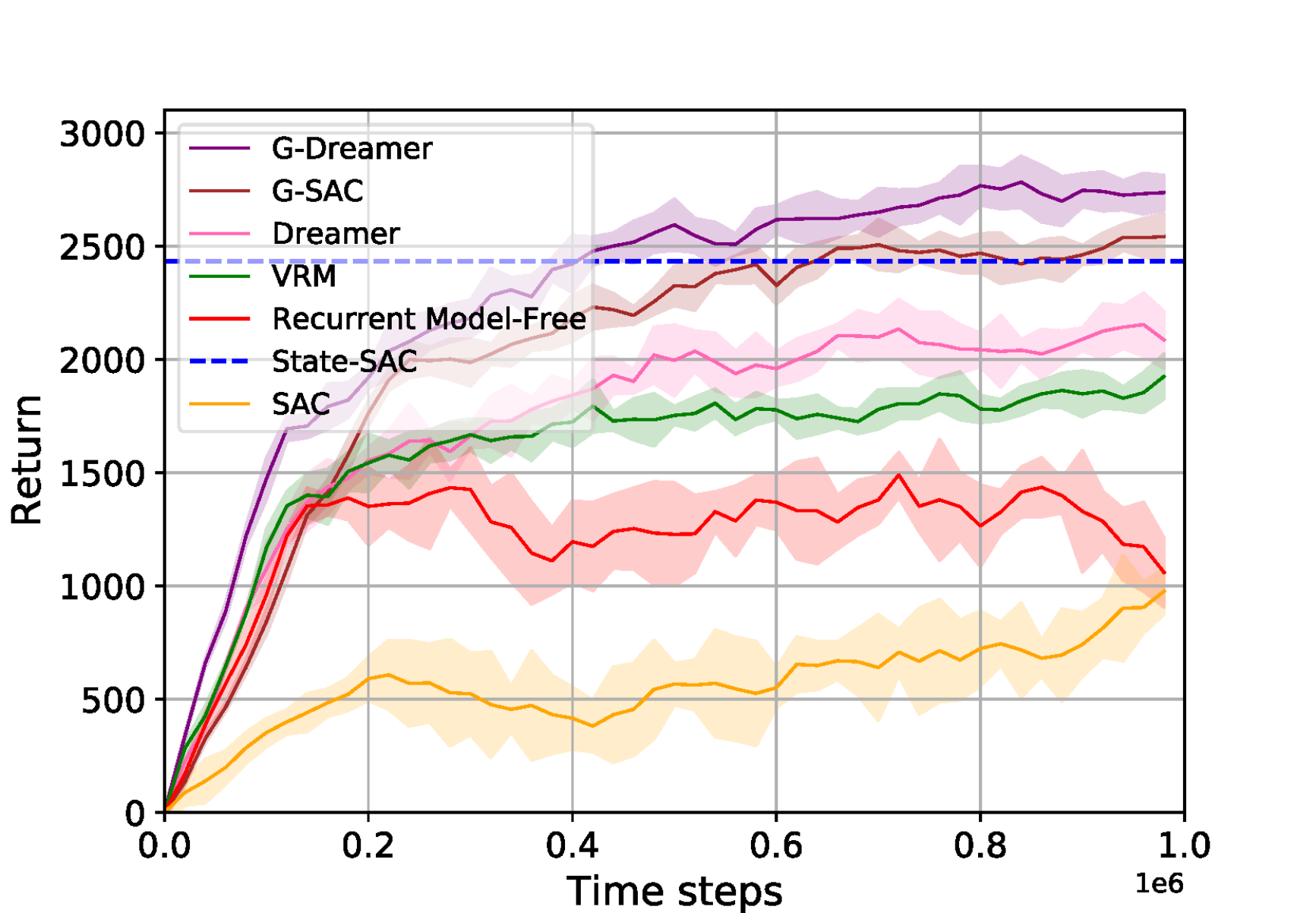}
  \caption{ Hopper-P}
\end{subfigure} 
\begin{subfigure}{0.49\textwidth}
  \includegraphics[keepaspectratio=true, scale = 0.42]{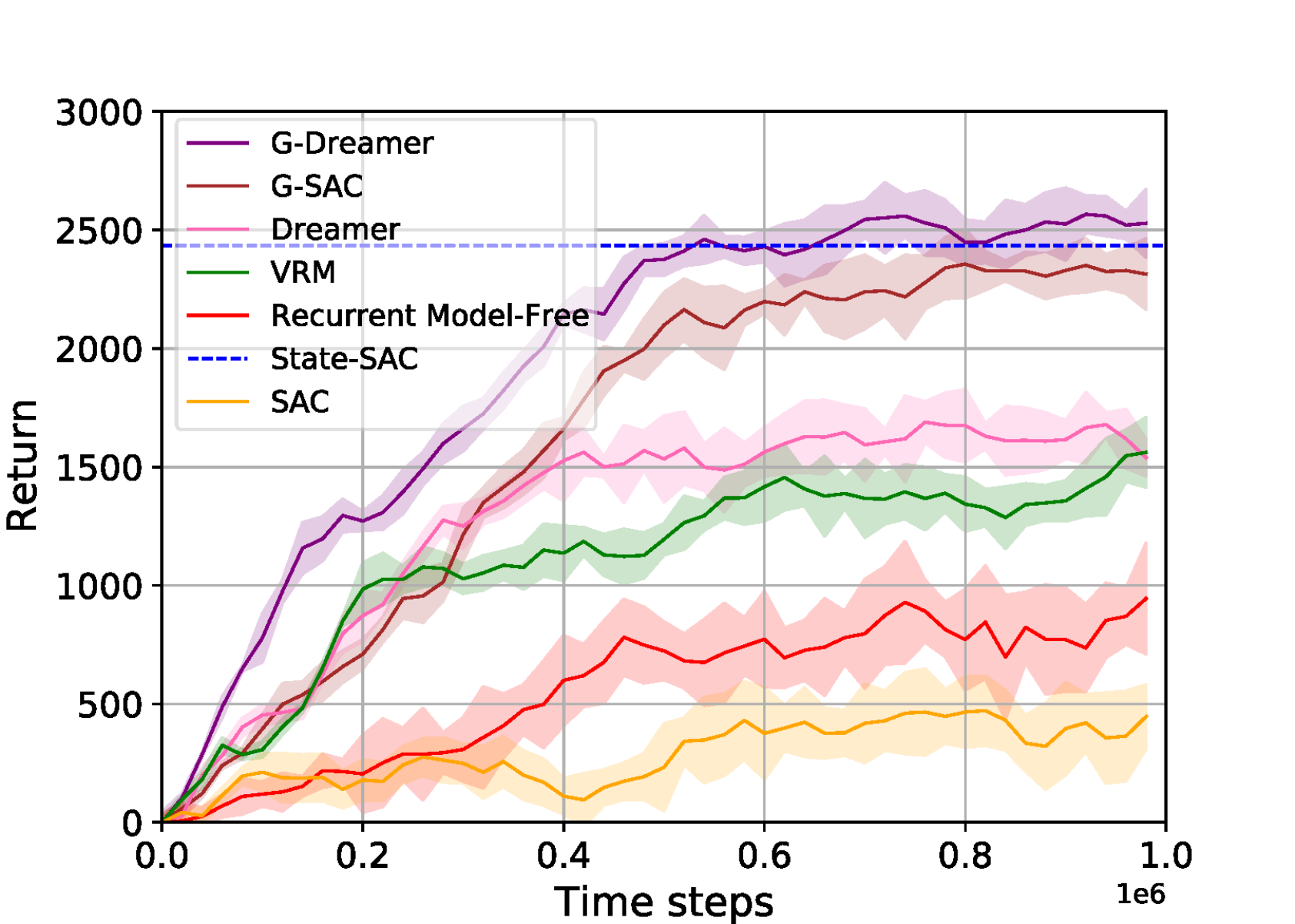}
  \caption{ Hopper-N}
\end{subfigure}\hfil 
\vspace{-.04in }\\
\begin{subfigure}{0.49\textwidth}
  \includegraphics[keepaspectratio=true, scale = 0.42]{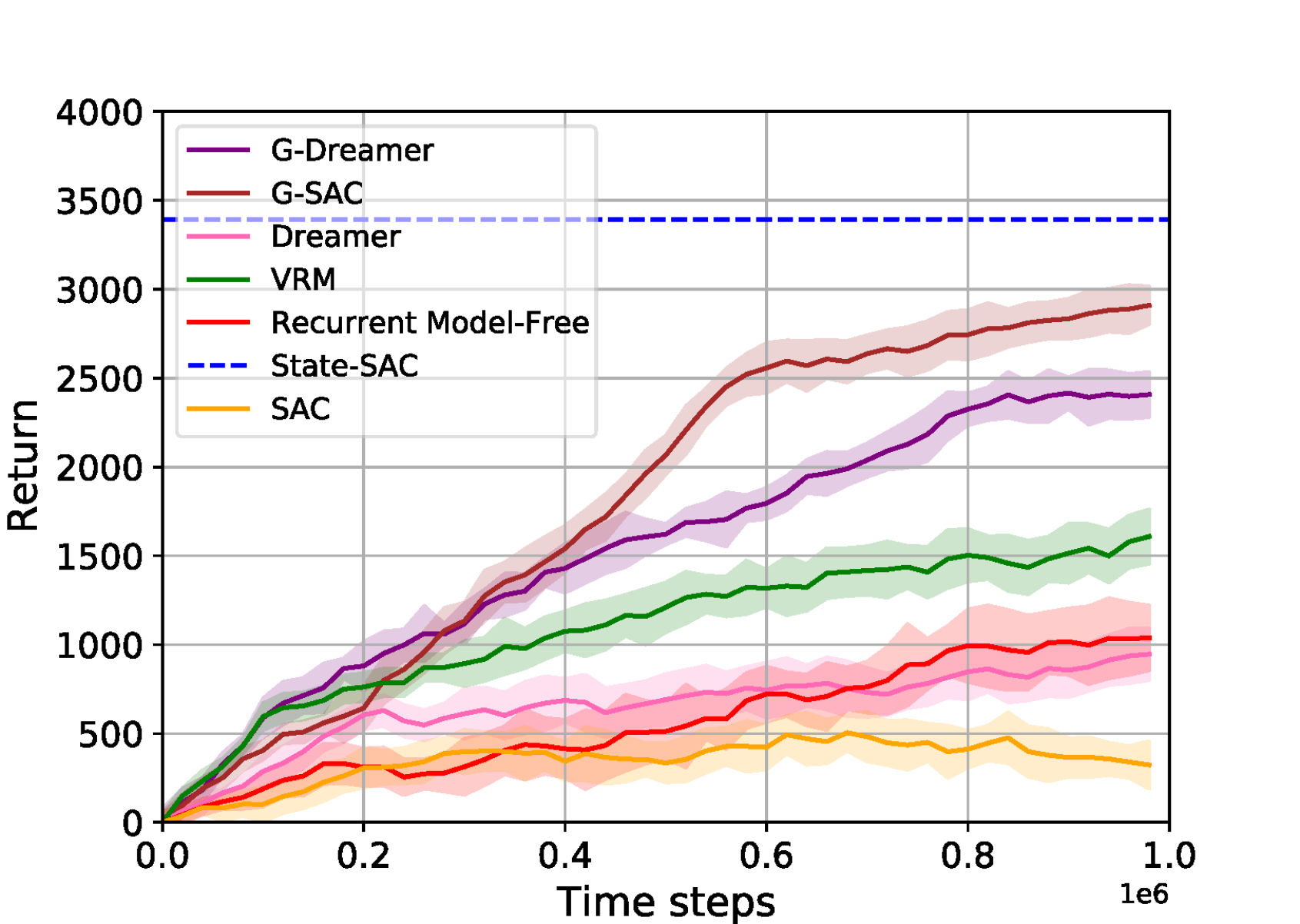}
  \caption{Ant-P}
\end{subfigure}
\begin{subfigure}{0.49\textwidth}
  \includegraphics[keepaspectratio=true, scale = 0.42]{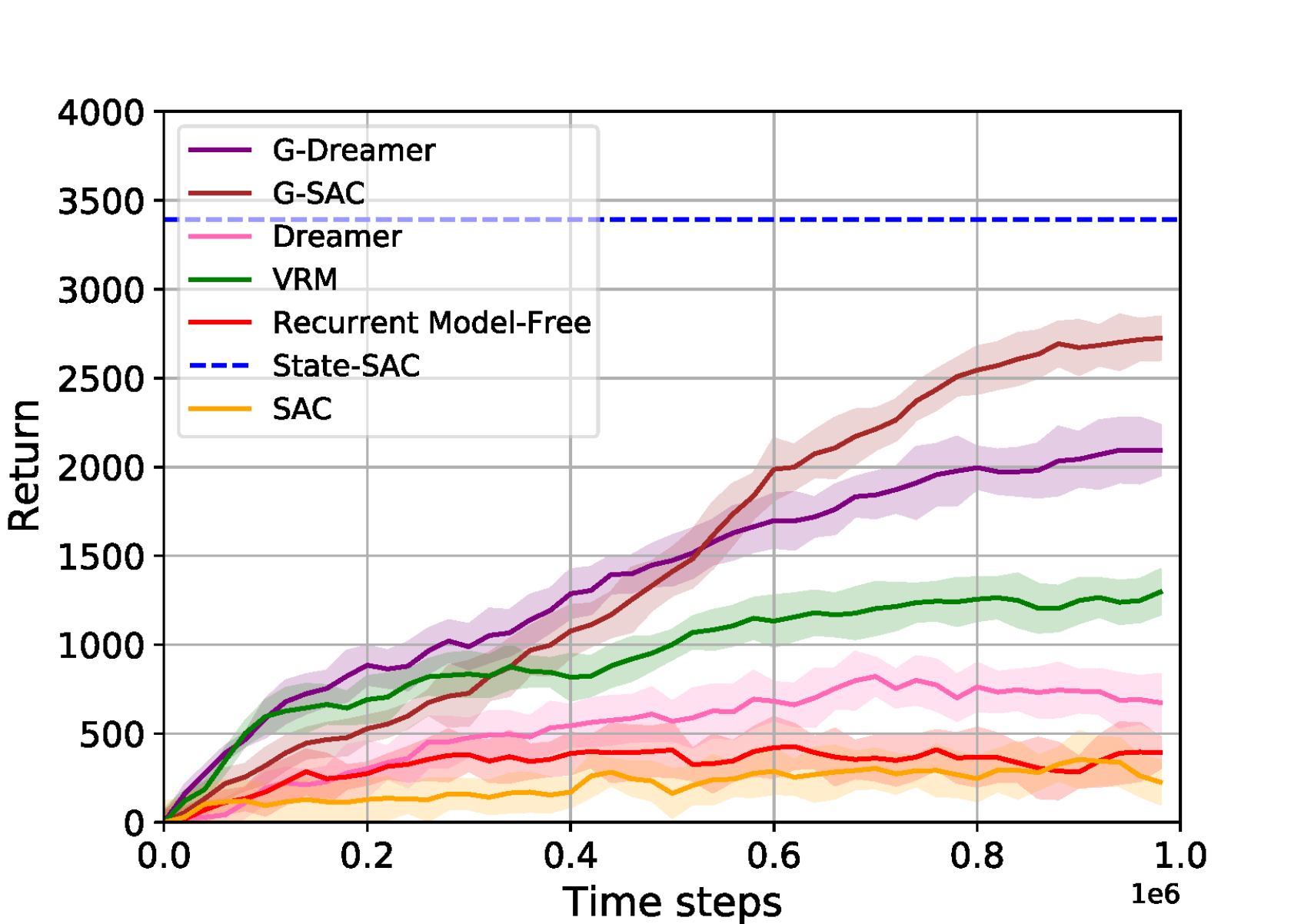}
  \caption{ Ant-N	}
\end{subfigure}
\vspace{-.07in }\\
\begin{subfigure}{0.49\textwidth}
 \includegraphics[keepaspectratio=true, scale = 0.42]{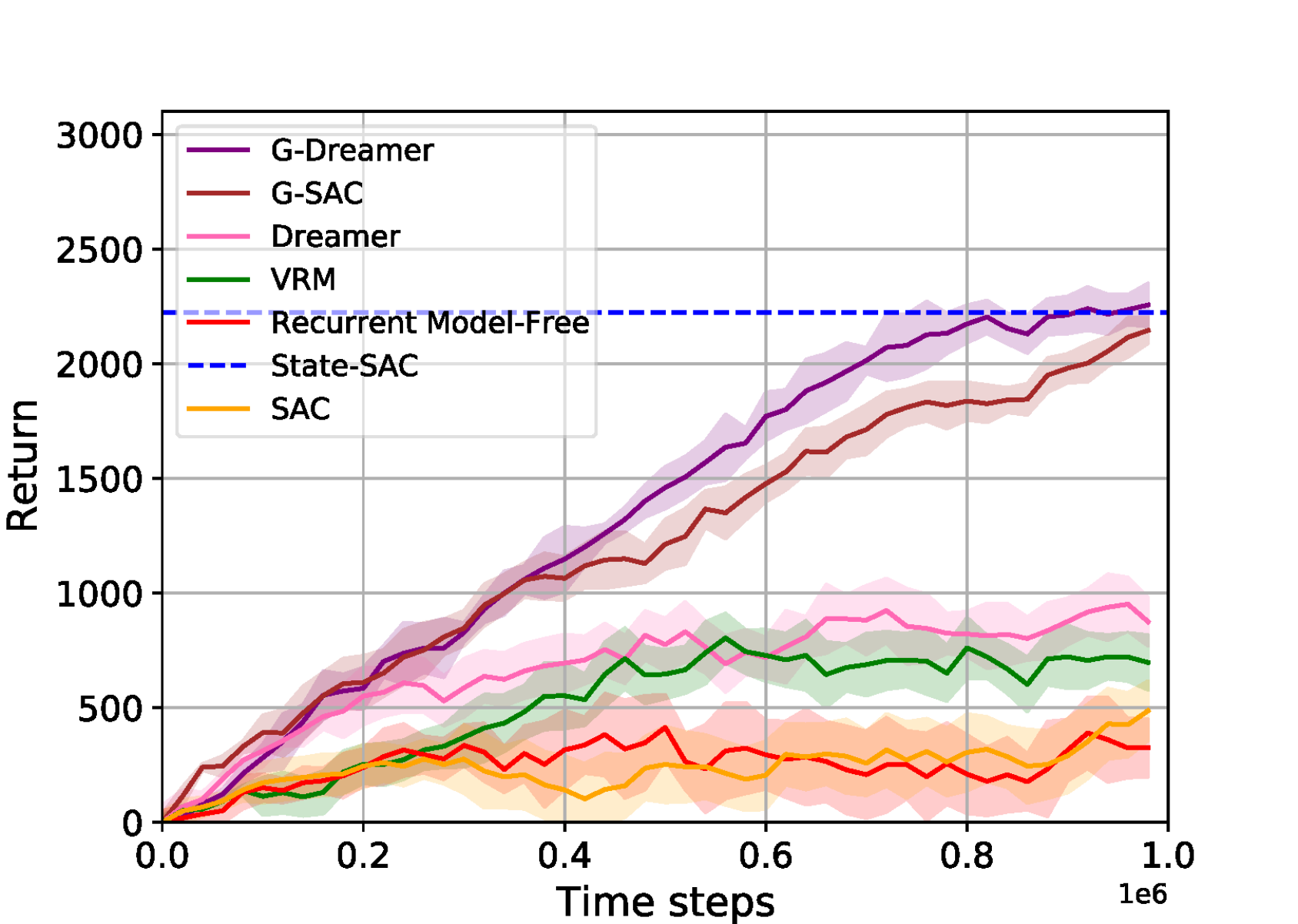}
 \caption{ Walker-2d-P}
\end{subfigure}
\begin{subfigure}{0.495\textwidth}
 \includegraphics[keepaspectratio=true, scale = 0.436]{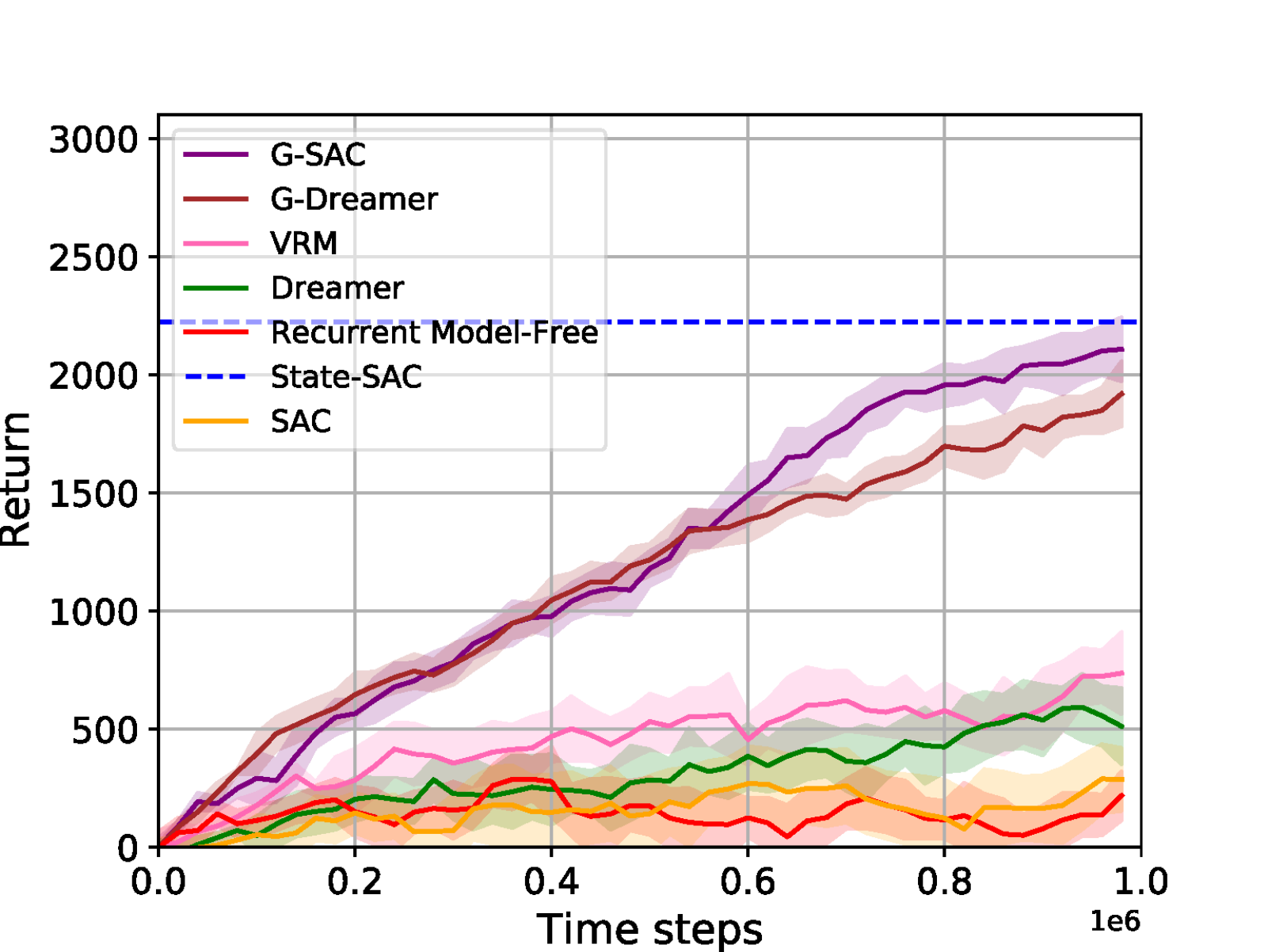}
 \caption{Walker-2d-N}
 \label{fig:6}
\end{subfigure}
\caption{Mean return for four Roboschool benchmarks with partial observations (left), and noisy observations (right). Shaded areas indicate standard deviation.}
\label{fig:results}
\end{figure}
\end{appendices}

\end{document}